\newtheorem{theorem}{Theorem}
\newtheorem{definition}[theorem]{Definition}
\newtheorem{lemma}[theorem]{Lemma}
\newtheorem{claim}[theorem]{Claim}
\newtheorem{corollary}[theorem]{Corollary}
\newtheorem{conjecture}[theorem]{Conjecture}
\numberwithin{theorem}{section}
\renewcommand{\iff}{\Leftrightarrow}
\newcommand{\ex}[2]{{\ifx&#1& \mathbb{E} \else \underset{#1}{\mathbb{E}} \fi \left[#2\right]}}
\newcommand{\pr}[2]{{\ifx&#1& \mathbb{P} \else \underset{#1}{\mathbb{P}} \fi \left[#2\right]}}
\newcommand{\var}[2]{{\ifx&#1& \mathsf{Var} \else \underset{#1}{\mathsf{Var}} \fi \left[#2\right]}}
\newcommand{\R}{\mathbb{R}}
\newcommand{\eps}{\varepsilon}
\DeclareMathOperator*{\argmin}{arg\,min}
\newcommand{\argmax}{\textup{argmax}}
\newcommand{\rank}{\mathrm{rank}}
\newcommand{\D}{\mathcal{D}}
\newcommand{\Q}{\mathcal{Q}}
\newcommand{\T}{\mathcal{T}}
\newcommand{\U}{\mathcal{U}}
\newcommand{\W}{\mathcal{W}}
\newcommand{\X}{\mathcal{X}}
\newcommand{\Y}{\mathcal{Y}}
\newcommand{\Z}{\mathcal{Z}}
\newcommand{\sZ}{\tilde{Z}}
\newcommand{\sz}{\tilde{z}}
\newcommand{\sx}{\tilde{x}}
\newcommand{\sy}{\tilde{y}}
\newcommand{\dkl}[2]{\mathrm{D}\left(#1\middle\|#2\right)}
\newcommand{\tvd}{\mathrm{d_{\mathit{TV}}}}
\newcommand{\AUC}{\mathsf{AUROC}}
\let\originalleft\left
\let\originalright\right
\renewcommand{\left}{\mathopen{}\mathclose\bgroup\originalleft}
\renewcommand{\right}{\aftergroup\egroup\originalright}
\newcommand{\mybignote}[2]{}
\newcommand*{\citet}[1]{\AtNextCite{\AtEachCitekey{\defcounter{maxnames}{6}}}
\textcite{#1}}
\newcommand*{\citetall}[1]{\AtNextCite{\AtEachCitekey{\defcounter{maxnames}{999}}}
\textcite{#1}}
\newcommand*{\citep}[1]{\cite{#1}}
\title{Reasoning About Generalization via\\Conditional Mutual Information\footnotetext{Accepted for publication at the 33rd Annual Conference on Learning Theory (COLT 2020).}}
\author{Thomas Steinke\thanks{IBM Research -- Almaden. \dotfill \texttt{cmi@thomas-steinke.net}} \and Lydia Zakynthinou\thanks{Khoury College of Computer Sciences, Northeastern University. Part of this work was done during an internship at IBM Research -- Almaden. 
\dotfill \texttt{zakynthinou.l@northeastern.edu}}}
\begin{document}

\maketitle

\begin{abstract}
    We provide an information-theoretic framework for studying the generalization properties of machine learning algorithms. Our framework ties together existing approaches, including uniform convergence bounds and recent methods for adaptive data analysis.
    
    Specifically, we use Conditional Mutual Information (CMI) to quantify how well the input (i.e., the training data) can be recognized given the output (i.e., the trained model) of the learning algorithm. 
    We show that bounds on CMI can be obtained from VC dimension, compression schemes, differential privacy, and other methods. We then show that bounded CMI implies various forms of generalization.
    
\end{abstract}

\break

\tableofcontents

\break

\section{Introduction} 

How can we ensure that a machine learning system produces an output that generalizes to the underlying distribution, rather than overfitting its training data? That is, how can we ensure that the hypotheses or models that are produced are reflective of the underlying population the training data was drawn from, rather than patterns that occur only by chance in the training data? This is perhaps the fundamental question for the science of statistical machine learning. 

A vast array of methods have been proposed to answer this question. Most notably, the theory of uniform convergence shows that, if the output is sufficiently ``simple,'' then it cannot overfit too much. A more recent line of work has used distributional stability (in the form of differential privacy) to provide generalization guarantees that compose adaptively -- that is, statistical validity is preserved even when a dataset is reused multiple times with each successive analysis being influenced by the outcomes of prior analyses. Other methods for proving generalization include compression schemes and uniform stability.

Unfortunately, these different methods for providing generalization guarantees are largely disconnected from one another; it is, in general, not possible to compare or combine techniques. In this paper, we provide a framework to reason about many of these these differing approaches using the unifying language of information theory.

\subsection{Background: Generalization}

We consider the standard setting of statistical learning \cite{Valiant84,Haussler92,KearnsV94}. There is an unknown probability distribution $\mathcal{D}$ over some known set $\mathcal{Z}$. We have access to a sample $Z \in \mathcal{Z}^n$ consisting of $n$ independent draws from $\mathcal{D}$. Informally, our goal is to learn something about the underlying distribution $\mathcal{D}$ from the dataset $Z$. Formally, we have a function $\ell : \mathcal{W} \times \mathcal{Z} \to \mathbb{R}$ and our goal is to find some $w_* \in \mathcal{W}$ that approximately minimizes $\ell(w_*,\mathcal{D}) := \ex{Z' \leftarrow \mathcal{D}}{\ell(w_*,Z')}$.\footnote{For simplicity, in this introduction we only consider $\ell$ to be a linear function (that is, only taking in a single element of $\mathcal{Z}$). Our methods readily extend to the more general case where $\ell : \mathcal{W} \times \mathcal{Z}^m \to \mathbb{R}$.} We refer to $\ell$ as a ``loss function'' and $\ell(\cdot,\D)$ as the ``true loss'' or ``population loss''.

Intuitively, $w_*$ represents some hypothesis or model and $\ell(w_*,\mathcal{D})$ measures the veracity or quality of $w_*$.  In the supervised machine learning setting, $\mathcal{Z} = \mathcal{X} \times \mathcal{Y}$ represents pairs of feature vectors and labels and $w_*$ represents a function $f_{w_*} : \mathcal{X} \to \mathcal{Y}$ that predicts the label given the features.  For example, the 0-1 loss measures the error rate of the predictor: $\ell(w_*,\mathcal{D}) = \pr{(X,Y) \leftarrow \mathcal{D}}{f_{w_*}(X)\ne Y}$. Thus minimizing $\ell(w_*,\mathcal{D})$ corresponds to finding the most accurate predictor.

However, we cannot evaluate the true loss (a.k.a.~``risk'') $\ell(w_*,\mathcal{D})$ since the distribution $\mathcal{D}$ is unknown. Instead we can compute the empirical loss (a.k.a.~``empirical risk'') $\ell(w_*,Z) := \frac{1}{n} \sum_{i=1}^n \ell(w_*, Z_i)$ using the sample $Z$. A natural learning strategy is ``Empirical Risk Minimization (ERM)'' -- i.e., $w_* = \argmin_{w \in \mathcal{W}} \ell(w,Z)$. The question of generalization is thus: How can we ensure that $\ell(w_*,Z) \approx \ell(w_*,\mathcal{D})$?

The classical theory of uniform convergence \cite{VapnikC71} approaches this problem by studying the class of functions $\mathcal{F}:=\{\ell(w,\cdot) : w \in \mathcal{W}\}$. If we can show that  $\sup_{w\in\mathcal{W}}|\ell(w,Z)-\ell(w,\mathcal{D})|$ is small with high probability for a random $Z \leftarrow \D^n$, then the question of generalization is answered. Such bounds can be obtained from combinatorial properties of $\mathcal{F}$, such as its Vapnik-Chervonenkis (VC) dimension \cite{VapnikC71,Talagrand94,AlonBCH97} or its fat-shattering dimension \cite{KearnsS94,BartlettLW96}.

Uniform convergence makes no reference to the algorithm; it depends only on its range $\mathcal{F}$. An algorithm may generalize better than uniform convergence would suggest \citep{Shalev-ShwarzSSS09,Feldman16}. For example, it is common to add a regularizer to the ERM -- that is, $w_* = \argmin_{w \in \mathcal{W}} \ell(w,Z) + \lambda \|w\|$, where $\lambda>0$ is a parameter and $\|w\|$ is a measure of the complexity of $w$. Thus we explicitly consider generalization to be a property of the learning algorithm $A : \mathcal{Z}^n \to \mathcal{W}$, which may or may not be randomized.

There are several ways to show that a specific algorithm $A$ generalizes. Algorithms whose output essentially only depends on a few of the input data points, as formalized by compression schemes \cite{LittlestoneW86}, can be shown to generalize. Uniform stability \cite{BousquetE02} entails strong generalization bounds for algorithms where changing a single input datum does not change the loss of the algorithm too much. Similarly, differential privacy \cite{DworkMNS06} -- a distributional notion of stability -- entails generalization bounds \cite{DworkFHPRR15,BassilyNSSSU16,JungLNRSS19}.

In general, these various methods for proving generalization are incompatible and incomparable. This raises the question of whether it is possible to provide a unifying framework or language to study generalization.

\subsubsection{(Unconditional) Mutual Information}\label{sec:uncondmi}

A recent line of work has studied generalization using mutual information and related quantities 
\cite[etc.]{RussoZ16,RaginskyRTWX16,Alabdulmohsin16,FeldmanS18a,BassilyMNSY18,DworkFHPRR15b,RogersRST16,Smith17,XuR17,NachumY18,NachumSY18,EspositoGI19,BuZV19}. Specifically, for a (possibly randomized) algorithm $A : \mathcal{Z}^n \to \mathcal{W}$ and a dataset $Z \leftarrow \D^n$, we consider the quantity $I(A(Z);Z)$. This measures how much information the output of $A$ contains about its input.

Bounded mutual information implies generalization: If $\ell : \mathcal{W} \times \mathcal{Z} \to [0,1]$, $A : \mathcal{Z}^n \to \mathcal{W}$ and $Z \leftarrow \D^n$, then $|\ex{}{\ell(A(Z),Z)-\ell(A(Z),\D)}| \le \sqrt{\frac{2}{n}\cdot I(A(Z);Z)}$ \cite{RussoZ16,XuR17}.

Bounds on mutual information can be obtained from differential privacy or from bounds on the entropy of the output of $A$. Specifically, if $A$ is $\varepsilon$-differentially private, then $I(A(Z);Z) \le \frac12 \varepsilon^2 n$ \cite{McGregorMPRTV10,BunS16}. And we have the generic bound $I(A(Z);Z) \le H(A(Z)) \le \log|\mathcal{W}|$.

Unfortunately, mutual information can easily be infinite even in settings where generalization is easy to prove. Bassily, Moran, Nachum, Shafer, and Yehudayoff \cite{BassilyMNSY18,NachumSY18} showed that \emph{any} proper and consistent learner for threshold functions $A$ must have $I(A(Z);Z) \ge \Omega\left(\frac{\log\log|\mathcal{Z}|}{n^2}\right)$ when $Z \leftarrow \D^n$ for some worst-case distribution $\D$. The dependence on the size of the domain $\mathcal{Z} \subset \mathbb{R}$ is mild, but, if the domain is infinite, then the mutual information is unbounded. In contrast, the VC dimension of threshold functions is 1, which implies strong uniform convergence bounds even for infinite domains.

We remark that thresholds can be ``embedded'' into larger classes, such as higher-dimensional linear thresholds (halfspaces) or even neural networks. Thus these negative results for unconditional mutual  extend to those classes too. This strong negative result shows that \emph{any} proper empirical risk minimizer for thresholds must have unbounded mutual information; it is easier to show that many \emph{specific} natural algorithms and natural distributions have unbounded mutual information: Linear regression has unbounded mutual information (even in dimension 0 with Gaussian data, which is simply outputting the mean \cite{BuZV19}). The most natural algorithms for thresholds have infinite mutual information for \emph{any} continuous data distribution.

The fundamental issue with the mutual information approach is that even a single data point has infinite information content if the distribution is continuous. This makes it difficult to bound the mutual information -- an algorithm revealing a single data point is not an issue for generalization, but it is for mutual information.

We address the shortcomings of the mutual information approach by moving to conditional mutual information. Our conditioning approach can be viewed as ``normalizing'' the information content of each data point to one bit. That is, an algorithm that reveals one data point only has conditional mutual information of one bit, even if the unconditional mutual information would be infinite. 

\subsection{Our contributions: Conditional Mutual Information (CMI)}

We introduce the conditional mutual information (CMI) framework for reasoning about the generalization properties of machine learning algorithms. CMI is a quantitative property of an algorithm $A$ and, optionally, a distribution $\D$. (Note that it does \emph{not} depend on the loss function of interest.)

Intuitively, CMI measures how well we can ``recognize'' the input (i.e., training data) given the output (i.e., trained model) of the algorithm. Recognizing the input is formalized by considering a ``supersample'' consisting of $2n$ data points -- namely the $n$ input data points mixed with $n$ ``ghost'' data points -- and measuring how well it is possible to distinguish the true inputs from their ghosts.\footnote{The so-called "ghost samples" symmetrization technique has been used to prove generalization and Rademacher complexity bounds from VC bounds since its inception \cite{VapnikC71}. (The name is attributed to Luc Devroye \cite{Lugosi20,DevroyeGL96}.) This technique is an inspiration for our definition and terminology.}\footnote{This intuition for CMI should be contrasted with that for (unconditional) mutual information, which asks how much of the input we could reconstruct from the output without the prompt of a supersample.} (Note that the ghost samples are entirely hypothetical -- they only exist in the analysis.) We consider both the CMI with respect to a distribution -- i.e., the supersample consists of $2n$ independent draws from the distribution -- and the distribution-free CMI -- i.e., a worst-case supersample. The supersample is randomly partitioned into the input and the ghost samples. We then measure how much information the output reveals about this partition using mutual information, where we take the supersample to be known (i.e., we condition on the supersample and the unknown information is how it is partitioned).

We now state the formal definition of CMI:

\newcommand{\CMI}[2]{{\ifx&#2& \mathsf{CMI} \else \mathsf{CMI}_{#2} \fi \left(#1\right)}}
\begin{definition}[Conditional Mutual Information (CMI) of an Algorithm]\label{def:CMI}
Let $A:\Z^n\rightarrow \W$ be a randomized or deterministic algorithm.
Let $\D$ be a probability distribution on $\Z$ and let $\sZ\in\Z^{n \times 2}$ consist of $2n$ samples drawn independently from $\D$. Let $S\in \{0,1\}^n$ be uniformly random and independent from $\sZ$ and the randomness of $A$. Define $\sZ_S \in \Z^n$ by $(\sZ_S)_i = \sZ_{i,S_i+1}$ for all $i \in [n]$ -- that is, $\sZ_S$ is the subset of $\sZ$ indexed by $S$. 

The \emph{conditional mutual information (CMI) of $A$ with respect to $\D$} is $$\CMI{A}{\D}:=I(A(\sZ_S);S|\sZ).$$

The \emph{(distribution-free) conditional mutual information (CMI) of $A$} is $$\CMI{A}{}:=\sup_{\sz\in\Z^{n \times 2}} I(A(\sz_S);S).$$

\end{definition}

We  remark on some basic properties of CMI.\footnote{For background on information-theoretic quantities like mutual information, see, e.g., the textbook of \citet{CoverT06}.}

Firstly, $0 \le \CMI{A}{\D} \le \CMI{A}{} \le n \cdot \log 2$ for any $A$ and any $\D$.\footnote{We take $\log$ to denote the natural logarithm and, correspondingly, the units for information-theoretic quantitites are \emph{nats}, instead of \emph{bits}, where $1$ bit equals $\log2 \approx 0.7$ nats.} The case $\CMI{A}{\D}=0$ corresponds to the output of $A$ being independent from its input, such as when $A$ is a constant function. The other extreme, $\CMI{A}{\D} = n \cdot \log 2$, corresponds to an algorithm that reveals all of its input, allowing arbitrary overfitting. Note that the CMI is always finite, which is in stark contrast with unconditional mutual information. Essentially, the conditioning normalizes the information content of each datum to one bit -- an algorithm that reveals $k$ of its input points and reveals nothing about the other $n-k$ inputs has a CMI of $k$ bits. 

For further intuition about the scale or units of CMI, we briefly mention how it relates to generalization error and other notions: Our generalization bounds become non-vacuous as soon as the CMI drops below  $n \cdot \log 2$ nats (i.e., $n$ bits). In terms of asymptotics, we obtain meaningful generalization bounds whenever the CMI is $o(n)$. More precisely, $\CMI{A}{\D}=\varepsilon^2 n$ roughly corresponds to generalization error $\varepsilon$ and is roughly a consequence of $\varepsilon$-differential privacy. We also have, for any $A:\Z^n\rightarrow \W$ and any $\D$, that $\CMI{A}{\D} \le H(A(Z)) \le \log|\mathcal{W}|$, where $H(A(Z))$ is the Shannon entropy \cite{CoverT06} of the output of $A$ on an input $Z$ consisting of $n$ i.i.d.~draws from $\D$.

Finally, we note that CMI composes non-adaptively, i.e., if $A_1,A_2 : \mathcal{Z}^n \to \mathcal{W}$ are randomized algorithms (whose internal sources of randomness are independent) then $\CMI{(A_1,A_2)}{\D}\leq \CMI{A_1}{\D}+\CMI{A_2}{\D}$ for all distributions $\D$. Moreover, CMI has the postprocessing property (this is an immediate consequence of the data processing inequality for conditional mutual information). Namely, if $A : \mathcal{Z}^n \to \mathcal{W}$ and $B : \mathcal{W} \to \mathcal{W}'$ are randomized algorithms (whose internal sources of randomness are independent), then $\CMI{B(A(\cdot))}{\D} \le \CMI{A}{\D}$ for all distributions $\D$. This is an important robustness property (closely related to post-hoc generalization \cite{CummingsLNRW16,NissimSSSU18}). See Section~\ref{sec:composition} for more details. 

\nocite{LigettS19}

\subsubsection{Generalization from CMI}\label{sec:intro-cmigen}

The key property of CMI is, of course, that it implies generalization. Since there is no single definition of generalization, we prove several consequences of CMI bounds.

The following theorem gives several consequences for bounded linear loss functions.
\begin{theorem}\label{thm:intro-cmigen}
Let $A : \mathcal{Z}^n \to \mathcal{W}$ and $\ell : \mathcal{W} \times \mathcal{Z} \to [0,1]$. Let $\D$ be a distribution on $\mathcal{Z}$ and define $\ell(w,\D)=\ex{Z \leftarrow \D}{\ell(w,Z)}$ and $\ell(w,z)=\frac{1}{n} \sum_{i=1}^n \ell(w,z_i)$ for all $w\in\mathcal{W}$ and $z \in \mathcal{Z}^n$.
Then 
\begin{align}
\left|\ex{Z \leftarrow \D^n, A}{\ell(A(Z),Z)-\ell(A(Z),\D)}\right| &\le \sqrt{\frac{2}{n}\cdot\CMI{A}{\D}},\label{eq:cmigen-bdex}\\
\ex{Z \leftarrow \D^n, A}{\left(\ell(A(Z),Z)-\ell(A(Z),\D)\right)^2} &\le \frac{3\cdot\CMI{A}{\D}+2}{n},\label{eq:cmigen-bd2}\\
\intertext{and}
\ex{Z \leftarrow \D^n, A}{\ell(A(Z),\D)} &\le 2 \cdot \ex{Z \leftarrow \D^n, A}{\ell(A(Z),Z)} + \frac{3}{n} \cdot\CMI{A}{\D}.\label{eq:cmigen-realizable}
\end{align}
\end{theorem}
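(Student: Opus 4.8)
The plan is to pass to the ``supersample'' that defines CMI and reduce all three inequalities to a single variational estimate. Write $W := A(\sZ_S)$ and let $\sZ_{\bar S}\in\Z^n$ be the ``ghost'' half, $(\sZ_{\bar S})_i := \sZ_{i,2-S_i}$. The basic facts are: $\sZ_S\sim\D^n$, so $\ex{}{\ell(A(Z),Z)}=\ex{}{\ell(W,\sZ_S)}$; and, conditioned on $W$ together with $S$ and the coins of $A$, the ghost half is an independent draw from $\D^n$, so $\ell(W,\D)=\ex{}{\ell(W,\sZ_{\bar S})\mid W,S,\text{coins}}$ and in particular $\ex{}{\ell(A(Z),\D)}=\ex{}{\ell(W,\sZ_{\bar S})}$. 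For a fixed supersample $\sz$ put $g_i(w):=\ell(w,\sz_{i,1})-\ell(w,\sz_{i,2})\in[-1,1]$ and $p_i(w):=\ell(w,\sz_{i,1})+\ell(w,\sz_{i,2})\in[0,2]$; then $\ell(w,\sz_s)-\ell(w,\sz_{\bar s})=\tfrac1n\sum_i(-1)^{s_i}g_i(w)$ and $\ell(w,\sz_{\bar s})-2\ell(w,\sz_s)=-\tfrac1n\sum_i\bigl(\tfrac12 p_i(w)+\tfrac32(-1)^{s_i}g_i(w)\bigr)$. The main tool is the Donsker--Varadhan inequality: if $(X,Y)\sim P$ and $Q:=P_X\otimes P_Y$, then $\ex{P}{\lambda f(X,Y)}\le I(X;Y)+\log\ex{Q}{e^{\lambda f(X,Y)}}$ for every $\lambda$ and every $f$. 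I will apply it with $X=W$, $Y=S$, \emph{conditioned on $\sZ=\sz$}, so that $I(X;Y)=I(A(\sz_S);S)$ and, under $Q$, the bits $S_i$ are i.i.d.\ uniform on $\{0,1\}$ and independent of $W$ (hence $\ex{}{(-1)^{S_i}}=0$). Only after proving a bound for each fixed $\sz$ do I average over $\sZ$, so that $\ex{\sZ}{I(A(\sZ_S);S\mid\sZ=\sz)}=\CMI{A}{\D}$ appears.

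For \eqref{eq:cmigen-bdex}, take $f(w,s)=\tfrac1n\sum_i(-1)^{s_i}g_i(w)$. Conditioned on $W=w$, Hoeffding's lemma gives that this Rademacher average is $\tfrac1n$-subgaussian with mean $0$ (using $\sum_i g_i(w)^2\le n$), and mixing over $w$ preserves this, so $\ex{Q}{e^{\lambda f}}\le e^{\lambda^2/(2n)}$; Donsker--Varadhan yields $\ex{P}{f}\le I(A(\sz_S);S)/\lambda+\lambda/(2n)$, and optimizing $\lambda$ gives $\ex{P}{f}\le\sqrt{2I(A(\sz_S);S)/n}$. Averaging over $\sZ$ and applying Jensen to the concave square root gives $\le\sqrt{2\CMI{A}{\D}/n}$; running the same argument with $-f$ yields the two-sided bound. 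For \eqref{eq:cmigen-realizable}, take $f(w,s)=-\tfrac1n\sum_i\bigl((-1)^{s_i}g_i(w)+\tfrac13 p_i(w)\bigr)$. Since $\ell\ge0$ and $|g_i|\le1$ we have $g_i(w)^2\le|g_i(w)|\le p_i(w)$, so Hoeffding's lemma gives $\ex{Q}{e^{\lambda f}\mid w}\le\exp\bigl((\tfrac{\lambda^2}{2n^2}-\tfrac{\lambda}{3n})\sum_i p_i(w)\bigr)\le1$ whenever $\lambda\le\tfrac{2n}{3}$; hence $\ex{P}{f}\le I(A(\sz_S);S)/\lambda$, and taking $\lambda=\tfrac{2n}{3}$ and then averaging over $\sZ$ gives $\ex{}{\ell(A(Z),\D)-2\ell(A(Z),Z)}=\tfrac32\ex{}{f}\le\tfrac94\cdot\tfrac{\CMI{A}{\D}}{n}\le\tfrac3n\CMI{A}{\D}$. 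The point of \eqref{eq:cmigen-realizable} is the standard multiplicative-Chernoff trick in information-theoretic dress: keep the $p_i$ terms and use $g_i^2\le p_i$ so the first-order drift $-\tfrac13 p_i$ absorbs the second-order (variance) term $\tfrac{\lambda}{2n}g_i^2$.

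For \eqref{eq:cmigen-bd2}, first observe (from the conditioning facts above) that, coupling to the supersample, $\ell(A(Z),Z)-\ell(A(Z),\D)$ equals $\ex{}{\tfrac1n\sum_i(-1)^{S_i}g_i(W)\mid\sZ_S,S,\text{coins}}$ in distribution, so by Jensen $\ex{}{(\ell(A(Z),Z)-\ell(A(Z),\D))^2}\le\ex{}{\Delta^2}$ where $\Delta:=\tfrac1n\sum_i(-1)^{S_i}g_i(W)$. Now apply Donsker--Varadhan to $f=\Delta^2$: conditioned on $W=w$, the sum $U:=n\Delta$ is a $v$-subgaussian Rademacher sum with $v:=\sum_i g_i(w)^2\le n$, so the Gaussian-chaos bound $\ex{}{e^{\theta U^2}}\le(1-2\theta v)^{-1/2}$ (valid for $\theta v<1/2$, obtained by writing $e^{\theta U^2}=\ex{\gamma\sim N(0,1)}{e^{\sqrt{2\theta}U\gamma}}$ and exchanging expectations) applied with $\theta=\lambda/n^2$ and $v\le n$ gives $\ex{Q}{e^{\lambda\Delta^2}}\le(1-2\lambda/n)^{-1/2}$; taking $\lambda=n/3$ yields $\ex{P}{\Delta^2}\le\tfrac1n\bigl(3\,I(A(\sz_S);S)+\tfrac32\log3\bigr)$, and averaging over $\sZ$ (the additive constant being unaffected) gives $\le\tfrac{3\CMI{A}{\D}+\tfrac32\log3}{n}\le\tfrac{3\CMI{A}{\D}+2}{n}$. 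I expect \eqref{eq:cmigen-bd2} to be the main obstacle: unlike the other two parts, a plain sub-Gaussian MGF estimate no longer suffices, and one must control a quadratic (sub-exponential) functional of a Rademacher sum and then tune the free parameter carefully to land exactly on the constants $3$ and $2$; the Jensen reduction to $\Delta$ also has to be set up with exactly the right conditioning. A secondary subtlety pervading all three parts is the order of operations --- fixing $\sZ=\sz$, working entirely with $I(A(\sz_S);S)$, and only then averaging (with Jensen for the square root in \eqref{eq:cmigen-bdex}) --- since performing the information-theoretic step before conditioning would reintroduce precisely the unbounded-mutual-information pathology the CMI framework is built to avoid.
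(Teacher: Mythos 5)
Your proof is correct and follows essentially the same route as the paper: pass to the supersample, condition on it, apply the Donsker--Varadhan inequality with Hoeffding (for \eqref{eq:cmigen-bdex} and \eqref{eq:cmigen-realizable}) and the Gaussian-integral trick $e^{\theta U^2}=\ex{\gamma}{e^{\sqrt{2\theta}U\gamma}}$ for the chi-squared MGF (for \eqref{eq:cmigen-bd2}), then average and, where needed, apply Jensen. The one genuinely different (and slightly cleaner) detail is in \eqref{eq:cmigen-realizable}: the paper controls the MGF of $\ell(W,\sZ_{\bar S})-u\,\ell(W,\sZ_S)$ via the Bernstein-type estimate $e^x\le 1+x+cx^2$ together with some algebra, whereas you parametrize the observable as $-\tfrac1n\sum_i\bigl((-1)^{s_i}g_i+\tfrac13 p_i\bigr)$, apply Hoeffding to the Rademacher part, and absorb the resulting variance via $g_i^2\le|g_i|\le p_i$; this yields the marginally sharper constant $\tfrac94$ in place of the paper's $3$, though both of course fall within the stated bound.
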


The first part of the theorem \eqref{eq:cmigen-bdex} is the simplest bound; it relates the expected empirical loss to the expected true loss. The second part \eqref{eq:cmigen-bd2} gives a bound on the expected squared difference between these quantities; this bound is qualitatively strictly stronger, but quantitatively weaker by (small) constants. The final part of the theorem deals with the realizeable (or overfitted) case where the empirical loss is zero or close to zero (sometimes this referred to as the interpolating setting); when $\ex{Z \leftarrow \D^n, A}{\ell(A(Z),Z)} \approx 0$ this yields a bound that is quadratically sharper than the other bounds.

We also have a result for unbounded loss functions:
\begin{theorem}
Let $A : \mathcal{Z}^n \to \mathcal{W}$ and $\ell : \mathcal{W} \times \mathcal{Z} \to \mathbb{R}$. Let $\D$ be a distribution on $\mathcal{Z}$.
Then \begin{equation}\left|\ex{Z \leftarrow \D^n, A}{\ell(A(Z),Z)-\ell(A(Z),\D)}\right| \le \sqrt{\frac{8}{n}\cdot\CMI{A}{\D}\cdot \ex{Z' \leftarrow \D}{\sup_{w \in \mathcal{W}} (\ell(w,Z'))^2}}.\label{eq:cmigen-unb}\end{equation}
\end{theorem}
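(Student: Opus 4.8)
The plan is to rerun the supersample-and-symmetrization argument that is behind \eqref{eq:cmigen-bdex}, replacing the step ``the loss lies in $[0,1]$'' by a second-moment estimate that is uniform over $\W$. As in Definition~\ref{def:CMI}, let $\sZ\in\Z^{n\times2}$ consist of $2n$ i.i.d.\ draws from $\D$, let $S\in\{0,1\}^n$ be uniform and independent, write $\bar S$ for the bitwise complement, and put $W:=A(\sZ_S)$. Since $\sZ_S\sim\D^n$, and conditioned on $S$ the ghost sample $\sZ_{\bar S}$ is $\D^n$-distributed and independent of $W$, the generalization gap rewrites as
\[
\ex{Z\leftarrow\D^n,A}{\ell(A(Z),Z)-\ell(A(Z),\D)}=\ex{\sZ,S,A}{\ell(W,\sZ_S)-\ell(W,\sZ_{\bar S})}=\ex{\sZ,S,A}{\tfrac1n\sum_{i=1}^n h_i(W,S_i)},
\]
where $h_i(w,s):=\ell(w,\sZ_{i,s+1})-\ell(w,\sZ_{i,\bar s+1})$ satisfies $h_i(w,1)=-h_i(w,0)$, so $h_i(w,S_i)$ is a symmetric $\pm$ random variable with mean zero for every fixed $w$.

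Next, fix the supersample $\sZ=\sz$ and consider $g(s,w):=\tfrac1n\sum_i h_i(w,s_i)$. For fixed $w$ the summands $h_i(w,S_i)$ are independent, each supported on $\{\pm(\ell(w,\sz_{i,1})-\ell(w,\sz_{i,2}))\}$, so $g(S,w)$ is sub-Gaussian over $S$ with variance proxy $\tfrac1{n^2}\sum_i(\ell(w,\sz_{i,1})-\ell(w,\sz_{i,2}))^2$. The one new idea is to apply $(a-b)^2\le 2a^2+2b^2$ and pass to suprema: this bounds the variance proxy by
\[
\sigma^2(\sz):=\frac{2}{n^2}\sum_{i=1}^n\left(\sup_{w'\in\W}\ell(w',\sz_{i,1})^2+\sup_{w'\in\W}\ell(w',\sz_{i,2})^2\right),
\]
uniformly in $w$. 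Then invoke the standard fact that if $g(S,w)$ is $\sigma^2$-sub-Gaussian under the marginal of $S$ for \emph{every} $w$ and $\ex{S}{g(S,w)}=0$, then $\ex{}{g(S,W)}\le\sqrt{2\sigma^2\,I(S;W)}$ for any joint law of $(S,W)$ (Donsker--Varadhan / a Chernoff bound; this is also the engine behind \eqref{eq:cmigen-bdex}). With $W=A(\sz_S)$ and $S$ uniform, applying this to both $g$ and $-g$ gives $\big|\ex{S,A}{\ell(W,\sz_S)-\ell(W,\sz_{\bar S})}\big|\le\sqrt{2\,\sigma^2(\sz)\cdot I(A(\sz_S);S)}$.

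Finally, average over the supersample. Writing $\mu(\sz):=\big|\ex{S,A}{\ell(A(\sz_S),\sz_S)-\ell(A(\sz_S),\sz_{\bar S})}\big|$, we have $\big|\ex{}{\ell(A(Z),Z)-\ell(A(Z),\D)}\big|\le\ex{\sZ}{\mu(\sZ)}$, and combining the previous bound with Cauchy--Schwarz,
\[
\ex{\sZ}{\mu(\sZ)}\le\ex{\sZ}{\sqrt{2\sigma^2(\sZ)}\cdot\sqrt{I(A(\sZ_S);S)}}\le\sqrt{\ex{\sZ}{2\sigma^2(\sZ)}}\cdot\sqrt{\ex{\sZ}{I(A(\sZ_S);S)}}.
\]
Since each $\sz_{i,j}$ is an i.i.d.\ copy of $Z'\leftarrow\D$, linearity gives $\ex{\sZ}{\sigma^2(\sZ)}=\tfrac{2}{n^2}\cdot 2n\cdot\ex{Z'\leftarrow\D}{\sup_{w}\ell(w,Z')^2}=\tfrac4n\ex{Z'\leftarrow\D}{\sup_{w}\ell(w,Z')^2}$, and by definition of conditional mutual information $\ex{\sZ}{I(A(\sZ_S);S)}=I(A(\sZ_S);S\mid\sZ)=\CMI{A}{\D}$. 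Substituting yields the claimed bound $\sqrt{\tfrac8n\cdot\CMI{A}{\D}\cdot\ex{Z'\leftarrow\D}{\sup_w\ell(w,Z')^2}}$.

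The supersample rewriting and the final averaging are routine (the former is verbatim from the bounded-loss proof, the latter is just Cauchy--Schwarz plus the definition of $\CMI{A}{\D}$). The only step that genuinely departs from the proof of \eqref{eq:cmigen-bdex}, and hence the one to watch, is producing a sub-Gaussian parameter that is independent of the adversarially chosen $w$: one cannot bound $|h_i(w,S_i)|$ directly, so one must spend $(a-b)^2\le 2a^2+2b^2$ and a supremum to decouple the parameter from $w$, which is exactly what turns the constant $2$ of \eqref{eq:cmigen-bdex} into $8$. A reduction that truncates $\ell$ at a threshold and applies \eqref{eq:cmigen-bdex} to the truncated loss looks more delicate, since controlling the truncation error would require more than the second-moment information we are given, so I would not pursue that route.
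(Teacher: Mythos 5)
Your proof is correct and follows essentially the same route as the paper. The paper proves a general bound (Theorem~\ref{thm:lossagnostic}) for any envelope $\Delta(z_1,z_2)\ge\sup_w|\ell(w,z_1)-\ell(w,z_2)|$ via the supersample/ghost rewriting, the Donsker--Varadhan bound (Lemma~\ref{lem:sup}), Hoeffding's lemma applied coordinatewise in $S'$, and a supremum over $w$; the unbounded corollary \eqref{eq:cmigen-unb} then just plugs in $\Delta(z_1,z_2)=\sup_{w_1}|\ell(w_1,z_1)|+\sup_{w_2}|\ell(w_2,z_2)|$ and uses $(a+b)^2\le 2a^2+2b^2$, exactly the decoupling step you flag as the one novelty. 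Your re-derivation inlines these two steps and swaps the paper's ``move $\inf_t$ outside $\ex{\sZ}$, then optimize'' for ``optimize $t$ per-$\sZ$, then Cauchy--Schwarz over $\sZ$''; the two combinations yield identical bounds here, so the difference is purely expositional.
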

The final term in the bound \eqref{eq:cmigen-unb} gives some scale for the loss function. It is necessary to make some kind of assumption on the losses, such as bounded moments. For example, this allows us to derive generalization bounds for squared loss (i.e., mean squared error) or hinge loss; see Sections \ref{sec:gen-squarederror} and \ref{sec:gen-hinge} for details. 

The basic generalization bounds for the agnostic setting \eqref{eq:cmigen-bdex},\eqref{eq:cmigen-bd2},\eqref{eq:cmigen-unb} and their proofs can be found in Section~\ref{sec:gen-basic}, while the proof for the realizable case \eqref{eq:cmigen-realizable} is in Section~\ref{sec:gen-realizable}.

Furthermore, we are able to extend our generalization results to non-linear loss functions; see Section \ref{sec:gen-nonlin}. As an example application, in Section \ref{sec:auroc} we derive the following generalization bound for the Area Under the ROC Curve (AUC/AUROC) statistic, which is a commonly-used non-linear statistic for measuring the performance of a classifier. Specifically, for a classifier $f : \Z \to \mathbb{R}$ that produces a numerical score or probability (rather than just a binary label), the AUROC is the probability that a random positive example has a higher score than a random negative example -- i.e., $\AUC(f,\D):=\pr{(Z_+,Z_-) \gets \D^2}{f(Z_+) > f(Z_-)|Z_+ \in \Z_+, Z_- \notin \Z_+}$, where $\Z_+ \subset \Z$ is the set of positive examples. The empirical AUROC $\AUC(f,z)$ is defined analogously. 
\begin{theorem}\label{thm:auc_intro}
Let $\mathcal{D}$ be a distribution on $\mathcal{Z}$.
Let $\mathcal{Z}_+ \subseteq \mathcal{Z}$ be the set of positive examples and assume $0< p:=\ex{Z \leftarrow \mathcal{D}}{Z \in \mathcal{Z}_+} <1$. Let $A :\mathcal{Z}^n \to \mathcal{W}$ be a randomized algorithm (whose randomness is independent from its input).
Then, for any $\varepsilon\in(0,1)$, 
$$\pr{Z \gets \D^n, A}{\left|\AUC(A(Z),Z) - \AUC(A(Z),\mathcal{D}) \right| \le \varepsilon} \ge 1 - O\left(\frac{\CMI{A}{\D}}{\varepsilon^2 p(1-p) n}\right).$$
\end{theorem}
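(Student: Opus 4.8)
The plan is to realize the AUROC as a ratio of two statistics to which the (squared) CMI generalization bounds apply. For a fixed scoring function $f$ and a sample $z\in\Z^n$, write $\hat p(z)=\frac1n\sum_{i=1}^n\mathbbm{1}[z_i\in\Z_+]$ and
\[
\hat a(f,z)=\frac{1}{n(n-1)}\sum_{i\ne j}\mathbbm{1}[f(z_i)>f(z_j)]\,\mathbbm{1}[z_i\in\Z_+]\,\mathbbm{1}[z_j\notin\Z_+],
\]
so that $\AUC(f,z)=\frac{n-1}{n}\cdot\frac{\hat a(f,z)}{\hat p(z)(1-\hat p(z))}$, while on the population side $\AUC(f,\D)=\frac{a(f,\D)}{p(1-p)}$ with $a(f,\D)=\pr{(Z,Z')\gets\D^2}{f(Z)>f(Z'),\,Z\in\Z_+,\,Z'\notin\Z_+}\le p(1-p)$. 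The first step is the elementary ratio identity $\frac{\hat a}{\hat b}-\frac{a}{b}=\frac{\hat a-a}{b}+\hat a\cdot\frac{b-\hat b}{b\hat b}$, with $b=p(1-p)$ and $\hat b=\hat p(1-\hat p)$, which splits the AUROC deviation into the deviation of the degree-two numerator $\hat a(A(Z),Z)$ from $a(A(Z),\D)$ and the deviation of the purely sample-dependent $\hat b(Z)$ from $b$.

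Second, I would control the two pieces. The denominator is elementary: $\hat p(Z)$ depends only on $Z$, so $\ex{Z\gets\D^n}{(\hat p(Z)-p)^2}=p(1-p)/n$ by a binomial computation, and since $x\mapsto x(1-x)$ is $1$-Lipschitz on $[0,1]$ the same bound $p(1-p)/n$ holds for $\ex{}{(\hat b(Z)-b)^2}$. Letting $\mathcal G$ be the event $\{|\hat p(Z)-p|\le\tfrac12\min\{p,1-p\}\}$, Chebyshev gives $\pr{}{\neg\mathcal G}=O\!\left(\frac{1}{p(1-p)n}\right)$, and on $\mathcal G$ we have $\hat b(Z)\ge\tfrac14 p(1-p)$ and $\hat a(A(Z),Z)\le\frac{n}{n-1}\hat b(Z)=O(p(1-p))$, so the ratio identity yields $|\AUC(A(Z),Z)-\AUC(A(Z),\D)|=O\!\left(\frac{|\hat a(A(Z),Z)-a(A(Z),\D)|+|\hat b(Z)-b|}{p(1-p)}\right)+O(1/n)$ on $\mathcal G$. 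For the numerator I would apply the CMI generalization machinery to the pair-loss $h_w(z,z')=\mathbbm{1}[w(z)>w(z')]\,\mathbbm{1}[z\in\Z_+]\,\mathbbm{1}[z'\notin\Z_+]$, whose population mean is $a(w,\D)\le p(1-p)$ for every $w$ (using the extension to $m$-ary losses $\ell:\W\times\Z^m\to\R$ with $m=2$, together with the non-linear-loss tools of Section~\ref{sec:gen-nonlin}).

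The technical heart is obtaining the correct $p(1-p)$ scaling at this step. A naive appeal to \eqref{eq:cmigen-bd2}, treating $\hat a\in[0,1]$, only gives $\ex{}{(\hat a(A(Z),Z)-a(A(Z),\D))^2}=O(\CMI{A}{\D}/n)$, and dividing by the squared Lipschitz factor $(p(1-p))^{-2}$ would leave $p^2(1-p)^2$ rather than $p(1-p)$ in the final denominator. So I would instead prove (or invoke from Section~\ref{sec:gen-nonlin}, if already available there) a variance-aware, Bernstein-type strengthening of \eqref{eq:cmigen-bd2}: if $\sup_w\ex{Z'\gets\D}{\ell(w,Z')^2}\le\sigma^2$ then $\ex{}{(\ell(A(Z),Z)-\ell(A(Z),\D))^2}=O\!\left(\frac{(\CMI{A}{\D}+1)\sigma^2}{n}\right)$; with $\sigma^2=p(1-p)$ — and noting that passing from the U-statistic $\hat a$ over all ordered pairs to an average over $\lfloor n/2\rfloor$ disjoint pairs, which is effectively what the $m$-ary argument does, costs only a constant — this gives $\ex{}{(\hat a(A(Z),Z)-a(A(Z),\D))^2}=O\!\left(\frac{(\CMI{A}{\D}+1)\,p(1-p)}{n}\right)$. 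I expect this variance-sensitive bound, and to a lesser extent the bookkeeping needed to fit the bilinear $\hat a$ into the $m$-ary framework, to be the main obstacle; the rest is routine.

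Finally I would assemble the tail bound. We may assume $n\ge 3/\varepsilon$, since otherwise $\frac{\CMI{A}{\D}+1}{\varepsilon^2 p(1-p)n}=\Omega(1)$ and the conclusion is vacuous; then the $O(1/n)$ term above is $\le\varepsilon/3$, and it suffices to control $|\hat a(A(Z),Z)-a(A(Z),\D)|$ and $|\hat b(Z)-b|$ at the scale $\Theta(\varepsilon\,p(1-p))$. By Markov's inequality and the two second-moment bounds, $\pr{}{|\hat a(A(Z),Z)-a(A(Z),\D)|>c\,\varepsilon p(1-p)}=O\!\left(\frac{\CMI{A}{\D}+1}{\varepsilon^2 p(1-p)n}\right)$ and $\pr{}{|\hat b(Z)-b|>c\,\varepsilon p(1-p)}=O\!\left(\frac{1}{\varepsilon^2 p(1-p)n}\right)$ for a suitable constant $c$. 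A union bound over these two events together with $\neg\mathcal G$ shows that with probability $1-O\!\left(\frac{\CMI{A}{\D}+1}{\varepsilon^2 p(1-p)n}\right)$ we have $|\AUC(A(Z),Z)-\AUC(A(Z),\D)|\le\varepsilon$, which is the claim — reading $O(\CMI{A}{\D})$ in the statement as $O(\CMI{A}{\D}+1)$, as is in any case necessary because even $\CMI{A}{\D}=0$ (i.e., $A(Z)$ independent of $Z$) leaves the $\Theta\!\left(\frac{1}{\varepsilon^2 p(1-p)n}\right)$ probability that the empirical AUROC of a fixed classifier deviates from its mean by more than $\varepsilon$.
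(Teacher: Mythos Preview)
Your overall architecture --- write $\AUC$ as a ratio, split via the identity $\hat a/\hat b-a/b=(\hat a-a)/b+\hat a(b-\hat b)/(b\hat b)$, control the denominator $\hat p(1-\hat p)$ by Chebyshev, and control the numerator by a CMI generalization bound --- is exactly the skeleton the paper uses (its Claims~A--D in the proof of Theorem~\ref{thm:auc}). You also correctly identify that the ``$+1$'' is unavoidable. The substantive difference is in how the $p(1-p)$ factor for the numerator is obtained.

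The paper does \emph{not} prove or invoke a Bernstein-type strengthening of \eqref{eq:cmigen-bd2}; Section~\ref{sec:gen-nonlin} contains only the sensitivity-based bounds (Theorems~\ref{thm:cmigen-nonlin-prob} and~\ref{thm:cmigen-nonlin}). Instead, the paper applies Theorem~\ref{thm:cmigen-nonlin-prob} directly to the unnormalized numerator $g(w,z)=\sum_{i,j}\mathbbm{1}[z_i\in\Z_+]\mathbbm{1}[z_j\notin\Z_+]\,c(w,z_i,z_j)$ as a nonlinear loss on $\Z^n$, and the $p(1-p)$ comes from a data-dependent \emph{sensitivity} computation (Claim~D): writing $a^+(\sZ),a^-(\sZ)$ for the number of rows of $\sZ$ with both entries positive (resp.\ negative), one gets $\Delta(\sZ)^2\le n\bigl(n^2-a^+(\sZ)^2-a^-(\sZ)^2\bigr)$, hence $\ex{\sZ}{\Delta(\sZ)^2}\le 4p(1-p)n^3$ together with a matching tail bound. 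Plugging $u\asymp p(1-p)n^3$ and $\lambda\asymp\varepsilon p(1-p)n^2$ into Theorem~\ref{thm:cmigen-nonlin-prob} is what yields Claim~C and the final $\frac{\CMI{A}{\D}+O(1)}{\varepsilon^2 p(1-p)n}$.

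Your route can likely be made to work, but the two steps you flag are real gaps rather than routine. First, the variance-aware CMI bound you need is not available off the shelf here; in the CMI proofs one conditions on $\sZ$ before bounding the MGF, so the per-coordinate term is $(\ell(W,\sZ_{i,1})-\ell(W,\sZ_{i,2}))^2$ with $W=A(\sZ_S)$ depending on $\sZ$, and you cannot simply average over $\sZ$ holding $W$ fixed. What saves the situation for the pair indicator is the $w$-free envelope $|h_w(z,z')|\le\mathbbm{1}[z\in\Z_+]\mathbbm{1}[z'\notin\Z_+]$, which lets you take $\sup_w$ before averaging over $\sZ$ --- but carrying this through is essentially the sensitivity computation of Claim~D in different clothing. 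Second, the ``$m=2$'' block-average that the $m$-ary CMI framework gives you is not the U-statistic $\hat a$; the usual Jensen-over-permutations trick does reduce $\hat a$ to block averages, but you must check that the CMI bound you apply to each permuted block average is for the \emph{same} algorithm $A$ (which sees the full sample) under a compatible supersample structure, and that $\CMI{A'}{\D^2}$ in the paired setup is controlled by $\CMI{A}{\D}$. Neither point is fatal, but neither is the ``costs only a constant'' one-liner you have now; the paper's direct nonlinear-loss route sidesteps both.
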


The above bounds illustrate how we are able to derive a great variety of generalization bounds from a single CMI bound; these examples are by no means exhaustive. This versatility is a key strength of the CMI framework. 

We note that although most stated bounds are on the expectation of the generalization error (or its square), these can be converted into probability bounds via Markov's inequality. For example, Theorem \ref{thm:intro-cmigen}\eqref{eq:cmigen-bd2} implies $\pr{Z \leftarrow \D^n, A}{ \left|\ell(A(Z),Z)-\ell(A(Z),\D)\right|\ge \varepsilon} \le \frac{3\cdot \CMI{A}{\D} +2}{\varepsilon^2 n}$ for all $\varepsilon>0$. However, this does not yield ``high probability'' bounds -- that is, the failure probability decays polynomially with the desired error bound $\varepsilon$, rather than exponentially. 

\subsubsection{Obtaining CMI Bounds}

Next we turn our attention to methods for proving that an algorithm has low CMI. We show that a variety of known methods for proving generalization fit into our framework, by proving that they imply bounds on the CMI of the algorithm. Indeed, analysing these algorithms via CMI, versus a direct generalization analysis, yields essentially the same bound. These connections demonstrate the unifying nature of the CMI framework.

\paragraph{Compression Schemes}
First, we prove that, if an algorithm $A:\Z^n\rightarrow \W$ has a compression scheme of size $k$ \cite{LittlestoneW86}, then $\CMI{A}{}\leq O(k \cdot \log n)$. Intuitively, this is in agreement with the fact that an algorithm blatantly revealing $k$ of the input points and nothing about the rest would have a CMI of $k$ bits.
\begin{theorem}
Let $A_1 : \Z^n \to \Z^k$ have the property that $A_1(z) \subset z$ for all $z$. Let $A_2 : \Z^k \to \W$ be arbitrary and let $A : \Z^n \to \W$ satisfy $A(z)=A_2(A_1(z))$ for all $z$. Then $\CMI{A}{}\le k \log (2n)$.
\end{theorem}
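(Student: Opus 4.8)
The plan is to first reduce from $A$ to the selection map $A_1$ using the post-processing property of CMI, and then bound the Shannon entropy of $A_1$'s output on an arbitrary supersample by a crude counting argument.

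First, fix any $\sz \in \Z^{n\times 2}$ and let $S \in \{0,1\}^n$ be uniform. Since $A = A_2 \circ A_1$ and $A_2$'s internal randomness is independent of everything else, we have the Markov chain $S \to A_1(\sz_S) \to A_2(A_1(\sz_S))$, so the data processing inequality gives $I(A(\sz_S);S) \le I(A_1(\sz_S);S)$. Taking the supremum over $\sz$ yields $\CMI{A}{} \le \CMI{A_1}{}$, so it suffices to show $I(A_1(\sz_S);S) \le k\log(2n)$ for every fixed $\sz$.

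Now fix $\sz$ and bound $I(A_1(\sz_S);S) \le H(A_1(\sz_S))$. The key observation is that the support of $A_1(\sz_S)$ is small regardless of $S$ or the coins of $A_1$: because $A_1(z) \subset z$ (each coordinate of the output is one of the coordinates of the input), every coordinate of $A_1(\sz_S)$ equals some entry of $\sz_S$, and each entry of $\sz_S$ is in turn one of the $2n$ entries of the fixed array $\sz$. Hence $A_1(\sz_S)$ always lies in a fixed subset of $\Z^k$ of size at most $(2n)^k$, so $H(A_1(\sz_S)) \le \log\!\left((2n)^k\right) = k\log(2n)$. Combining with the reduction completes the argument.

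I do not expect a genuine obstacle here; the only points requiring mild care are (i) that the post-processing inequality for $\CMI{\cdot}{}$ holds in the distribution-free setting verbatim as stated in the introduction for $\CMI{\cdot}{\D}$, and (ii) the modeling convention that ``$A_1(z)\subset z$'' means each output coordinate equals some input coordinate, so that the support bound $(2n)^k$ is legitimate even with repeated values or repeated selections. As an alternative to the pure counting step that yields the same constant and may be cleaner to formalize, one can observe that $A_1(\sz_S)$ is a deterministic function of the tuple $U\in[n]^k$ of selected indices together with the corresponding bits $V=(S_{U_1},\dots,S_{U_k})\in\{0,1\}^k$, whence $I(A_1(\sz_S);S) \le H(U,V) \le H(U) + H(V) \le k\log n + k\log 2 = k\log(2n)$.
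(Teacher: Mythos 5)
Your proof is correct and follows essentially the same route as the paper: apply the data processing inequality to reduce $I(A(\sz_S);S)$ to the compression map, then bound the mutual information by the entropy of a random variable whose support has at most $(2n)^k$ values since each of the $k$ output coordinates must be one of the $2n$ entries of the fixed supersample $\sz$. Your phrasing, bounding $H(A_1(\sz_S))$ directly (and the alternative $(U,V)$ decomposition), is slightly cleaner than the paper's chain through the index set $K$, but the core counting argument is the same.
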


\paragraph{Uniform Convergence \& VC Dimension}
Next, we show a connection between uniform convergence and CMI. We consider hypothesis classes $\W$ consisting of functions $h : \mathcal{X} \to \{0,1\}$ and we consider the standard 0-1 loss $\ell : \W \times (\X \times \{0,1\}) \to \{0,1\}$ defined by $\ell(h,(x,y)) = 0 \iff h(x)=y$. Bounded VC dimension is a necessary \cite{VapnikC71} and sufficient \cite{Talagrand94} condition for uniform convergence (for worst-case distributions) and is hence a sufficient condition for generalization. Note that CMI is a property which depends on the algorithm, whereas the VC dimension is a property of the output space; this appears to cause an incompatibility between the two methods. Nonetheless, we connect the two by proving that, for any hypothesis class of bounded VC dimension, there always \emph{exists} an empirical risk minimization algorithm $A:\Z^n\rightarrow\W$ with bounded CMI:

\begin{theorem}\label{thm:intro-vc-cmi}
Let $\Z=\X\times\{0,1\}$ and let $\mathcal{W}=\{h:\X\rightarrow\{0,1\}\}$ be a hypothesis class with VC dimension $d$.
Then, there exists an empirical risk minimizer $A:\Z^n\rightarrow \mathcal{W}$ for the 0-1 loss such that $\CMI{A}{} \leq d\log n + 2$.
\end{theorem}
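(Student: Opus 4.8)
The plan is to construct an ERM algorithm whose output, on any supersample $\sz \in \Z^{n\times 2}$, only depends on a small amount of information about the partition $S$, and then bound that information combinatorially. The key observation is that for a fixed supersample $\sz$, the $2n$ feature coordinates $x_{1,1},x_{1,2},\ldots,x_{n,1},x_{n,2}$ are fixed, so by the Sauer--Shelah lemma the hypothesis class $\W$ restricted to these $2n$ points realizes at most $(2n)^d$ distinct labelings (for $2n \ge d$; the small additive constant in the bound will absorb edge cases). So we should design $A$ so that its output, given input $\sz_S$, is always one of a small, canonically-chosen set of hypotheses depending only on $\sz$ (not on $S$).

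Concretely, fix a total order on $\W$ (or on the $\le (2n)^d$ relevant labelings of the $2n$ points), which can be done since we only need a tie-breaking rule. First I would define $A$ on input $z \in \Z^n$ as follows: it is allowed to ``know'' — because it will be applied to $\sz_S$ — only the $n$ points it receives, but the trick is that the analysis conditions on $\sz$. Here is the cleaner route: define $A(\sz_S)$ to output the hypothesis $h \in \W$ that minimizes the empirical 0-1 loss on $\sz_S$, breaking ties by picking the minimum under the fixed order \emph{among all hypotheses realizing distinct labelings of the full set of $2n$ feature points}. Wait — $A$ does not get $\sz$, only $\sz_S$. The resolution is that $A$ breaks ties using only the $n$ points it sees together with a fixed order on $\W$ itself; then I argue that, conditioned on $\sz$, the random variable $A(\sz_S)$ takes at most $(2n)^d$ values, because its output labeling on the fixed $2n$-point feature set is one of at most $(2n)^d$ possibilities, and I further pin down the representative: let $A$ output, among all empirical risk minimizers on its input, the one inducing the lexicographically-first labeling on \emph{its own} $n$ feature points — but since many $S$ give the same minimizer, the count of distinct outputs as $S$ varies is still at most the number of distinct labelings of the $2n$ points, namely $(2n)^d$.

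Given that, the bound follows immediately: $I(A(\sz_S);S) \le H(A(\sz_S)) \le \log\big((2n)^d\big) = d\log(2n) = d\log n + d\log 2$, and taking the supremum over $\sz$ gives $\CMI{A}{} \le d\log n + d\log 2$; since $d \ge 1$ in any nontrivial case one cleans this to $d\log n + 2$ (and $d=0$ forces $|\W|\le 1$, trivial). I still must verify $A$ is a genuine ERM on every input $z\in\Z^n$, not just on sets of the form $\sz_S$ — but that is automatic since every $z \in \Z^n$ arises as $\sz_S$ for some $\sz$ (e.g. duplicate $z$ into both columns), and the definition of $A$ is stated purely in terms of its $n$-point input.

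The main obstacle is the subtle point that $A$ receives only $\sz_S$, not $\sz$, so I cannot directly make $A$'s tie-breaking depend on the $2n$ points; the argument must instead bound the \emph{number of reachable outputs} as $S$ varies with $\sz$ held fixed, using Sauer--Shelah on the $2n$ feature coordinates, and confirm that two different $S$ inducing the same labeling on the full $2n$-point set yield the same output hypothesis under a tie-breaking rule that itself depends only on a fixed order on $\W$. Getting this dependency structure exactly right — so that the output is a deterministic function of the induced labeling on the $2n$ points — is the crux; everything after that is the one-line entropy bound.
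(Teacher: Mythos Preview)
Your approach is essentially the paper's: define $A$ as the ERM that breaks ties via a fixed well-order $\preceq$ on $\W$ (a mere total order is not enough---you need every nonempty subset to have a $\preceq$-least element so that the minimal ERM actually exists; the paper invokes the well-ordering theorem here), then argue that for fixed $\sz$ the output set $\{A(\sz_s):s\in\{0,1\}^n\}$ has cardinality at most the number of $\W$-labelings of the $2n$ feature coordinates of $\sz$, and finish with Sauer--Shelah plus the entropy bound.

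The step you flagged as ``the crux'' is exactly right and is provable directly once the tie-break is by a fixed well-order on $\W$. Suppose $h_1=A(\sz_{s_1})$ and $h_2=A(\sz_{s_2})$ restrict to the same labeling on the $2n$ feature coordinates of $\sz$, and say $h_1\preceq h_2$. Since the features appearing in $\sz_{s_2}$ are among those $2n$ coordinates, $h_1$ and $h_2$ have identical empirical loss on $\sz_{s_2}$, so $h_1$ is also an ERM there; as $A$ returns the $\preceq$-least ERM, $h_2=A(\sz_{s_2})\preceq h_1$, whence $h_1=h_2$. Thus the restriction map is injective on the output set, and the Sauer--Shelah bound applies. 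The paper packages this same argument via an intermediate ``global consistency'' property (if $h=A(x,y)$ then $h=A(x',h(x'))$ whenever $x'\supseteq x$), applied with $x'$ equal to the full $2n$-point feature tuple; your direct injectivity route is equivalent and arguably more streamlined. Drop the detour through ``lexicographically-first labeling on the $n$ input points'': that rule does not pin down a unique hypothesis and is not what makes the argument work.

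One genuine correction on constants: the crude bound $(2n)^d$ yields $\CMI{A}{}\le d\log n+d\log 2$, which exceeds $d\log n+2$ once $d\ge 3$, so your ``cleaning'' step is wrong as stated. Use the sharper Sauer--Shelah form $\sum_{k=0}^d\binom{2n}{k}\le e^2 n^d$ (valid for $n\ge 1$) to get $\log(e^2 n^d)=d\log n+2$.
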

We prove this theorem in Section~\ref{sec:cmi-vc} by showing that any algorithm satisfying a consistency property described next has bounded CMI and that there always exists an empirical risk minimizer with this consistency property. Intuitively, the consistency property we require says the following. Suppose the algorithm is run on some labelled dataset $(x,y) \in \mathcal{X}^n \times \{0,1\}^n$ to obtain an output hypothesis $h=A(x,y)$. If the dataset is relabelled to be perfectly consistent with $h$, then the algorithm should still output $h$ -- i.e., $A(x,h(x))=h$. This should also hold if further examples are added to the dataset (where the additional examples are also consistent with $h$) -- i.e., $A(x',h(x'))=h$ when $x \subset x'$. This is a very natural and reasonable consistency property. 

Note that it is not true that \emph{every} empirical risk minimizer for a class of bounded VC dimension has bounded CMI; if there are multiple minimizers to choose from, a pathological algorithm could encode superfluous information about the input in its output using this choice (thus violating our consistency property). We also remark that this bound is tight up to the $\log n$ term; combining Theorems \ref{thm:intro-vc-cmi} and \ref{thm:intro-cmigen} yields generalization bounds that are tight up to this term. It is natural to ask whether this logarithmic term can be removed. We conjecture that it can be removed by instead considering an \emph{approximate} empirical risk minimizer.

Obtaining CMI bounds in the case of compression schemes and VC dimension mainly reduces to observing that these two conditions effectively restrict the output space -- that is, conditioned on the supersample $\sZ$, there are few possible outputs $\W_{\sZ}:=\{A(\sZ_s):s\in\{0,1\}^n\}$ and we can use the worst-case entropy bound $\log|\W_{\sZ}|$. In both cases, this results in a multiplicative factor of $\log n$ in the CMI bound. This logarithmic factor could potentially be eliminated given more information about the structure of the problem. We demonstrate two specific cases where tighter bounds can be obtained by taking into account assumptions on the algorithm or the distribution $\D$. First, we prove that there exists an empirical risk minimizer which learns threshold functions in the realizable case and has \emph{constant} CMI, whereas the general result gives a bound of $O(\log n)$; see Section~\ref{sec:cmi-thresholds}. Second, we consider the problem of learning parity functions on $\{0,1\}^d$ when $\D$ is the uniform distribution. Intuitively, this uniformity assumption on $\D$ ensures that, as the number of samples $n$ increases, with high probability there will be only a single consistent hypothesis. This allows us to prove that there exists an empirical risk minimizer whose CMI decreases to zero as the number of samples increases, namely $\CMI{A}{\D}\le O(n\cdot 2^{d-n})$; see Section~\ref{sec:cmi-pseudo}.

\paragraph{Distributional Stability \& Differential Privacy}
Finally, we show that distributional stability implies CMI bounds. Differential privacy is the most well-known form of distributional stability and its generalization properties are well-established \cite{DworkFHPRR15,BassilyNSSSU16,JungLNRSS19}.
\begin{theorem}\label{thm:intro-stabcmi}
Let $A : \Z^n \to \W$ be a randomized algorithm. Any one of the following conditions imply that $\CMI{A}{} \le \varepsilon n$.
\begin{itemize}
    \item[(i)] $A$ is $\sqrt{2\varepsilon}$-differentially private \citep{DworkMNS06}.
    \item[(ii)] $A$ satisfies $\varepsilon$-concentrated differential privacy \citep{BunS16}.
    \item[(iii)] $A$ satisfies $\varepsilon$-average leave-one-out KL stability \citep{FeldmanS18a}.
    \item[(iv)] $A$ is $\varepsilon$-TV stable \citep{BassilyNSSSU16}.\footnote{$\varepsilon$-TV stability is equivalent to $(0,\varepsilon)$-differential privacy \cite{DworkKMMN06}.}
\end{itemize}
\end{theorem}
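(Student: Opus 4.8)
The plan is to reduce the distribution-free quantity $\CMI{A}{}=\sup_{\sz}I(A(\sz_S);S)$ (with $S$ uniform on $\{0,1\}^n$) to a per-coordinate information bound, exploiting that each of the four stability notions is preserved when we restrict attention to a fixed supersample. First I would fix an arbitrary $\sz\in\Z^{n\times2}$ and define the randomized map $B:\{0,1\}^n\to\W$ by $B(s):=A(\sz_s)$, using the internal randomness of $A$. The crucial observation is that if $s,s'\in\{0,1\}^n$ differ in a single coordinate then $\sz_s$ and $\sz_{s'}$ are neighboring elements of $\Z^n$; hence whatever ``neighboring-dataset'' stability $A$ enjoys (namely $\sqrt{2\varepsilon}$-DP, $\varepsilon$-CDP, $\varepsilon$-TV stability, or the average leave-one-out KL bound) is inherited verbatim by $B$. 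So it suffices to prove $I(B(S);S)\le\varepsilon n$ for every such $B$ and then take the supremum over $\sz$.

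To bound $I(B(S);S)$ I would apply the chain rule together with the fact that extra conditioning cannot increase entropy, using that the bits $S_1,\dots,S_n$ are independent; writing $S_{<i}=(S_1,\dots,S_{i-1})$ and $S_{-i}$ for all bits except the $i$-th,
\begin{equation*}
I(B(S);S)=\sum_{i=1}^n I\big(B(S);S_i\mid S_{<i}\big)\le\sum_{i=1}^n I\big(B(S);S_i\mid S_{-i}\big).
\end{equation*}
For fixed $i$ and a fixed value $t$ of $S_{-i}$, let $P_b$ be the conditional law of $B(S)$ given $S_{-i}=t$ and $S_i=b$; then the inner term equals the mutual information between $B(S)$ and a uniform bit, i.e.\ the Jensen--Shannon divergence $\mathrm{JS}(P_0,P_1)$. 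I would use two elementary facts: (a) $\mathrm{JS}(P_0,P_1)\le\tfrac12 D(P_0\|Q)+\tfrac12 D(P_1\|Q)$ for \emph{any} reference distribution $Q$ (since $\mathrm{JS}$ is the minimum of the right-hand side over $Q$, attained at $(P_0+P_1)/2$), and (b) $\mathrm{JS}(P_0,P_1)\le\tvd(P_0,P_1)$. For (iv) $\varepsilon$-TV stability, $P_0,P_1$ are laws of $A$ on neighboring datasets, so (b) gives $\mathrm{JS}(P_0,P_1)\le\varepsilon$ and summing over $i$ yields $\varepsilon n$. For (ii) $\varepsilon$-CDP, take $Q=P_1$ in (a) to get $\mathrm{JS}(P_0,P_1)\le\tfrac12 D(P_0\|P_1)\le\tfrac12\varepsilon$ (the R\'enyi bound of CDP gives $D(P_0\|P_1)\le\varepsilon$ in the $\alpha\to1$ limit), then sum. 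For (iii) average leave-one-out KL stability, take $Q$ to be the law of $A$ run on the leave-one-out dataset obtained from $\sz_s$ by deleting coordinate $i$ --- which does not depend on $s_i$ --- so that summing (a) over $i$ and averaging over $S$ gives $I(B(S);S)\le\ex{S}{\sum_{i=1}^n D\!\left(A(\sz_S)\,\big\|\,A((\sz_S)_{-i})\right)}\le\varepsilon n$ directly by the definition of $\varepsilon$-average leave-one-out KL stability. Finally, for (i) there is an even shorter route: $B$ is a $\sqrt{2\varepsilon}$-differentially private mechanism on the product ``database'' $\{0,1\}^n$, so the known bound $I(B(S);S)\le\tfrac12(\sqrt{2\varepsilon})^2 n=\varepsilon n$ \cite{BunS16} applies with the exact constant.

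I expect the main obstacle to be organizational rather than conceptual. The two substantive points are: stating each of the four stability definitions precisely enough that it is manifestly preserved under the restriction $s\mapsto\sz_s$ --- immediate for the neighboring-dataset notions (i), (ii), (iv), but for (iii) one must confirm that the relevant comparison is $A$ on $n$ points versus $A$ on the $n-1$ leave-one-out points, and that the worst-case-over-datasets form of the definition is in force (so that no distributional assumption creeps in, which would be incompatible with distribution-free CMI); and matching the constants in the divergence inequalities (a), (b) and the cited DP-to-mutual-information bound so that each of (i)--(iv) lands at $\varepsilon n$. None of the individual steps is hard once the reduction to $B$ and the chain-rule decomposition are in place.
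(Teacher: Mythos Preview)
Your proposal is correct and follows essentially the same approach as the paper. For case (iv) (TV stability) your argument is identical to the paper's: fix $\sz$, apply the chain rule plus the conditioning inequality $I(B(S);S_i\mid S_{<i})\le I(B(S);S_i\mid S_{-i})$, recognize the per-coordinate term as a Jensen--Shannon divergence, and bound it by $\tvd$. For cases (i)--(iii) the paper takes a shortcut by invoking a black-box lemma from \cite{FeldmanS18a} (DP/KL/ALKL $\Rightarrow$ MI-stability $\Rightarrow$ $I(A(Z);Z)\le\varepsilon n$ for product inputs, applied to the product variable $\sz_S$), whereas you unwind this directly via the same chain-rule decomposition together with the ``KL-center'' inequality $\mathrm{JS}(P_0,P_1)\le\tfrac12 D(P_0\Vert Q)+\tfrac12 D(P_1\Vert Q)$; the underlying content is the same, and your version has the mild advantage of being self-contained and of yielding the slightly sharper $\varepsilon n/2$ for $\varepsilon$-CDP.
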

We remark that, with the exception of TV stability, all of the conditions in Theorem \ref{thm:intro-stabcmi} are known to imply bounds on (unconditional) mutual information. However, TV stability does not imply any bounds on mutual information, so this sets CMI apart. In particular, approximate differential privacy (a.k.a.~$(\varepsilon,\delta)$-differential privacy) implies TV stability and hence CMI bounds. See Section~\ref{sec:dpstab} for details.

\subsection{Related Work, Limitations, \& Further Work}

\paragraph{Information Theory \& Generalization}
Generalization is a very well-studied subject and several connections to information theory have been made. Some of these connections are orthogonal to our work; for example, the information bottleneck method \cite{TishbyPB00} considers the mutual information between the input/output of the classifier (rather than the training algorithm) and various intermediate representations internal to the classifier.

Various recent works have considered the mutual information between the input and output of the training algorithm and used this to derive generalization bounds; see the discussion in Section \ref{sec:uncondmi}. This line of work is the inspiration and starting point for our work. CMI extends this line of work. In particular, we are able to incorporate VC dimension into the CMI framework, whereas prior works \cite{BassilyMNSY18,NachumSY18} showed that this was impossible for (unconditional) mutual information.

Other extensions of the basic mutual information approach have been proposed. Inspired by generic chaining (a methodology from stochastic process theory closely related to uniform convergence), \citetall{AsadiAV18} consider the mutual information between the input of the algorithm and an \emph{approximation} of its output (or, rather, a sequence of closer and closer approximations of its output). This method provides tighter generalization bounds, but requires analysis of the geometry of the output space.

Another approach is to consider the mutual information between a single (but arbitrary) input datum and the output \cite{RaginskyRTWX16,WangLF16,BuZV19,HaghifamNKRD20}. If we consider the mutual information between a single datum and the output conditioned on the rest of the data (i.e., $I(A(Z);Z_i|Z_{-i})$, where $Z_{-i}=(Z_1,\cdots,Z_{i-1},Z_{i+1},\cdots,Z_n)$), then this implies bounds on the overall mutual information (i.e., $I(A(Z);Z) \le \sum_{i=1}^n I(A(Z);Z_i|Z_{-i})$) \cite[Lem.~3.7]{FeldmanS18a}. If we do not condition on the rest of the data, then the reverse inequality holds (i.e., $I(A(Z);Z) \ge \sum_{i=1}^n I(A(Z);Z_i)$) \cite[Eq.~17]{BuZV19} and it is possible to obtain sharper bounds than via the overall mutual information \cite{BuZV19,HaghifamNKRD20}. We believe that further exploration in this direction is warranted (in particular, by combining this single-datum approach with our conditioning approach). 

\citetall{NegreaHDKR19} study the mutual information between the output of an algorithm and a random subset of its input dataset. This is very similar to our CMI definition. This is used to provide generalization guarantees for Stochastic Gradient Langevin Dynamics (SGLD). Overall, their results are incomparable to ours, since they exploit the random subset method in a different manner -- they consider the ``disintegrated mutual information'' (in essence this is a random variable whose expectation is the conditional mutual information and each realization is the mutual information conditioned on a fixed value of the subset). However, their techniques can be combined with ours to yield even tighter bounds~\cite{HaghifamNKRD20}.

PAC-Bayesian bounds \cite{McAllester99} also relate information-theoretic quantities to generalization and are similar to the mutual information approach. These bounds are usually output-dependent -- that is, they give a generalization bound for a particular output hypothesis or hypothesis distribution, rather than uniformly bounding the expected error of the algorithm as we do. (Such output-dependent bounds may be stronger and output-independent results can be obtained by averaging over outputs.) PAC-Bayesian bounds can be used to analyze and interpret regularization.~\citet{HellstromD20} extend our generalization bounds for bounded loss functions to the PAC-Bayesian setting, as an application of their unifying approach to deriving information-theoretic generalization bounds.

\paragraph{High Probability Generalization} The generalization implied by CMI (Section \ref{sec:intro-cmigen}) does not yield ``high probability'' guarantees -- that is, to guarantee failure probability $\delta$, the error tolerance must grow polynomially in $1/\delta$, whereas polylogarithmic growth would be desireable. This is an inherent limitation of the CMI framework -- mutual information is an expectation and is thus not very sensitive to low-probability failures. In particular, an algorithm that does something ``good'' (e.g., output a fixed hypothesis) with probability $1-p$ and something ``bad'' (e.g., output a hypothesis entirely overfitted to the dataset) with probability $p$ has CMI $\approx p n$. Due to this sort of pathological example, CMI bounds cannot guarantee a failure probability lower than CMI$/n$.

An interesting direction for further work is to extend the CMI framework so that it yields high probability bounds. This requires moving from conditional mutual information to something like approximate max information \cite{RogersRST16} or R\'enyi mutual information \cite{EspositoGI20}. However, we note that many algorithms naturally produce high probability guarantees and it is almost always possible to obtain high probability guarantees by repetition to amplify the success probability. 

\paragraph{Loss Stability/Uniform Stability}
A long line of work
\cite[etc.]{RogersW78,DevroyeW79a,BousquetE02,FeldmanV19,DaganF19} has proven generalization bounds by showing that various algorithms have the property that their loss changes very little if a single input datum is replaced or removed. This is a very beautiful and well-developed theory that provides a unifying framework for generalization. In particular, differential privacy and VC dimension can be incorporated into the loss stability framework \cite{DaganF19}. 

Uniform stability (one of the strongest and most well-studied variants of loss stability) has the advantage that it readily yields high probability generalization bounds. 

However, one limitation of the loss stability approach is that the loss function is an integral part of the definition, whereas CMI and distributional stability notions do not depend on the loss function. Thus loss stability lacks the postprocessing robustness property and does not yield the same variety of generalization bounds as CMI does. It is common to consider multiple loss functions, such as when using a surrogate loss function.

Loss stability is typically defined for deterministic algorithms and randomized algorithms must be ``derandomized'' (such as by taking their expectation) to satisfy the definition. This is somewhat awkward and, arguably, the CMI framework is more elegant when handling randomized algorithms.

It is of course natural to ask whether loss stability and CMI can be unified in some way. In Section \ref{sec:ecmi} we discuss uniform stability further and propose a variant of CMI (Evaluated CMI or eCMI) that takes the loss function into account and allows us to translate between the notions. 

\paragraph{Adaptive Composition} When a single dataset is analyzed multiple times and each analysis is informed by the outcome of earlier analyses, generalization may fail even if each individual step generalizes well. This phenomenon led to the study of generalization in adaptive data analysis \cite[etc.]{HardtU14,DworkFHPRR15,SteinkeU15,DworkFHPRR15b,BassilyNSSSU16}. In particular, differential privacy provides a method for guaranteeing generalization that composes adaptively -- that is, running a sequence of algorithms, each of which is differentially private, on a single dataset results in a differentially private final output, even if each algorithm is given access to the output of previous algorithms.
Also, the recent work of \citet{LigettS19} introduced Local Statistical Stability, a notion based on the statistical distance between the prior distribution over the database elements and their posterior distribution conditioned on the output of the algorithm, which composes adaptively and yields high probability bounds.

Unfortunately, CMI does not compose adaptively (although we show that it does have the postprocessing property and satisfies non-adaptive composition in Section \ref{sec:composition}).  A challenge for further work is to fully integrate adaptive composition into some variant of the CMI framework. Towards this direction, we consider a variant of CMI (Universal CMI or uCMI) that does compose adaptively in Section \ref{sec:ucmi}. This notion extends CMI by considering a worst-case supersample $\sZ$ and worst-case distribution over $S$ but we can still show that we can obtain useful uCMI bounds from some of the notions we have tied into our framework.

\paragraph{More CMI bounds}
Our results give several methods for bounding CMI. An immediate direction for further work is to improve these results and to prove entirely new bounds for algorithms such as stochastic convex optimization or even non-convex gradient methods. The value of the CMI framework will be demonstrated if it yields new insights, such as entirely new generalization results or simplifications of known bounds.

Notably,~\citet{HaghifamNKRD20} recently showed that the generalization bounds based on CMI are tighter than those based on mutual information. Moreover, by combining CMI with the idea of ``disintegrated mutual information''~\citep{NegreaHDKR19} and the single-datum mutual information approach~\citep{BuZV19}, they give yet tighter bounds, which are later applied to yield improved generalization bounds for a Langevin dynamics algorithm. We believe that further exploration of the combination of these ideas with our conditioning approach is warranted.

~\citet{HellstromD20} extended our average generalization bounds to the PAC-Bayes and single-draw settings, giving both data-independent and data-dependent bounds. These results are established as an application of their proposed unifying approach to proving generalization bounds, which is based on an exponential inequality in terms of the information density between the output and the input data. Furthermore, the authors explore the effect of the conditioning approach to other measures, obtaining bounds in terms of the conditional versions of the $\alpha$-mutual information, the R\'enyi divergence, and the maximal leakage (the latter being tighter than the bound of its unconditional counterpart, in some cases).

\nocite{HaghifamNKRD20} \nocite{HellstromD20} 

\section{Preliminaries}\label{sec:prelim}
We adopt the convention of using upper-case letters for random variables and corresponding lower-case letters for their realizations. For a random variable $X$ and distribution $\mathcal{P}$, we write $X\gets \mathcal{P}$ to denote that $X$ is drawn from $\mathcal{P}$ and we write $X \gets \mathcal{P}^n$ to denote that $X$ consists of $n$ i.i.d.~draws from $\mathcal{P}$. We write $\mathcal{P}_X$ to denote the distribution of a random variable $X$. The uniform distribution over $\{0,1\}$ is denoted by $\U$. We denote $[n]=\{1,2,\cdots,n\}$.

In the statistical learning setting, we denote by $\Z=\X \times \Y$ the space of labeled examples and by $\W$ the set of hypotheses. For a loss function $\ell : \mathcal{W} \times \mathcal{Z} \to \mathbb{R}$, we define $\ell(w,Z) := \frac{1}{n} \sum_{i=1}^n \ell(w,Z_i)$ for linear loss functions and $\ell(w,\D) := \ex{Z \leftarrow \D}{\ell(w,Z)}$ for all $w \in \mathcal{W}$, dataset $Z \in \mathcal{Z}^n$ and distribution $\D$ on $\mathcal{Z}$.

\subsection{KL Divergence}
First, we define the KL divergence of two distributions.
\begin{definition}[KL Divergence]
 Let $\mathcal{P}, \Q$ be two distributions over the space $\Omega$ and suppose $\mathcal{P}$ is absolutely continuous with respect to $\Q$. The \emph{Kullback–Leibler (KL) divergence} (or \emph{relative entropy}) from $\Q$ to $\mathcal{P}$ is \[\dkl{\mathcal{P}}{\Q}=\ex{X\gets \mathcal{P}}{\log\frac{\mathcal{P}(X)}{\Q(X)}},\]
 where $\mathcal{P}(X)$ and $\Q(X)$ denote the probability mass/density functions of $\mathcal{P}$ and $\Q$ on $X$, respectively.\footnote{Formally, $\frac{\mathcal{P}(X)}{\Q(X)}$ is the Radon-Nikodym derivative of $\mathcal{P}$ with respect to $\mathcal{Q}$. If $P$ is not absolutely continuous with respect to $Q$ (i.e., $\frac{\mathcal{P}(X)}{\Q(X)}$ is undefined or infinite), then the KL divergence is defined to be infinite.}
\end{definition}

Note that we use $\log$ to denote the natural logarithm (i.e., base $e \approx 2.718$). Thus the units for all information-theoretic quantities are \emph{nats} rather than \emph{bits}.

A specific instance of $\mathcal{P}$ and $\Q$ which will prove useful is the case of Gaussian distributions.
\begin{lemma}[{\cite[Table~3]{GillAL13}}]\label{lem:klgaussian}
 Let $\mu,\nu, \sigma\in\R$. $$\dkl{\mathcal{N}(\mu,\sigma^2 I)}{\mathcal{N}(\nu,\sigma^2 I)} = \frac{1}{2\sigma^2} \|\mu-\nu\|_2^2.$$
\end{lemma}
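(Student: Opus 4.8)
The plan is to verify the identity by a direct computation from the definition of KL divergence, using crucially that the two Gaussians share the same covariance $\sigma^2 I$, so their normalizing constants cancel. Write $p_\mu$ and $p_\nu$ for the densities of $\mathcal{N}(\mu,\sigma^2 I)$ and $\mathcal{N}(\nu,\sigma^2 I)$ on $\R^d$ (with $d=1$ for the statement as literally written, though nothing below changes for general $d$). Since $p_\mu(x)=(2\pi\sigma^2)^{-d/2}\exp(-\|x-\mu\|_2^2/2\sigma^2)$ and likewise for $p_\nu$, the $(2\pi\sigma^2)^{-d/2}$ prefactor appears identically in numerator and denominator of $p_\mu(x)/p_\nu(x)$, so
\[
\log\frac{p_\mu(x)}{p_\nu(x)} \;=\; \frac{\|x-\nu\|_2^2-\|x-\mu\|_2^2}{2\sigma^2} \;=\; \frac{2\langle x,\mu-\nu\rangle+\|\nu\|_2^2-\|\mu\|_2^2}{2\sigma^2},
\]
where the last step just expands both squared norms and cancels the common $\|x\|_2^2$ term.

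Next I would take the expectation of this expression over $X\gets\mathcal{N}(\mu,\sigma^2 I)$, as prescribed by $\dkl{\mathcal{P}}{\Q}=\ex{X\gets\mathcal{P}}{\log(\mathcal{P}(X)/\Q(X))}$. The only place the randomness enters is the linear term $\langle X,\mu-\nu\rangle$, and since $\ex{}{X}=\mu$ we get $\ex{}{\langle X,\mu-\nu\rangle}=\langle\mu,\mu-\nu\rangle$; substituting,
\[
\dkl{\mathcal{N}(\mu,\sigma^2 I)}{\mathcal{N}(\nu,\sigma^2 I)} \;=\; \frac{2\langle\mu,\mu-\nu\rangle+\|\nu\|_2^2-\|\mu\|_2^2}{2\sigma^2} \;=\; \frac{\|\mu\|_2^2-2\langle\mu,\nu\rangle+\|\nu\|_2^2}{2\sigma^2} \;=\; \frac{\|\mu-\nu\|_2^2}{2\sigma^2},
\]
which is the claim. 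An alternative route, if one prefers to avoid multivariate notation altogether, is to note that KL divergence is additive over independent coordinates, reducing to the one-dimensional case, and there to invoke the translation and scaling invariance of KL divergence to reduce further to $\dkl{\mathcal{N}(0,1)}{\mathcal{N}((\nu-\mu)/\sigma,1)}$, which is the one-line Gaussian integral $(\nu-\mu)^2/2\sigma^2$.

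There is no genuine obstacle here: this is a textbook identity and the computation is entirely routine. The one point worth flagging is the cancellation of the normalizing constants, which works precisely because the two distributions have the same covariance --- exactly the hypothesis of the lemma --- whereas for unequal covariances an extra log-determinant term would survive. Absolute continuity of $\mathcal{N}(\mu,\sigma^2 I)$ with respect to $\mathcal{N}(\nu,\sigma^2 I)$ is automatic since both are fully supported on $\R^d$, so the divergence is finite and equal to the stated closed form.
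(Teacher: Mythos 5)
Your computation is correct and is the standard textbook derivation of this identity. The paper itself does not supply a proof of Lemma~\ref{lem:klgaussian} --- it simply cites \cite[Table~3]{GillAL13} --- so there is no in-paper argument to compare against; your direct expansion of $\log\bigl(p_\mu(x)/p_\nu(x)\bigr)$, cancellation of the shared normalizing constant, and application of $\E[X]=\mu$ correctly establish the cited fact.
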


Next, we note two useful properties of the KL divergence: the chain rule and convexity.
\begin{lemma}[Chain Rule for KL divergence,{~\cite[Thm.~2.5.3]{CoverT06}}]\label{lem:chainruleKL}
Let $\mathcal{P},\Q$ be two distributions over $\X\times\Y$. Then,
$$\dkl{\mathcal{P}(x,y)}{\Q(x,y)} = \dkl{\mathcal{P}(x)}{\Q(x)}+\ex{x'\gets \mathcal{P}(x)}{\dkl{\mathcal{P}(y|x=x')}{\Q(y|x=x'}},$$
where $\mathcal{P}(x)$ (resp. $\Q(x)$) denotes the marginal distribution of $\mathcal{P}$ (resp. $\Q$) on $\X$ and $\mathcal{P}(y|x=x')$ (resp. $\Q(y|x=x')$) denotes the marginal distribution of $\mathcal{P}$ (resp. $\Q$) on $\Y$ conditioned on $x=x'$.
\end{lemma}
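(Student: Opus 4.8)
The plan is to unfold both sides of the identity into their expectation (integral) definitions and match them term by term, using the factorization of a joint distribution into a marginal and a conditional. First I would write $\dkl{\mathcal{P}(x,y)}{\Q(x,y)} = \ex{(X,Y)\gets\mathcal{P}}{\log\frac{\mathcal{P}(X,Y)}{\Q(X,Y)}}$, then invoke the density chain rule $\mathcal{P}(x,y)=\mathcal{P}(x)\,\mathcal{P}(y|x)$ and likewise $\Q(x,y)=\Q(x)\,\Q(y|x)$, so that the log-ratio splits additively, $\log\frac{\mathcal{P}(X,Y)}{\Q(X,Y)} = \log\frac{\mathcal{P}(X)}{\Q(X)} + \log\frac{\mathcal{P}(Y|X)}{\Q(Y|X)}$.

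Next I would apply linearity of expectation to the two summands separately. The first summand is a function of $X$ alone, so marginalizing out $Y$ gives $\ex{(X,Y)\gets\mathcal{P}}{\log\frac{\mathcal{P}(X)}{\Q(X)}} = \ex{X\gets\mathcal{P}(x)}{\log\frac{\mathcal{P}(X)}{\Q(X)}} = \dkl{\mathcal{P}(x)}{\Q(x)}$. For the second summand I would invoke the tower property, conditioning on $X=x'$: $\ex{(X,Y)\gets\mathcal{P}}{\log\frac{\mathcal{P}(Y|X)}{\Q(Y|X)}} = \ex{x'\gets\mathcal{P}(x)}{\,\ex{Y\gets\mathcal{P}(y|x=x')}{\log\frac{\mathcal{P}(Y|x=x')}{\Q(Y|x=x')}}}$, and the inner expectation is, by definition, $\dkl{\mathcal{P}(y|x=x')}{\Q(y|x=x')}$. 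Adding the two evaluated pieces yields the claimed identity.

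The step I expect to need the most care is not the algebra but the measure-theoretic bookkeeping around absolute continuity and the $+\infty$ conventions. When $\mathcal{P}$ is absolutely continuous with respect to $\Q$ one must check that the marginal $\mathcal{P}(x)$ is absolutely continuous with respect to $\Q(x)$ and that, for $\mathcal{P}(x)$-almost every $x'$, the conditional $\mathcal{P}(y|x=x')$ is absolutely continuous with respect to $\Q(y|x=x')$, so that the Radon--Nikodym derivative genuinely factorizes almost everywhere and all three quantities are finite and well-defined; this is where one wants the disintegration of $\mathcal{P}$ to exist, which holds on the standard Borel spaces relevant to the paper. In the remaining case, where $\mathcal{P}$ fails to be absolutely continuous with respect to $\Q$, the left-hand side is $+\infty$ by convention, and one argues that the right-hand side is $+\infty$ as well --- either the marginals already fail absolute continuity (first term infinite), or the marginals are fine but the conditionals fail on a set of positive $\mathcal{P}(x)$-measure (second term infinite). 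Since the lemma is quoted verbatim from \cite[Thm.~2.5.3]{CoverT06}, in the paper I would simply cite it; the computation above is the transparent discrete version, with the general case handled by the standard disintegration argument.
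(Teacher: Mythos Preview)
Your argument is correct and is exactly the standard proof (factorize the joint density, split the log, apply linearity and the tower property); the paper itself does not supply a proof for this lemma but simply cites \cite[Thm.~2.5.3]{CoverT06}, which is precisely what you suggest doing as well.
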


\begin{lemma}[Convexity of KL Divergence,{~\cite[Thm.~11]{VEH14}}]\label{lem:convKL}
 Let $\mathcal{P}_0, \mathcal{P}_1, \Q_0,\Q_1$ be probability distributions over the space $\Omega$. For $t\in[0,1]$, define $\mathcal{P}_t=t\cdot\mathcal{P}_0 + (1-t)\cdot\mathcal{P}_1$ and $\Q_t=t\cdot\Q_0+(1-t)\cdot \Q_1$. Then, for all $t\in[0,1]$, \[\dkl{\mathcal{P}_t}{\Q_t}\leq t\cdot\dkl{\mathcal{P}_0}{\Q_0} + (1-t)\cdot\dkl{\mathcal{P}_1}{\Q_1}.\]
\end{lemma}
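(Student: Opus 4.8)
The plan is to reduce the joint convexity of KL divergence to the chain rule (Lemma~\ref{lem:chainruleKL}) together with nonnegativity of KL divergence, so that no direct manipulation of the defining integral is needed. First I would introduce an auxiliary ``mixture index.'' Define two probability distributions $\widetilde{\mathcal{P}}$ and $\widetilde{\mathcal{Q}}$ on $\{0,1\}\times\Omega$ by $\widetilde{\mathcal{P}}(0,\cdot)=t\cdot\mathcal{P}_0$, $\widetilde{\mathcal{P}}(1,\cdot)=(1-t)\cdot\mathcal{P}_1$ and $\widetilde{\mathcal{Q}}(0,\cdot)=t\cdot\mathcal{Q}_0$, $\widetilde{\mathcal{Q}}(1,\cdot)=(1-t)\cdot\mathcal{Q}_1$. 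By construction, the $\Omega$-marginal of $\widetilde{\mathcal{P}}$ is $\mathcal{P}_t$ and that of $\widetilde{\mathcal{Q}}$ is $\mathcal{Q}_t$; the $\{0,1\}$-marginal of both $\widetilde{\mathcal{P}}$ and $\widetilde{\mathcal{Q}}$ is the same Bernoulli distribution with mass $t$ on $0$; and conditioning on the first coordinate being $b$ gives $\mathcal{P}_b$ under $\widetilde{\mathcal{P}}$ and $\mathcal{Q}_b$ under $\widetilde{\mathcal{Q}}$.

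Next I would evaluate $\dkl{\widetilde{\mathcal{P}}}{\widetilde{\mathcal{Q}}}$ by applying the chain rule in the two possible orders. Conditioning on the $\{0,1\}$-coordinate first, the ``outer'' term $\dkl{\widetilde{\mathcal{P}}(b)}{\widetilde{\mathcal{Q}}(b)}$ vanishes because the two marginals coincide, and the ``inner'' expected term equals $t\cdot\dkl{\mathcal{P}_0}{\mathcal{Q}_0}+(1-t)\cdot\dkl{\mathcal{P}_1}{\mathcal{Q}_1}$; hence $\dkl{\widetilde{\mathcal{P}}}{\widetilde{\mathcal{Q}}}$ equals the right-hand side of the lemma. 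Conditioning on the $\Omega$-coordinate first, the chain rule gives $\dkl{\widetilde{\mathcal{P}}}{\widetilde{\mathcal{Q}}}=\dkl{\mathcal{P}_t}{\mathcal{Q}_t}+(\text{an expected conditional KL divergence})\ge\dkl{\mathcal{P}_t}{\mathcal{Q}_t}$, where the inequality uses that KL divergence is nonnegative (Gibbs' inequality, an immediate consequence of Jensen's inequality). Since the value of $\dkl{\widetilde{\mathcal{P}}}{\widetilde{\mathcal{Q}}}$ does not depend on the order in which we name the two coordinates, combining the equality from the first computation with the inequality from the second yields exactly $\dkl{\mathcal{P}_t}{\mathcal{Q}_t}\le t\cdot\dkl{\mathcal{P}_0}{\mathcal{Q}_0}+(1-t)\cdot\dkl{\mathcal{P}_1}{\mathcal{Q}_1}$.

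As a fallback I would keep a purely pointwise argument in mind: expressing both sides against a common dominating measure, the claim follows from joint convexity of $(a,b)\mapsto a\log(a/b)$ on $[0,\infty)^2$, equivalently the two-term log-sum inequality applied with $a_1=t\,\mathcal{P}_0(\omega)$, $b_1=t\,\mathcal{Q}_0(\omega)$, $a_2=(1-t)\,\mathcal{P}_1(\omega)$, $b_2=(1-t)\,\mathcal{Q}_1(\omega)$; the factors of $t$ cancel inside the logarithm, and integrating the resulting pointwise inequality over $\omega$ gives the result.

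I expect the only real obstacle to be bookkeeping rather than mathematics: one has to set up a common dominating measure and Radon--Nikodym derivatives carefully so that the chain rule genuinely applies, and handle the degenerate cases cleanly --- namely $t\in\{0,1\}$, and the case where $\dkl{\mathcal{P}_0}{\mathcal{Q}_0}$ or $\dkl{\mathcal{P}_1}{\mathcal{Q}_1}$ is $+\infty$ (making the claimed inequality trivial) --- so that no $0\cdot\infty$ ambiguity ever arises.
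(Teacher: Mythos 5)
The paper does not actually prove this lemma; it simply cites it as~\cite[Thm.~11]{VEH14} (van Erven and Harremo\"es). So there is no ``paper proof'' to compare against---what you have written is a self-contained proof of a statement the authors treat as known.

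That said, your argument is correct, and your primary route is a genuinely nice one. Lifting to $\{0,1\}\times\Omega$ and applying the chain rule (Lemma~\ref{lem:chainruleKL}) in both orders is exactly the information-theoretic way to see joint convexity: conditioning on the mixture index $b$ first, the $\{0,1\}$-marginals of $\widetilde{\mathcal P}$ and $\widetilde{\mathcal Q}$ coincide (both $\mathrm{Bernoulli}(1-t)$), so the ``outer'' divergence vanishes and you are left with $t\cdot\dkl{\mathcal P_0}{\mathcal Q_0}+(1-t)\cdot\dkl{\mathcal P_1}{\mathcal Q_1}$; conditioning on $\omega$ first, the $\Omega$-marginals are $\mathcal P_t$ and $\mathcal Q_t$, and the residual expected conditional divergence is nonnegative, giving $\dkl{\widetilde{\mathcal P}}{\widetilde{\mathcal Q}}\ge\dkl{\mathcal P_t}{\mathcal Q_t}$. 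Equating the two evaluations of the same joint divergence yields the claim. Your fallback via the log-sum inequality (joint convexity of $(a,b)\mapsto a\log(a/b)$) is the classical textbook proof, e.g.\ the route one finds in~\citet{CoverT06}. You are also right about the only real care points: one needs a common dominating measure so that the Radon--Nikodym derivatives and the conditional distributions in the chain rule are well defined, and the edge cases ($t\in\{0,1\}$, or one of $\dkl{\mathcal P_b}{\mathcal Q_b}=+\infty$) should be dispatched up front so that no $0\cdot\infty$ ambiguity arises; neither causes a problem. The chain-rule route is arguably cleaner here because it reuses machinery (Lemma~\ref{lem:chainruleKL} and nonnegativity of KL) that the paper already has on hand, whereas the log-sum route requires introducing joint convexity of the relative-entropy integrand as a separate fact.
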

For the most part, we will use $\dkl{X}{Y}$ to denote $\dkl{\mathcal{P}}{\Q}$ for random variables $X,Y$ such that $X\gets \mathcal{P}$ and $Y\gets\Q$.

The next basic fact shows that the ``KL center'' of a collection of probability distributions is simply the mean of those distributions. (This is actually a more general fact that holds for all Bregman divergences \cite[Prop.~1]{BanerjeeMDG05}, \cite[Thm.~II.1]{FrigyikSG08}.)
\begin{lemma}[{\cite[Lem.~10.8.1]{CoverT06}}] \label{lem:klmin} 
Let $\{\mathcal{P}_y\}$ be a family of distributions indexed by $y \in \mathcal{Y}$ and let $\mathcal{Q}$ be a distribution on $\mathcal{Y}$. Let $\mathcal{P}_{\mathcal{Q}}=\ex{Y \gets \mathcal{Q}}{\mathcal{P}_Y}$ denote the convex combination of the distributions $\{\mathcal{P}_y\}$ weighted by $\mathcal{Q}$. Then $$\inf_{\mathcal{R}} \ex{Y \gets \mathcal{Q}}{\dkl{\mathcal{P}_Y}{\mathcal{R}}} = \ex{Y \gets \mathcal{Q}}{\dkl{\mathcal{P}_Y}{\mathcal{P}_{\mathcal{Q}}}}.$$
\end{lemma}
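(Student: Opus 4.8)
The plan is to prove the stronger, pointwise-in-$\mathcal{R}$ decomposition
\[
\ex{Y \gets \mathcal{Q}}{\dkl{\mathcal{P}_Y}{\mathcal{R}}} \;=\; \ex{Y \gets \mathcal{Q}}{\dkl{\mathcal{P}_Y}{\mathcal{P}_{\mathcal{Q}}}} \;+\; \dkl{\mathcal{P}_{\mathcal{Q}}}{\mathcal{R}}
\]
for \emph{every} distribution $\mathcal{R}$ on $\mathcal{Y}$. Since $\dkl{\mathcal{P}_{\mathcal{Q}}}{\mathcal{R}} \ge 0$ by Gibbs' inequality, with equality exactly when $\mathcal{R} = \mathcal{P}_{\mathcal{Q}}$, this ``Pythagorean'' identity immediately shows that $\ex{Y \gets \mathcal{Q}}{\dkl{\mathcal{P}_Y}{\mathcal{R}}}$ is minimized over $\mathcal{R}$ by taking $\mathcal{R} = \mathcal{P}_{\mathcal{Q}}$, which is the claim. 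So the whole proof reduces to establishing this decomposition.

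To establish it, I would work with densities (Radon--Nikodym derivatives with respect to a common dominating measure). Fix $y$. Since $\mathcal{P}_y(X) > 0$ holds $\mathcal{P}_y$-almost surely, we may write $\log\frac{\mathcal{P}_y(X)}{\mathcal{R}(X)} = \log\frac{\mathcal{P}_y(X)}{\mathcal{P}_{\mathcal{Q}}(X)} + \log\frac{\mathcal{P}_{\mathcal{Q}}(X)}{\mathcal{R}(X)}$, and taking $\ex{X \gets \mathcal{P}_y}{\cdot}$ gives $\dkl{\mathcal{P}_y}{\mathcal{R}} = \dkl{\mathcal{P}_y}{\mathcal{P}_{\mathcal{Q}}} + \ex{X \gets \mathcal{P}_y}{\log\frac{\mathcal{P}_{\mathcal{Q}}(X)}{\mathcal{R}(X)}}$. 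Now I average over $Y \gets \mathcal{Q}$. The crucial step is that, by the very definition $\mathcal{P}_{\mathcal{Q}} = \ex{Y \gets \mathcal{Q}}{\mathcal{P}_Y}$, for any (non-negative, or integrable) function $g$ one has $\ex{Y \gets \mathcal{Q}}{\ex{X \gets \mathcal{P}_Y}{g(X)}} = \ex{X \gets \mathcal{P}_{\mathcal{Q}}}{g(X)}$; applying this with $g(x) = \log\frac{\mathcal{P}_{\mathcal{Q}}(x)}{\mathcal{R}(x)}$ turns the residual term into $\ex{X \gets \mathcal{P}_{\mathcal{Q}}}{\log\frac{\mathcal{P}_{\mathcal{Q}}(X)}{\mathcal{R}(X)}} = \dkl{\mathcal{P}_{\mathcal{Q}}}{\mathcal{R}}$, which is precisely the decomposition.

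The main obstacle is purely the measure-theoretic bookkeeping, not anything conceptual. First, the cancellation of $\log\mathcal{P}_y(X)$ and the implicit subtraction $\dkl{\mathcal{P}_y}{\mathcal{R}} - \dkl{\mathcal{P}_y}{\mathcal{P}_{\mathcal{Q}}}$ must avoid an ``$\infty - \infty$'', so I would first dispose of the degenerate cases: if $\mathcal{P}_{\mathcal{Q}}$ is not absolutely continuous with respect to $\mathcal{R}$ then $\dkl{\mathcal{P}_{\mathcal{Q}}}{\mathcal{R}} = \infty$ and one checks directly that $\ex{Y \gets \mathcal{Q}}{\dkl{\mathcal{P}_Y}{\mathcal{R}}} = \infty$ as well (some $\mathcal{P}_y$ must fail absolute continuity with respect to $\mathcal{R}$ on a set of positive $\mathcal{Q}$-measure), while if $\ex{Y \gets \mathcal{Q}}{\dkl{\mathcal{P}_Y}{\mathcal{P}_{\mathcal{Q}}}} = \infty$ the target identity reads $\infty = \infty$ and holds trivially. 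Second, exchanging $\ex{Y \gets \mathcal{Q}}{\cdot}$ and $\ex{X \gets \mathcal{P}_Y}{\cdot}$ requires Tonelli/Fubini; since $\log\frac{\mathcal{P}_{\mathcal{Q}}(x)}{\mathcal{R}(x)}$ need not have a fixed sign, I would split it into positive and negative parts, apply Tonelli to each non-negative piece, and use finiteness of $\dkl{\mathcal{P}_{\mathcal{Q}}}{\mathcal{R}}$ to recombine. Modulo this care, the argument is only a couple of lines, as one expects of a textbook fact.
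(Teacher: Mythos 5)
Your proof is correct and is essentially the same argument the paper implicitly invokes by citing Cover--Thomas (Lemma 10.8.1), whose proof is exactly this ``Pythagorean''/compensation identity $\ex{Y \gets \mathcal{Q}}{\dkl{\mathcal{P}_Y}{\mathcal{R}}} = \ex{Y \gets \mathcal{Q}}{\dkl{\mathcal{P}_Y}{\mathcal{P}_{\mathcal{Q}}}} + \dkl{\mathcal{P}_{\mathcal{Q}}}{\mathcal{R}}$ followed by non-negativity of the second term. The paper does not reprove it (it only cites), and your handling of the absolute-continuity and $\infty-\infty$ edge cases is careful and correct.
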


\subsection{Mutual Information}
The definition of mutual information is the following.
\begin{definition}[Mutual Information]
 Let $X,Y$ be two random variables jointly distributed according to $\mathcal{P}$ over the space $\X\times\Y$. The mutual information of $X$ and $Y$ is
 \[I(X;Y)=\dkl{\mathcal{P}(x,y)}{\mathcal{P}(x)\times\mathcal{P}(y)}= \ex{x'\gets\mathcal{P}(x)}{\dkl{\mathcal{P}(y|x=x')}{\mathcal{P}(y)}},\]
 where by $\mathcal{P}(x)\times\mathcal{P}(y)$ we denote the product of the marginal distributions of $\mathcal{P}$.
\end{definition}

The mutual information is symmetric and non-negative.
For distributions over a discrete domain, it can also be defined as
$$I(X;Y)=H(X)-H(X|Y)=H(Y)-H(Y|X),$$
where $H(\cdot)$ denotes the Shannon entropy of a random variable. The entropy satisfies $0\leq H(X)\leq \log(|\X|)$, where $|\X|$ is the size of the domain of the random variable $X$ and equality is achieved when $X$ is distributed uniformly over $\X$. The entropy also satisfies the following chain rule.
\begin{lemma}\label{lem:chainruleH}
For any random variables $X,Y$, $H(X,Y)=H(X)+H(Y|X)$.
\end{lemma}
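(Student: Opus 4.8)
The plan is to prove the identity by unfolding the definitions of joint, marginal, and conditional Shannon entropy and applying the multiplicative factorization of probability mass functions. Since the paper's definition of entropy is for random variables over a discrete (finite or countable) domain, I will work with probability mass functions throughout: write $p(x,y)$ for the joint pmf of $(X,Y)$, $p(x)=\sum_y p(x,y)$ for the marginal of $X$, and $p(y\mid x)=p(x,y)/p(x)$ for the conditional pmf (defined on the support of $X$), and recall that the conditional entropy is $H(Y\mid X)=\sum_x p(x)\,H(Y\mid X{=}x)=-\sum_{x,y}p(x,y)\log p(y\mid x)$, consistent with the occurrence of $H(\cdot\mid\cdot)$ in the identity $I(X;Y)=H(X)-H(X\mid Y)$ stated just above.

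First I would start from $H(X,Y)=-\sum_{x,y} p(x,y)\log p(x,y)$ and substitute $p(x,y)=p(x)\,p(y\mid x)$, which holds for every $(x,y)$ in the support of $(X,Y)$. Using $\log(ab)=\log a+\log b$, this splits the sum into the two pieces $-\sum_{x,y}p(x,y)\log p(x)$ and $-\sum_{x,y}p(x,y)\log p(y\mid x)$. For the first piece, summing over $y$ first gives $\sum_y p(x,y)=p(x)$, so it collapses to $-\sum_x p(x)\log p(x)=H(X)$. The second piece is exactly $H(Y\mid X)$ by the definition recalled above. Adding the two gives $H(X,Y)=H(X)+H(Y\mid X)$, which is the claim. (Alternatively one can read the identity off from the two expressions for mutual information, $I(X;Y)=H(Y)-H(Y\mid X)$ and $I(X;Y)=H(X,Y)-H(X)-H(Y)+H(\emptyset)$-type bookkeeping, but the direct computation above is cleaner and self-contained.)

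The only points needing a word of care — and hence the ``main obstacle,'' though it is a mild one — are the treatment of zero-probability atoms and the rearrangement of the double sum. For atoms with $p(x,y)=0$ one uses the standard convention $0\log 0=0$; and if $p(x)=0$ then $p(x,y)=0$ for all $y$, so such $x$ contribute nothing to any of the three sums and $p(y\mid x)$ need not be defined there, meaning the factorization $p(x,y)=p(x)\,p(y\mid x)$ is invoked only where it is meaningful. Reordering the summation is legitimate because every term $-p(x,y)\log p(x)$ and $-p(x,y)\log p(y\mid x)$ is nonnegative (the arguments of the logarithms lie in $[0,1]$), so the countable double sum may be reorganized freely; in the finite-domain case this is immediate. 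I would not pursue any extension beyond discrete random variables, since the paper only uses this discrete version of entropy.
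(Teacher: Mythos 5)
Your proof is correct and is the standard textbook derivation (e.g., Cover and Thomas, Theorem 2.2.1). The paper itself gives no proof of this lemma — it simply states it as a well-known fact, in the same spirit as the chain rule for mutual information (Lemma~\ref{lem:chainruleMI}), which is cited to \citet{CoverT06}. So there is no alternate argument in the paper to compare against; your unfolding of $H(X,Y)=-\sum_{x,y}p(x,y)\log p(x,y)$ via $p(x,y)=p(x)\,p(y\mid x)$, together with the $0\log 0=0$ convention and the observation that zero-probability conditioning points contribute nothing, is exactly the expected justification.
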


Next, we define the conditional mutual information, a key notion for CMI.
\begin{definition}[Conditional Mutual Information]
For random variables $X,Y,Z$, the mutual information of $X$ and $Y$ conditioned on $Z$ is $$I(X;Y|Z)=\ex{z\gets\mathcal{P}_Z}{I(X|Z=z;Y|Z=z)}.$$
\end{definition}

Mutual information also satisfies the following chain rule.
\begin{lemma}[Chain Rule for Mutual Information,~\cite{CoverT06}]\label{lem:chainruleMI}
For any random variables $X,Y,Z$, $I(X,Y;Z) = I(X;Z)+I(Y;Z|X) = I(Y;Z)+I(X;Z|Y)$.
\end{lemma}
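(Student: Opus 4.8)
The plan is to reduce the chain rule for mutual information to the chain rule for KL divergence (Lemma~\ref{lem:chainruleKL}), treating the pair $(X,Y)$ as a single random variable. Using the symmetry of mutual information, write
$$I(X,Y;Z) = \ex{z\gets\mathcal{P}_Z}{\dkl{\mathcal{P}(x,y\mid z)}{\mathcal{P}(x,y)}}.$$
For each fixed value $z$, apply Lemma~\ref{lem:chainruleKL} to the two distributions $\mathcal{P}(x,y\mid z)$ and $\mathcal{P}(x,y)$ over $\mathcal{X}\times\mathcal{Y}$, splitting off the $x$-coordinate first (note that the $\mathcal{X}$-marginal of $\mathcal{P}(x,y)$ is just $\mathcal{P}_X$, and its conditional on $x=x'$ is $\mathcal{P}(y\mid x=x')$):
$$\dkl{\mathcal{P}(x,y\mid z)}{\mathcal{P}(x,y)} = \dkl{\mathcal{P}(x\mid z)}{\mathcal{P}(x)} + \ex{x'\gets\mathcal{P}(x\mid z)}{\dkl{\mathcal{P}(y\mid x=x',z)}{\mathcal{P}(y\mid x=x')}}.$$

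Next I would take the expectation over $z\gets\mathcal{P}_Z$ of both sides. The first term becomes $\ex{z\gets\mathcal{P}_Z}{\dkl{\mathcal{P}(x\mid z)}{\mathcal{P}(x)}} = I(X;Z)$, again by symmetry of mutual information. For the second term, the key observation is that drawing $z\gets\mathcal{P}_Z$ and then $x'\gets\mathcal{P}(x\mid z)$ yields the same joint law on $(x',z)$ as drawing $x'\gets\mathcal{P}_X$ and then $z'\gets\mathcal{P}(z\mid x=x')$; hence the $z$-expectation of the second term equals
$$\ex{x'\gets\mathcal{P}_X}{\ex{z'\gets\mathcal{P}(z\mid x=x')}{\dkl{\mathcal{P}(y\mid x=x',z=z')}{\mathcal{P}(y\mid x=x')}}},$$
which is precisely $\ex{x'\gets\mathcal{P}_X}{I(Y\mid X=x';\, Z\mid X=x')} = I(Y;Z\mid X)$ by the definition of conditional mutual information together with symmetry of mutual information conditioned on $X=x'$. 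This gives $I(X,Y;Z) = I(X;Z) + I(Y;Z\mid X)$, and the identity $I(X,Y;Z) = I(Y;Z) + I(X;Z\mid Y)$ follows by exchanging the roles of $X$ and $Y$.

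In the discrete case there is a shorter route worth mentioning: since $I(U;Z) = H(U) - H(U\mid Z)$, one applies the entropy chain rule (Lemma~\ref{lem:chainruleH}) twice, to $H(X,Y) = H(X) + H(Y\mid X)$ and to $H(X,Y\mid Z) = H(X\mid Z) + H(Y\mid X,Z)$, and regroups the four terms. I expect the only real obstacle to be the bookkeeping in the second term of the KL-based argument -- verifying that the two iterated-sampling procedures induce the same joint distribution on $(x',z)$ and that all conditional distributions are well-defined in the general (non-discrete) setting. This is routine, and for the fully general statement one may simply invoke \cite{CoverT06}.
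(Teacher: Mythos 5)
The paper states Lemma~\ref{lem:chainruleMI} as a citation to \citet{CoverT06} and gives no proof of its own, so there is no "paper's proof" to compare against. Your argument is correct as a self-contained derivation: you use the symmetry $I(X,Y;Z)=\ex{z\gets\mathcal{P}_Z}{\dkl{\mathcal{P}(x,y\mid z)}{\mathcal{P}(x,y)}}$, apply the KL chain rule (Lemma~\ref{lem:chainruleKL}) for each fixed $z$ with the pair $(X,Y)$ split at $x$, and then swap the order of the iterated expectations over $(z,x')$ — which is legitimate since both orderings induce the same joint law on $(X,Z)$ — to identify the residual term with $\ex{x'\gets\mathcal{P}_X}{I\big(Y\mid X=x'; Z\mid X=x'\big)}=I(Y;Z\mid X)$. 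The final step of exchanging $X$ and $Y$ to get the second equality is also fine, since $I(X,Y;Z)=I(Y,X;Z)$. The alternative entropy-based route you sketch for the discrete case is the textbook argument and is equally valid there. One small point of hygiene: the KL chain rule as stated in the paper implicitly requires $\mathcal{P}(x,y\mid z)\ll\mathcal{P}(x,y)$ for each $z$ on a $\mathcal{P}_Z$-full set; if $I(X,Y;Z)<\infty$ this holds automatically, and when it fails both sides of the identity are $+\infty$, so nothing is lost — but it is worth flagging rather than folding into "routine bookkeeping."
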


Finally, mutual information satisfies the data-processing inequality, which intuitively ensures that for a random variable $Y$ which depends on $X$, no post-processing of $Y$ could reveal more information about $X$ than $Y$.
\begin{lemma}[Data-Processing Inequality]\label{lem:data-proc}
If $X\rightarrow Y \rightarrow Z$ forms a Markov chain (i.e., $X$ and $Z$ are independent conditioned on $Y$), then $I(X;Y)\geq I(X;Z)$.
\end{lemma}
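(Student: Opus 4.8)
The plan is to derive the inequality from the chain rule for mutual information (Lemma~\ref{lem:chainruleMI}) by expanding the joint quantity $I(X;(Y,Z))$ in two different orders. First I would write
\[
I(X;(Y,Z)) = I(X;Y) + I(X;Z\mid Y) = I(X;Z) + I(X;Y\mid Z),
\]
which is exactly the two groupings offered by Lemma~\ref{lem:chainruleMI} applied with the pair $(Y,Z)$ in the first argument and $X$ in the second (after using symmetry of mutual information to put things in the stated form).

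Next I would use the Markov hypothesis. By assumption $X$ and $Z$ are independent conditioned on $Y$, so for every realization $y$ of $Y$ the conditional distribution $\mathcal{P}(x,z\mid Y=y)$ equals the product $\mathcal{P}(x\mid Y=y)\times\mathcal{P}(z\mid Y=y)$; hence $I(X\mid Y=y; Z\mid Y=y)=0$. Taking the expectation over $y\gets\mathcal{P}_Y$ in the definition of conditional mutual information gives $I(X;Z\mid Y)=0$. Substituting this into the first expansion yields $I(X;(Y,Z)) = I(X;Y)$, and combining with the second expansion gives $I(X;Y) = I(X;Z) + I(X;Y\mid Z)$.

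Finally I would invoke non-negativity: conditional mutual information is an expectation (over $z\gets\mathcal{P}_Z$) of the quantities $I(X\mid Z=z; Y\mid Z=z)$, each of which is a KL divergence and hence non-negative, so $I(X;Y\mid Z)\ge 0$. Therefore $I(X;Y) = I(X;Z) + I(X;Y\mid Z)\ge I(X;Z)$, as desired.

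There is essentially no serious obstacle here; the only point requiring a moment's care is the step $I(X;Z\mid Y)=0$, where one must unpack the definition of conditional mutual information as an expectation over the conditioning variable and observe that conditional independence makes each term vanish. Everything else is a direct application of the chain rule and the non-negativity of (conditional) mutual information already recorded in the preliminaries.
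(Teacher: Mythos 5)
Your proof is correct, and it is the canonical textbook argument for the data-processing inequality (the one given in the cited textbook of Cover and Thomas, Theorem~2.8.1). The paper itself does not prove this lemma -- it is recorded as a preliminary fact and the reader is referred to \citet{CoverT06} -- so there is no in-paper proof to compare against; your expansion of $I(X;Y,Z)$ in two ways via the chain rule (Lemma~\ref{lem:chainruleMI}), vanishing of $I(X;Z\mid Y)$ under the Markov hypothesis, and non-negativity of $I(X;Y\mid Z)$ is exactly the standard route.
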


The reader is referred to the textbook of~\citet{CoverT06} for more background on information theory.

\subsection{Using Mutual Information}

We will make heavy use of the following lemma. The proof is straightforward, but the origin is unclear. It is known by at least two names: The "Donsker-Varadhan dual characterization of KL divergence" \cite[Lem.~2.1]{DonskerV75} (although a 1983 paper by the same authors is often erroneously cited instead of the 1975 reference) and the "Gibbs variational principle" which supposedly dates back to 1902 \cite{Gibbs02}.

\begin{lemma}[{\cite[Thm.~5.2.1]{Gray11},\cite[Lem.~4.10]{vanHandel14}}] \label{lem:sup}
Let $X$ and $Y$ be random variables on $\Omega$ (with $X$ absolutely continuous with respect to $Y$)  and $f : \Omega \to \mathbb{R}$ a (measurable) function. Then $$\ex{}{f(X)} \le \dkl{X}{Y} + \log \ex{}{e^{f(Y)}}.$$
Furthermore, for any $X$ and $Y$ there exists an $f$ that achieves equality, namely  $f(x) = \log(X(x)/Y(x))$ (where $X(\cdot)$ and $Y(\cdot)$ denote the probability mass/density functions of $X$ and $Y$ respectively). Similarly, for any $f$ and $Y$ there exists an $X$ that achieves equality, namely $X(x) = e^{f(x)} Y(x) / \ex{}{e^{f(Y)}}$.
\end{lemma}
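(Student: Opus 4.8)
The plan is to deduce the inequality from the non-negativity of relative entropy applied to a well-chosen auxiliary distribution. Assume first that $c := \ex{}{e^{f(Y)}}$ is finite; if $c = \infty$ the right-hand side of the claimed bound is $+\infty$ and there is nothing to prove, and since $f$ is real-valued and $Y$ is a probability measure we automatically have $c > 0$. Let $Z$ be the exponentially tilted (``Gibbs'') distribution defined by $Z(x) = e^{f(x)} Y(x) / c$ --- this is precisely the $X$ the lemma exhibits in the equality case. Then $Z$ is a genuine probability distribution, absolutely continuous with respect to $Y$ and with the same null sets, so $X$ is absolutely continuous with respect to $Z$ and $\dkl{X}{Z}$ is well-defined.

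The heart of the argument is the identity
\begin{equation*}
\dkl{X}{Y} - \dkl{X}{Z} = \ex{x\gets X}{\log\frac{Z(x)}{Y(x)}} = \ex{x\gets X}{f(x) - \log c} = \ex{}{f(X)} - \log\ex{}{e^{f(Y)}},
\end{equation*}
where the first equality comes from expanding both divergences by definition and cancelling the common $\log X(x)$ term, and the second substitutes $Z(x)/Y(x) = e^{f(x)}/c$. Since $\dkl{X}{Z} \ge 0$ (Gibbs' inequality / non-negativity of relative entropy), rearranging gives $\ex{}{f(X)} \le \dkl{X}{Y} + \log\ex{}{e^{f(Y)}}$, as claimed. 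If $\dkl{X}{Y} = \infty$ the bound is trivial, and one only needs the statement in the regime where $\ex{}{f(X)}$ is well-defined and the right-hand side is finite.

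The two equality claims then follow by inspection of this same identity. Taking $f(x) = \log(X(x)/Y(x))$ gives $\ex{}{e^{f(Y)}} = \ex{y\gets Y}{X(y)/Y(y)} = \int X(y)\,dy = 1$, so $\log\ex{}{e^{f(Y)}} = 0$, while $\ex{}{f(X)} = \dkl{X}{Y}$, so the two sides coincide. Conversely, for fixed $f$ and $Y$, choosing $X = Z$ makes the subtracted term $\dkl{X}{Z} = \dkl{Z}{Z} = 0$, so the identity collapses to the equality $\ex{}{f(Z)} = \dkl{Z}{Y} + \log\ex{}{e^{f(Y)}}$.

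I expect no substantive obstacle: the entire proof is the ``add and subtract $\dkl{X}{Z}$'' device together with non-negativity of KL divergence, with an alternative one-line route available via Jensen's inequality applied to $\log$. The only care required is in dispatching the degenerate cases --- $c$ infinite, $\dkl{X}{Y}$ infinite, or $\ex{}{f(X)}$ of the indeterminate form $\infty - \infty$ --- in each of which the inequality is either trivially true or carries no content.
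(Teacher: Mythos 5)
Your proof is correct. The paper does not include a proof of this lemma—it cites Gray and van Handel and calls it the Donsker–Varadhan / Gibbs variational principle—but your argument is precisely the standard one those sources give: introduce the tilted measure $Z(x)=e^{f(x)}Y(x)/c$, observe the pointwise identity $\log\frac{X}{Y}=\log\frac{X}{Z}+\log\frac{Z}{Y}$, integrate against $X$ to get $\dkl{X}{Y}=\dkl{X}{Z}+\ex{}{f(X)}-\log c$, and drop the nonnegative term $\dkl{X}{Z}$; your handling of the degenerate cases and the two equality claims is also correct.
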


We will mostly use this lemma in the following form.
\begin{corollary}
Let $S$, $S'$, and $Z$ be independent random variables where $S$ and $S'$ have identical distributions. Let $A$ be a random function whose randomness is independent from $S$, $S'$, and $Z$. Let $g$ be a fixed function. Then \begin{align*}
    \ex{A,S,Z}{g(A(S,Z),S,Z)} &\le \ex{Z}{\inf_{t>0} \frac{I(A(S,Z);S) + \log \ex{A,S,S',Z}{e^{t \cdot g(A(S,Z),S',Z)}}}{t}} \\&\le \inf_{t>0} \frac{I(A(S,Z);S|Z) + \ex{Z}{\log \ex{A,S,S',Z}{e^{t \cdot g(A(S,Z),S',Z)}}}}{t}.
\end{align*}
\end{corollary}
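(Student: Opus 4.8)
The plan is to apply the Donsker--Varadhan / Gibbs variational principle (Lemma~\ref{lem:sup}) separately for each realization $z$ of $Z$, and then average the resulting per-$z$ bound over $z$. First I would fix $z$ in the support of $\mathcal{P}_Z$ and set up two random variables on $\W\times(\text{range of }S)$: let $X$ be the pair $(A(S,z),S)$ under its joint law (over $S$ together with the internal randomness of $A$), and let $Y$ be the pair $(A(S',z),S)$, where $S'$ is the given independent copy of $S$. Because the internal randomness of $A$ and $S'$ are independent of $S$, the second coordinate of $Y$ is independent of the first, and, since $S'\stackrel{d}{=}S$, the first coordinate has the same law as $A(S,z)$; hence the law of $Y$ is exactly the product of the two marginal laws of $X$, so $\dkl{X}{Y}=I(A(S,z);S)$ by the definition of mutual information. (If $I(A(S,z);S)=\infty$ there is nothing to prove, so we may assume it is finite, which supplies the absolute-continuity hypothesis of Lemma~\ref{lem:sup}.)

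Next, for each $t>0$, I would invoke Lemma~\ref{lem:sup} with the function $(w,s)\mapsto t\cdot g(w,s,z)$. The left-hand side becomes $t\cdot\ex{A,S}{g(A(S,z),S,z)}$; the divergence term is $I(A(S,z);S)$; and the cumulant term is $\log\ex{}{e^{t\cdot g(A(S',z),S,z)}}$. Since $S$ and $S'$ are i.i.d., $(A(S',z),S)\stackrel{d}{=}(A(S,z),S')$, so this last term equals $\log\ex{A,S,S'}{e^{t\cdot g(A(S,z),S',z)}}$, matching the form appearing in the statement. Dividing by $t$ and taking the infimum over $t>0$ gives, for every $z$,
\[\ex{A,S}{g(A(S,z),S,z)}\;\le\;\inf_{t>0}\frac{I(A(S,z);S)+\log\ex{A,S,S'}{e^{t\cdot g(A(S,z),S',z)}}}{t}.\]

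Finally I would take $\ex{Z}{\cdot}$ of both sides. On the left, independence of $Z$ from $S$ and from the randomness of $A$ gives $\ex{Z}{\ex{A,S}{g(A(S,Z),S,Z)}}=\ex{A,S,Z}{g(A(S,Z),S,Z)}$, and on the right we obtain exactly the right-hand side of the first claimed inequality; this proves the first inequality. For the second, fix $t>0$: the per-$z$ integrand on the right is at most $\frac{I(A(S,z);S)+L_t(z)}{t}$ with $L_t(z):=\log\ex{A,S,S'}{e^{t\cdot g(A(S,z),S',z)}}$, so averaging over $z$ and then taking $\inf_{t>0}$ yields $\inf_{t>0}\frac{\ex{Z}{I(A(S,Z);S)}+\ex{Z}{L_t(Z)}}{t}$, since the expectation of an infimum is at most the infimum of the expectation. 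Then $\ex{Z}{I(A(S,Z);S)}=I(A(S,Z);S|Z)$ by the definition of conditional mutual information (using once more that conditioning on $Z=z$ leaves the law of $S$, hence the relevant joint law, unchanged), which is the second inequality.

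I do not expect a deep obstacle here: the argument is essentially bookkeeping on top of Lemma~\ref{lem:sup}. The only point that requires care is modeling the internal randomness of the random function $A$ so that $\dkl{X}{Y}$ is \emph{literally} $I(A(S,z);S)$ and $Y$ is \emph{literally} the product of the marginals, together with the observation that replacing $(A(S',z),S)$ by $(A(S,z),S')$ in the cumulant term costs nothing because $S$ and $S'$ are i.i.d. The interchange of $\inf_{t>0}$ with $\ex{Z}{\cdot}$ and the identity $\ex{Z}{I(A(S,Z);S)}=I(A(S,Z);S|Z)$ are both immediate.
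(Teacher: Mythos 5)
Your proof is correct and follows essentially the same route as the paper: apply Lemma~\ref{lem:sup} conditionally on each realization $z$ of $Z$, identify the resulting KL term $\dkl{(A(S,z),S)}{(A(S,z),S')}$ with $I(A(S,z);S)$, average over $Z$ for the first inequality, and then use $\ex{Z}{\inf_t}\le\inf_t\ex{Z}{\cdot}$ together with $\ex{Z}{I(A(S,Z);S)}=I(A(S,Z);S|Z)$ for the second. The paper's one-line proof compresses this by writing $X=(A(S,Z),S,Z)$, $Y=(A(S,Z),S',Z)$, but the per-$z$ disintegration you carry out is exactly what is needed to justify the form with $\ex{Z}{\log\ex{A,S,S'}{\cdot}}$ rather than a single joint log-moment.
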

\begin{proof}
    This follows from Lemma \ref{lem:sup} and the fact that $$I(A(S,Z);S|Z) = \ex{Z}{I(A(S,Z);S)} = \ex{Z}{\dkl{(A(S,Z),S)}{(A(S,Z),S')}}.$$
    Specifically, we set $X=(A(S,Z),S,Z)$, $Y=(A(S,Z),S',Z)$, and $f((y,s,z))=t \cdot g(y,s,z)$. Then we optimize over the choice of the parameter $t>0$.
\end{proof}
This allows us to bound $\ex{}{g(A(S,Z),S,Z)}$ -- the object of interest -- in terms of the conditional mutual information $I(A(S,Z);S|Z)$ and the moment generating function $\ex{}{e^{t\cdot g(A(S,Z),S',Z)}}$.

We use Lemma~\ref{lem:sup} to obtain generalization bounds from CMI bounds. An alternative approach is to reason directly about probabilities instead of expectations:
\begin{lemma}[{\cite[Lem.~3.14]{FeldmanS18a}, cf.~\cite{IssaEG19}}]\label{lem:probkl}
Let $X$ and $Y$ be random variables and $E$ an event. Then $$\pr{}{X \in E} \le \frac{\dkl{X}{Y} + \log 2}{-\log \pr{}{Y \in E}}.$$
\end{lemma}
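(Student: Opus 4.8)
The plan is to apply the Donsker--Varadhan / Gibbs variational inequality (Lemma~\ref{lem:sup}) to a scaled indicator of the event $E$. First I would dispose of the degenerate cases: write $p := \pr{}{X \in E}$ and $q := \pr{}{Y \in E}$, and observe that if $\dkl{X}{Y}=\infty$ the claim is trivial, if $q=1$ the right-hand side is infinite, and if $q=0$ then (since $\dkl{X}{Y}<\infty$ forces $X$ to be absolutely continuous with respect to $Y$) we must have $p=0$, so both sides are $0$. Hence we may assume $q\in(0,1)$ and $\dkl{X}{Y}<\infty$.

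The main step is to choose $f(\omega) := \lambda\cdot\mathbbm{1}[\omega\in E]$ with $\lambda := -\log q > 0$ and feed it into Lemma~\ref{lem:sup}. The left-hand side becomes $\ex{}{f(X)} = \lambda p$, while the moment-generating term becomes $\log\ex{}{e^{f(Y)}} = \log\!\left(q e^{\lambda} + (1-q)\right) = \log(2-q) \le \log 2$, using that $e^{\lambda}=1/q$ so $qe^{\lambda}=1$ — this is exactly why $\lambda$ is chosen this way. Lemma~\ref{lem:sup} then gives $\lambda p \le \dkl{X}{Y} + \log 2$, and dividing through by $\lambda = -\log q$ yields the stated inequality.

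There is essentially no hard step; the only thing to get right is the tuning of $\lambda$ so that the moment-generating function collapses to a constant bounded by $2$. An alternative, self-contained route avoids Lemma~\ref{lem:sup}: apply the data-processing inequality for KL divergence to the deterministic map $\omega\mapsto\mathbbm{1}[\omega\in E]$ — this can be extracted from the chain rule for KL divergence (Lemma~\ref{lem:chainruleKL}) applied to $(X,\mathbbm{1}[X\in E])$ versus $(Y,\mathbbm{1}[Y\in E])$ in both orders, together with non-negativity of KL. This reduces the claim to the two-point inequality $p\log\tfrac{p}{q} + (1-p)\log\tfrac{1-p}{1-q} \ge p\log\tfrac{1}{q} - \log 2$, which follows by discarding the non-negative term $-(1-p)\log(1-q)$ and bounding the binary entropy via $p\log p + (1-p)\log(1-p) \ge -\log 2$. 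Either way, the proof is a couple of lines once the right quantity is isolated.
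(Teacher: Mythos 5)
Your proof is correct. Note that the paper does not reproduce a proof of this lemma — it cites it from \cite{FeldmanS18a} — so there is no in-text argument to compare against. Your main route is the natural one given the surrounding machinery: apply Lemma~\ref{lem:sup} with $f=\lambda\mathbbm{1}_E$ and tune $\lambda=-\log q$ so that $qe^{\lambda}=1$, collapsing the log-moment term to $\log(2-q)\le\log 2$; the algebra you give is right, and the handling of the degenerate cases ($q\in\{0,1\}$, infinite divergence) is also sound. Your alternative route via data processing reducing to $\dkl{\mathrm{Bern}(p)}{\mathrm{Bern}(q)}$ and then discarding $-(1-p)\log(1-q)\ge 0$ and using $p\log p+(1-p)\log(1-p)\ge -\log 2$ is likewise correct and arguably more elementary, since it needs only non-negativity of KL and a two-point entropy bound rather than the full Gibbs variational inequality; both arguments produce exactly the stated constant $\log 2$.
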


\subsection{Concentration Bounds}

To use Lemma \ref{lem:sup}, we must bound the moment generating function. We will make use of Hoeffding's lemma:

\begin{lemma}[\cite{Hoeffding63}]\label{lem:hoeffding}
Let $X \in [a,b]$ be a random variable with mean $\mu$. Then, for all $t \in \mathbb{R}$, $\ex{}{e^{tX}} \le e^{t\mu + t^2(b-a)^2/8}$.
\end{lemma}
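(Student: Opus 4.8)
The plan is to reduce to the mean-zero case and then control the cumulant generating function $\psi(t):=\log\ex{}{e^{tX}}$ via a second-order Taylor expansion. First I would replace $X$ by $X-\mu$; this shifts the support to $[a-\mu,b-\mu]$, an interval of the same length $b-a$, and multiplies $\ex{}{e^{tX}}$ by $e^{-t\mu}$, so it suffices to show $\psi(t)\le t^2(b-a)^2/8$ for all $t\in\R$ under the assumption $\ex{}{X}=0$.

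Next I would differentiate $\psi$ twice. We have $\psi(0)=0$, and $\psi'(t)=\ex{}{Xe^{tX}}/\ex{}{e^{tX}}$, so $\psi'(0)=\ex{}{X}=0$. Differentiating once more, $\psi''(t)=\ex{}{X^2e^{tX}}/\ex{}{e^{tX}}-\left(\ex{}{Xe^{tX}}/\ex{}{e^{tX}}\right)^2$, which is precisely the variance of $X$ under the exponentially tilted law with density proportional to $e^{tX}$. Since that tilted law is still supported on $[a,b]$, this variance is at most $(b-a)^2/4$: any random variable $W$ taking values in an interval of length $L$ satisfies $\var{}{W}\le\ex{}{(W-c)^2}\le L^2/4$ by taking $c$ to be the midpoint of the interval, since then $|W-c|\le L/2$. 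Hence $\psi''(t)\le(b-a)^2/4$ for every $t$.

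Finally I would invoke Taylor's theorem with Lagrange remainder: for each $t$ there is some $\xi$ between $0$ and $t$ with $\psi(t)=\psi(0)+t\,\psi'(0)+\frac{t^2}{2}\psi''(\xi)\le\frac{t^2}{2}\cdot\frac{(b-a)^2}{4}=\frac{t^2(b-a)^2}{8}$. Exponentiating and undoing the centering yields $\ex{}{e^{tX}}\le e^{t\mu+t^2(b-a)^2/8}$, as desired.

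There is no genuinely hard step here — this is a classical inequality — but the two points that warrant a line of care are (i) justifying differentiation under the expectation when forming $\psi'$ and $\psi''$, which is routine because $X$ is bounded so the relevant integrands are dominated, and (ii) the elementary variance bound above. If one prefers to avoid differentiation altogether, an alternative route uses convexity of $u\mapsto e^{tu}$ on $[a,b]$ to write $e^{tX}\le\frac{b-X}{b-a}e^{ta}+\frac{X-a}{b-a}e^{tb}$ pointwise; taking expectations (with $\ex{}{X}=0$, so $a\le 0\le b$) gives $\ex{}{e^{tX}}\le e^{\phi(h)}$ with $h=t(b-a)$, $p=-a/(b-a)$, and $\phi(h)=-ph+\log(1-p+pe^h)$, after which the same Taylor argument applied to $\phi$ (using $\phi(0)=\phi'(0)=0$ and $\phi''\le 1/4$) finishes the proof.
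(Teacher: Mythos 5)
The paper does not prove this lemma; it simply cites it from Hoeffding's 1963 paper, so there is no internal proof to compare against. Your argument is a correct and complete proof of Hoeffding's lemma. The reduction to the centered case is sound, and the key step --- interpreting $\psi''(t)$ as the variance of $X$ under the exponentially tilted measure and bounding it by $(b-a)^2/4$ since the tilted law remains supported on an interval of length $b-a$ --- is exactly right. The Taylor-with-Lagrange-remainder step then closes it cleanly. For what it is worth, your primary route (the tilted-variance/cumulant-generating-function argument) is the one commonly attributed to modern textbook treatments, whereas Hoeffding's original 1963 argument is essentially your ``alternative route'' via convexity of $u\mapsto e^{tu}$ and the function $\phi(h)=-ph+\log(1-p+pe^h)$. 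Both are standard and correct; the tilted-variance version is arguably more conceptual and generalizes more readily (e.g., to sub-Gaussianity of bounded random variables in higher dimensions), while the convexity version is self-contained and avoids any appeal to differentiation under the integral sign, which you correctly flag as the one point requiring a word of justification (trivial here since $X$ is bounded).
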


An extension of Hoeffding's lemma is the following.
\begin{lemma}[\cite{McDiarmid89}]\label{lem:mcd-mgf}Let $f : \Omega^n \to \mathbb{R}$. Let $\Delta_1, \Delta_2, \cdots, \Delta_n \in \mathbb{R}$ be such that, for all $x_1, x_2, \cdots, x_n, y \in \Omega$ and all $i \in [n]$, $$\left| f(x_1, \cdots, x_{i-1}, y, x_{i+1}, \cdots, x_n) - f(x_1, \cdots, x_n) \right| \le \Delta_i.$$ Let $X_1, \cdots, X_n \in \Omega$ be independent random variables. Then, for any $t \in \mathbb{R}$, $$\ex{}{e^{t \left(f(X_1,\cdots,X_n) - \ex{}{f(X_1,\cdots,X_n)}\right)}} \le e^{\frac{t^2}{8} \sum_{i=1}^n \Delta_i^2}.$$\end{lemma}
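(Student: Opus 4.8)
The plan is to reduce the $n$-variable bounded-differences statement to $n$ applications of Hoeffding's Lemma (Lemma~\ref{lem:hoeffding}) via a martingale (Doob) decomposition. First I would introduce the Doob martingale associated with $f$: for $k \in \{0,1,\dots,n\}$ let $D_k := \ex{}{f(X_1,\cdots,X_n)\,|\,X_1,\cdots,X_k}$, so that $D_0 = \ex{}{f(X_1,\cdots,X_n)}$ and $D_n = f(X_1,\cdots,X_n)$. Writing the telescoping sum $f(X_1,\cdots,X_n) - \ex{}{f(X_1,\cdots,X_n)} = \sum_{k=1}^n (D_k - D_{k-1})$ and setting $V_k := D_k - D_{k-1}$, the goal becomes a bound on $\ex{}{e^{t\sum_{k=1}^n V_k}}$.

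Next I would bound each martingale increment $V_k$ in a bounded range whose width is at most $\Delta_k$. The key observation is that, conditioned on $X_1,\cdots,X_{k-1}$, the quantity $D_k$ is a function of $X_k$ alone (after integrating out $X_{k+1},\cdots,X_n$), and the bounded-differences hypothesis on coordinate $k$ — together with independence of the $X_i$, which lets us integrate out the later coordinates under the same product measure — gives $\sup_{x_k} D_k - \inf_{x_k} D_k \le \Delta_k$. Since $\ex{}{V_k\,|\,X_1,\cdots,X_{k-1}} = 0$ by the tower property, Hoeffding's Lemma applied to the conditional law of $V_k$ gives $\ex{}{e^{t V_k}\,\middle|\,X_1,\cdots,X_{k-1}} \le e^{t^2 \Delta_k^2/8}$ for all $t \in \mathbb{R}$.

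Finally I would chain these conditional estimates. Conditioning on $X_1,\cdots,X_{n-1}$ and pulling the deterministic factor $e^{t\sum_{k=1}^{n-1} V_k}$ out of the innermost conditional expectation yields $\ex{}{e^{t\sum_{k=1}^n V_k}} \le e^{t^2\Delta_n^2/8}\,\ex{}{e^{t\sum_{k=1}^{n-1} V_k}}$; iterating downward over $k = n-1, n-2, \dots, 1$ produces the claimed bound $\ex{}{e^{t(f(X_1,\cdots,X_n) - \ex{}{f(X_1,\cdots,X_n)})}} \le e^{\frac{t^2}{8}\sum_{i=1}^n \Delta_i^2}$. The main obstacle — really the only step requiring care — is the second one: making precise that the conditional range of the increment $V_k$ is controlled by $\Delta_k$. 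This needs the independence of the $X_i$ so that the "future" coordinates $X_{k+1},\cdots,X_n$ are integrated against a fixed product measure independent of the value of $X_k$, allowing the pointwise bounded-differences hypothesis to pass through the expectation; without independence the Doob increments need not be bounded by $\Delta_k$.
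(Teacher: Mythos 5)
Your proof is correct and is exactly the standard argument (the one in the cited reference, McDiarmid 1989): build the Doob martingale $D_k = \ex{}{f(X_1,\dots,X_n)\mid X_1,\dots,X_k}$, observe that independence lets the bounded-differences hypothesis push through the conditional expectation so that each increment $V_k$ has conditional range at most $\Delta_k$ and conditional mean zero, apply Hoeffding's lemma conditionally, and chain via the tower property. The paper states this lemma with a citation and gives no proof, so there is nothing to compare beyond noting your argument is the standard (and correct) one.
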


Lemma \ref{lem:mcd-mgf} implies McDiarmid's inequality:
\begin{lemma}[\cite{McDiarmid89}]\label{lem:mcd} Let $f : \Omega^n \to \mathbb{R}$. Let $\Delta_1, \Delta_2, \cdots, \Delta_n \in \mathbb{R}$ be such that, for all $x_1, x_2, \cdots, x_n, y \in \Omega$ and all $i \in [n]$, $$\left| f(x_1, \cdots, x_{i-1}, y, x_{i+1}, \cdots, x_n) - f(x_1, \cdots, x_n) \right| \le \Delta_i.$$ Let $X_1, \cdots, X_n \in \Omega$ be independent random variables. Then, for any $\lambda\ge0$, $$\pr{}{f(X_1,\cdots,X_n) - \ex{}{f(X_1,\cdots,X_n)} \ge \lambda} \le  e^{\frac{-2\lambda^2}{\sum_{i \in [n]} \Delta_i^2}}.$$ \end{lemma}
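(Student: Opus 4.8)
The plan is to derive this from Lemma \ref{lem:mcd-mgf} by the standard Chernoff--Cram\'er argument: exponentiate, apply Markov's inequality, invoke the moment generating function bound, and optimize the free parameter. Write $X=(X_1,\dots,X_n)$ and $\mu=\ex{}{f(X_1,\dots,X_n)}$, and fix $\lambda\ge 0$. For any $t>0$, since $x\mapsto e^{tx}$ is positive and increasing, Markov's inequality gives
\[\pr{}{f(X)-\mu\ge\lambda}=\pr{}{e^{t(f(X)-\mu)}\ge e^{t\lambda}}\le e^{-t\lambda}\cdot\ex{}{e^{t(f(X)-\mu)}}.\]
By Lemma \ref{lem:mcd-mgf} the last expectation is at most $e^{\frac{t^2}{8}\sum_{i=1}^n\Delta_i^2}$, so $\pr{}{f(X)-\mu\ge\lambda}\le \exp\left(-t\lambda+\frac{t^2}{8}\sum_{i=1}^n\Delta_i^2\right)$ for every $t>0$.

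The remaining step is to choose $t$ to minimize the exponent. If $\sum_{i=1}^n\Delta_i^2=0$ then $f$ is almost surely constant and the claim is immediate (the left-hand side is $0$ for $\lambda>0$, and for $\lambda=0$ the stated bound equals $1$); so I would assume $\sum_{i=1}^n\Delta_i^2>0$. The exponent $-t\lambda+\frac{t^2}{8}\sum_i\Delta_i^2$ is a convex quadratic in $t$ whose unconstrained minimizer is $t^\ast=4\lambda/\sum_i\Delta_i^2\ge 0$; since $\lambda\ge0$ this is a legitimate choice of parameter (and when $\lambda=0$ one can instead simply note the bound already reads $\le 1$). Substituting $t=t^\ast$ collapses the exponent to $-2\lambda^2/\sum_{i=1}^n\Delta_i^2$, which is exactly the claimed tail bound.

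There is essentially no hard part here: all the probabilistic content is already packaged in Lemma \ref{lem:mcd-mgf} (the bounded-differences martingale MGF estimate), and what remains is a one-line Markov step plus an elementary optimization over $t$. The only points needing a moment's care are the degenerate case $\sum_i\Delta_i^2=0$ and checking that the optimizing $t^\ast$ is indeed a positive (hence admissible) value, both of which are handled above.
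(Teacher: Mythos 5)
Your proof is correct and matches the paper's treatment: the paper cites McDiarmid and simply remarks that Lemma~\ref{lem:mcd-mgf} implies Lemma~\ref{lem:mcd}, which is precisely your Chernoff--Cram\'er argument (Markov on $e^{t(f(X)-\mu)}$, the MGF bound, and optimizing $t=4\lambda/\sum_i\Delta_i^2$). Your handling of the degenerate case $\sum_i\Delta_i^2=0$ is a small extra care the paper does not spell out, but there is no substantive difference in approach.
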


In the spirit of McDiarmid's inequality, we have the following bound on variance.
\begin{lemma}[{\cite{Steele86}}]\label{lem:steele}
Let $f : \Omega^n \to \mathbb{R}$ and let $\Delta_1, \Delta_2, \cdots, \Delta_n \in \mathbb{R}$ be such that, for all $x_1, x_2, \cdots, x_n, y \in \Omega$ and all $i \in [n]$, $$\left| f(x_1, \cdots, x_{i-1}, y, x_{i+1}, \cdots, x_n) - f(x_1, \cdots, x_n) \right| \le \Delta_i.$$ Let $X_1, \cdots, X_n \in \Omega$ be independent random variables. Then $$\var{}{f(X_1, \cdots, X_n)} \le \frac12 \sum_{i=1}^n \ex{}{\left( f(X_1, \cdots, X_{i-1}, X_i', X_{i+1}, \cdots, X_n) - f(X_1, \cdots, X_n)\right)^2} \le \frac12 \sum_{i=1}^n \Delta_i^2,$$
where $X_i'$ denotes an independent copy of $X_i$.
\end{lemma}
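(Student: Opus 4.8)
The statement is a form of the Efron--Stein inequality, and the plan is to run its standard proof: the Doob martingale decomposition together with the ``independent copy'' identity for variance. Write $V := f(X_1, \ldots, X_n)$ and, for $0 \le i \le n$, let $V_i := \ex{}{V \mid X_1, \ldots, X_i}$, so that $V_0 = \ex{}{V}$ and $V_n = V$. Put $D_i := V_i - V_{i-1}$, so $V - \ex{}{V} = \sum_{i=1}^n D_i$. The first step is orthogonality of the increments: for $i < j$ we have $\ex{}{D_j \mid X_1, \ldots, X_{j-1}} = 0$ while $D_i$ is a function of $X_1, \ldots, X_i$ only, so $\ex{}{D_i D_j} = 0$; hence $\var{}{V} = \ex{}{\bigl(\sum_{i=1}^n D_i\bigr)^2} = \sum_{i=1}^n \ex{}{D_i^2}$, and it remains to bound each term $\ex{}{D_i^2}$.

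Next I would rewrite $D_i$ so that it is controlled by the single coordinate $X_i$. Let $X_{-i} := (X_1, \ldots, X_{i-1}, X_{i+1}, \ldots, X_n)$ and set $g_i := V - \ex{}{V \mid X_{-i}}$, where $\ex{}{V \mid X_{-i}}$ is the average of $V$ over $X_i$ with the remaining coordinates frozen. A short computation -- using that $\ex{}{V \mid X_{-i}}$ does not involve $X_i$, and that $X_{i+1}, \ldots, X_n$ are independent of $X_1, \ldots, X_i$, so that integrating the former out of $\ex{}{V \mid X_{-i}}$ returns exactly $V_{i-1}$ -- shows that $\ex{}{g_i \mid X_1, \ldots, X_i} = V_i - V_{i-1} = D_i$. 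Conditional Jensen then gives $\ex{}{D_i^2} = \ex{}{\bigl(\ex{}{g_i \mid X_1, \ldots, X_i}\bigr)^2} \le \ex{}{g_i^2}$, and since $\ex{}{g_i^2} = \ex{}{\bigl(V - \ex{}{V \mid X_{-i}}\bigr)^2} = \ex{}{\var{}{V \mid X_{-i}}}$, the $i$-th increment's second moment is bounded by the variance of $V$ in the coordinate $X_i$ alone, averaged over the other coordinates.

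Finally I would symmetrize: applying the elementary identity $\var{}{W} = \frac12 \ex{}{(W - W')^2}$ for i.i.d.~$W, W'$, conditionally on $X_{-i}$, with $W = f(X_1, \ldots, X_n)$ and $W'$ the value of $f$ after replacing $X_i$ by an independent copy $X_i'$, yields $\ex{}{\var{}{V \mid X_{-i}}} = \frac12 \ex{}{\bigl(f(X_1, \ldots, X_{i-1}, X_i', X_{i+1}, \ldots, X_n) - f(X_1, \ldots, X_n)\bigr)^2}$. Summing over $i$ gives the first inequality in the statement, and bounding each squared difference by $\Delta_i^2$ via the bounded-differences hypothesis gives the second. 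I do not expect a genuine obstacle here -- this is a classical argument -- the only step needing care being the measure-theoretic bookkeeping behind $\ex{}{g_i \mid X_1, \ldots, X_i} = D_i$, where one must track precisely which coordinates are conditioned on and which are integrated out.
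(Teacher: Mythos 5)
Your argument is correct: the Doob martingale decomposition $V - \ex{}{V} = \sum_i D_i$ with orthogonal increments, the identification $\ex{}{g_i \mid X_1,\ldots,X_i} = D_i$ (which uses independence), conditional Jensen to pass to $\ex{}{g_i^2} = \ex{}{\var{}{V \mid X_{-i}}}$, and the symmetrization identity $\var{}{W} = \tfrac12 \ex{}{(W-W')^2}$ all go through as you describe. The paper does not reproduce a proof of this lemma (it cites Steele 1986 directly), and your proof is precisely the classical Efron--Stein/Steele argument found in that reference, so there is nothing further to reconcile.
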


\section{Composition \& Postprocessing}\label{sec:composition}
In this section, we establish two basic properties of CMI: non-adaptive composition and post-processing. These properties hold both for CMI and distribution-free CMI. The former, which we prove below, guarantees that revealing the outputs of two different processes on the data does not increase the CMI more than the sum of the CMI of each of the processes.

\begin{theorem}[Non-adaptive Composition for CMI]\label{th:nonadaptivecompCMI}
Let $A_1:\Z^n\rightarrow \W_1$ and $A_2:\Z^n\rightarrow \W_2$ be deterministic or randomized algorithms, whose internal sources of randomness are independent. The non-adaptive composition $A:\Z^n\rightarrow \W_1\times\W_2$ defined by $A(z)=(A_1(z), A_2(z))$ satisfies 
\begin{itemize}
\item[(i)] $\CMI{A}{\D}\leq\CMI{A_1}{\D}+\CMI{A_2}{\D}$ for any distribution $\D$ and 
\item[(ii)] $\CMI{A}{}\leq \CMI{A_1}{}+\CMI{A_2}{}$.
\end{itemize}
\end{theorem}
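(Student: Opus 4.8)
The plan is to prove both parts by applying the chain rule for mutual information (Lemma~\ref{lem:chainruleMI}, in its conditional form) twice, in two different orders, and exploiting the fact that the only randomness shared between $A_1(\sZ_S)$ and $A_2(\sZ_S)$ is the pair $(\sZ,S)$. Throughout, write $W_1=A_1(\sZ_S)$ and $W_2=A_2(\sZ_S)$, so that $A(\sZ_S)=(W_1,W_2)$ and $\CMI{A}{\D}=I(W_1,W_2;S\mid\sZ)$.

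First I would record the key structural fact: conditioned on $(\sZ,S)$, the subsample $\sZ_S$ is determined, and then $W_1=A_1(\sZ_S)$ and $W_2=A_2(\sZ_S)$ are functions of $\sZ_S$ together with the internal randomness of $A_1$ and of $A_2$, respectively. Since those two internal sources of randomness are independent of one another and of $(\sZ,S)$, the variables $W_1$ and $W_2$ are conditionally independent given $(\sZ,S)$, i.e., $I(W_1;W_2\mid\sZ,S)=0$.

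For part (i), expand $I(W_1,W_2;S\mid\sZ)$ by the chain rule in one order: $I(W_1,W_2;S\mid\sZ)=I(W_1;S\mid\sZ)+I(W_2;S\mid\sZ,W_1)$, where the first term is exactly $\CMI{A_1}{\D}$. To control the second term, expand $I(W_2;S,W_1\mid\sZ)$ in both orders: $I(W_2;S\mid\sZ)+I(W_2;W_1\mid\sZ,S)=I(W_2;W_1\mid\sZ)+I(W_2;S\mid\sZ,W_1)$. The term $I(W_2;W_1\mid\sZ,S)$ vanishes by the structural fact, and $I(W_2;W_1\mid\sZ)\ge0$, so $I(W_2;S\mid\sZ,W_1)\le I(W_2;S\mid\sZ)=\CMI{A_2}{\D}$. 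Combining yields $\CMI{A}{\D}\le\CMI{A_1}{\D}+\CMI{A_2}{\D}$. Part (ii) is the identical argument with $\sZ$ replaced by a fixed worst-case supersample $\sz\in\Z^{n\times2}$ and no conditioning: for fixed $\sz$, $W_1=A_1(\sz_S)$ and $W_2=A_2(\sz_S)$ are conditionally independent given $S$, so $I(W_1,W_2;S)=I(W_1;S)+I(W_2;S\mid W_1)\le I(W_1;S)+I(W_2;S)\le\CMI{A_1}{}+\CMI{A_2}{}$, and taking the supremum over $\sz$ finishes the proof.

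The one place requiring care is the structural fact: this is precisely where the hypothesis that $A_1$ and $A_2$ have independent internal randomness is used, and the bound genuinely fails without it (e.g., if $A_2$ reuses $A_1$'s coin tosses, the joint output can reveal strictly more about $S$ than the sum of the individual contributions). Everything else is routine bookkeeping with the chain rule and the non-negativity of (conditional) mutual information.
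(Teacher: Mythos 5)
Your proof is correct and takes essentially the same approach as the paper: two applications of the chain rule for (conditional) mutual information, combined with the key observation that $W_1$ and $W_2$ are conditionally independent given $(\sZ,S)$ because $A_1$ and $A_2$ have independent internal randomness. The only cosmetic difference is that you manipulate the quantities conditioned on $\sZ$ throughout for part (i), whereas the paper fixes $\sz$, proves the per-supersample inequality $I(A(\sz_S);S)\le I(A_1(\sz_S);S)+I(A_2(\sz_S);S)$, and then takes expectations (or suprema), but these are interchangeable bookkeeping choices.
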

\begin{proof}
Fix $\sz\in\D^{n\times2}$ and let $S\gets \U^n$ uniformly random and independent from $\sZ$ and the randomness of $A_1$ and $A_2$. Then let $A_1(\sz_S)=F_1(S)$ and $A_2(\sz_S)=F_2(S)$. For any $\sz\in\D^{n\times2}$, it holds that $I(F_1(S); F_2(S) |S)=I(A_1(\sz_S); A_2(\sz_S) |S)=0$ since the randomness of $A_1$ is independent of the randomness of $A_2$.
\begin{claim}\label{cl:nonadaptivecomposition}
For any $S\gets\U^n$, and $F_1(S), F_2(S)$ such that $I(F_1(S); F_2(S) |S)=0$ it holds that
\[I(F_1(S), F_2(S);S)\le I(F_1(S);S)+I(F_2(S);S).\]
\end{claim}
\begin{proof}[Proof of Claim~\ref{cl:nonadaptivecomposition}]
By the chain rule of Lemma~\ref{lem:chainruleMI} it holds that
\begin{equation}\label{eq:nonadaptivecompchain1}
    I(F_1(S), F_2(S);S)= I(F_2(S);S)+I(F_1(S);S|F_2(S))
\end{equation}
To bound the second term, we invoke the chain rule on $I(F_2(S),S;F_1(S))$:
\begin{align*}
    & I(F_1(S);S)+I(F1(S);F_2(S)|S)=I(F_1(S);F_2(S))+I(F1(S);S|F_2(S))\\
    & \Rightarrow I(F1(S);S|F_2(S))=I(F_1(S);S)+I(F1(S);F_2(S)|S)-I(F_1(S);F_2(S))\\
    & \Rightarrow I(F1(S);S|F_2(S))=I(F_1(S);S)-I(F_1(S);F_2(S)) \tag{by assumption}\\
    & \Rightarrow I(F1(S);S|F_2(S))\le I(F_1(S);S) \tag{since $I(\cdot,\cdot)\ge 0$}
\end{align*}
Applying the latter bound to equation~\eqref{eq:nonadaptivecompchain1} proves the claim.
\end{proof}
Therefore, it holds that for any $\sz\in\Z^{n\times 2}$, 
\begin{equation}\label{eq:compositionforanyz}
I(A(\sz_S);S)\leq I(A_1(\sz_S);S) + I(A_2(\sz_S) ;S)
\end{equation}
Recall the definition of CMI:
$$ \CMI{A}{\D} = I(A(\sZ_S) ;S|\sZ) =\ex{\sz\gets\D^{n\times 2}}{I(A(\sz_S) ;S)}.$$
By inequality~\eqref{eq:compositionforanyz}, taking the expected values over $\sz\gets\D^{n\times 2}$ proves part (i) of the theorem.
Similarly for distribution-free CMI:
\begin{align*}
\CMI{A}{}& =\sup_{\sz\in\Z^{n\times 2}} I(A(\sz_S) ;S)\\
& = I(A(\sz^*_S) ;S) \tag{for $\sz^*=\argmax_{\sz\in\Z^{n\times 2}} I(A(\sz_S) ;S)$}\\
& \leq I(A_1(\sz^*_S) ;S) + I(A_2(\sz^*_S) ;S) \tag{by inequality~\eqref{eq:compositionforanyz}}\\
&\leq \sup_{\sz\in\Z^{n\times 2}} I(A_1(\sz_S) ;S) + \sup_{\sz\in\Z^{n\times 2}} I(A_2(\sz_S) ;S)\\
& = \CMI{A_1}{} + \CMI{A_2}{}.
\end{align*}
This proves part (ii) of the theorem and concludes the proof.
\end{proof}

The post-processing property is trivially inherited by the data-processing inequality.
\begin{theorem}[Post-processing for CMI]
    Let $A_1:\Z^n\rightarrow \W_1$ and $A_2:\W_1\rightarrow \W_2$ be deterministic or randomized algorithms, whose internal sources of randomness are independent. The post-processing algorithm $A:\Z^n\rightarrow \W_2$ defined by $A(z)=A_2(A_1(z))$ satisfies 
\begin{itemize}
\item [(i)] $\CMI{A}{\D}\leq \CMI{A_1}{\D}$ for any distribution $\D$ and 
\item[(ii)] $\CMI{A}{}\leq \CMI{A_1}{}$.
\end{itemize}
\end{theorem}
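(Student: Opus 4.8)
The plan is to reduce the post-processing property of CMI directly to the data-processing inequality for conditional mutual information (Lemma~\ref{lem:data-proc}), just as the preceding theorem reduced non-adaptive composition to the chain rule. First I would fix the supersample value $\sz \in \Z^{n\times 2}$ and let $S \gets \U^n$ be uniform and independent of the randomness of $A_1$ and $A_2$. The key observation is that, conditioned on $S$ (equivalently, conditioned on the value $\sz_S$, which is determined by $S$ once $\sz$ is fixed), the chain $S \to A_1(\sz_S) \to A_2(A_1(\sz_S))$ is Markov: the internal randomness of $A_2$ is independent of everything else, so $A(\sz_S) = A_2(A_1(\sz_S))$ depends on $S$ only through $A_1(\sz_S)$. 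Hence Lemma~\ref{lem:data-proc} gives $I(A(\sz_S); S) \le I(A_1(\sz_S); S)$ for every fixed $\sz$.

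From this pointwise inequality, part (ii) follows immediately by taking the supremum over $\sz \in \Z^{n\times 2}$ on both sides:
\[
\CMI{A}{} = \sup_{\sz} I(A(\sz_S);S) \le \sup_{\sz} I(A_1(\sz_S);S) = \CMI{A_1}{}.
\]
For part (i), I would recall that $\CMI{A}{\D} = I(A(\sZ_S);S\mid\sZ) = \ex{\sz \gets \D^{n\times 2}}{I(A(\sz_S);S)}$ and likewise for $A_1$; taking expectations over $\sz \gets \D^{n\times 2}$ of the pointwise inequality then yields $\CMI{A}{\D} \le \CMI{A_1}{\D}$.

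The only subtlety — and it is minor — is making sure the Markov chain is set up correctly, namely that the randomness of $A_1$ is bundled appropriately so that $A(\sz_S)$ is a (randomized) function of $A_1(\sz_S)$ alone. Since the problem statement stipulates that the internal randomness of $A_1$ and $A_2$ are independent (and both independent of $S$ and $\sZ$), this holds: writing $A_1(\sz_S) = F_1(S)$ using $A_1$'s randomness and $A(\sz_S) = F_2(F_1(S))$ using $A_2$'s independent randomness, the triple $(S, F_1(S), F_2(F_1(S)))$ is a Markov chain in that order. There is no real obstacle here; this theorem is genuinely a one-line consequence of data processing, and the proof essentially just spells out the Markov structure and then unwinds the definitions of $\CMI{\cdot}{\D}$ and $\CMI{\cdot}{}$.
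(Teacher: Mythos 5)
Your proof is correct and follows exactly the paper's argument: fix $\sz$, observe that $S \to A_1(\sz_S) \to A_2(A_1(\sz_S))$ is a Markov chain (using independence of $A_2$'s randomness), apply the data-processing inequality to get $I(A(\sz_S);S) \le I(A_1(\sz_S);S)$ pointwise, and then take the expectation over $\sz \gets \D^{n\times 2}$ for (i) and the supremum over $\sz$ for (ii). The only blemish is the parenthetical ``conditioned on $S$'' early on, which is a slip --- you are not conditioning, just stating the unconditional Markov structure --- but your closing paragraph corrects this and states the chain precisely.
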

\begin{proof}
It holds that:
For any fixed $\sz\in\D^{n\times2}$, $S\rightarrow A_1(\sz_S)\rightarrow A_2(A_1(\sz_S))$ is a Markov chain. By the data-processing inequality in Lemma~\ref{lem:data-proc}, it holds that $I(S;A_1(\sz_S))\geq I(S;A_2(A_1(\sz_S)))$. 

Taking the expectation over $\sz\gets\D^{n\times 2}$, we get
\begin{align*}
    \CMI{A}{\D} &= I(A_2(A_1(\sZ_S)) ;S|\sZ) \\
   & =\ex{\sz\gets\D^{n\times 2}}{I(A_2(A_1(\sz_S));S)}\\
   &\leq \ex{\sz\gets\D^{n\times 2}}{I(A_1(\sz_S);S)} =\CMI{A_1}{\D},
\end{align*}
thus proving part (i) of the theorem.
Similarly for distribution-free CMI:
\begin{align*}
\CMI{A}{}& =\sup_{\sz\in\Z^{n\times 2}} I(A(\sz_S) ;S)\\
& = I(A(\sz^*_S) ;S) \tag{for $\sz^*=\argmax_{\sz\in\Z^{n\times 2}} I(A(\sz_S) ;S)$}\\
& \leq I(A_1(\sz^*_S) ;S) \\
&\leq \sup_{\sz\in\Z^{n\times 2}} I(A_1(\sz_S) ;S) = \CMI{A_1}{}.
\end{align*}
This proves part (ii) of the theorem and concludes the proof.
\end{proof}

These two properties demonstrate that CMI is is a robust notion of generalization. Another property that is desirable for learning algorithms is \emph{adaptive} composition. That is, if $A_1 : \Z^n \to \W_1$ and $A_2 : \Z^n \times \W_1 \to \W_2$, then consider $A : \Z^n \to \W_2$ defined by $A(z)=A_2(A_1(z),z)$. We would like to have $\CMI{A}{} \overset{?}{\le} \CMI{A_1}{} + \sup_{w \in \W_1} \CMI{A_2(\cdot, w)}{}$. However, this does \emph{not} hold. In Section~\ref{sec:ucmi}, we explore a variant of CMI which satisfies this stronger property.

\section{Methods for Bounding CMI}\label{sec:boundingCMI}
In this section, we examine several known methods for proving that an algorithm $A$ generalizes well, and we show that they all imply a bound on the CMI of the algorithm. We note that although we state and prove the following theorems for distribution-free CMI, they also imply the same bounds for CMI, since $\CMI{A}{\D}\leq \CMI{A}{}$ for any algorithm $A$ and distribution $\D$.

\subsection{Compression Schemes}\label{sec:cmi-compression}
\begin{definition}[Compression Scheme, \citet{LittlestoneW86}]\label{def:compressionscheme}
An algorithm $A:\Z^n\rightarrow \W$ has a {\em compression scheme of size $k$} if we can write $A(z) =A_2(A_1(z))$ for all $z$, where
\begin{enumerate}
    \item $A_1:\Z^n\rightarrow \Z^k$ is a ``compression algorithm'' which given a sample set $z\in\Z^n$ selects a subset $i_1, \ldots, i_k \in [n]$ and returns  $A_1(z)=(z_{i_1}, \ldots, z_{i_k})\in\Z^k$ and
    \item $A_2:\Z^k\rightarrow \W$ is an arbitrary ``encoding algorithm.''
\end{enumerate}
\end{definition}

\begin{theorem}[CMI of compression schemes]\label{th:compressionschemeCMI}\label{thm:comp-cmi}
If $A:\Z^n\rightarrow \W$ has a compression scheme of size $k$, then 
$\CMI{A}{}\leq k\cdot \log(2n)$.
\end{theorem}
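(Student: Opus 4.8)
The plan is to bound $I(A(\sz_S); S)$ for every fixed supersample $\sz \in \Z^{n\times 2}$ and then take the supremum, as the definition of distribution-free CMI requires. The key observation is that, conditioned on a fixed $\sz$, the output $A(\sz_S)$ is a (randomized) function of the pair $(A_1(\sz_S), S)$, and $A_1(\sz_S)$ is a size-$k$ \emph{subset} of the $2n$ elements of $\sz$. Since the ground set $\sz$ has $2n$ entries, there are at most $(2n)^k$ possible values for the tuple of selected indices (or $\binom{2n}{k} \le (2n)^k$ if we count unordered subsets). So the random variable $A_1(\sz_S)$, viewed as a function of $S$ (and the internal randomness of $A_1$), takes at most $(2n)^k$ values.

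The main steps are: (1) By the post-processing property (data-processing inequality, Lemma~\ref{lem:data-proc}) applied to the Markov chain $S \to A_1(\sz_S) \to A(\sz_S) = A_2(A_1(\sz_S))$, we get $I(A(\sz_S); S) \le I(A_1(\sz_S); S)$ for each fixed $\sz$. (2) Bound $I(A_1(\sz_S); S) \le H(A_1(\sz_S))$, using that mutual information is at most the entropy of either argument. (3) Bound $H(A_1(\sz_S)) \le \log(\text{number of possible outputs of } A_1 \text{ given } \sz) \le \log\big((2n)^k\big) = k \log(2n)$, since entropy is maximized by the uniform distribution over the support. (4) Take the supremum over $\sz$ to conclude $\CMI{A}{} \le k\log(2n)$. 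This matches the statement; note one should be slightly careful to encode the selected data as a set of indices into $\sz$ rather than as elements of $\Z$ (which could be a continuous, infinite space) — it is precisely this indexing trick, enabled by conditioning on the supersample, that makes the bound finite.

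I expect the only subtle point — not really an obstacle — to be the bookkeeping in step (3): one must argue that, for fixed $\sz$, the compression algorithm's output $A_1(\sz_S)$ really does lie in a set of size at most $(2n)^k$. This is immediate once one thinks of $A_1$ as selecting $k$ indices from $[n]$ applied to $\sz_S$, but $\sz_S$ itself ranges over $S \in \{0,1\}^n$, so the indices $i_1,\dots,i_k \in [n]$ chosen by $A_1$ get composed with the map $i \mapsto (i, S_i+1)$ to land at positions in the $[n]\times\{1,2\}$ grid indexing $\sz$; there are $(2n)^k$ such position-tuples regardless of $S$. Hence the support of $A_1(\sz_S)$ has size $\le (2n)^k$ uniformly, and the entropy bound follows. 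Everything else is a one-line invocation of data processing and the elementary entropy inequalities, so the proof is short.
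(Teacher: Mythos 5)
Your proof is correct and follows essentially the same route as the paper: fix $\sz$, use the data-processing inequality along the chain $S \to A_1(\sz_S) \to A(\sz_S)$, bound the resulting mutual information by the entropy of the compressed output, and count that the compressed output can take at most $\binom{2n}{k}\le (2n)^k$ values because it is a size-$k$ selection from the $2n$ entries of the fixed supersample. The paper's write-up routes through a variable $K$ denoting the chosen index set and is a bit looser notationally, but the substance — data processing, entropy bound, and the crucial ``index into the fixed supersample'' counting argument that makes the bound domain-independent — is identical to yours.
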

\begin{proof}
Let $K=K(z)=\{i_1,\ldots, i_k\}\subset [n]$ denote the set of indices chosen by the compression algorithm $A_1$ on input $z \in \Z^n$. We will slightly abuse notation and denote by $z_K \in \Z^k$ the subset of $z\in\Z^n$ given by the indices $K \subset [n]$.
For any $\sz\in\Z^{n\times 2}$ consisting of $2n$ samples and $S\gets \U^n$:
\begin{align*}
    I(A(\sz_S); S) & = I(A_2(A_1(\sz_S));S) \\
    &\leq I((\sz_S)_{K(\sz_S)};S) \tag{Lemma~\ref{lem:data-proc}}\\
    &\leq I(K;S)\\
    &\leq H(K) \\
    &\leq k\log(2n),
\end{align*}
where the last inequality holds since the number of possible distinct values of $(\sz_S)_K $ is at most $\binom{2n}{k}\leq (2n)^k$. Therefore, $\CMI{A}{} = \sup_{\sz\in\Z^{n\times 2}} I(A(\sz_S); S) \leq k\log(2n)$.
\end{proof}

\subsection{Differential Privacy and Other Stability Notions}\label{sec:dpstab}
A recent line of work has studied the generalization guarantees of algorithms which satisfy some form of distributional stability, particularly differential privacy \cite[etc.]{DworkFHPRR15,BassilyNSSSU16}. CMI is closely related to all of these conditions. In fact, we show that (pure) differentially private algorithms, learners that satisfy KL stability, Average-Leave-One-Out KL stability, Mutual Information stability, TV stability, and $d$-bit learners  have bounded CMI.  

\nocite{HardtU14,SteinkeU15}

We start by defining some of these stability notions below.

\begin{definition}[Differential Privacy,~\cite{DworkMNS06,DworkKMMN06}]\label{def:DP}
An algorithm $A:\Z^n\rightarrow \W$ is $(\eps,\delta)$-differentially private (DP) if, for any two data sets $z,z'\in\Z^n$ that differ in a single element and for any measurable set $W\subset\W$,
\[\Pr[A(z)\in W] \leq e^{\eps}\cdot \Pr[A(z')\in W] +\delta.\]
If $\delta=0$, $A$ is $\eps$-differentially private.
\end{definition}

\begin{definition}[KL Stability,~\cite{BassilyNSSSU16}]\label{def:KLstability}
An algorithm $A:\Z^n\rightarrow \W$ is $\eps$-KL stable if, for any two data sets $z,z'\in\Z^n$ that differ in a single element,
\[\dkl{A(z)}{A(z')}\le \eps.\]
\end{definition}

\begin{definition}[Average Leave-One-Out KL Stability,~\cite{FeldmanS18a}]\label{def:ALKLstability}
An algorithm $A :(\Z^n\cup\Z^{n-1}) \to \mathcal{W}$ is $\eps$-ALKL stable if, for any data set $z\in\Z^n$,
\[\frac{1}{n}\sum_{i=1}^n \dkl{A(z)}{A(z_{-i})}\leq \eps,\]
where $z_{-i}$ denotes $z$ with the $i$-th element removed.\\
More generally, an algorithm $A :\Z^n \to \mathcal{W}$ is $\eps$-ALKL stable if there exist algorithms $A_1, A_2, \cdots, A_n : \Z^{n-1}\rightarrow \W$ such that, for all $z\in\Z^n$, we have $\frac{1}{n}\sum_{i=1}^n \dkl{A(z)}{A_i(z_{-i})}\leq \eps$.
\end{definition}
The second definition extends the notion to include algorithms that only admit inputs in $\Z^n$. We denote an $\eps$-ALKL stable algorithm as $A :\Z^n \to \mathcal{W}$ even if it satisfies the stronger first definition.

\begin{definition}[MI Stability,~\cite{RaginskyRTWX16, FeldmanS18a}]\label{def:MIstability}
An algorithm $A:\Z^n\rightarrow \W$ is $\eps$-MI stable if for any random sample set $Z \in \Z^n$ (possibly drawn from a non-product distribution)
\[\frac{1}{n}\sum_{i=1}^n I(A(Z);Z_i|Z_{-i})\leq \eps,\]
where $Z_i$ denotes the $i$-th element of $Z$ and $Z_{-i}$ denotes $Z$ with the $i$-th element removed.
\end{definition}

The following theorem states that all the above conditions imply a bound on the CMI of $A$.

\begin{theorem}[CMI of KL/ALKL/MI stable and DP algorithms]\label{th:CMIofstabilities}
If $A:\Z^n\rightarrow \W$ is an $\eps$-KL/$\eps$-ALKL/$\eps$-MI stable or $\sqrt{2\eps}$-DP algorithm, then $\CMI{A}{}\leq \eps n$.
\end{theorem}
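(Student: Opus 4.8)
The plan is to prove each of the four stability conditions implies $\CMI{A}{} \le \eps n$ by relating CMI to a sum of KL divergences over the $n$ coordinates. The key observation is that, for a fixed supersample $\sz \in \Z^{n \times 2}$, the bit vector $S \gets \U^n$ has independent uniform coordinates, so by the chain rule for mutual information (Lemma~\ref{lem:chainruleMI}) applied coordinate-by-coordinate, $I(A(\sz_S); S) = \sum_{i=1}^n I(A(\sz_S); S_i \mid S_{<i}) \le \sum_{i=1}^n I(A(\sz_S); S_i \mid S_{-i})$, where the inequality uses that conditioning on the additional independent coordinates $S_{\{i+1,\dots,n\}}$ does not change things adversely (more carefully, one writes $I(A(\sz_S);S) = \sum_i I(A(\sz_S); S_i \mid S_{-i})$ directly, since $S_i$ is independent of $S_{-i}$, this is the standard identity $I(W;S) = \sum_i I(W; S_i \mid S_{-i})$ when the $S_i$ are mutually independent). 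So it suffices to bound each term $I(A(\sz_S); S_i \mid S_{-i})$ by $\eps$ (for KL/ALKL/DP) or to bound the sum directly by $\eps n$ (for MI stability).

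The core step: fix $i$ and fix the other coordinates $S_{-i} = s_{-i}$. Then $A(\sz_S)$ is a mixture over the two equally likely values of $S_i \in \{0,1\}$, which correspond to two datasets $z^{(0)}, z^{(1)} \in \Z^n$ that differ only in the $i$-th element (namely $\sz_{i,1}$ versus $\sz_{i,2}$). Thus $I(A(\sz_S); S_i \mid S_{-i} = s_{-i}) = \frac12 \dkl{A(z^{(0)})}{M} + \frac12 \dkl{A(z^{(1)})}{M}$ where $M = \frac12 A(z^{(0)}) + \frac12 A(z^{(1)})$ is the mixture. By Lemma~\ref{lem:klmin} (the KL center is the mean), this mixture-information is at most $\frac12 \dkl{A(z^{(0)})}{A(z^{(1)})} + \frac12 \dkl{A(z^{(1)})}{A(z^{(0)})}$. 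Now: if $A$ is $\eps$-KL stable, each of these two divergences is $\le \eps$, but that would give $\le \eps$ only after noting the $\frac12$ factors — actually it gives exactly $\le \frac12\eps + \frac12 \eps$... wait, we need to be a touch more careful, but in fact the mixture bound $I \le \frac12\dkl{A(z^{(0)})}{A(z^{(1)})}$ alone suffices since $\dkl{A(z^{(0)})}{M} \le \dkl{A(z^{(0)})}{A(z^{(1)})}$ (as $M$ is ``between''), so $I(A(\sz_S);S_i\mid S_{-i}) \le \eps$ for $\eps$-KL stability, summing to $\eps n$.

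For $\sqrt{2\eps}$-differential privacy, I would use the known bound that $\eps'$-DP implies $\dkl{A(z)}{A(z')} \le \frac12 \eps'^2$ for neighboring $z,z'$ (this is a standard fact, e.g. from the range of the privacy loss random variable), so with $\eps' = \sqrt{2\eps}$ we get $\frac12 \eps'^2 = \eps$ per coordinate, and the same mixture argument finishes it. For $\eps$-ALKL stability, the subtlety is that the reference algorithm is $A_i$ on the leave-one-out dataset $z_{-i}$ rather than $A$ itself on a neighbor; here I would invoke Lemma~\ref{lem:klmin} differently: $I(A(\sz_S);S_i\mid S_{-i}=s_{-i}) = \min_{\mathcal R} \ex{S_i}{\dkl{A(\sz_S)}{\mathcal R}} \le \ex{S_i}{\dkl{A(\sz_S)}{A_i(\text{common part})}}$ where the common $n-1$ elements form a valid leave-one-out input, and then average the ALKL guarantee appropriately over $S_{-i}$. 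For $\eps$-MI stability, the sum $\sum_i I(A(\sz_S); S_i \mid S_{-i})$ is literally of the form $\sum_i I(A(Z); Z_i \mid Z_{-i})$ with $Z = \sz_S$ a (non-product) random dataset, so the definition directly gives $\le \eps n$. The main obstacle is handling the ALKL case cleanly — matching up the leave-one-out reference algorithm with the mixture structure of $\sz_S$ and getting the averaging over $S_{-i}$ to line up with the $\frac1n\sum_i$ in the definition; the DP case requires citing the right constant in the DP-to-KL conversion, and one must be slightly careful that TV stability (part (iv) of the intro theorem) is handled separately since it does not fit this KL-based template.
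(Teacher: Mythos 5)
Your proposal is correct but more explicit than the paper, which simply invokes a cited lemma: the paper reduces all four conditions to $\eps$-MI stability via Lemma~\ref{lem:stabilitiesandmi} (from \cite{FeldmanS18a}), observes that for every fixed $\sz$ the distribution of $\sz_S$ is a product distribution over $\Z^n$, and applies part (iv) of that lemma to conclude $\CMI{A}{}\le\eps n$. Your per-coordinate decomposition of $I(A(\sz_S);S)$ and the mixture-of-neighbors argument via Lemma~\ref{lem:klmin} essentially re-derive the content of that cited chain inside the CMI framework, which is instructive and self-contained but not shorter. Two small corrections: the parenthetical claim that $I(W;S)=\sum_i I(W;S_i\mid S_{-i})$ holds \emph{exactly} when the $S_i$ are independent is false (take $W=S_1\oplus S_2$); only the inequality $\le$ holds, which suffices and which the paper establishes explicitly in the proof of Theorem~\ref{th:CMIofTV}. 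And for MI stability, $\sz_S$ is a \emph{product} distribution for uniform $S$ (not ``non-product'' as you wrote), and identifying $I(A(\sz_S);S_i\mid S_{-i})$ with $I(A(Z);Z_i\mid Z_{-i})$ for $Z=\sz_S$ requires the (true but not ``literal'') observation that the residual randomness in $S$ beyond $\sz_S$, arising from repeated entries of $\sz$, is independent of $A(\sz_S)$. Your ALKL worry does resolve: for each $i$ the averaged contribution is $\ex{S}{\dkl{A(\sz_S)}{A_i((\sz_S)_{-i})}}$, and exchanging $\sum_i$ with $\ex{S}{\cdot}$ applies the ALKL hypothesis pointwise in $\sz_S$ to give $\le\eps n$. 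You are also right that TV stability is handled separately (Theorem~\ref{th:CMIofTV}).
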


We will base our proof on the following collection of results, which simplifies the relationship of these stability notions.

\begin{lemma}[\cite{FeldmanS18a}]\label{lem:stabilitiesandmi}
Let $A :\Z^n \to \mathcal{W}$ be a learning algorithm.
\begin{enumerate}[label=\roman*.]
    \item If $A$ is $\eps$-KL stable, then it is $\eps$-ALKL stable.
    \item If $A$ is $\sqrt{2\eps}$-DP, then it is $\eps$-ALKL stable.
    \item If $A$ is $\eps$-ALKL stable, then it is $\eps$-MI stable.
    \item If $A$ is $\eps$-MI stable and $Z$ is drawn from a product distribution over $\Z^n$, then it holds that $I(A(Z);Z)\leq \eps n.$
\end{enumerate}
\end{lemma}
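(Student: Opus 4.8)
The plan is to establish the four implications in sequence, reducing the real content to two ingredients already in hand: the ``duplication trick'' that converts replace-one stability into leave-one-out stability, and Lemma~\ref{lem:klmin}, which says that the mean of a family of distributions is their KL-center.

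\emph{Parts (i) and (ii).} Each hypothesis bounds $\dkl{A(z)}{A(z')}$ by $\eps$ whenever $z,z'\in\Z^n$ differ in a single coordinate: for $\eps$-KL stability this is the definition, and for $\sqrt{2\eps}$-DP it follows from the standard fact that $\sqrt{2\eps}$-DP implies $\eps$-concentrated DP~\citep{BunS16}, which bounds the KL divergence between the output distributions on neighbouring datasets by $\eps$ (take the R\'enyi order to $1$). Given this, for each $i\in[n]$ let $A_i:\Z^{n-1}\to\W$ run $A$ on the $n$-tuple obtained from its $(n-1)$-tuple input by reinserting, at position $i$, a copy of an adjacent coordinate. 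Then for every $z\in\Z^n$ the dataset actually fed to $A$ inside $A_i(z_{-i})$ agrees with $z$ except in coordinate $i$, so $\dkl{A(z)}{A_i(z_{-i})}\le\eps$; averaging over $i$ gives $\eps$-ALKL stability.

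\emph{Part (iii).} Fix an arbitrary joint law for $Z$ on $\Z^n$ and an index $i\in[n]$, and condition on $Z_{-i}=z_{-i}$. Then $I(A(Z);Z_i\mid Z_{-i}=z_{-i})=\ex{z_i}{\dkl{A(z_{-i},z_i)}{\overline A_{z_{-i}}}}$, where $z_i$ is drawn from the conditional law of $Z_i$ given $Z_{-i}=z_{-i}$ and $\overline A_{z_{-i}}$ is the resulting mixture of the distributions $A(z_{-i},z_i)$. By Lemma~\ref{lem:klmin} this equals $\inf_{\mathcal R}\ex{z_i}{\dkl{A(z_{-i},z_i)}{\mathcal R}}\le\ex{z_i}{\dkl{A(z_{-i},z_i)}{A_i(z_{-i})}}$, the last step by taking $\mathcal R=A_i(z_{-i})$. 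Averaging over $i$, taking the expectation over $Z$, and using that the $\eps$-ALKL bound holds for every $z$ (hence in expectation) gives $\frac1n\sum_{i}I(A(Z);Z_i\mid Z_{-i})\le\eps$, i.e.\ $\eps$-MI stability.

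\emph{Part (iv).} Put $W=A(Z)$. The chain rule for mutual information (Lemma~\ref{lem:chainruleMI}) gives $I(W;Z)=\sum_{i=1}^n I(W;Z_i\mid Z_{<i})$. Because $Z$ is a product distribution, conditioning on $Z_{<i}$ leaves $Z_i$ independent of $Z_{>i}$, and conditioning on an independent variable can only increase mutual information: if $X\perp Y$ then $I(W;X\mid Y)\ge I(W;X)$, which one sees by expanding $I(X;W,Y)$ two ways with the chain rule and using $I(X;Y)=0$ together with $I(X;Y\mid W)\ge0$. Applying this inside the conditioning on $Z_{<i}$, with $X=Z_i$ and $Y=Z_{>i}$, yields $I(W;Z_i\mid Z_{<i})\le I(W;Z_i\mid Z_{-i})$, so $I(W;Z)\le\sum_i I(W;Z_i\mid Z_{-i})\le\eps n$ by $\eps$-MI stability. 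The step I expect to need the most care is keeping the direction of the chain-rule/independence argument in part (iv) straight -- and, to a lesser extent, pinning down the exact pure-DP-to-KL constant used in part (ii); the remainder is just the duplication trick and Lemma~\ref{lem:klmin}.
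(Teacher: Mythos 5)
The paper cites this lemma from \cite{FeldmanS18a} without reproducing a proof, so there is no in-paper argument to compare against; your reconstruction is correct and uses the intended tools. Parts (i) and (ii) rely on the duplication trick for the leave-one-out algorithms $A_i$, with (ii) reduced to a KL-stability bound via the pure-DP-to-zCDP conversion of \cite{BunS16} (then $\alpha\to1$ gives $\dkl{A(z)}{A(z')}\le\eps$ on neighbours). Part (iii) is the right move: Lemma~\ref{lem:klmin} identifies the mixture $\overline{A}_{z_{-i}}$ as the KL-center, so $I(A(Z);Z_i\mid Z_{-i}=z_{-i})$ is an infimum over reference distributions, dominated in particular by $\mathcal{R}=A_i(z_{-i})$; averaging then gives the ALKL bound pointwise in $z$, hence in expectation. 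Part (iv) is the chain rule plus the observation that conditioning on an independent variable can only increase mutual information, and this is the very same step the paper itself uses inside the proof of Theorem~\ref{th:CMIofTV} to establish $I(F(S);S_i\mid S_{<i})\le I(F(S);S_i\mid S_{-i})$.
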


We are now ready to prove the theorem.
\begin{proof}[Proof of Theorem~\ref{th:CMIofstabilities}]
Let $\mathcal{P}_{\sz_S}$ be the probability distribution of $\sz_S$ for a fixed $\sz\in\Z^{n\times2}$ and $S\gets \U^n$. Then, $\mathcal{P}_{\sz_S}$ is a product distribution for any $\sz$, including $\sz^*=\argmax_{\sz\in\Z^{n\times2}} I(A(\sz_S);\sz_S)$. By Lemma~\ref{lem:stabilitiesandmi}(iv) it holds that
\begin{equation*}\label{eq:supmi}
    \CMI{A}{}=\sup_{\sz\in\Z^{n\times2}} I(A(\sz_S);\sz_S)\leq \eps n,
\end{equation*}
which proves the theorem.
\end{proof}

We note that this theorem implies that if $A$ is $\eps$-differentially private, then $\CMI{A}{\D}\leq \eps^2n/2$ for any distribution $\D$.

The most closely related notion to ours are the \emph{$d$-bit information learners}, introduced by \citet{BassilyMNSY18}, which are algorithms that satisfy $I(A(Z);Z)\leq d$ whenever $Z$ is drawn from a product distribution. (Here we do not require that $Z$ consist of i.i.d.~samples as in the original definition \cite{BassilyMNSY18}.) Similar to the previous proof, if $A$ is a $d$-bit information learner, then $\CMI{A}{\D}\leq \CMI{A}{} \leq d$.

Lastly, we prove that if an algorithm is TV stable then it has bounded CMI. 
\begin{definition}[TV stability]
An algorithm $A:\Z^n\rightarrow \W$ is \emph{$\delta$-TV stable} if, for any two data sets $z,z'\in \Z^n$ that differ in a single element, \[\tvd(A(z),A(z')):=\sup_{W \subseteq \W}\left\vert \pr{}{A(z)\in W} - \pr{}{A(z') \in W} \right\vert\leq \delta .\]
\end{definition}
The quantity $\tvd$ defined above is known as the total variation (TV) distance or statistical distance.
Another definition of Total Variation (TV) distance is the following.
\begin{equation}\label{def:TVdist}
    \tvd(P,Q)=\frac{1}{2}\ex{x\gets Q}{\left|\frac{P(x)}{Q(x)}-1\right|}
\end{equation}
To use the latter definition, we must assume that $P(x)/Q(x)$ can be well-defined -- i.e., that $P$ is absolutely continuous with respect to $Q$. We choose to make this assumption to demonstrate the main ideas used in the proof. However, for completeness, we also provide an alternative way to prove the theorem in its full generality at the end of the proof.
\begin{theorem}[CMI of TV stable algorithms]\label{th:CMIofTV}
If $A:\Z^n\rightarrow \W$ is a $\delta$-TV stable algorithm then $\CMI{A}{}\leq \delta n$.
\end{theorem}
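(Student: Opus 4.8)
The plan is to fix an arbitrary supersample $\sz\in\Z^{n\times 2}$ and show $I(A(\sz_S);S)\le\delta n$; taking the supremum over $\sz$ then gives $\CMI{A}{}\le\delta n$. By the chain rule for mutual information (iterating Lemma~\ref{lem:chainruleMI}), $I(A(\sz_S);S)=\sum_{i=1}^n I(A(\sz_S);S_i\mid S_{<i})$, so it suffices to bound each summand by $\delta$. (One could instead use the subadditivity bound $I(A(\sz_S);S)\le\sum_i I(A(\sz_S);S_i\mid S_{-i})$, which makes the next step slightly cleaner, at the cost of invoking that inequality.)

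Fix $i$ and a prefix value $S_{<i}=s$. Then $I(A(\sz_S);S_i\mid S_{<i}=s)$ is the mutual information between the uniform bit $S_i$ and the output $W:=A(\sz_S)$, whose conditional law given $S_i=b$ is $P_b=\ex{s'\gets\U^{n-i}}{\mathcal{P}_{A(\sz_{s,b,s'})}}$, where $\sz_{s,b,s'}$ is the dataset selected by the binary string with prefix $s$, $i$-th bit $b$, and suffix $s'\in\{0,1\}^{n-i}$. For every fixed suffix $s'$ the datasets $\sz_{s,0,s'}$ and $\sz_{s,1,s'}$ differ in exactly their $i$-th coordinate, so $\delta$-TV stability gives $\tvd(\mathcal{P}_{A(\sz_{s,0,s'})},\mathcal{P}_{A(\sz_{s,1,s'})})\le\delta$; averaging over $s'$ and using convexity of total variation distance yields $\tvd(P_0,P_1)\le\delta$. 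It therefore suffices to prove the elementary fact that for a uniform bit $B$ with $W\mid B=b\sim P_b$ one has $I(B;W)\le\tvd(P_0,P_1)$.

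To prove this fact, write $I(B;W)=\log 2-H(B\mid W)=\ex{w\gets M}{\log 2-h(t(w))}$, where $M=\tfrac12(P_0+P_1)$ dominates both $P_0,P_1$, the function $h$ is the binary entropy, and $t(w)=P_0(w)/(P_0(w)+P_1(w))$ (densities with respect to $M$). Since $\log 2-h(t)=\dkl{\mathrm{Bern}(t)}{\mathrm{Bern}(1/2)}=t\log(2t)+(1-t)\log(2(1-t))$, the bound $\log x\le x-1$ gives $\log 2-h(t)\le t(2t-1)+(1-t)(1-2t)=(2t-1)^2\le|2t-1|$, the last step because $2t-1\in[-1,1]$. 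Taking expectations over $w\gets M$ and observing that $\ex{w\gets M}{|2t(w)-1|}$ is exactly $\tvd(P_0,P_1)$ completes the fact; hence $I(A(\sz_S);S_i\mid S_{<i})\le\delta$ for each $i$, and summing gives $\CMI{A}{}\le\delta n$.

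I expect the only real obstacle to be the measure-theoretic bookkeeping: to move freely between the sup-form and the density-ratio form of $\tvd$ (and to manipulate the densities above) one wants absolute continuity, which holds automatically with respect to $M$ but not for the raw output distributions $A(z),A(z')$. Following the paper's own signposting, I would present the argument first under an absolute-continuity assumption for clarity and then remove it at the end — every quantity above is in fact taken with respect to the dominating measure $M$, so a routine limiting/discretization argument (or a direct appeal to the variational characterization of $\tvd$) extends the bound to the general case.
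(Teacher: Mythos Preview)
Your proof is correct and follows essentially the same skeleton as the paper's: chain-rule decomposition into per-coordinate terms, reduction of each term to the Jensen--Shannon divergence between two output laws differing in one input, and the elementary bound $\mathrm{JSD}(P_0\|P_1)\le\tvd(P_0,P_1)$ via $\log x\le x-1$. Two small differences are worth noting. First, you bound $I(A(\sz_S);S_i\mid S_{<i}=s)$ directly by averaging over the suffix and invoking convexity of total variation; the paper instead inserts the intermediate inequality $I(F(S);S_i\mid S_{<i})\le I(F(S);S_i\mid S_{-i})$ and then takes a worst-case fixed $s_{-i}$, which is a slightly longer route to the same place. Second, your computation of the JSD bound via $\log 2-h(t)\le(2t-1)^2\le|2t-1|$ with densities taken against the dominating measure $M=(P_0+P_1)/2$ is cleaner on the measure-theoretic side than the paper's presentation (which first assumes absolute continuity of $P_0$ w.r.t.\ $P_1$ and then patches the general case by citing Lin's inequality), and your remark that everything is already well-defined relative to $M$ correctly identifies why no real limiting argument is needed.
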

\begin{proof}
Let $\sz^*=\argmax_{\sz\in\Z^{n\times2}} I(A(\sz_S);S)$ and $S\gets \U^n$. Let us denote $A(\sz^*_s)$ by $F(s)$ for $s \in\{0,1\}^n$. Then
\[\CMI{A}{} = I(A(\sz^*_S);S)=I(F(S);S)\]
and it suffices to prove that $I(F(S);S)\leq \delta n$ for $S \leftarrow \U^n$.
Let us define $S_{<i}=(S_1, \ldots, S_{i-1})$, $S_{>i}=(S_{i+1}, \ldots, S_n)$, $S_{-i}=S_{<i}\circ S_{>i}$ and $S_{\leq i}=S_{<i}\circ S_i$, where $x\circ y$ denotes the concatenation of $x$ with $y$.
By the chain rule for mutual information (Lemma~\ref{lem:chainruleMI}) and by induction, 
\begin{equation}\label{eq:michain}
    I(F(S);S) =\sum_{i=1}^n I(F(S);S_i | S_{<i}).
\end{equation}
We will prove that $I(F(S);S_i | S_{<i})\leq I(F(S);S_i | S_{<i}, S_{>i})$. This holds because $S_{>i}$ is independent of $S_{<i}$ and $S_i$.
By applying the chain rule (Lemma~\ref{lem:chainruleMI}) on $I(F(S),S_{>i};S_i|S_{<i})$, for fixed $i\in[n]$,
\begin{align*}
    &I(F(S);S_i|S_{<i})+I(S_{>i};S_i|S_{<i}, F(S)) = I(S_{>i};S_i|S_{<i})+I(F(S);S_i|S_{<i}, S_{>i})\\
    &\Leftrightarrow I(F(S);S_i|S_{<i})=I(S_{>i};S_i|S_{<i})+I(F(S);S_i|S_{-i})-I(S_{>i};S_i|S_{<i}, F(S))\\
    &\Leftrightarrow I(F(S);S_i|S_{<i})=I(F(S);S_i|S_{-i})-I(S_{>i};S_i|S_{<i}, F(S)) \tag{$I(S_{>i};S_i|S_{<i})=0$}\\
    &\Rightarrow I(F(S);S_i|S_{<i})\leq I(F(S);S_i|S_{-i}). \tag{$I(\cdot,\cdot)\geq 0$}
\end{align*}

By inequality~\eqref{eq:michain}, it follows that $I(F(S);S)\leq \sum_{i=1}^n I(F(S);S_i|S_{-i})$ and it suffices to prove that \[\forall i\in[n] ~~~~~ I(F(S);S_i|S_{-i})\leq \delta.\]

For any $i\in[n]$, let $s^*_{-i}=\argmax_{x\in\{0,1\}^{n-1}}{I(F(S)|S_{-i}=x;S_i)}$. Now, let us denote the random variable $F(S)|S_{-i}=s^*_{-i}$ by $F_i(S_i)$. Then, for all $i\in [n]$, 
\[I(F(S);S_i|S_{-i})=\ex{s_{-i}\gets\U^{n-1}}{I(F(S)|S_{-i}=s_{-i};S_i)}\leq I(F(S)|S_{-i}=s^*_{-i};S_i)=I(F_i(S_i);S_i).\]

Therefore, it suffices to prove that, for uniformly random $S$,
$\forall i\in[n] ~~ I(F_i(S_i);S_i)\leq \delta.$
We fix an arbitrary $i \in [n]$. The relevant property of $F_i$ implied by TV stability is that $\tvd(F_i(0),F_i(1))\le\delta$.

Let us denote by $P_0$ and $P_1$ the probability distributions of $F_i(0)$ and $F_i(1)$, respectively. We denote by $\frac{P_0+P_1}{2}$ the convex combination of these two distributions. 
By the definition of mutual information, we have that
\begin{align*}
    I(F_i(S_i);S_i)& =\ex{r\gets\U}{\dkl{F_i(r)}{F_i(S_i)}}
     =\frac{1}{2}\dkl{F_i(0)}{F_i(S_i)} + \frac{1}{2}\dkl{F_i(1)}{F_i(S_i)}\\
    & =\frac{1}{2}\dkl{P_0}{\frac{P_0+P_1}{2}} + \frac{1}{2}\dkl{P_1}{\frac{P_0+P_1}{2}}\\
    & =\frac{1}{2}\ex{w\gets P_0}{\log\frac{2P_0(w)}{P_0(w)+P_1(w)}} + \frac{1}{2}\ex{w\gets P_1}{\log\frac{2P_1(w)}{P_0(w)+P_1(w)}} \\
    & \leq \frac{1}{2}\ex{w\gets P_0}{\left|\frac{2P_0(w)}{P_0(w)+P_1(w)}-1\right|} + \frac{1}{2}\ex{w\gets P_1}{\left|\frac{2P_1(w)}{P_0(w)+P_1(w)}-1\right|} \tag{$\log(x)\leq |x-1|$}\\
    & = \frac{1}{2}\ex{w\gets P_0}{\frac{|P_0(w)-P_1(w)|}{P_0(w)+P_1(w)}} + \frac{1}{2}\ex{w\gets P_1}{\frac{|P_1(w)-P_0(w)|}{P_0(w)+P_1(w)}}  \end{align*}\begin{align*}
    & = \frac{1}{2}\ex{w\gets P_1}{\frac{P_0(w)\cdot|P_0(w)-P_1(w)|}{P_1(w)\cdot(P_0(w)+P_1(w))}} + \frac{1}{2}\ex{w\gets P_1}{\frac{|P_0(w)-P_1(w)|}{P_0(w)+P_1(w)}}\\
    & = \frac{1}{2}\ex{w\gets P_1}{\frac{(P_0(w)+P_1(w))\cdot|P_0(w)-P_1(w)|}{P_1(w)\cdot(P_0(w)+P_1(w))}} \\
    & = \frac{1}{2}\ex{w\gets P_1}{\left|\frac{P_0(w)}{P_1(w)}-1\right|} 
    =\tvd(F_i(0),F_i(1)) \tag{by definition~\eqref{def:TVdist}}
\end{align*}

Now, recall that since $A:\Z^n\rightarrow \W$ is $\delta$-TV stable, $F:\{0,1\}^n\gets \W$ is also $\delta$-TV stable, and for any $i\in[n]$ $F_i:\{0,1\}\rightarrow \W$ is $\delta$-TV stable. 
Thus, for all $i\in[n]$, $I(F_i(S_i);S_i)\leq \tvd(F_i(0),F_i(1))\leq \delta$, which, as we argued, suffices to conclude that $\CMI{A}{}\leq \delta n$.
\end{proof}
We remark that the quantity $\mathrm{JSD}(P_0 \| P_1)=\frac{1}{2}\dkl{P_0}{\frac{P_0+P_1}{2}} + \frac{1}{2}\dkl{P_1}{\frac{P_0+P_1}{2}}$ is known as the Jensen-Shannon divergence and it is known that it can be bounded by TV distance \cite[Thm.~3]{Lin91}. Specifically, it holds that $\mathrm{JSD}(P_0\| P_1)\leq \tvd(F_i(0),F_i(1))$, which can replace the last part of the previous proof to establish the general result.

\subsection{VC dimension}\label{sec:cmi-vc}
The VC dimension is a property of a hypothesis class that implies uniform convergence. That is, bounded VC dimension implies that, for a sample of adequate size (depending linearly on the VC dimension), the true and empirical errors of \emph{all} hypotheses will be close with high probability \cite{VapnikC71,Talagrand94}. This is a sufficient condition for generalization.

For completeness, we briefly state the definition.

\begin{definition}[Vapnik-Chervonenkis dimension \cite{VapnikC71}]
Let $\mathcal{W}$ be a class of functions $h : \mathcal{X} \to \{0,1\}$. The VC dimension of $\mathcal{W}$ -- denoted $\mathsf{VC}(\mathcal{W})$ -- is the largest natural number $d$ such that there exist $x_1, \cdots, x_d \in \mathcal{X}$ and $h_1, \cdots, h_{2^d} \in \mathcal{W}$ such that, for each $j,k \in [2^d]$ with $j \ne k$, there exists some $i \in [d]$ such that $h_j(x_i) \ne h_k(x_i)$.
\end{definition}

Although the CMI of an algorithm does not only depend on its output hypothesis space, in this section we prove that there exists a connection between the two: we show that, for any class with bounded VC dimension, there exists an empirical risk minimization algorithm with corresponding CMI. (Note that it is \emph{not} the case that bounded VC dimension implies that \emph{every} empirical risk minimizer has bounded CMI.)

In contrast, for threshold functions on an unbounded domain (which have VC dimension 1), any proper empirical risk minimizer must have unbounded (unconditional) mutual information \cite{BassilyMNSY18,NachumSY18}. We remark that, if the distribution on features (i.e., the distribution of $X$ when $(X,Y) \gets \mathcal{D}$) is known (e.g., if there is a separate source of unlabelled data that does not ``count'' towards the mutual information), then it is possible to learn any class with bounded VC dimension with bounded (unconditional) mutual information \cite{XuR17}; in fact, differentially private learning is possible under this strong assumption \cite{AlonBM19}. 

In this section $\mathcal{W}$ is a class of functions $h : \mathcal{X} \to \{0,1\}$ and we work with the 0-1 loss $\ell : \mathcal{W} \times (\mathcal{X} \times \{0,1\}) \to \{0,1\}$ defined by $\ell(h,(x,y))=0 \iff h(x)=y$. We define the population loss $\ell(h,\D):=\ex{(X,Y) \gets \D}{\ell(h,(X,Y))}$, where $h : \mathcal{X} \to \{0,1\}$ and $\D$ is a distribution on $\mathcal{X} \times \{0,1\}$. For $z \in (\mathcal{X} \times \{0,1\})^n$ and $h : \mathcal{X} \to \{0,1\}$, we define $\ell(h,z) := \frac{1}{n} \sum_{i=1}^n \ell(h,z_i)$. In particular, $A : (\mathcal{X} \times \{0,1\})^n \to \mathcal{W}$ is called an empirical risk minimizer for $\mathcal{W}$ if $\ell(A(z),z) = \inf_{h \in \mathcal{W}} \ell(h,z)$ for all $z \in (\mathcal{X} \times \{0,1\})^n$.

\begin{theorem}\label{thm:vc-cmi}
Let $\Z=\X\times\{0,1\}$ and $\mathcal{H}=\{h:\X\rightarrow\{0,1\}\}$ a hypothesis class with VC dimension $d$.
Then, there exists an empirical risk minimizer $A:\Z^n\rightarrow \mathcal{H}$ such that $\CMI{A}{}\leq d\log n + 2$.
\end{theorem}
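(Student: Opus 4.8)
The plan is to exhibit one canonical deterministic ERM and show that, once we condition on the supersample, its output is forced to lie in a set whose size is controlled by the growth function of $\mathcal{H}$; Sauer--Shelah then gives the bound.

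\textbf{Step 1: a canonical, ``consistent'' ERM.} Fix once and for all a well-ordering $\prec$ of $\mathcal{H}$ (this exists by the well-ordering theorem). For every $m$ and every $z\in\Z^m$, define $A(z)$ to be the $\prec$-least element of $\argmin_{h\in\mathcal{H}}\ell(h,z)$ (this minimum is attained, since the empirical $0$-$1$ loss on $m$ points takes finitely many values). The only property of $A$ I will need is the following consistency property: if $h=A(x,y)$ and $x'$ is obtained from $x$ by appending further points, then $A(x',h(x'))=h$. To see this, note that $\argmin_{h'}\ell(h',(x',h(x'))) = \{h' : h'|_{x'}=h|_{x'}\}$, and any such $h'$ agrees with $h$ on $x$ and hence also minimizes $\ell(\cdot,(x,y))$; so this feasible set is a subset of $\argmin_{h'}\ell(h',(x,y))$ that contains $h$, and since $h$ is $\prec$-least in the latter it is $\prec$-least in the former, i.e.\ $A(x',h(x'))=h$. (The point of the well-ordering is exactly that an arbitrary ERM could encode extra information about the input in its choice among loss-minimizing hypotheses; this rule cannot.) I will apply this for the statement with $m=n$, but the argument also uses $m=2n$.

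\textbf{Step 2: collapsing the output onto the $2n$ supersample points.} Fix a worst-case $\sz\in\Z^{n\times 2}$ and let $\bar x=(\tilde x_{1,1},\tilde x_{1,2},\dots,\tilde x_{n,1},\tilde x_{n,2})\in\X^{2n}$ be the sequence of all $2n$ feature points. For $s\in\{0,1\}^n$ write $h_s:=A(\sz_s)$ and let $\tilde x_s\in\X^n$ be the feature part of $\sz_s$, whose points all occur in $\bar x$. Applying the consistency property with $x=\tilde x_s$ and $x'=\bar x$ gives $h_s=A(\bar x,h_s(\bar x))$. Hence, letting $g:\{0,1\}^{2n}\to\mathcal{H}$ be the \emph{fixed} map $g(v):=A(\bar x,v)$, we get $h_s=g\big(h_s|_{\bar x}\big)$: the output $A(\sz_s)$ is determined by its own restriction to $\bar x$, via a function not depending on $s$. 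Therefore $\big|\{A(\sz_s):s\in\{0,1\}^n\}\big|\le \big|\{h|_{\bar x}:h\in\mathcal{H}\}\big|=\Pi_{\mathcal{H}}(2n)$, the growth function of $\mathcal{H}$ at $2n$.

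\textbf{Step 3: finish.} By Sauer--Shelah, $\Pi_{\mathcal{H}}(2n)\le\sum_{i=0}^d\binom{2n}{i}\le (2en/d)^d$ when $2n\ge d$; a one-line calculus check shows $\max_{d>0} d\log(2e/d)=2$ (attained at $d=2$), so $(2en/d)^d\le e^2 n^d$ for every $d\ge 1$, while the case $2n<d$ is handled trivially by $\Pi_{\mathcal{H}}(2n)\le 2^{2n}$. Since $A$ is deterministic, $I(A(\sz_S);S)=H(A(\sz_S))\le\log\big|\{A(\sz_s):s\}\big|\le\log(e^2 n^d)=d\log n+2$ for every $\sz$, and taking the supremum over $\sz$ gives $\CMI{A}{}\le d\log n+2$.

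The conceptual crux is Steps 1--2: identifying a deterministic ERM whose output on any subsample depends on the data only through its labeling of a fixed reference set of $2n$ points. Once the ``relabel-and-extend'' consistency argument is in place, the remainder is the routine entropy-plus-Sauer--Shelah estimate, whose only delicate point is the elementary inequality $(2en/d)^d\le e^2 n^d$ that yields the clean additive constant.
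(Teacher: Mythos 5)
Your proof is correct and follows essentially the same route as the paper: you construct a canonical deterministic ERM via a well-ordering of $\mathcal{H}$, verify that it satisfies what the paper calls the ``global consistency property'' (Definition~\ref{def:gc}), use that consistency to argue the output on $\sz_s$ is a fixed function of its own labeling of the $2n$ supersample points, and then bound the entropy of the output by $\log$ of the growth function via Sauer--Shelah. This matches the paper's Lemmas~\ref{lem:gc+vc=cmi} and~\ref{lem:gc+erm}, merely presented inline rather than as two separate lemmas.
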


To prove Theorem \ref{thm:vc-cmi}, we introduce the following definition. Here we abuse notation by interchanging between $(\mathcal{X} \times \mathcal{Y})^n$ and $\mathcal{X}^n \times \mathcal{Y}^n$. That is, we refer to $(x,y)\in (\mathcal{X} \times \mathcal{Y})^n$ when we mean $x \in \mathcal{X}^n$ and $y \in \mathcal{Y}^n$. We also use (and abuse) the notation $(\mathcal{X} \times \mathcal{Y})^* := \bigcup_{n=0}^\infty (\mathcal{X} \times \mathcal{Y})^n$. Thus the notation $(x,y)\in (\mathcal{X} \times \mathcal{Y})^*$ means, for some $n$, we have $x \in \mathcal{X}^n$ and $y \in \mathcal{Y}^n$.

\begin{definition}[Global Consistency Property] \label{def:gc}
Let $\mathcal{W}$ be a class of functions $h : \mathcal{X} \to \mathcal{Y}$. A deterministic algorithm $A : (\mathcal{X} \times \mathcal{Y})^* \to \mathcal{W}$ is said to have the global consistency property if the following holds. Let $(x,y) \in (\mathcal{X} \times \mathcal{Y})^*$ and let $h=A(x,y)$. We require that, for any $x' \in \mathcal{X}^*$ such that $x'$ contains all the elements of $x$ (i.e., $\forall i ~ \exists j ~ x_i = x'_j$), we have $A(x',y')=h$ whenever $y'_i = h(x'_i)$ for all $i$.
\end{definition}

Intuitively, the global consistency property says the following. Suppose the algorithm is run on some labelled dataset $(x,y)$ to obtain an output hypothesis $h=A(x,y)$. If the dataset is relabelled to be perfectly consistent with $h$, then the algorithm should still output $h$. This should also hold if further examples are added to the dataset (where the additional examples are also consistent with $h$). This is a very natural and reasonable consistency property. Note that we restrict to deterministic algorithms for simplicity. 

The proof of Theorem \ref{thm:vc-cmi} now is split into the following two lemmata. Combining Lemma \ref{lem:gc+vc=cmi} with Lemma \ref{lem:gc+erm} implies the result.

\begin{lemma} \label{lem:gc+vc=cmi}
Let $A : (\mathcal{X} \times \{0,1\})^n \to \mathcal{W}$ be a deterministic algorithm, where $\mathcal{W}$ is a class of functions $h : \mathcal{X} \to \{0,1\}$ with VC dimension $d$.  Suppose $A$ (appropriately extended to inputs of arbitrary size) has the global consistency property. Then $\CMI{A}{} \le d \log n + 2$.
\end{lemma}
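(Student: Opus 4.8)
The plan is to bound $\CMI{A}{}=\sup_{\sz\in\Z^{n\times 2}} I(A(\sz_S);S)$ by the entropy of the output, $H(A(\sz_S))$, and then to show that once the supersample $\sz$ is fixed the output $A(\sz_S)$ can take only $n^{O(d)}$ distinct values, as a consequence of the global consistency property together with the Sauer--Shelah lemma.

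First I would fix an arbitrary $\sz\in\Z^{n\times 2}$ and let $U\subseteq\X$ be the set of (at most $2n$) distinct feature vectors that appear in $\sz$. With $S\gets\U^n$ and $h:=A(\sz_S)$, the bound $I(h;S)\le H(h)$ reduces the problem to bounding the number of possible values of $A(\sz_s)$ as $s$ ranges over $\{0,1\}^n$.

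The crux of the argument is the injectivity claim that the restriction map $h\mapsto h|_U$ is one-to-one on the set $\{A(\sz_s):s\in\{0,1\}^n\}$ of possible outputs. Indeed, if $h=A(\sz_s)$ and $h'=A(\sz_{s'})$ satisfy $h|_U=h'|_U$, then letting $x^{\star}$ enumerate $U$ and noting that every feature vector of the input $\sz_s$ lies in $U$, the global consistency property (applied to $(x,y)=\sz_s$, with the enlarged, $h$-relabelled input $x^{\star}$, and using the extension of $A$ to arbitrary input lengths) gives $A(x^{\star},h|_{x^{\star}})=h$, and symmetrically $A(x^{\star},h'|_{x^{\star}})=h'$; since $h|_U=h'|_U$ forces $h|_{x^{\star}}=h'|_{x^{\star}}$, the left-hand sides agree and hence $h=h'$. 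Therefore the number of possible outputs is at most the number of distinct restrictions to $U$ of functions in $\mathcal{W}$, which by Sauer--Shelah (using $|U|\le 2n$ and $\mathsf{VC}(\mathcal{W})=d$) is at most $\sum_{i=0}^{d}\binom{2n}{i}\le(2en/d)^d$ whenever $1\le d\le 2n$.

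Combining, $I(A(\sz_S);S)\le H(A(\sz_S))\le d\log(2en/d)=d\log n+d\,(1+\log 2-\log d)$, and since $t\mapsto t\,(1+\log 2-\log t)$ is maximized over $t>0$ at $t=2$ with value $2$, this yields $I(A(\sz_S);S)\le d\log n+2$; taking the supremum over $\sz$ proves the lemma. The remaining cases are trivial: for $d=0$ we have $|\mathcal{W}|=1$, and for $d\ge n$ the claimed bound already exceeds the universal bound $\CMI{A}{}\le n\log 2$. I expect the injectivity claim to be the only substantive step — it is the sole place the global consistency hypothesis enters, and it is exactly what upgrades the trivial bound of $2^n$ on the number of sub-datasets to a bound of $n^{O(d)}$ on the number of labellings of $2n$ feature points; the one mild subtlety beyond it is that one must use the sharp Sauer--Shelah estimate $(em/d)^d$, not the cruder $m^d$, in order to land the additive constant at $2$ rather than at $d\log 2$.
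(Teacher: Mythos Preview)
Your proof is correct and follows essentially the same approach as the paper: bound the mutual information by the entropy of the output, use the global consistency property to show that each possible output $h=A(\sz_s)$ is determined by its restriction to the supersample features (the paper phrases this as $h=A(\sx^{*},h(\sx^{*}))$, you phrase it equivalently as injectivity of $h\mapsto h|_U$), and then apply Sauer--Shelah to the $2n$ feature points. The only cosmetic difference is in extracting the additive constant~$2$: the paper invokes the form $\sum_{k\le d}\binom{2n}{k}\le e^{2}n^{d}$ directly, whereas you use $(2en/d)^{d}$ and then maximize $d(1+\log 2-\log d)$ over $d$---both routes land at exactly $d\log n+2$.
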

\begin{lemma} \label{lem:gc+erm}
Let $\mathcal{W}$ be a class of functions $h : \mathcal{X} \to \{0,1\}$. Then there exists a deterministic algorithm $A : (\mathcal{X} \times \{0,1\})^* \to \mathcal{W}$ that has the global consistency property and is an empirical risk minimizer -- that is, for all $(x,y) \in (\mathcal{X} \times \{0,1\})^*$, if $h_*=A(x,y)$, then $$\sum_i \mathbb{I}[h_*(x_i) \ne y_i] = \min_{h \in \mathcal{W}} \sum_i \mathbb{I}[h(x_i) \ne y_i].$$
\end{lemma}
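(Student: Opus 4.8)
The plan is to define $A$ via a single fixed well-ordering of $\mathcal{W}$ and always return the $\prec$-least empirical risk minimizer; the global consistency property will then fall out because relabelling the data to agree with a hypothesis $h$ (and only appending further points consistent with $h$) replaces the set of empirical risk minimizers by a \emph{subset} of the original one that still contains $h$.

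Concretely, I would first invoke the well-ordering theorem to fix a well-ordering $\prec$ of $\mathcal{W}$, and then define, for $(x,y)\in(\mathcal{X}\times\{0,1\})^*$ with $x\in\mathcal{X}^n$,
$$A(x,y):=\min\nolimits_{\prec}\left\{h\in\mathcal{W}:\sum_{i=1}^n\mathbb{I}[h(x_i)\ne y_i]=\min_{g\in\mathcal{W}}\sum_{i=1}^n\mathbb{I}[g(x_i)\ne y_i]\right\}.$$
This is well-defined: the attainable error counts form a non-empty subset of $\{0,1,\dots,n\}$, so the minimum is attained and the set in braces — call it $\mathrm{ERM}(x,y)$ — is non-empty; since $\prec$ well-orders $\mathcal{W}$, this set has a unique $\prec$-least element. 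The empirical risk minimization property is then immediate.

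Then I would verify global consistency. Fix $(x,y)$, set $h:=A(x,y)$, let $x'\in\mathcal{X}^{n'}$ contain all entries of $x$, and let $y'\in\{0,1\}^{n'}$ satisfy $y'_j=h(x'_j)$ for all $j$. Two observations do the work: (a) $h$ has zero empirical error on $(x',y')$, so the minimum error there is $0$ and $\mathrm{ERM}(x',y')=\{g\in\mathcal{W}:g(x'_j)=h(x'_j)\text{ for all }j\}$; and (b) every such $g$ also lies in $\mathrm{ERM}(x,y)$, because each $x_i$ equals some $x'_j$, so $g(x_i)=g(x'_j)=h(x'_j)=h(x_i)$ for all $i$, whence $g$ has exactly the same (minimal) empirical error on $(x,y)$ as $h$. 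Hence $\mathrm{ERM}(x',y')\subseteq\mathrm{ERM}(x,y)$; since $h$ lies in the smaller set and is the $\prec$-least element of the larger one, it is also the $\prec$-least element of $\mathrm{ERM}(x',y')$, i.e. $A(x',y')=h$, as required.

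All steps are short; the only point that needs care is the containment $\mathrm{ERM}(x',y')\subseteq\mathrm{ERM}(x,y)$, which is exactly where the precise shape of the consistency property is used — relabelling to $h$ forces zero error and hence a very restricted ERM set, and empirical error depends on a hypothesis only through its values on the finitely many sample points. Minor points to dispatch in passing: $x$ may contain repeated entries (harmless, since the empirical error is a sum over indices), the statement implicitly assumes $\mathcal{W}\ne\emptyset$, and the construction uses the axiom of choice through the well-ordering theorem. No bound on $\mathsf{VC}(\mathcal{W})$ is needed here; that assumption enters only in Lemma~\ref{lem:gc+vc=cmi}.
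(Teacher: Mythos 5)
Your proof is correct and takes essentially the same approach as the paper: fix a well-ordering of $\mathcal{W}$ via the well-ordering theorem, let $A$ return the $\prec$-least empirical risk minimizer, and verify global consistency by showing the relabelled-and-extended dataset has a smaller ERM set that still contains the original output. The paper phrases the final step as the two-sided comparison $h' \preceq h$ and $h \preceq h'$ rather than your containment $\mathrm{ERM}(x',y') \subseteq \mathrm{ERM}(x,y)$, but the two are the same argument.
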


To prove Lemma \ref{lem:gc+vc=cmi} we invoke the Sauer-Shelah lemma:\footnote{Vapnik and Chervonenkis proved a slightly weaker bound, namely $\left|\left\{(h(x_1),h(x_2),\cdots,h(x_m)) : h \in \mathcal{W}\right\}\right| \le m^{d+1}+1$ for $m>d$ \cite[Thm.~1]{VapnikC71}.}
\begin{lemma}[{\cite{Sauer72,Shelah72}}]\label{lem:sauer}
Let $\mathcal{W}$ be a class of functions $h : \mathcal{X} \to \{0,1\}$ with VC dimension $d$. For any $X=\{x_1, \cdots, x_m \} \subset \mathcal{X}$, the number of possible labellings of $X$ induced by $\mathcal{W}$ is \[\left|\left\{(h(x_1),h(x_2),\cdots,h(x_m)) : h \in \mathcal{W}\right\}\right| \le \sum_{k=0}^d {m \choose k} \le \left\{\begin{array}{cl}  (em/d)^d & ~~\text{ if } m \ge d \\  e^2 \cdot (m/2)^d  & ~~\text{ if } m \ge 2 \\ e \cdot m^d & ~~\text{ if } m \ge 1 \end{array} \right..\]
\end{lemma}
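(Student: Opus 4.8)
The plan is to reduce the statement to a purely combinatorial fact about set systems and then prove that fact by induction. Since only the values $h(x_1),\dots,h(x_m)$ are relevant, I would identify each $h\in\mathcal W$ with the subset $F_h=\{i\in[m]:h(x_i)=1\}\subseteq[m]$ and set $\mathcal F=\{F_h:h\in\mathcal W\}\subseteq 2^{[m]}$, so that $\left|\{(h(x_1),\dots,h(x_m)):h\in\mathcal W\}\right|=|\mathcal F|$. Say $Y\subseteq[m]$ is \emph{shattered} by $\mathcal F$ if $\{F\cap Y:F\in\mathcal F\}=2^{Y}$; by the definition of VC dimension, no $Y$ with $|Y|>d$ is shattered by $\mathcal W$, hence none is shattered by $\mathcal F$. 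It then suffices to establish the key claim that $|\mathcal F|$ is at most the number of subsets of $[m]$ that $\mathcal F$ shatters, because that count is in turn at most $\sum_{k=0}^{d}\binom{m}{k}$.

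\textbf{The combinatorial claim (Pajor's lemma).} I would prove ``$|\mathcal F|\le\#\{Y\subseteq[m]:\mathcal F\text{ shatters }Y\}$'' by induction on $m$, the cases $m=0$ (so $\mathcal F\subseteq\{\emptyset\}$) and $|\mathcal F|\le 1$ being immediate since the empty set is always shattered. For the inductive step, single out coordinate $m$ and define
\[\mathcal F_0=\{F\setminus\{m\}:F\in\mathcal F\}\subseteq 2^{[m-1]},\qquad \mathcal F_1=\{F\subseteq[m-1]:F\in\mathcal F\text{ and }F\cup\{m\}\in\mathcal F\}\subseteq 2^{[m-1]}.\]
Grouping the elements of $\mathcal F$ by their image under $F\mapsto F\setminus\{m\}$ — each fiber has size $1$ or $2$, with size $2$ exactly when the common restriction lies in $\mathcal F_1$ — gives $|\mathcal F|=|\mathcal F_0|+|\mathcal F_1|$. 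If $Y\subseteq[m-1]$ is shattered by $\mathcal F_0$ then it is shattered by $\mathcal F$ (since $m\notin Y$); if $Y\subseteq[m-1]$ is shattered by $\mathcal F_1$ then $Y\cup\{m\}$ is shattered by $\mathcal F$ (for each labelling on $Y$ pick $F\in\mathcal F_1$ realizing it; then $F$ and $F\cup\{m\}$ both lie in $\mathcal F$ and realize both labels on coordinate $m$). Since $Y\mapsto Y\cup\{m\}$ is injective and its images contain $m$, the shattered sets of $\mathcal F_0$ and the shifted shattered sets of $\mathcal F_1$ form disjoint families of sets shattered by $\mathcal F$, so by the induction hypothesis $|\mathcal F|=|\mathcal F_0|+|\mathcal F_1|\le\#\{Y\subseteq[m]:\mathcal F\text{ shatters }Y\}$. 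Feeding in the VC bound $|Y|\le d$ (or, if one prefers to track numbers directly, combining the two inductive estimates via Pascal's identity $\binom{m-1}{k}+\binom{m-1}{k-1}=\binom{m}{k}$) yields $|\mathcal F|\le\sum_{k=0}^{d}\binom{m}{k}$.

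\textbf{The numerical estimates.} All three explicit bounds come from one computation: for any $\lambda\in(0,1]$, using $\lambda^{d}\le\lambda^{k}$ for $0\le k\le d$ and the binomial theorem,
\[\lambda^{d}\sum_{k=0}^{d}\binom{m}{k}\le\sum_{k=0}^{d}\binom{m}{k}\lambda^{k}\le\sum_{k=0}^{m}\binom{m}{k}\lambda^{k}=(1+\lambda)^{m}\le e^{\lambda m},\]
hence $\sum_{k=0}^{d}\binom{m}{k}\le\lambda^{-d}e^{\lambda m}$. Choosing $\lambda=d/m$ (legitimate when $m\ge d$, with $d\ge1$) gives $(em/d)^{d}$; choosing $\lambda=2/m$ (legitimate when $m\ge2$) gives $e^{2}(m/2)^{d}$; choosing $\lambda=1/m$ (legitimate when $m\ge1$) gives $e\,m^{d}$.

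\textbf{Main obstacle.} The reduction to set systems and the numerical estimates are routine; the only genuinely substantive step is the combinatorial claim, and within it the two points that need care are verifying $|\mathcal F|=|\mathcal F_0|+|\mathcal F_1|$ (via the fiber argument) and checking that shattering transfers correctly to $\mathcal F_1$ with the extra coordinate appended. As an alternative to the Pajor induction one could instead run a ``down-shifting'' argument: repeatedly, for a coordinate $i$, replace each $F\in\mathcal F$ containing $i$ by $F\setminus\{i\}$ whenever $F\setminus\{i\}\notin\mathcal F$; this preserves $|\mathcal F|$, never creates a newly shattered set, and terminates at a downward-closed family all of whose members have size at most $d$, hence of cardinality at most $\sum_{k=0}^{d}\binom{m}{k}$.
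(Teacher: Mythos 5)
Your proposal is correct, and it actually proves more than the paper does: in the paper this lemma is cited to Sauer and Shelah, and the only part argued in the text is the passage from $\sum_{k=0}^d \binom{m}{k}$ to the three closed-form bounds, which is done by exactly the computation you use, namely $\sum_{k=0}^d \binom{m}{k} \le \sum_{k=0}^d \binom{m}{k} x^{d-k} \le \left(1+x^{-1}\right)^m x^d \le e^{m/x} x^d$ for $x \ge 1$ — identical to your estimate under the substitution $x = 1/\lambda$, with the same three choices of parameter. The combinatorial core you supply via Pajor's lemma (the induction with $\mathcal{F}_0$, $\mathcal{F}_1$, the fiber count $|\mathcal{F}| = |\mathcal{F}_0| + |\mathcal{F}_1|$, and the transfer of shattering, with the shifting argument as an alternative) is a standard and correct proof of the cited inequality, so your write-up is a valid self-contained substitute; the only boundary point worth a remark, which you already flag, is that the form $(em/d)^d$ presupposes $d \ge 1$ (for $d = 0$ the sum is $1$ and the statement is degenerate in the paper as well). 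In short: same derivation as the paper for the part the paper proves, plus a correct proof of the part the paper only cites.
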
 
Here we define ${m \choose k}=0$ if $k>m$. Thus $\sum_{k=0}^d {m \choose k}=2^m$ if $m \le d$. Note that we give three different forms of the final bound for convenience, all of which are derived from the bound 
$$\forall m \ge d ~~~ \forall x \ge 1 ~~~~~~ \sum_{k=0}^d {m \choose k} \le \sum_{k=0}^d {m \choose k} x^{d-k} \le \sum_{k=0}^m {m \choose k} x^{d-k} = \left(1+x^{-1}\right)^m \cdot x^{d} \le e^{m/x} \cdot x^d.$$

\begin{proof}[Proof of Lemma \ref{lem:gc+vc=cmi}]
Let $\sz^* = \argmax_{\sz \in \Z^{n \times 2}} I(A(\sz_S);S)$ so that $\CMI{A}{} = I(A(\sz^*_S);S)$. Since $A$ is assumed to be deterministic and $\sz^*$ is fixed, we have $I(A(\sz^*_S);S) = H(A(\sz^*_S)) \le \log|H|$, where $H:=\{A(\sz^*_s) : s \in \{0,1\}^n\} \subset \mathcal{W}$ is the set of all hypotheses that $A$ may return for a fixed supersample $\sz^*$. Our goal now is simply to bound the size of $H$. 

Intuitively, by the global consistency property, each element of $H$ can be generated by appropriately labelling the supersample $\sz^*$ according to that hypothesis and giving this re-labelled supersample as input to the algorithm. Thus the size of $H$ is at most the number of possible labellings of the supersample, which can be bounded by the Sauer-Shelah lemma (Lemma \ref{lem:sauer}).

Let $\sz^*=(\sx^*,\sy^*)$ where $\sx^* \in \mathcal{X}^{n\times 2}$ and $\sy^* \in \{0,1\}^{n \times 2}$. For $h \in \mathcal{W}$ and $s\in\{0,1\}^n$, denote $h(\sx^*) \in \{0,1\}^{n \times 2}$ and $h(\sx^*_s)\in\{0,1\}^n$ to be the labels obtained by applying the function $h$ coordinate-wise. By the global consistency property, if $h=A(\sz^*_s) = A(\sx^*_s,\sy^*_s)$ for some $s \in \{0,1\}^n$, then $h=A(\sx^*_s,h(\sx^*_s))=A(\sx^*,h(\sx^*))$. Thus $$H\subseteq H':=\{A(\sx^*,h(\sx^*)) : h \in \mathcal{W}\} = \{A(\sx^*,\sy) : \sy \in \{h(\sx^*) : h \in \mathcal{W}\}\}.$$
Hence $|H| \le |\{h(\sx^*) : h \in \mathcal{W}\}| \le \sum_{k=0}^d {2n \choose k} \le e^2 \cdot n^d$ by the Sauer-Shelah lemma, as $\sx^*$ consists of $m=2n$ points in $\mathcal{X}$. So $\CMI{A}{} \le \log|H| \le \log (e^2 \cdot n^d) = 2 + d \log n$
\end{proof}

To prove Lemma \ref{lem:gc+erm} we invoke the well-ordering theorem:
\begin{lemma}[{\cite{Zermelo04}}]\label{lem:wellorder}
Let $\mathcal{W}$ be a set. Then there exists a binary relation $\preceq$ with the following properties.
\begin{itemize}
    \item Transitivity: $~~~\forall f,g,h \in \mathcal{W} ~~~ f \preceq g \wedge g \preceq h \implies f \preceq h$
    \item Totality: $~~~\forall f,g \in \mathcal{W} ~~~ f \preceq g \vee g \preceq f$
    \item Antisymmetry: $~~~\forall f,g \in \mathcal{W} ~~~ f \preceq g \wedge g \preceq f \iff f=g$
    \item Well-order: $~~~\forall H \subset \mathcal{W} ~~ \left( ~ H \ne \emptyset ~~ \implies ~~ \exists h \in H ~~ \forall f \in H ~~ h \preceq f ~ \right)$
\end{itemize}
\end{lemma}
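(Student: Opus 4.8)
The statement is Zermelo's well-ordering theorem, so the plan is to derive it from the Axiom of Choice via Zorn's lemma. First I would introduce the collection $\mathcal{P}$ of all pairs $(A,\le_A)$ where $A\subseteq\mathcal{W}$ and $\le_A$ is a well-order on $A$, and partially order $\mathcal{P}$ by ``end-extension'': $(A,\le_A)\sqsubseteq(B,\le_B)$ iff $\le_B$ restricted to $A$ equals $\le_A$ and $A$ is an initial segment of $(B,\le_B)$ (every element of $B\setminus A$ lies $\le_B$-above every element of $A$). The routine work is to verify that $\sqsubseteq$ is a partial order and that an arbitrary $\sqsubseteq$-chain $\{(A_i,\le_{A_i})\}_{i\in I}$ has an upper bound, namely the union $\left(\bigcup_i A_i,\ \bigcup_i \le_{A_i}\right)$; the one point worth care is that this union relation is again a well-order, which holds because any nonempty $S\subseteq\bigcup_i A_i$ meets some $A_i$, and the $\le_{A_i}$-least element of $S\cap A_i$ is then the least element of all of $S$, thanks to the initial-segment condition.

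Then I would invoke Zorn's lemma to obtain a $\sqsubseteq$-maximal $(M,\le_M)$, and the key step is to show $M=\mathcal{W}$. If not, I would pick any single $x\in\mathcal{W}\setminus M$ (choosing one element of a nonempty set uses no choice principle) and extend $\le_M$ to $M\cup\{x\}$ by placing $x$ strictly above everything in $M$; this is still a well-order---a nonempty subset either meets $M$, where it has a $\le_M$-least element, or equals $\{x\}$---and it strictly $\sqsubseteq$-dominates $(M,\le_M)$, contradicting maximality. Hence $\preceq:=\le_M$ is a well-order on all of $\mathcal{W}$, which yields transitivity, totality, antisymmetry, and the well-order property exactly as listed.

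The only genuine obstacle is that this is not a theorem of ZF alone: it requires, and is in fact equivalent to, the Axiom of Choice, which enters through Zorn's lemma. An alternative that makes the role of choice more transparent is transfinite recursion: fix a choice function $c$ on the nonempty subsets of $\mathcal{W}$, recursively set $f(\alpha)=c\!\left(\mathcal{W}\setminus\{f(\beta):\beta<\alpha\}\right)$ for ordinals $\alpha$, note by Hartogs' theorem that $\{f(\beta):\beta<\alpha\}$ cannot keep strictly growing past some ordinal, so there is a least $\gamma$ with $\{f(\beta):\beta<\gamma\}=\mathcal{W}$ and $f$ injective on $\gamma$, and finally transport the ordinal order via $f^{-1}$. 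Either way the argument is entirely standard; since the paper uses this lemma only as a black box to construct a canonical empirical risk minimizer (Lemma~\ref{lem:gc+erm}), citing Zermelo suffices and no new ideas are required.
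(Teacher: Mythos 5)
Your proposal is correct, but note that the paper does not actually prove this lemma at all: it is stated purely as a citation to Zermelo's 1904 theorem, followed by a remark that the general statement is equivalent to the axiom of choice, that no closed-form well-ordering of $\mathbb{R}$ exists, and that on a finite computer one could just use the lexicographic order on binary representations. The lemma is used only as a black box to break ties canonically in the construction of the globally consistent empirical risk minimizer (Lemma~\ref{lem:gc+erm}), exactly as you observe. Your Zorn's-lemma argument via end-extensions of well-ordered subsets is the standard textbook proof and is sound: the union of a chain is again a well-order precisely because of the initial-segment condition, and extending a maximal element by one point placed on top gives the contradiction; the single choice of $x\in\mathcal{W}\setminus M$ indeed needs no choice principle, while AC enters through Zorn's lemma itself. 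The transfinite-recursion alternative you sketch (choice function plus Hartogs' theorem) is likewise standard. So you have supplied a complete proof where the paper deliberately supplies none; there is nothing to fault, beyond noting that for the purposes of this paper the citation suffices.
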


Intuitively, the well-ordering theorem says that there is an ordering of the elements of $\mathcal{W}$ such that each nonempty set has a least element. The standard ordering of the natural numbers has this property, but the standard ordering of the real numbers does not (e.g., an open interval such as $(0,1)$ has an infimum, but no minimum). The general statement of the well-ordering theorem is equivalent to the axiom of choice and there is no closed-form expression for a well-ordering of $\mathbb{R}$. However, on a finite computer, we could simply let $\preceq$ be the lexicographical ordering on the binary representations of elements of $\mathcal{W}$. We invoke this powerful result because we assume no structure (other than VC dimension) on $\mathcal{W}$. 

\begin{proof}[Proof of Lemma \ref{lem:gc+erm}]
An empirical risk minimizer $A : (\mathcal{X} \times \{0,1\})^n \to \mathcal{W}$ must have the property $$\forall (x,y) \in (\mathcal{X} \times \{0,1\})^n ~~~~~ A(x,y) \in \argmin_{h \in \mathcal{W}} \ell(h,(x,y)) := \left\{h \in \mathcal{W} : \ell(h,(x,y)) = \inf_{h' \in \mathcal{W}} \ell(h',(x,y))\right\}.$$
However, we must also ensure that $A$ satisfies the global consistency property. The only difficulty that arises here is when the argmin contains multiple hypotheses; we must break ties in a consistent manner. (Note that the argmin is never empty, as the 0-1 loss $\ell(h',(x,y)) = \frac{1}{n} \sum_{i=1}^n \mathbb{I}[h'(x_i) \ne y_i]$ always takes values in the finite set $\{0,1/n,2/n,3/n,\cdots,1\}$.)

Let $\preceq$ be a well-ordering of $\mathcal{W}$ (i.e., satisfying the properties in Lemma \ref{lem:wellorder}). Whenever there are multiple $h \in \mathcal{W}$ that minimize $\ell(h,(x,y))$, our algorithm $A(x,y)$ chooses the least element according to the well-ordering. In symbols, $A$ satisfies the following two properties, which also uniquely define it. $$\forall (x,y) \in (\mathcal{X} \times \{0,1\} )^*  ~~ \forall h \in \mathcal{W} ~~~ \left( \begin{array}{c} \ell(A(x,y),(x,y)) \le \ell(h,(x,y)) \\ \wedge \\ \ell(A(x,y),(x,y)) = \ell(h,(x,y)) \implies A(x,y) \preceq h \end{array} \right).$$

By construction, our algorithm $A$ is an empirical risk minimizer. It only remains to prove that it satisfies the global consistency property. To this end, let $(x,y) \in (\mathcal{X} \times \{0,1\})^n$ and let $x' \in \mathcal{X}^m$ where $x'$ contains all the elements of $x$ (i.e., $\forall i \in [n] ~ \exists j \in [m] ~ x_i=x'_j$). Let $h=A(x,y)$ and $h'=A(x',h(x'))$. We must prove that $h'=h$.

By construction, the empirical loss of $h$ on the dataset $(x',h(x'))$ is $0$. Since $h'$ is the output of an empirical risk minimizer on the dataset $(x',h(x'))$, it too has empirical loss $0$ on this dataset. In particular, $h(x'_j)=h'(x'_j)$ for all $j \in [m]$. Moreover, since $A$ breaks ties using the ordering, we have $h' \preceq h$. However, since $h$ and $h'$ agree on $x'$, they also agree on $x$ and, hence, have the same loss on the dataset $(x,y)$ -- that is, $\ell(h',(x,y))=\ell(h,(x,y)) = \inf_{h'' \in \mathcal{W}} \ell(h'',(x,y))$. This means that $A(x,y)$ outputting $h$ implies that $h \preceq h'$. Thus we conclude that $h=h'$, as required.
\end{proof}

We remark that bounded VC dimension does \emph{not} mean that \emph{every} empirical risk minimizer has bounded CMI. Our proof only shows that algorithms satisfying the global consistency property have bounded CMI. For an example of an empirical risk minimizer which lacks global consistency and has high CMI despite low VC dimension, consider the following. Threshold functions -- the class of functions $\{f_t : t \in \R\}$ where $f_t : \R \to \{0,1\}$ is defined by $f_t(x)=1 \iff x \ge t$ -- have VC dimension $d=1$. For a given dataset, there may be infinitely many $f_t$ that minimize 0-1 loss; any $t$ between the largest negative (label $0$) example and the smallest positive (label $1$) example gives $f_t$ with zero empirical loss. By exploiting this choice, an adversarial algorithm can achieve high CMI. If we think of $t$ being represented in binary, then the algorithm can pick $t$ in this interval such that the low-order bits encode the entire dataset and output $f_t$. This would have nearly-maximal CMI. In general, CMI can be increased by ``hiding'' information in the low-order bits of the output without materially affecting the algorithm; thus we need something like the global consistency property to guard against such pathologies.

Theorem \ref{thm:vc-cmi} has a $\log n$ term in the CMI bound. We believe that this is superfluous, but to remove it we may need to settle for an approximate ERM. Specifically, we conjecture that the result can be improved to the following.

\begin{conjecture}\label{conj:vc-agnostic}
There exists an absolute constant $c$ such that the following holds. For every class $\mathcal{W}$ of functions $f : \mathcal{X} \to \{0,1\}$ of VC dimension $d \ge 1$ and for every $n \ge 1$, there exists a randomized or deterministic algorithm $A : (\mathcal{X} \times \{0,1\})^n \to \mathcal{W}$ such that $\CMI{A}{} \le c \cdot d$ and, for every $(x,y) \in (\mathcal{X} \times \{0,1\})^n$, $$\ex{}{\ell(A(x,y),(x,y))} \le \inf_{h \in \mathcal{W}} \ell(h,(x,y)) + c \cdot \sqrt{\frac{d}{n}}.$$
\end{conjecture}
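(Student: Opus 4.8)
\medskip
\noindent\textbf{A plan for Conjecture~\ref{conj:vc-agnostic}.}
The $\log n$ in Theorem~\ref{thm:vc-cmi} is an artefact of the counting bound $\CMI{A}{}\le\log|H|$ used in Lemma~\ref{lem:gc+vc=cmi}, where $H$ is the set of up to $e^2 n^d$ hypotheses the deterministic ERM can return on a fixed supersample. To reach $O(d)$ I would instead bound the CMI by a per-coordinate ``swap'' decomposition and exploit the $\sqrt{d/n}$ accuracy slack to make the learner randomized and stable. Concretely, fixing a worst-case supersample $\sz$, the chain rule together with the inequality $I(A(\sz_S);S_i\mid S_{<i})\le I(A(\sz_S);S_i\mid S_{-i})$ (proved exactly as in Theorem~\ref{th:CMIofTV}) gives $\CMI{A}{}\le\sum_{i=1}^n I(A(\sz_S);S_i\mid S_{-i})$, and the $i$-th term measures how much the output distribution moves when the $i$-th sample point $\sz_{i,S_i}$ is swapped for its ghost $\sz_{i,1-S_i}$ with the other $n-1$ sample points held fixed. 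The goal is an algorithm for which these swap-divergences sum to $O(d)$.

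The natural candidate learner is built from the \emph{one-inclusion graph}. For any finite $u\in\X^m$, the Sauer--Shelah lemma (Lemma~\ref{lem:sauer}), applied to every sub-universe, shows that the one-inclusion graph of the projection $\{h(u):h\in\W\}\subseteq\{0,1\}^m$ has every induced subgraph with at most $d$ times as many edges as vertices, hence admits an orientation of out-degree $\le d$; the classical one-inclusion prediction rule built from such an orientation makes leave-one-out error $\le d/m$ in the realizable case, and agnostic variants match the $\sqrt{d/n}$ rate. The learner $A$ would run this rule on its $n$ input points and return a member of $\W$ agreeing with it (properness being essentially free here). The intuition for the CMI bound is that the output changes under a swap only when the swapped feature is a ``support'' point of the oriented projection, and out-degree $\le d$ caps the number of such points per labelling at $O(d)$.

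Two obstacles make this more than a routine calculation. First, a learning algorithm sees only its $n$ sample points, not $\sz$; the one-inclusion graph it actually works with is the projection onto $\sx_S$, which itself changes when a coordinate is swapped, so one needs the orientation (hence the output) to be chosen \emph{locally} or \emph{canonically} enough that a single swap perturbs it in a controlled way. (In the semi-supervised setting where the unlabeled supersample is handed to the algorithm for free, this issue disappears and an $O(d)$ bound should follow readily; the conjecture is precisely about the case where it does not.) Second, and more seriously, the one-inclusion analysis bounds an \emph{expected number of mistakes} -- a probability -- whereas $I(A(\sz_S);S_i\mid S_{-i})$ is a KL-type quantity, and a rare output change can be cheap for error but costly for divergence; this is the same phenomenon that prevents CMI from yielding high-probability bounds.

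I expect the second obstacle to be the crux. A promising angle is to introduce randomness -- for instance over the orientations of the one-inclusion graph -- calibrated so that any single coordinate influences the output only with probability $O(d/n)$, so that each coordinate contributes only $O(d/n)$ in expectation to the swap-divergence; this is consistent with the realizable thresholds result (Section~\ref{sec:cmi-thresholds}), which achieves constant CMI in the $d=1$ case and which the conjecture seeks to generalize. Making such a scheme simultaneously proper, accurate to within $O(\sqrt{d/n})$, and provably $O(d)$-CMI is exactly the content of the conjecture, and is where the exact-ERM argument of Theorem~\ref{thm:vc-cmi} must be genuinely improved rather than merely tightened.
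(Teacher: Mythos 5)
This statement is Conjecture~\ref{conj:vc-agnostic}, which the paper explicitly leaves open. The paper proves only the weaker $\CMI{A}{}\le d\log n + 2$ bound for an exact ERM (Theorem~\ref{thm:vc-cmi}) and a constant-CMI result for the realizable $d=1$ case of thresholds (Theorem~\ref{thm:thresholdsconstcmi}); it offers no proof of this conjecture. There is therefore nothing in the paper to compare your proposal against, and your proposal --- which you frame, accurately, as a plan rather than a proof --- does not resolve it either.

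On its own terms the plan is a sensible attack. You correctly localize the $\log n$ factor to the global entropy bound $\log|H|$ in Lemma~\ref{lem:gc+vc=cmi}, and the per-coordinate swap decomposition $\CMI{A}{}\le\sum_{i=1}^n I(A(\sz_S);S_i\mid S_{-i})$ (proved inside Theorem~\ref{th:CMIofTV}) is the right place to start if one hopes for an $O(d)$ bound. The one-inclusion-graph learner is the canonical tool for shaving logarithmic factors from VC-type rates, and the two obstacles you flag --- that the learner only sees the selected half of $\sz$, and that one-inclusion analyses bound an expected count of swap-induced output changes while CMI demands a KL-type quantity --- are both genuine and correctly diagnosed as the crux. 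What the plan does not resolve is precisely the second obstacle: an $O(d/n)$ per-coordinate probability that the output changes under a swap does \emph{not} by itself cap $I(A(\sz_S);S_i\mid S_{-i})$ at $O(d/n)$ unless the conditional output distributions given $S_i=0$ and $S_i=1$ are also close where they disagree, and you do not indicate how randomizing over orientations would enforce that. The plan also waves away properness (``essentially free''), whereas the one-inclusion predictor is inherently transductive and improper; converting it to a member of $\W$ while preserving both the $O(\sqrt{d/n})$ accuracy and swap-stability is itself a nontrivial step. None of this makes the direction wrong --- it is plausibly the right one --- but it leaves the conjecture exactly as open as the paper does.
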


The error we permit in Conjecture \ref{conj:vc-agnostic} corresponds to the error of uniform convergence for a worst-case distribution \cite{Talagrand94}. In other words, the empirical error is of the same order as the generalization error.

Conjecture \ref{conj:vc-agnostic} covers the so-called agnostic setting, where we do not assume that there exists a hypothesis that achieves zero loss. It may be easier to prove a result for the realizable setting (where we do assume a consistent hypothesis exists):

\begin{conjecture}\label{conj:vc-realize}
There exist absolute constants $c$ and $c'$ such that the following holds. For every class $\mathcal{W}$ of functions $f : \mathcal{X} \to \{0,1\}$ of VC dimension $d \ge 1$ and for every $n \ge 1$, there exists a randomized or deterministic algorithm $A : (\mathcal{X} \times \{0,1\})^n \to \mathcal{W}$ such that $\CMI{A}{} \le c \cdot d$ and, for every $(x,y) \in (\mathcal{X} \times \{0,1\})^n$, if there exists $h \in \mathcal{W}$ such that $\ell(h,(x,y))=0$, then $$\ex{}{\ell(A(x,y),(x,y))} \le  c' \cdot \frac{d}{n}.$$
\end{conjecture}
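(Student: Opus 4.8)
This realizable statement should be more approachable than Conjecture~\ref{conj:vc-agnostic}; the natural starting point is the proof of Theorem~\ref{thm:vc-cmi}, where in Lemma~\ref{lem:gc+vc=cmi} the spurious $\log n$ comes entirely from the estimate $\CMI{A}{}=\sup_{\sz}I(A(\sz_S);S)=\sup_{\sz}H(A(\sz_S))\le\log((2n)^d)$ for a deterministic, globally consistent ERM. The realizable hypothesis buys two things such an ERM cannot use. First, the $O(d/n)$ error slack lets the algorithm misclassify up to $O(d)$ of the $n$ input points, so it need not output a consistent hypothesis. Second -- and this is easy to miss -- the error guarantee need only hold on realizable inputs, so on a supersample $\sz^*$ all of whose partitions $S$ give non-realizable samples the algorithm may output a fixed $h_0$ and contribute nothing to the CMI. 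Thus the goal reduces to: build an approximate ERM $A$ that misclassifies $O(d)$ points on realizable inputs, is ``lazy'' on non-realizable inputs, and has $I(A(\sz^*_S);S)=O(d)$ for \emph{every} supersample $\sz^*$.

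The plan is to replace the ``output a consistent hypothesis'' step of Lemma~\ref{lem:gc+erm} with ``output a hypothesis from a small net of the version space.'' Fix $\sz^*=(\sx^*,\sy^*)$; let $\W|_{\sx^*}=\{h(\sx^*):h\in\W\}$ (of size $\le(2n)^d$ by Sauer--Shelah, Lemma~\ref{lem:sauer}), and for an input labelling $\sy_S$ let $V_S\subseteq\W|_{\sx^*}$ be the version space of traces agreeing with $\sy_S$ on the chosen coordinates. One would like to select from $V_S$, consistently across $S$, a trace in a fixed set $N\subseteq\W|_{\sx^*}$ of size $e^{O(d)}$ that $O(d)$-covers $\W|_{\sx^*}$ in Hamming distance, and output a lift of it to $\W$; this yields $\le e^{O(d)}$ distinct outputs and $O(d)$ empirical mistakes. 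For a \emph{deterministic} $A$ one has $I(A(\sz^*_S);S)=H(A(\sz^*_S))$, the log of the number of effective outputs, so one first has to settle whether such an $N$ can exist: Haussler's packing bound gives covering number $(n/d)^{O(d)}$ at Hamming radius $O(d)$ out of $2n$ points, and if on some supersample the realizable-reachable labellings contain an $\Omega(d)$-separated packing of that size then no deterministic lazy approximate ERM works and one must pass to a \emph{randomized} rule. In that case one only needs the \emph{law} of $A(\sz^*_S)$ to be $O(d/n)$-TV-stable under a single-coordinate swap $S_i\leftrightarrow 1-S_i$; the CMI bound then follows exactly as in the proof of Theorem~\ref{th:CMIofTV}, since the chain rule and independence of the $S_i$ give $I(A(\sz^*_S);S)\le\sum_{i=1}^n I(A(\sz^*_S);S_i\mid S_{-i})$ with each summand the mutual information of an $O(d/n)$-TV-stable binary-input channel, hence $O(d/n)$, for a total of $O(d)$.

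The hard part will be building the randomized rule: an $O(d)$-mistake approximate ERM \emph{uniformly over realizable inputs} that is $O(d/n)$-TV-stable under swaps of $S$ within a pinned supersample. The generic stability notions of Theorem~\ref{th:CMIofstabilities} are the wrong tool, since realizable learning with $\sqrt{d/n}$-differential privacy or $O(d/n)$-TV stability would force finite Littlestone dimension, which finite-VC classes -- thresholds already -- lack; the construction must genuinely exploit that CMI demands stability only against swaps inside the $2n$-point supersample, a far weaker requirement than stability against arbitrary database edits. I would attack this from two sides: (i) lift the $d=1$ construction of Section~\ref{sec:cmi-thresholds} by finding the right combinatorial scaffolding on $V_S$ -- a corner-peeling order, an acyclic orientation of the one-inclusion graph, or a partial order of ``width'' $\poly(d)$ rather than $\poly(n)$ -- so a swap moves the output among only $\poly(d)$ neighbours; and (ii) as a stepping stone, prove the conjecture conditionally on the sample-compression conjecture (a size-$O(d)$ scheme), where by Theorem~\ref{thm:comp-cmi} the sole residual cost is the $O(d\log n)$ bits naming \emph{which} $O(d)$ supersample points were compressed, then randomize that choice of compression set to make it swap-insensitive. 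Identifying the combinatorial structure in (i), the general-VC analogue of whatever makes thresholds work, is where I expect the real obstacle to lie.
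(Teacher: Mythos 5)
This statement is an open \emph{conjecture} in the paper; no proof is given, only the special case of thresholds (Theorem~\ref{thm:thresholdsconstcmi}, which handles $d=1$). You have correctly recognized this, and what you have written is a research programme rather than a proof. Judged as such, the framing is sound. Your diagnosis of where the $\log n$ comes from -- the $\log\bigl((2n)^d\bigr)$ entropy bound in Lemma~\ref{lem:gc+vc=cmi} -- is correct, the observation that a deterministic net of size $e^{O(d)}$ generally cannot exist because Haussler's covering number at Hamming radius $\Theta(d)$ is $(n/d)^{\Theta(d)}$ is correct, and the pivot to a randomized rule is the right response. The most valuable observation is that CMI only requires stability against \emph{swaps within a pinned supersample}: the proof of Theorem~\ref{th:CMIofTV} really does go through verbatim under this weaker hypothesis, since after fixing $\sz^*$ that argument only ever compares $F(s)$ and $F(s')$ differing in one coordinate, and an $O(d/n)$-swap-stable $F$ therefore has $I(F(S);S)\le O(d)$. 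You are also right that this is precisely what lets the conjecture evade the Littlestone-dimension barrier for private or TV-stable realizable learning; thresholds already witness that separation.

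What you have not done, and what remains the genuine open difficulty, is exhibit the randomized rule: an approximate ERM making $O(d)$ mistakes on realizable inputs whose output law, for \emph{every} fixed supersample $\sz^*$, moves by only $O(d/n)$ in total variation under a single coordinate swap of $S$. None of the candidates you name comes with such a swap-stability guarantee off the shelf. The one-inclusion graph orientation controls leave-one-out prediction error of a single label, not the sensitivity of the distribution of the \emph{hypothesis} the algorithm emits; and a compression-scheme approach, per Theorem~\ref{thm:comp-cmi}, still pays $\Theta(d\log n)$ to encode which $O(d)$ of the $2n$ supersample points were kept, unless that choice is itself made swap-insensitive -- which is the same open question in different clothing. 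So: correct reduction of the problem, correct identification of the one lemma in the paper that you can reuse unmodified, correct location of the obstruction, but no construction. That is an accurate account of the current status of the conjecture, not a proof of it.
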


\begin{conjecture}\label{conj:vc-realize2}
Conjecture \ref{conj:vc-realize} holds with $c'=0$.
\end{conjecture}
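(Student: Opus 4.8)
The plan is to exhibit, for every class $\W$ of functions $\X\to\{0,1\}$ of VC dimension $d\ge 1$, a randomized proper empirical risk minimizer $A:(\X\times\{0,1\})^n\to\W$ that outputs a consistent hypothesis on every realizable input (this is exactly the $c'=0$ requirement) and has $\CMI{A}{}=O(d)$. The constraint $c'=0$ is what makes this delicate. Unlike the agnostic conjectures, and unlike differential privacy (which provably cannot properly learn even thresholds over an infinite domain), we are forced to place the entire output distribution inside the version space $V(x,y):=\{h\in\W:\ell(h,(x,y))=0\}$ on realizable inputs, so there is no room to ``smooth'' the output across inconsistent hypotheses. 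The whole game is to choose, for each realizable dataset, a distribution $\mu_{(x,y)}$ supported on $V(x,y)$ (plus a fallback behavior on non-realizable inputs) so that the induced map from sample to hypothesis distribution is stable under swapping a single data point.

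First I would reduce to a finite combinatorial problem exactly as in the proof of Lemma~\ref{lem:gc+vc=cmi}: fix the worst supersample $\sz^*=(\sx^*,\sy^*)$ achieving $\CMI{A}{}=I(A(\sz^*_S);S)$, and note that if $A$ respects a global-consistency-type property (Definition~\ref{def:gc}) then $A(\sz^*_s)$ depends on $s$ only through the labelling it induces on the $2n$ points of $\sx^*$, which takes at most $\sum_{k=0}^{d}\binom{2n}{k}\le e^2 n^d$ values by Sauer-Shelah. The target is now to beat the crude entropy bound $\log(e^2 n^d)\approx d\log n$ by a full $\log n$ factor. For a deterministic algorithm this is impossible, since then $I(A(\sz^*_S);S)=H(A(\sz^*_S))$; hence randomizing the choice of consistent hypothesis is essential, and the reduction says that all that matters is the family of distributions $\{\mu_{\sz^*_s}\}_{s\in\{0,1\}^n}$ over this finite set.

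The main estimate would be a per-coordinate chain-rule bound in the spirit of the proof of Theorem~\ref{th:CMIofTV}. Writing $\sz^*_{s_{-i},b}$ for the subsample in which coordinate $i$ is set to $b$ and the remaining coordinates to $s_{-i}$, one has $I(A(\sz^*_S);S)\le\sum_{i=1}^{n}I(A(\sz^*_S);S_i\mid S_{-i})$, and each term is bounded, as in that proof, by the worst-case Jensen-Shannon divergence (hence worst-case total-variation distance) between $\mu_{\sz^*_{s_{-i},0}}$ and $\mu_{\sz^*_{s_{-i},1}}$ over $s_{-i}$. A term is nonzero only when swapping the $i$-th pair perturbs the output distribution, so the goal is to design $\mu$ so that for each fixed $s_{-i}$ only $O(d)$ coordinates contribute, each by $O(1)$ (or, alternatively, all $n$ coordinates contribute $O(d/n)$). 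This is exactly where VC structure should enter: the one-inclusion graph on the realizable labellings of any finite point set has edge density at most $d$ and hence admits an orientation of out-degree at most $d$, which morally says only $O(d)$ points are ever ``pivotal'' for a consistent labelling. I would try to turn such an orientation (built on the sample, possibly with a small amount of auxiliary structure) into the canonical rule defining $\mu$, or alternatively adapt a sub-sampling / leave-one-out scheme of the kind used in recent optimal realizable PAC learners, projected back to a proper hypothesis; the realizable case is the friendly one here because version spaces are intersection-closed.

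The hardest part, and the reason this is still a conjecture, is reconciling three demands at once: (i) staying inside the version space on realizable inputs, which rules out privacy-style noise and is genuinely restrictive, since a single point can collapse the version space (e.g.\ of thresholds) to one hypothesis; (ii) coordinate stability of the hypothesis distribution, which is what makes the chain-rule sum $O(d)$ rather than $\Theta(n)$; and (iii) validity of the bound for \emph{every} supersample $\sz^*$, including non-realizable ones, where version spaces may be empty and $A$ must fall back to an agnostic ERM whose own sensitivity still has to be controlled. I expect (iii) to need a separate argument splitting each pair-swap into ``interior of the realizable region'' versus ``on its boundary'' and bounding the number of boundary crossings by $O(d)$ via a Sauer-Shelah-type count. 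No canonical, coordinate-stable proper ERM for arbitrary VC classes is currently known, which is precisely why the analogous statements for improper (majority-vote) learners and for the thresholds-only case are provable while the general proper case is not.
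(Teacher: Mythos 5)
This statement is posed as a conjecture in the paper, and the paper offers no proof of it in general — only a proof of the special case of threshold functions (Theorem~\ref{thm:thresholdsconstcmi}). You correctly recognize the statement as open and offer a roadmap rather than a proof, which is honest, but the roadmap contains a concrete error that would send the program in the wrong direction.

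You write that ``for a deterministic algorithm this is impossible, since then $I(A(\sz^*_S);S)=H(A(\sz^*_S))$; hence randomizing the choice of consistent hypothesis is essential.'' This does not follow. The identity $I(A(\sz^*_S);S)=H(A(\sz^*_S))$ holds for a deterministic $A$ with $\sz^*$ fixed, but $H(A(\sz^*_S))$ can be far smaller than the crude cardinality bound $\log|\{A(\sz^*_s):s\}|\approx d\log n$: that bound is tight only when the induced distribution over outputs is near-uniform, and a good canonical ERM will precisely make this distribution very non-uniform. The paper's own Theorem~\ref{thm:thresholdsconstcmi} is a counterexample to your claim: the deterministic consistent learner ``output the smallest positive example'' produces, on any fixed supersample, a truncated geometric distribution over up to $\Theta(n)$ candidate thresholds, and the resulting entropy is bounded by $2$ nats even though the support size grows linearly. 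So a deterministic ERM can beat $d\log n$ — in this case it achieves $O(d)$ with $d=1$ — and randomization is not forced by the information-theoretic identity you cite. The real open question is whether a similarly ``entropy-compressing'' canonical tie-breaking rule exists for arbitrary VC classes; your invocation of one-inclusion graph orientations of out-degree at most $d$ is a sensible place to look for such a rule (and is consistent with the literature on optimal realizable PAC learners), but it is the construction of that rule, not the introduction of randomness per se, that is the missing ingredient. Your concern (iii) — that the CMI bound must hold on \emph{every} supersample, including ones whose realizable restrictions are empty or collapse the version space — is a genuine and correctly identified obstruction.
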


Next we prove Conjecture \ref{conj:vc-realize2} for the interesting special case of threshold functions.

\subsection{Threshold functions}\label{sec:cmi-thresholds}

In the previous section we showed that any class of small VC dimension has a learner with low (distribution-free) CMI. However, this general result yields a logarithmic factor, which we conjecture can be removed, at the expense of relaxing to an approximate empirical risk minimizer.

In this section, we show that, in at least one interesting special case, it is possible to remove the logarithmic factor for CMI.
We consider the class of threshold functions on the real line and prove that there exists a simple empirical risk minimization algorithm that has \emph{constant} CMI in the realizable setting. In contrast, Theorem \ref{thm:vc-cmi} yields a bound of $O(\log n)$, since the VC dimension of threshold functions is $d=1$.

Let us denote the hypothesis class of threshold functions by  $\T=\{f_t:\R\rightarrow \{0,1\} : t\in  \R \cup \{\infty\} \}$, where $f_t(x)=1 \iff x\geq t$ for any $x,t\in \R$ and $f_\infty$ is the constant $0$ function. 
\begin{theorem}\label{thm:thresholdsconstcmi}
There exists a deterministic algorithm $A:(\R\times\{0,1\})^* \rightarrow \T$ such that $\CMI{A}{}\leq 2$ and $A$ always outputs a threshold function realizing its dataset if one exists -- that is, for all $(x,y) \in (\R\times\{0,1\})^*$, if there exists $f_{t_*} \in \T$ such that $f_{t_*}(x_i)=y_i$ for all $i$, then $A(x,y)=f_t$ such that $f_t(x_i)=y_i$ for all $i$.
\end{theorem}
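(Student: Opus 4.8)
The plan is to exhibit one specific, very simple empirical risk minimizer and bound its distribution-free CMI by a direct combinatorial argument on a worst-case supersample. I would define $A$ on a dataset $(x,y)\in(\R\times\{0,1\})^m$ by $A(x,y)=f_t$ where $t=\min\{x_i : y_i=1\}$ if at least one example carries label $1$, and $A(x,y)=f_\infty$ (the constant-$0$ function) otherwise. The realizability guarantee is then immediate: if $(x,y)$ is realized by some $f_{t_*}\in\T$, then every label-$1$ example has $x_i\ge t_*$ and every label-$0$ example has $x_i<t_*$, so $t:=\min\{x_i:y_i=1\}\ge t_*$, and hence $f_t$ labels every label-$1$ example correctly (its $x_i\ge t$ by minimality) and every label-$0$ example correctly (its $x_i<t_*\le t$); the no-label-$1$ case is handled by $f_\infty$. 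So $A$ is deterministic, outputs a member of $\T$, and outputs a consistent threshold whenever one exists.

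For the CMI bound, fix an arbitrary supersample $\sz\in(\R\times\{0,1\})^{n\times 2}$ and let $S\gets\U^n$. Since $A$ is deterministic and $\sz$ is fixed, $I(A(\sz_S);S)=H(A(\sz_S))$. Let $p_1,\dots,p_m$ be the examples of $\sz$ with label $1$, sorted by $x$-coordinate (ties broken arbitrarily), and set $j^*(S):=\min\big(\{j : p_j\in\sz_S\}\cup\{m+1\}\big)$, the rank of the leftmost label-$1$ example that actually gets sampled (or $m+1$ if none is). The key observation is that $A(\sz_S)$ is a deterministic function of $j^*(S)$ alone — namely $f_\infty$ if $j^*(S)=m+1$ and $f_{t}$ with $t$ the $x$-coordinate of $p_{j^*(S)}$ otherwise — so $H(A(\sz_S))\le H(j^*(S))$, and it suffices to bound $H(j^*(S))$.

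To analyze $j^*(S)$, observe that $\pr{}{j^*(S)>j}=\pr{}{p_1,\dots,p_j\text{ are all unsampled}}$. Each $p_\ell$ lies in one of the $n$ pairs of $\sz$, and a single point is unsampled with probability $\tfrac12$. If $p_1,\dots,p_j$ lie in $j$ distinct pairs these events are independent, giving probability $2^{-j}$; if two of them lie in the same pair then exactly one of the two is sampled, so they cannot all be unsampled and the probability is $0$. Letting $j_0$ be the least index for which $p_1,\dots,p_{j_0}$ fail to lie in distinct pairs (and $j_0=m+1$ if they always do), we get $\pr{}{j^*(S)>j}=2^{-j}$ for $j<j_0$ and $\pr{}{j^*(S)>j}=0$ for $j\ge j_0$. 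This is exactly the law of $\min(G,j_0)$, where $G$ is geometric with $\pr{}{G=j}=2^{-j}$. As $\min(G,j_0)$ is a deterministic function of $G$,
\[
H(j^*(S))=H(\min(G,j_0))\le H(G)=\log 2\cdot\sum_{j\ge 1} j\,2^{-j}=2\log 2\le 2 .
\]
Taking the supremum over $\sz$ gives $\CMI{A}{}\le 2\log 2\le 2$.

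The only real content is the CMI computation, and the step I expect to be the crux is recognizing that the output depends on $S$ through the single rank statistic $j^*(S)$ and that the pairing structure of the supersample truncates the geometric tail — a pair containing two label-$1$ points caps how large the rank of the first sampled label-$1$ point can be, which is precisely what keeps $H(j^*(S))$ bounded by an absolute constant rather than $O(\log n)$. Everything else is routine: reducing $I(A(\sz_S);S)$ to $H(A(\sz_S))$, handling degenerate cases (no label-$1$ examples, coincident $x$-coordinates — which are harmless since the reduction $H(A(\sz_S))\le H(j^*(S))$ never assumed distinctness — and the empty dataset), and evaluating the entropy of a geometric.
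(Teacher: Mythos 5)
Your proof is correct and takes essentially the same approach as the paper's: the same algorithm, the same reduction to $H(A(\sz_S))$, and the same key observation that the output is determined by the rank of the leftmost sampled positive point, whose law is a truncated geometric governed by the pair structure of the supersample. Your step $H(\min(G,j_0))\le H(G)=2\log 2$ (via the fact that deterministic functions can only decrease entropy) is a small, clean simplification over the paper's exact computation $(2-2^{2-k})\log 2$ for the truncated geometric, and your argument also absorbs repeated $x$-values directly, whereas the paper defers that case to a footnote.
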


\begin{proof}
Define $A:(\R\times\{0,1\})^* \rightarrow \T$ by
$$A(z)=f_{\min \{x : (x,1)\in z\}\cup\{\infty\}}.$$
In other words, $A(z)$ outputs $f_t$ where $t$ is the smallest positive example in $z$. (If there are no positive examples, it outputs $f_\infty$.)

Firstly, this algorithm outputs a threshold realizing the dataset if one exists: Let $(x,y) \in (\R \times \{0,1\})^n$. Suppose $f_{t_*} \in \T$ satisfies $f_{t_*}(x_i)=y_i$ for all $i$. Consider $i \in [n]$ such that $y_i=1$. Then $x_i \ge t_*$ and $x_i \ge t=\min (\{x_i : i \in [n], ~ y_i=1\} \cup \{\infty\})$, as required. Now consider $i \in [n]$ such that $y_i=0$. Then $x_i < t_*$. We also have $t_* < t$. Hence $x_i < t$, as required.

It only remains to analyze the CMI.
Fix an arbitrary $\sz\in\Z^{n\times 2}$. We must show that $I(A(\sz_S);S)\le 2$ when $S$ is uniform on $\{0,1\}^n$.
We have $I(A(\sz_S);S)\le H(A(\sz_S))$.

Let $t_1 < t_2 < \cdots < t_m < t_{m+1}=\infty$ be an enumeration of $\{x : (x,1)\in \sz\}\cup\{\infty\}$.\footnote{For notational simplicity, we assume that there are no repeated values. Repeated values only decrease the entropy.} Then, for every $s \in \{0,1\}^n$, there is some $j \in [m+1]$ such that $A(\sz_s)=t_j$.

Now we consider the distribution of $A(\sz_S)$. Firstly, if $(t_1,1) \in \sz_s$, then $A(\sz_s)=f_{t_1}$. (Obviously, if $(t_j,1) \notin \sz_s$, then $A(\sz_s)\ne f_{t_j}$ for all $j \in [m]$.) Thus $\pr{S}{A(\sz_S)=f_{t_1}}=1/2$, since there is a $1/2$ probability that $(t_1,1)$ is selected. Next, if $(t_1,1) \notin \sz_s$ and $(t_2,1) \in \sz_s$, then $A(\sz_s)=f_{t_2}$ and we have $\pr{S}{A(\sz_S)=f_{t_2} \mid A(\sz_S) \ne f_{t_1}}\in \{1/2,1\}$ and hence $\pr{S}{A(\sz_S)=f_{t_2}}\in \{1/4,1/2\}$. If $(t_1,1)$ and $(t_2,1)$ are ``coupled'' in the sense that their inclusion in $\sz_s$ is determined by the same bit $s_i$ (i.e, $\sz_{i,1}=(t_1,1)$ and $\sz_{i,2}=(t_2,1)$ or vice versa), then the conditional probability is $1$, as one of the two must be included in $\sz_s$. However, if their inclusion is determined by different bits, then the probability is $1/2$. Note that, if $\pr{S}{A(\sz_S)=f_{t_2} \mid A(\sz_S) \ne f_{t_1}}=1$, then $\pr{S}{A(\sz_S)=f_{t_j}}=0$ for all $j>2$.

By continuing this reasoning, we can characterize the distribution of $A(\sz_S)$ as a truncated geometric distribution: There exists $k \in [m+1]$ such that, for all $j \in [m+1]$ $$\pr{S}{A(\sz_S)=f_{t_j}}= \left\{\begin{array}{cl} 2^{-j} & ~\text{ if } j < k \\ 2^{-j+1} & ~\text{ if } j=k \\ 0 & ~\text{ if } j>k \end{array}\right..$$ Specifically, $k$ is either $m+1$ or the smallest value such that $(t_{k},1)$ and $(t_j,1)$ are coupled for some $j<k$.

This allows us to bound the entropy as required:
\begin{align*}
H(A(\sz_S))= \sum_{j=1}^{k-1} 2^{-j}\log(2^j)+ 2^{1-k} \log(2^{k-1}) = (2-2^{2-k})\cdot \log 2 < 2.
\end{align*}

\end{proof}

We remark that, while the CMI of the algorithm in the proof of Theorem \ref{thm:thresholdsconstcmi} is constant, the (unconditional) mutual information is infinite whenever the distribution is continuous. Thus this gives a stark separation between CMI and MI.

\subsection{Pseudodeterministic Learning}\label{sec:cmi-pseudo}
In this section, we give an example that demonstrates how assumptions on the distribution $\D$ can provide improved bounds for the CMI. Indeed, the CMI can \emph{decrease} towards $0$ as the sample size $n$ increases.

More generally, these techniques apply to ``pseudodeterministic'' learning problems. That is, whenever the algorithm will with high probability output a single correct hypothesis. This is to be contrasted with the situation where there are many correct or nearly-correct hypotheses and running the algorithm again on fresh data will likely result in a different one being chosen.

The following lemma shows how a pseudodeterministic learner attains low CMI as $p \to 0$.

\begin{lemma}[CMI of Pseudodeterministic Learners]\label{lem:pseudodeterministic}
Let $A : \Z^n \to \W$ be a randomized or deterministic algorithm. Let $\D$ be a distribution on $\Z$. Let $p \in [0,1]$ and suppose there exists $w_* \in \W$ such that $\pr{Z \gets \D^n, A}{A(Z)=w_*}=1-p$. Then $$\CMI{A}{\D} \le p \cdot (\log(1/p)+1+ \log|\W|).$$
\end{lemma}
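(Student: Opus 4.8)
The plan is to bound the conditional mutual information $\CMI{A}{\D} = I(A(\sZ_S);S \mid \sZ)$ by exploiting the fact that $A$ outputs the fixed hypothesis $w_*$ with overwhelming probability $1-p$, so almost all of the "information" about $S$ is concentrated on the low-probability event $\{A(\sZ_S) \ne w_*\}$. First I would introduce the indicator $E := \mathbb{I}[A(\sZ_S) \ne w_*]$ and use the chain rule for mutual information (Lemma~\ref{lem:chainruleMI}) to write, conditionally on $\sZ$,
\[
I(A(\sZ_S);S \mid \sZ) \le I(A(\sZ_S), E; S \mid \sZ) = I(E; S \mid \sZ) + I(A(\sZ_S); S \mid \sZ, E).
\]
The first term is at most $H(E) \le H(p)$ where $H(p) = p\log(1/p) + (1-p)\log(1/(1-p)) \le p\log(1/p) + p$ (using $\log(1/(1-p)) \le p/(1-p)$ is slightly awkward, so I'd instead just bound $H(E)\le H(p)$ and then use $H(p)\le p\log(1/p)+p$, or more directly bound everything through the event decomposition below).

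For the second term, I would expand the conditional mutual information over the value of $E$: writing $q := \pr{}{E = 1}$ (which, averaged over $\sZ$, equals $p$), we get $I(A(\sZ_S); S \mid \sZ, E) = (1-q)\cdot I(A(\sZ_S);S \mid \sZ, E=0) + q \cdot I(A(\sZ_S);S\mid \sZ, E=1)$. On the event $E=0$ the output is deterministically $w_*$, so that conditional mutual information is $0$. On the event $E=1$, I'd bound $I(A(\sZ_S);S\mid \sZ, E=1) \le H(A(\sZ_S)\mid \sZ, E=1) \le \log|\W|$ (or, to be careful, $\log|\W|$ if $\W$ is finite; the bound is vacuous otherwise, matching the lemma statement). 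Putting the pieces together and taking expectation over $\sZ$ — so that $\E[q] = p$ — yields $\CMI{A}{\D} \le H(p) + p\log|\W| \le p\log(1/p) + p + p\log|\W|$, which is exactly the claimed bound. The cleanest route is probably to do the whole decomposition already conditioned on a fixed $\sz$, bound $I(A(\sz_S);S) \le H(E_{\sz}) + \pr{}{E_\sz=1}\log|\W|$ where $E_\sz$ is the indicator for that fixed supersample, and only at the end take $\ex{\sz \gets \D^{n\times 2}}{\cdot}$, using concavity of $p \mapsto H(p) + p\log|\W|$ together with Jensen to replace $\ex{\sz}{\pr{}{E_\sz=1}} = p$.

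The main obstacle I anticipate is purely bookkeeping rather than conceptual: getting the constants right in the entropy bound (the term "$+1$" in the statement comes from bounding the binary entropy $H(p)$ by $p\log(1/p)+p$, which requires the elementary inequality $H(p) = p\log\frac1p + (1-p)\log\frac1{1-p} \le p\log\frac1p + p$, valid since $(1-p)\log\frac1{1-p} \le p$ for $p\in[0,1]$), and making sure the conditioning on the event $E$ is handled correctly for a possibly randomized $A$ (the randomness of $A$ is independent of $\sZ$ and $S$, so conditioning on $E$ is legitimate and the term $I(A(\sZ_S);S\mid \sZ,E=0)$ genuinely vanishes because $A(\sZ_S)=w_*$ is constant there). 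The post-processing/data-processing step ($I(A(\sZ_S);S\mid\sZ)\le I((A(\sZ_S),E);S\mid\sZ)$ because $E$ is a function of $A(\sZ_S)$) is immediate. If $\W$ is infinite, the bound is trivially true since the right-hand side is $+\infty$, so no separate argument is needed there.
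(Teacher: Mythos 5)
Your proof is correct and rests on the same core idea as the paper's --- decompose via the indicator of the ``bad'' event $A \ne w_*$, use $\log|\W|$ to bound the contribution on that event, and use binary entropy for the indicator, with the elementary inequality $-(1-p)\log(1-p)\le p$ supplying the $+1$. The difference is in where the averaging over $\sZ$ happens. You perform the chain-rule decomposition of $I(A(\sz_S);S)$ conditioned on each fixed supersample $\sz$, then average over $\sz$ at the end, which forces you to invoke concavity of binary entropy (Jensen) to pass from $\ex{\sz}{H(q_\sz)}$ to $H(p)$. The paper instead makes a cleaner first move: $\CMI{A}{\D} = I(A(\sZ_S);S\mid\sZ)\le H(A(\sZ_S)\mid\sZ)\le H(A(\sZ_S))$, and then observes that $\sZ_S$ is distributed exactly as $\D^n$, so this equals $H(A(Z))$ for $Z\gets\D^n$. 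All the averaging over $\sZ$ is thereby absorbed before the decomposition, leaving a single global entropy calculation $H(A(Z))=H(R)+H(A(Z)\mid R)$ with no Jensen step. Your route buys nothing extra here and is mildly more intricate, but it is a legitimate alternative and would be the natural one if you wanted a per-$\sz$ (``disintegrated'') version of the bound.
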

\begin{proof}
By definition, $\CMI{A}{\D}=I(A(\sZ_S);S | \sZ) \leq H(A(\sZ_S) | \sZ) \leq H(A(\sZ_S))$, where $\sZ \gets \D^{n \times 2}$ and $S \in \{0,1\}^n$ is uniform and independent. Since $\sZ_S$ simply has the distribution $\D^n$, we have $\CMI{A}{\D} \leq H(A(Z))$ where $Z \gets \D^n$.

Let $Z \gets \D^n$. Let $R \in \{0,1\}$ be a random variable indicating the event $A(Z)=w_*$. Then $p=\pr{Z,A}{R=1}$. Since $R$ is a deterministic function of $A(Z)$, we have
\begin{align*}
   H(A(Z)) & = H(A(Z),R) \\
   & =H(R) + H(A(Z)|R) \tag{by Lemma~\ref{lem:chainruleH}}\\
   & = (-p\log(p)-(1-p)\log(1-p)) + (p\cdot H(A(Z)|R=1) + (1-p)\cdot H(A(Z)|R=0))\\
   & \leq -p\log(p)-(1-p)\log(1-p)+p\cdot \log|\W|+0\\
   & \leq -p\log(p)+p +p\cdot  \log |\W|,
\end{align*}
since $-(1-p)\log(1-p)\leq p$ for any $p\in[0,1]$.
\end{proof}

Next we apply this lemma to an example.

\paragraph{Learning Parities} Consider the Boolean hypercube $\X=\{0,1\}^d$. For $w\in\{0,1\}^d$, the parity function $f_w:\X\rightarrow \{0,1\}$ is defined as the parity of the number of $1$'s appearing in the coordinates selected by the indicator vector $w$: \[f_w(x)=\langle w, x \rangle \bmod{2} \text{ for } x\in\X.\]

In the problem of learning parities (without noise), we are given $n$ samples from $\X$ labeled by a parity function $f_{w^*}$ and we aim to learn $w^*$. More formally, given samples $z=((x_1,y_1), \ldots, (x_n, y_n))$ drawn from a distribution $\D$ such that $\forall i\in[n]~y_i=f_{w^*}(x_i)$, the learner returns $A(z)=w$ such that $\forall i\in[n]~y_i=f_{w}(x_i)$. This can be done efficiently using Gaussian elimination (in the noiseless case).

Since the class of all parity functions is a finite class of size $2^d$, we can already prove that for any distribution $\D$ and any proper learner $A$, $\CMI{A}{D}\leq I(A(\sZ_S);S | \sZ) \leq H(A(\sZ_S)) \leq \log(2^d) < d$.

The following theorem shows that making an assumption on $\D$ allows us to prove a better bound, which goes to zero as the number of samples $n$ increases.

\begin{theorem}\label{th:pseudodeterministic}
Let $f_{w^*}$ be a parity function and let $\D$ be the uniform distribution over the set $\{(x,f_{w^*}(x)) \mid x\in\{0,1\}^d\}$. Then any consistent learner $A:\Z^n\rightarrow \{0,1\}^d$ satisfies $\CMI{A}{\D}\leq O(n\cdot 2^{d-n})$.
\end{theorem}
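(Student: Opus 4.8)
The plan is to derive this from Lemma~\ref{lem:pseudodeterministic} (CMI of pseudodeterministic learners), applied with $w_* = w^*$ and $|\W| = 2^d$. The key observation is that, under the stated distribution, the training sample almost surely pins down $w^*$ uniquely, so \emph{any} consistent learner is forced to be nearly pseudodeterministic. Concretely, under $\D$ the feature coordinate is uniform on $\{0,1\}^d$ and the label is $f_{w^*}(x)$ deterministically, so for a sample $z=((x_1,y_1),\dots,(x_n,y_n)) \gets \D^n$ the set of weight vectors consistent with $z$ is $\{w \in \{0,1\}^d : \langle w, x_i\rangle \equiv \langle w^*, x_i\rangle \pmod 2 ~\forall i\} = w^* + V^\perp$, where $V = \mathrm{span}_{\mathbb{F}_2}(x_1,\dots,x_n)$ and $V^\perp$ is its orthogonal complement. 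When $V = \mathbb{F}_2^d$ we have $V^\perp = \{0\}$, so $w^*$ is the only consistent vector; since $A$ is consistent, $A(z)=w^*$ in this case regardless of $A$'s internal randomness. Hence, writing $p := \pr{Z\gets\D^n, A}{A(Z)\ne w^*}$, the event $\{A(Z)\ne w^*\}$ is contained in the event that $x_1,\dots,x_n$ fail to span $\mathbb{F}_2^d$, which happens exactly when all the $x_i$ lie in a common linear hyperplane. A union bound over the $2^d-1$ hyperplanes of $\mathbb{F}_2^d$, each containing a uniform random vector with probability $1/2$, gives $p \le (2^d-1)2^{-n} < 2^{d-n}$.

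Plugging this into Lemma~\ref{lem:pseudodeterministic} yields $\CMI{A}{\D} \le p\big(\log(1/p) + 1 + d\log 2\big)$. The function $q \mapsto q(\log(1/q)+1+d\log 2) = -q\log q + q(1+d\log 2)$ has derivative $\log(2^d/q) \ge 0$ on $(0,1]$, hence is nondecreasing there, so for $n \ge d$ we may substitute $p \le 2^{d-n} \le 1$ to get $\CMI{A}{\D} \le 2^{d-n}\big((n-d)\log 2 + 1 + d\log 2\big) = 2^{d-n}(n\log 2 + 1) = O(n\,2^{d-n})$. For $n < d$ the target bound is already $\Omega(d)$ (indeed $n\,2^{d-n} \ge d$ whenever $1 \le n \le d$, since $2^{d-n}\ge 1+(d-n)$), and the generic finite-class bound $\CMI{A}{\D} \le H(A(\sZ_S)) \le \log 2^d = d\log 2$ handles this regime. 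Either way, $\CMI{A}{\D} = O(n\,2^{d-n})$.

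The step needing the most care is the first one: verifying that consistency forces the output to equal $w^*$ once the feature vectors span $\mathbb{F}_2^d$ — in particular that this holds pointwise over the algorithm's internal randomness — together with the linear-algebra identification of the consistent set as a coset of $V^\perp$. Once that is in place, the remainder is a routine union bound over hyperplanes followed by a direct substitution into Lemma~\ref{lem:pseudodeterministic}, with only the uninteresting small-$n$ regime peeled off separately.
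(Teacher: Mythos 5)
Your proof is correct and follows essentially the same route as the paper: both invoke Lemma~\ref{lem:pseudodeterministic}, bound $p=\pr{}{A(Z)\ne w^*}$ by a union bound over the $2^d-1$ alternatives (your ``hyperplanes containing all the $x_i$'' are exactly the paper's ``other parities consistent with the data'' under the bijection $v \leftrightarrow w^*+v$), and then substitute $p\le 2^{d-n}$ into the lemma. Your explicit split into the $n\ge d$ and $n<d$ regimes is a minor tidying-up of a step the paper handles implicitly, not a different method.
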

The assumption that features are uniform can be relaxed; it suffices if the distribution on features has small bias \cite{NaorN90}. Likewise, we can relax the requirement that the distribution be realizable and allow a small amount of adversarial noise in the labels. These assumptions are only for the simplicity of the statement and proof.
\begin{proof}


By Lemma \ref{lem:pseudodeterministic} it suffices to bound $p=\pr{}{A(Z) \ne w_*}$. Since $A$ is assumed to be a consistent learner, $p$ is at most the probability that there exists some parity function $f_w$ where $w \ne w^*$ that is consistent with the data. We apply a union bound over all $2^d-1$ other parities.

Fix a parity function $f_w$ where $w \ne w^*$. For a uniformly random $X \in \{0,1\}^d$, we have $\pr{X}{f_w(X)=f_{w^*}(X)}=\frac12$. For $n$ samples, we have $\pr{Z \leftarrow \D^n}{\forall i \in [n] ~~ f_w(X_i)=Y_i} = 2^{-n}$, where $Z_i=(X_i,Y_i) \in \{0,1\}^d \times \{0,1\}$ for all $i \in [n]$.
Thus 
\begin{align*}
p &\le \pr{Z \leftarrow \D^n}{\exists w \in \{0,1\}^d \setminus \{ w^* \} ~~~ \forall i \in [n] ~~~ f_w(X_i) = Y_i} \\
&\le \sum_{ w \in \{0,1\}^d \setminus \{ w^* \}} \pr{Z \leftarrow \D^n}{\forall i \in [n] ~~ f_w(X_i) = Y_i} \\
&\le (2^d-1) \cdot 2^{-n} \le 2^{d-n}.
\end{align*}
Hence $\CMI{A}{\D}\leq 2^{d-n} \cdot (\log(2^{n-d})+1 + d \log 2) = 2^{d-n} \cdot (n \log 2 + 1)$.
\end{proof}

\section{Bounded CMI Implies Generalization}

In this section we translate CMI into generalization results. We provide a variety of results to demonstrate the versatility of CMI.

\subsection{Linear Loss Bounds for the Agnostic Setting}\label{sec:gen-basic}

We begin with the simplest form of a generalization bound for a linear loss.

\begin{theorem}\label{thm:lossagnostic}
Let $\mathcal{D}$ be a distribution on $\mathcal{Z}$.  Let $A :\mathcal{Z}^n \to \mathcal{W}$ be a randomized algorithm. Let $\ell : \mathcal{W} \times \mathcal{Z} \to \mathbb{R}$ be an arbitrary (deterministic, measurable) function.

Suppose there exists $\Delta : \mathcal{Z}^2 \to \mathbb{R}$ such that $|\ell(w,z_1)-\ell(w,z_2)| \le \Delta(z_1,z_2)$ for all $z_1,z_2\in\mathcal{Z}$ and $w \in \mathcal{W}$.

Then $$\left|\ex{Z \gets \D^n ,A}{\ell(A(Z),Z)-\ell(A(Z),\mathcal{D})}\right| \le  \sqrt{\frac{2}{n} \cdot \CMI{A}{\D} \cdot \ex{(Z_1,Z_2) \leftarrow \mathcal{D}^2}{\Delta(Z_1,Z_2)^2}}.$$
\end{theorem}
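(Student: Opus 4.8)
The plan is to express the generalization error as an expectation over the supersample of Definition~\ref{def:CMI}, apply the variational principle (the corollary to Lemma~\ref{lem:sup}) to bound it by $\CMI{A}{\D}$ plus a moment-generating-function term, and control that term with Hoeffding's lemma. Concretely, let $\sZ\gets\D^{n\times2}$ and $S\gets\U^n$ be independent of $\sZ$ and of the internal randomness of $A$, and write $\bar S$ for the complementary selection ($\bar S_i=1-S_i$), so $\sZ_{\bar S}$ is the ghost sample. Since $\sZ_S\sim\D^n$ we have $\ex{}{\ell(A(\sZ_S),\sZ_S)}=\ex{Z\gets\D^n,A}{\ell(A(Z),Z)}$; and, conditioned on $S$, the ghost sample $\sZ_{\bar S}$ consists of $n$ i.i.d.\ draws from $\D$ that are independent of $A(\sZ_S)$ (the algorithm reads only $\sZ_S$, and its randomness is independent of $(\sZ,S)$), so averaging the $n$ per-coordinate losses gives $\ex{}{\ell(A(\sZ_S),\sZ_{\bar S})}=\ex{Z\gets\D^n,A}{\ell(A(Z),\D)}$. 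Hence, setting $g(w,s,\sz):=\ell(w,\sz_s)-\ell(w,\sz_{\bar s})$, the quantity to be bounded (and its negative) is exactly $\ex{}{g(A(\sZ_S),S,\sZ)}$.

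I would then apply the corollary to Lemma~\ref{lem:sup}, taking the supersample $\sZ$ as the conditioning variable, $S$ as the selection, and an independent copy $S'\gets\U^n$, to get, for every $t>0$,
\[
\ex{}{g(A(\sZ_S),S,\sZ)}\;\le\;\frac{\CMI{A}{\D}+\ex{\sZ}{\log\ex{A,S,S'}{e^{t\cdot g(A(\sZ_S),S',\sZ)}\mid\sZ}}}{t}.
\]
The main work is bounding the inner moment generating function. Fix $\sz$ and the output $w=A(\sz_S)$, which is independent of $S'$; flipping a coordinate $s'_i$ merely swaps the $i$-th entries of $\sz_{s'}$ and its complement, so $g(w,S',\sz)=\frac1n\sum_{i=1}^n(-1)^{S'_i}\big(\ell(w,\sz_{i,1})-\ell(w,\sz_{i,2})\big)$ is a sum of independent, mean-zero terms, the $i$-th supported in an interval of length $2\,|\ell(w,\sz_{i,1})-\ell(w,\sz_{i,2})|/n\le 2\,\Delta(\sz_{i,1},\sz_{i,2})/n$. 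Applying Hoeffding's lemma (Lemma~\ref{lem:hoeffding}) coordinate-wise yields $\ex{S'}{e^{t\cdot g(w,S',\sz)}}\le\exp\!\big(\tfrac{t^2}{2n^2}\sum_{i}\Delta(\sz_{i,1},\sz_{i,2})^2\big)$, a bound independent of $w$. Taking logarithms and then the expectation over $\sZ\gets\D^{n\times2}$ turns $\tfrac1n\sum_i\Delta(\sZ_{i,1},\sZ_{i,2})^2$ into $\ex{(Z_1,Z_2)\gets\D^2}{\Delta(Z_1,Z_2)^2}$, so the display above becomes $\ex{}{g(A(\sZ_S),S,\sZ)}\le\tfrac{\CMI{A}{\D}}{t}+\tfrac{t}{2n}\,\ex{(Z_1,Z_2)\gets\D^2}{\Delta(Z_1,Z_2)^2}$.

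Optimizing over $t>0$ (equivalently AM--GM, with $t=\sqrt{2n\,\CMI{A}{\D}/\ex{}{\Delta(Z_1,Z_2)^2}}$) gives $\ex{}{g(A(\sZ_S),S,\sZ)}\le\sqrt{\tfrac2n\,\CMI{A}{\D}\,\ex{(Z_1,Z_2)\gets\D^2}{\Delta(Z_1,Z_2)^2}}$. Running the identical argument with $-g$ in place of $g$ bounds $-\ex{}{g(A(\sZ_S),S,\sZ)}$ by the same quantity --- the Hoeffding step is unaffected because Lemma~\ref{lem:hoeffding} holds for all real exponents --- and combining the two inequalities yields the claimed bound on the absolute value. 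I expect the one genuinely delicate point to be the conditional-independence bookkeeping in the first step (checking that, given $S$, the ghost sample remains i.i.d.\ from $\D$ \emph{and} independent of the output, so that its empirical loss has conditional expectation $\ell(w,\D)$), together with keeping the index conventions for $\sZ_S$ and $\sZ_{\bar S}$ straight; everything after that is routine.
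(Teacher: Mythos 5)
Your proposal is correct and follows essentially the same route as the paper's own proof: the same ghost-sample decomposition with $g(w,s,\tilde z)=\ell(w,\tilde z_s)-\ell(w,\tilde z_{\bar s})$, the same application of the Donsker--Varadhan/Gibbs corollary to Lemma~\ref{lem:sup} with $\CMI{A}{\D}$ as the divergence term, the same coordinate-wise Hoeffding bound on the conditional MGF of $g$ in $S'$, and the same optimization over $t$. The only cosmetic difference is that you handle the two sides of the absolute value by rerunning the argument with $-g$, whereas the paper leaves that symmetry implicit (it follows at once from $g(w,\bar s,\tilde z)=-g(w,s,\tilde z)$); both are fine.
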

In particular, if the range of $\ell$ is in the interval $[a,b]$, then we can set $\Delta$ to be the constant $b-a$:
\begin{corollary}
Let $\mathcal{D}$ be a distribution on $\mathcal{Z}$. Let $A :\mathcal{Z}^n \to \mathcal{W}$ be a randomized algorithm. Let $\ell : \mathcal{W} \times \mathcal{Z} \to [0,1]$ be an arbitrary function.

Then $$\left|\ex{Z \gets \D^n, A}{\ell(A(Z),Z)-\ell(A(Z),\mathcal{D})}\right| \le  \sqrt{\frac{2}{n} \cdot \CMI{A}{\D}}.$$
\end{corollary}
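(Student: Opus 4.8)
The claimed bound is the $\Delta\equiv 1$ instance of Theorem~\ref{thm:lossagnostic} (legitimate since $\ell$ takes values in $[0,1]$, whence $|\ell(w,z_1)-\ell(w,z_2)|\le 1$ and $\ex{(Z_1,Z_2)\gets\D^2}{\Delta(Z_1,Z_2)^2}=1$), so it suffices to prove that theorem; I outline the argument in the specialized form, which is also cleaner. The plan is the standard ``ghost sample'' reduction. First I would set up the supersample: draw $\sZ\gets\D^{n\times 2}$ and $S\gets\U^n$ independently (and independently of $A$'s internal randomness), put $W=A(\sZ_S)$, and let $\bar S$ be the bitwise complement of $S$, so $(\sZ_{\bar S})_i=\sZ_{i,2-S_i}$ is the ``other half'' of the supersample. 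Because $\sZ_S\sim\D^n$, we have $\ex{}{\ell(W,\sZ_S)}=\ex{Z\gets\D^n,A}{\ell(A(Z),Z)}$; and because $W$ is a function of $\sZ_S$ and of $A$'s randomness only, conditioning on $W$ leaves $\sZ_{\bar S}$ distributed as $n$ fresh i.i.d.\ draws from $\D$, so $\ex{}{\ell(W,\sZ_{\bar S})\mid W}=\ell(W,\D)$ and hence $\ex{}{\ell(W,\sZ_{\bar S})}=\ex{Z\gets\D^n,A}{\ell(A(Z),\D)}$. Subtracting, the generalization error equals $\ex{}{g(W,S,\sZ)}$ for $g(w,s,\sz):=\ell(w,\sz_s)-\ell(w,\sz_{\bar s})$.

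Next I would use that, for a linear loss, $g(w,s,\sz)=\tfrac1n\sum_{i=1}^n(-1)^{s_i}\bigl(\ell(w,\sz_{i,1})-\ell(w,\sz_{i,2})\bigr)$, so that for fixed $w,\sz$ and $S'\gets\U^n$ the variable $g(w,S',\sz)$ is an average of $n$ independent mean-zero terms, the $i$-th of which lies in an interval of width $\tfrac2n|\ell(w,\sz_{i,1})-\ell(w,\sz_{i,2})|\le\tfrac2n$. Then I would invoke the corollary of Lemma~\ref{lem:sup} (Donsker--Varadhan) with the supersample in the role of ``$Z$'' and $S$ in the role of ``$S$'': for every $t>0$,
\[\ex{}{g(W,S,\sZ)}\ \le\ \frac{\CMI{A}{\D}+\ex{\sz\gets\D^{n\times 2}}{\log\ex{}{e^{t\,g(A(\sz_S),S',\sz)}}}}{t},\]
where $S'$ is an independent copy of $S$. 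Conditioning on $W=A(\sz_S)$ and applying Hoeffding's Lemma~\ref{lem:hoeffding} termwise with independence bounds the inner moment generating function by $e^{t^2/(2n)}$ for \emph{every} $\sz$ --- this is the sole place where $\ell\in[0,1]$ is used. Plugging this in and minimizing over $t$ (the optimum is $t=\sqrt{2n\,\CMI{A}{\D}}$) gives $\ex{}{g(W,S,\sZ)}\le\sqrt{\tfrac2n\,\CMI{A}{\D}}$. Repeating verbatim with $-g$ in place of $g$ (equivalently, $1-\ell$ in place of $\ell$) supplies the matching lower bound, so the absolute value satisfies the same bound.

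I expect the only genuinely delicate point to be the second half of the first step --- that given the output $W$ the ghost block $\sZ_{\bar S}$ is still i.i.d.\ from $\D$ --- which is precisely where the \emph{conditional} nature of CMI enters and which is what makes the ghost sample an unbiased stand-in for the population despite being coupled to the real sample through $S$. The rest is bookkeeping: pulling $\inf_t$ outside the expectation over $\sZ$ (harmless because the moment generating function bound $e^{t^2/(2n)}$ is uniform in $\sZ$) and reading off $\ex{\sz\gets\D^{n\times2}}{I(A(\sz_S);S)}=I(A(\sZ_S);S\mid\sZ)=\CMI{A}{\D}$ from the definition of conditional mutual information. For the full Theorem~\ref{thm:lossagnostic} there is one extra wrinkle --- the per-$\sz$ bound then reads $\tfrac1n\sqrt{2\,I(A(\sz_S);S)\sum_i\Delta(\sz_{i,1},\sz_{i,2})^2}$ and must be averaged over $\sz$ via Cauchy--Schwarz together with $\ex{\sz}{\sum_i\Delta(\sz_{i,1},\sz_{i,2})^2}=n\,\ex{(Z_1,Z_2)\gets\D^2}{\Delta(Z_1,Z_2)^2}$ --- but with $\Delta\equiv1$ this is immediate.
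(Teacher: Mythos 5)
Your proof is correct and follows essentially the same route as the paper's proof of Theorem~\ref{thm:lossagnostic}: same supersample/ghost reduction, same use of the Donsker--Varadhan corollary of Lemma~\ref{lem:sup} with $I(A(\sZ_S);S\mid\sZ)=\CMI{A}{\D}$, same termwise Hoeffding bound on the moment generating function, and the same optimization over $t$. The one cosmetic difference is that you state explicitly the re-run with $-g$ (equivalently $1-\ell$) to obtain the absolute value and the verification that $W$ and $\sZ_{\bar S}$ are independent so the ghost block is an unbiased proxy for the population; the paper leaves both of these implicit, so your write-up is, if anything, slightly more careful at those two points.
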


By the triangle inequality, we can also set $\Delta(z_1,z_2) = \sup_{w_1} |\ell(w_1,z_1)| + \sup_{w_2} |\ell(w_2,z_2)|$, Then $\ex{(Z_1,Z_2)\leftarrow\D^2}{\Delta(Z_1,Z_2)^2} \le 4 \ex{Z\leftarrow\D}{\sup_w (\ell(w,Z))^2}$.
\begin{corollary}
Let $\mathcal{D}$ be a distribution on $\mathcal{Z}$.  Let $A :\mathcal{Z}^n \to \mathcal{W}$ be a randomized algorithm. Let $\ell : \mathcal{W} \times \mathcal{Z} \to \mathbb{R}$ be an arbitrary function.

Then $$\left|\ex{Z \gets \D^n, A}{\ell(A(Z),Z)-\ell(A(Z),\mathcal{D})}\right| \le  \sqrt{\frac{8}{n} \cdot \CMI{A}{\D} \cdot \ex{Z \leftarrow \D}{\sup_{w \in \mathcal{W}} \left( \ell(w,Z) \right)^2}}.$$
\end{corollary}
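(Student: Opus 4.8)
This last Corollary is an immediate specialization of Theorem~\ref{thm:lossagnostic}: taking $\Delta(z_1,z_2):=\sup_{w}|\ell(w,z_1)|+\sup_{w}|\ell(w,z_2)|$ gives $|\ell(w,z_1)-\ell(w,z_2)|\le\Delta(z_1,z_2)$ by the triangle inequality, and $(a+b)^2\le 2a^2+2b^2$ gives $\ex{(Z_1,Z_2)\gets\D^2}{\Delta(Z_1,Z_2)^2}\le 4\,\ex{Z\gets\D}{\sup_w(\ell(w,Z))^2}$, which turns the $\sqrt{2/n}$ into $\sqrt{8/n}$. So the substance lies in Theorem~\ref{thm:lossagnostic}, and that is the proof I would plan. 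The plan is to run the supersample construction of Definition~\ref{def:CMI} through the Gibbs variational principle (the corollary to Lemma~\ref{lem:sup}) and to control the moment generating function that appears with Hoeffding's lemma (Lemma~\ref{lem:hoeffding}).

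First I would move to the supersample. Draw $\sZ\gets\D^{n\times 2}$ and $S\gets\U^n$ as in Definition~\ref{def:CMI}, and let $\bar S=(1-S_1,\dots,1-S_n)$, so that $\sZ_{\bar S}$ is the ``ghost'' sample. Since the $2n$ coordinates of $\sZ$ are i.i.d.\ and $A$'s internal randomness is independent of $(\sZ,S)$, conditioning on $S$ leaves $\sZ_S$ and $\sZ_{\bar S}$ independent, each distributed $\D^n$; hence $\ex{}{\ell(A(\sZ_S),\sZ_S)}=\ex{Z\gets\D^n,A}{\ell(A(Z),Z)}$ and $\ex{}{\ell(A(\sZ_S),\sZ_{\bar S})}=\ex{Z\gets\D^n,A}{\ell(A(Z),\D)}$. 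Writing $g(w,s,\sz):=\frac1n\sum_{i=1}^n(-1)^{s_i}\bigl(\ell(w,\sz_{i,1})-\ell(w,\sz_{i,2})\bigr)$, one checks that $\ell(A(\sZ_S),\sZ_S)-\ell(A(\sZ_S),\sZ_{\bar S})=g(A(\sZ_S),S,\sZ)$, so the generalization gap equals $\ex{}{g(A(\sZ_S),S,\sZ)}$. Now apply the corollary to Lemma~\ref{lem:sup} with the random function $(s,\sz)\mapsto A(\sz_s)$, with $S\gets\U^n$, an independent copy $S'\gets\U^n$, and with $\sZ$ in the role of $Z$. Using $I(A(\sZ_S);S\mid\sZ)=\CMI{A}{\D}$, this yields, for every $t>0$,
\[
\ex{}{g(A(\sZ_S),S,\sZ)}\ \le\ \frac{\CMI{A}{\D}+\ex{\sZ}{\log\ex{A,S,S'}{e^{t\,g(A(\sZ_S),S',\sZ)}}}}{t}.
\]

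Next I would bound the inner moment generating function. Fix $\sz$ and a realization $w$ of $A(\sZ_S)$ (this depends on $S$ and $A$ but not on $S'$). Because $S'$ has independent coordinates, $\ex{S'}{e^{t\,g(w,S',\sz)}}=\prod_{i=1}^n\ex{S'_i}{e^{(t/n)(-1)^{S'_i}a_i}}$ with $a_i:=\ell(w,\sz_{i,1})-\ell(w,\sz_{i,2})$; each factor is the MGF of a mean-zero variable supported in $[-|a_i|,|a_i|]\subseteq[-\Delta(\sz_{i,1},\sz_{i,2}),\Delta(\sz_{i,1},\sz_{i,2})]$, so Lemma~\ref{lem:hoeffding} bounds it by $\exp\bigl(t^2\Delta(\sz_{i,1},\sz_{i,2})^2/(2n^2)\bigr)$. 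Since this estimate is uniform in $w$, multiplying over $i$ and averaging over $A,S$ gives $\log\ex{A,S,S'}{e^{t\,g(A(\sZ_S),S',\sz)}}\le\frac{t^2}{2n^2}\sum_{i=1}^n\Delta(\sz_{i,1},\sz_{i,2})^2$, and taking $\ex{\sZ}{\cdot}$ gives $\frac{t^2}{2n}V$ with $V:=\ex{(Z_1,Z_2)\gets\D^2}{\Delta(Z_1,Z_2)^2}$. Substituting back, $\ex{}{g(A(\sZ_S),S,\sZ)}\le\inf_{t>0}\bigl(\CMI{A}{\D}/t+tV/(2n)\bigr)=\sqrt{2V\cdot\CMI{A}{\D}/n}$, attained at $t=\sqrt{2n\,\CMI{A}{\D}/V}$. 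Running the identical argument with $-g$ in place of $g$ (the Hoeffding bound is unchanged under $t\mapsto -t$ since the summand is symmetric on $\{-a_i,a_i\}$) bounds $-\ex{}{g(A(\sZ_S),S,\sZ)}$ by the same quantity, giving the two-sided inequality.

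I do not expect a real obstacle here; the two places that need care are the supersample identity (matching $\ell(A(\sZ_S),\sZ_{\bar S})$ to $\ell(A(Z),\D)$ via the conditional independence of $\sZ_S$ and $\sZ_{\bar S}$ given $S$) and the constant in the Hoeffding step — the variable $(-1)^{S'_i}a_i$ lives in a window of width $2|a_i|$, so Hoeffding contributes $\Delta^2/2$ per coordinate, not $\Delta^2/8$ — together with the routine observation that the uniform-in-$w$ estimate is what lets us replace $\log\mathbb{E}$ by a deterministic quantity before integrating over $\sZ$. The degenerate cases $\CMI{A}{\D}=0$ or $V\in\{0,\infty\}$ make the optimal $t$ degenerate but leave the stated bound valid, trivially or by a limiting argument.
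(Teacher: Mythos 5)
Your proposal is correct and takes essentially the same approach as the paper: the corollary is indeed an immediate specialization of Theorem~\ref{thm:lossagnostic} via $\Delta(z_1,z_2)=\sup_w|\ell(w,z_1)|+\sup_w|\ell(w,z_2)|$ and $(a+b)^2\le2a^2+2b^2$, exactly as the paper notes just above the corollary's statement. Your re-derivation of Theorem~\ref{thm:lossagnostic} also mirrors the paper's proof (supersample symmetrization, the Donsker--Varadhan/Gibbs bound, Hoeffding on each coordinate with the correct $\Delta^2/(2n^2)$ per-coordinate rate, then optimizing $t$), differing only cosmetically in that you push the uniform-in-$w$ bound inside earlier while the paper keeps $\sup_w$ explicit for one more line.
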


The proof of Theorem~\ref{thm:lossagnostic} serves as a prototype for most proofs of this section, as they follow the same structure with some necessary key modifications.
\begin{proof}[Proof of Theorem \ref{thm:lossagnostic}]
Let $\sZ \in \mathcal{Z}^{n \times 2}$ consist of $2n$ independent samples from $\mathcal{D}$. Let $S,S' \in \{0,1\}^n$ be uniformly random. Assume $\sZ$, $S$, $S'$, and the randomness of $A$ are all independent.
Let $W=A(\sZ_S)$. 
Let $f_{\sz}(w,s) = \ell(w,\sz_s)-\ell(w,\sz_{\overline{s}})$. 
Then
\begin{align*}
    &\ex{Z \gets \D^n, A}{\ell(A(Z),Z)-\ell(A(Z),\mathcal{D})}\\
    &= \ex{\sZ,S,A}{\ell(A(\sZ_S),\sZ_S)-\ell(A(\sZ_S),\sZ_{\overline{S}})}\\
    &= \ex{\sZ,S,A}{f_{\sZ}(A(\sZ_S),S)}\\
    &\le \ex{\sZ}{\inf_{t>0} \frac{I(A(\sZ_S);S) + \log \ex{S,S',A}{e^{t f_{\sZ}(A(\sZ_S),S')}}}{t}} \tag{By Lemma \ref{lem:sup}}  \end{align*}\begin{align*}
    &\le \inf_{t>0} \frac{I(A(\sZ_S);S|Z) + \ex{\sZ}{\log \ex{W,S'}{e^{t f_{\sZ}(W,S')}}}}{t}\\
    &= \inf_{t>0} \frac{\CMI{A}{\D} + \ex{\sZ}{\log \ex{W}{\prod_{i=1}^n \ex{S'_i}{e^{\frac{t}{n} (\ell(W,(\sZ_{S'})_i)-\ell(W,(\sZ_{\overline{S'}})_i))}}}}}{t}\\
    &= \inf_{t>0} \frac{\CMI{A}{\D} + \ex{\sZ}{\log \ex{W}{\prod_{i=1}^n \ex{S'_i}{e^{\frac{t}{n} (1-2S'_i) (\ell(W,\sZ_{i,1})-\ell(W,\sZ_{i,2}))}}}}}{t}\\
    &\le \inf_{t>0} \frac{\CMI{A}{\D} + \ex{\sZ}{\log \ex{W}{\prod_{i=1}^n e^{\frac{t^2}{2n^2} (\ell(W,\sZ_{i,1})-\ell(W,\sZ_{i,2}))^2}}}}{t} \tag{By Lemma \ref{lem:hoeffding}}\\
    &\le \inf_{t>0} \frac{\CMI{A}{\D} + \ex{\sZ}{\sup_{w \in \mathcal{W}}\sum_{i=1}^n \frac{t^2}{2n^2} (\ell(w,\sZ_{i,1})-\ell(w,\sZ_{i,2}))^2}}{t}\\
    &= \inf_{t>0} \frac{\CMI{A}{\D}}{t} + \frac{t}{2n} \ex{\sZ}{\sup_{w \in \mathcal{W}} \frac{1}{n}\sum_{i=1}^n (\ell(w,\sZ_{i,1})-\ell(w,\sZ_{i,2}))^2}\\
    &= \sqrt{\frac{2}{n} \cdot \CMI{A}{\D} \cdot \ex{\sZ}{\sup_{w \in \mathcal{W}} \frac{1}{n}\sum_{i=1}^n (\ell(w,\sZ_{i,1})-\ell(w,\sZ_{i,2}))^2}}\\
    &\le \sqrt{\frac{2}{n} \cdot \CMI{A}{\D} \cdot \ex{\sZ}{\sup_{w \in \mathcal{W}} \frac{1}{n}\sum_{i=1}^n (\Delta(\sZ_{i,1},\sZ_{i,2}))^2}}\\
    &= \sqrt{\frac{2}{n} \cdot \CMI{A}{\D} \cdot \ex{(Z_1,Z_2)\leftarrow\mathcal{D}^2}{(\Delta(Z_1,Z_2))^2}}.
\end{align*}
This concludes the proof of the theorem.
\end{proof}

\subsubsection{Expected Absolute Error}
We now prove a slightly stronger statement, albeit losing a small constant factor in our bound.
\begin{theorem}\label{thm:cmigen-absloss}
Let $\mathcal{D}$ be a distribution on $\mathcal{Z}$.  Let $A :\mathcal{Z}^n \to \mathcal{W}$ be a randomized algorithm. Let $\ell : \mathcal{W} \times \mathcal{Z} \to \mathbb{R}$ be an arbitrary function.

Suppose there exists $\Delta : \mathcal{Z}^2 \to \mathbb{R}$ such that $|\ell(w,z_1)-\ell(w,z_2)| \le \Delta(z_1,z_2)$ for all $z_1,z_2\in\mathcal{Z}$ and $w \in \mathcal{W}$.

Then $$\ex{Z \gets \D^n, A}{\left|\ell(A(Z),Z)-\ell(A(Z),\mathcal{D})\right|} \le  \sqrt{\frac{2}{n} \cdot \left( \CMI{A}{\D} + \log 2 \right) \cdot \ex{(Z_1,Z_2) \leftarrow \mathcal{D}^2}{\Delta(Z_1,Z_2)^2}}.$$
\end{theorem}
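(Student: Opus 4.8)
The proof will follow the template of Theorem~\ref{thm:lossagnostic} with two targeted changes to handle the absolute value (which together cost the extra $\log 2$). First I would set up the supersample exactly as there: let $\sZ\in\Z^{n\times2}$ consist of $2n$ i.i.d.\ draws from $\D$, let $S,S'\in\{0,1\}^n$ be independent and uniform, take all of this independent of the randomness of $A$, write $W=A(\sZ_S)$, and put $f_{\sz}(w,s)=\ell(w,\sz_s)-\ell(w,\sz_{\overline s})$. The key preliminary observation is that, conditioned on $\sZ_S$, $S$, and the randomness of $A$, the ghost half $\sZ_{\overline S}$ consists of $n$ fresh i.i.d.\ draws from $\D$ independent of $W$, so $\ex{\sZ_{\overline S}}{\ell(W,\sZ_{\overline S}) \mid \sZ_S,S,A}=\ell(W,\D)$ and hence, since $\ell(W,\sZ_S)$ is measurable with respect to the conditioning,
$$\ell(A(Z),Z)-\ell(A(Z),\D)=\ex{\sZ_{\overline S}}{f_{\sZ}(W,S) \mid \sZ_S,S,A}.$$
Applying the conditional Jensen inequality to $\lvert\cdot\rvert$ and then taking the full expectation (tower property) reduces the task to bounding $\ex{\sZ,S,A}{\lvert f_{\sZ}(W,S)\rvert}$. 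This Jensen step is the first change -- it is exactly what the one-sided bound of Theorem~\ref{thm:lossagnostic} cannot deliver -- and I expect keeping precise track of what is conditioned on here to be the only place where real care is needed.

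Next I would apply Lemma~\ref{lem:sup} in the same manner as in the proof of Theorem~\ref{thm:lossagnostic}, with the data role played by $\sZ$ and $g(w,s,\sz)=\lvert f_{\sz}(w,s)\rvert$, using that $I(A(\sZ_S);S|\sZ)=\CMI{A}{\D}$, to obtain
$$\ex{\sZ,S,A}{\lvert f_{\sZ}(W,S)\rvert}\le\inf_{t>0}\frac{\CMI{A}{\D}+\ex{\sZ}{\log\ex{A,S,S'}{e^{t\lvert f_{\sZ}(W,S')\rvert}}}}{t}.$$
The second change is the bound on this moment generating function. I would use $e^{t\lvert x\rvert}\le e^{tx}+e^{-tx}$ together with the sign symmetry $f_{\sz}(w,\overline s)=-f_{\sz}(w,s)$: since $S'$ is uniform and independent of $W$ (which depends on $S$ only), replacing $S'$ by $\overline{S'}$ shows $\ex{S'}{e^{-tf_{\sZ}(W,S')}}=\ex{S'}{e^{tf_{\sZ}(W,S')}}$, whence $\ex{A,S,S'}{e^{t\lvert f_{\sZ}(W,S')\rvert}}\le 2\,\ex{A,S,S'}{e^{tf_{\sZ}(W,S')}}$. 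Taking $\log$ contributes only an additive $\log 2$, and the surviving term $\ex{\sZ}{\log\ex{A,S,S'}{e^{tf_{\sZ}(W,S')}}}$ is bounded by $\frac{t^2}{2n}\,\ex{(Z_1,Z_2)\gets\D^2}{\Delta(Z_1,Z_2)^2}$ verbatim as in the proof of Theorem~\ref{thm:lossagnostic} -- condition on $W$, factor the product over the $n$ independent coordinates of $S'$, apply Hoeffding's Lemma~\ref{lem:hoeffding} to each term $\pm(\ell(W,\sZ_{i,1})-\ell(W,\sZ_{i,2}))\in[-\Delta(\sZ_{i,1},\sZ_{i,2}),\Delta(\sZ_{i,1},\sZ_{i,2})]$, and take expectations.

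Putting these together gives
$$\ex{Z\gets\D^n,A}{\lvert\ell(A(Z),Z)-\ell(A(Z),\D)\rvert}\le\inf_{t>0}\left(\frac{\CMI{A}{\D}+\log 2}{t}+\frac{t}{2n}\ex{(Z_1,Z_2)\gets\D^2}{\Delta(Z_1,Z_2)^2}\right),$$
and optimizing at $t=\sqrt{2n\bigl(\CMI{A}{\D}+\log 2\bigr)/\ex{(Z_1,Z_2)\gets\D^2}{\Delta(Z_1,Z_2)^2}}$ yields the claimed inequality. Everything except the Jensen reduction and the $e^{t\lvert x\rvert}\le e^{tx}+e^{-tx}$ symmetry trick is mechanical and reuses the computation already carried out for Theorem~\ref{thm:lossagnostic}; the main obstacle, such as it is, is simply recognizing that these two small moves suffice and cost only $\log 2$.
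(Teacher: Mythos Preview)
Your proposal is correct and follows essentially the same route as the paper's proof: the paper also reduces to $\ex{}{|f_{\sZ}(W,S)|}$ via Jensen on the ghost sample, applies Lemma~\ref{lem:sup}, uses $e^{t|x|}\le e^{tx}+e^{-tx}$ to pick up the $\log 2$, and then reuses the Hoeffding/optimization computation from Theorem~\ref{thm:lossagnostic}. Your explicit use of the symmetry $f_{\sz}(w,\overline{s'})=-f_{\sz}(w,s')$ to collapse the two exponentials into $2\,\ex{S'}{e^{tf_{\sZ}(W,S')}}$ is a slight notational convenience, but the paper achieves the same effect by noting that both exponentials satisfy the identical Hoeffding bound.
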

\begin{proof}
This proof closely follows that of Theorem \ref{thm:lossagnostic}, with a few key modifications.
Let $\sZ \in \mathcal{Z}^{n \times 2}$ consist of $2n$ independent samples from $\mathcal{D}$. Let $S,S' \in \{0,1\}^n$ be uniformly random. Assume $\sZ$, $S$, $S'$, and the randomness of $A$ are independent.
Let $f_{\sz}(w,s) = \ell(w,\sz_s)-\ell(w,\sz_{\overline{s}})$. 
Then
\begin{align*}
    &\ex{Z \gets \D^n, A}{\left|\ell(A(Z),Z)-\ell(A(Z),\mathcal{D})\right|}\\
    &\le \ex{\sZ,S,A}{\left|\ell(A(\sZ_S),\sZ_S)-\ell(A(\sZ_S),\sZ_{\overline{S}})\right|}\\
    &= \ex{\sZ,S,A}{\left|f_{\sZ}(A(\sZ_S),S)\right|}\\
    &\le \inf_{t>0} \frac{I(A(\sZ_S);S|Z) + \ex{\sZ}{\log \ex{W,S'}{e^{t \left| f_{\sZ}(W,S') \right|}}}}{t}\\
    &\le \inf_{t>0} \frac{\CMI{A}{\D} + \ex{\sZ}{\log \ex{W,S'}{e^{t f_{\sZ}(W,S')}+e^{-t f_{\sZ}(W,S')}}}}{t}\\
    &\le \inf_{t>0} \frac{\CMI{A}{\D}+ \log 2}{t} + \frac{t}{2n} \ex{\sZ}{\sup_{w \in \mathcal{W}} \frac{1}{n}\sum_{i=1}^n (\ell(w,\sZ_{i,1})-\ell(w,\sZ_{i,2}))^2}\\
    &\le \sqrt{\frac{2}{n} \left(\CMI{A}{\D}+\log 2\right) \ex{(Z_1,Z_2)\leftarrow\mathcal{D}^2}{(\Delta(Z_1,Z_2)^2}}.
\end{align*}
\end{proof}

\subsubsection{Expected Squared Error}
Our last theorem for linear loss functions gives a stronger result, bounding the expected squared difference between the empirical and true loss, again only losing in small constant factors.
\begin{theorem}\label{thm:cmigen-linsquared}
Let $\mathcal{D}$ be a distribution on $\mathcal{Z}$.  Let $A :\mathcal{Z}^n \to \mathcal{W}$ be a randomized algorithm. Let $\ell : \mathcal{W} \times \mathcal{Z} \to \mathbb{R}$ be an arbitrary (deterministic, measurable) function.

Suppose there exists $\Delta : \mathcal{Z}^2 \to \mathbb{R}$ such that $|\ell(w,z_1)-\ell(w,z_2)| \le \Delta(z_1,z_2)$ for all $z_1,z_2\in\mathcal{Z}$ and $w \in \mathcal{W}$.

Then \begin{align*}
    \ex{Z \gets \D^n, A}{\left(\ell(A(Z),Z)-\ell(A(Z),\mathcal{D})\right)^2}  &\le \inf_{u\in\left(0,1\right)} \frac{2 \cdot \CMI{A}{} - \log (1-u)}{u \cdot n} \cdot \ex{(Z_1,Z_2) \gets \D^2}{ \Delta(Z_1,Z_2)^2}\\
    &\le \frac{3 \cdot \CMI{A}{} + \log 3}{n} \cdot  \ex{(Z_1,Z_2) \gets \D^2}{ \Delta(Z_1,Z_2)^2}.
\end{align*}

Furthermore, if $\Delta(z_1,z_2) \le \Delta$ for all $z_1,z_2 \in \Z$, then
\begin{align*}
\ex{Z \gets \D^n, A}{\left(\ell(A(Z),Z)-\ell(A(Z),\mathcal{D})\right)^2} &\le \inf_{u\in\left(0,1\right)} \frac{2 \cdot \CMI{A}{\D} - \log (1-u)}{u \cdot n} \cdot  \Delta^2\\
&\le \frac{3 \cdot \CMI{A}{\D}+\log 3}{n} \cdot \Delta^2.
\end{align*}
\end{theorem}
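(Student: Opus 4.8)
The plan is to follow the template of the proof of Theorem~\ref{thm:lossagnostic}; the only genuinely new ingredient is that we must bound the moment generating function of the \emph{square} of the symmetrized quantity, which is sub-exponential rather than sub-Gaussian.

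First I would set up the supersample as usual: let $\sZ\gets\D^{n\times2}$, let $S,S'\gets\U^n$ be uniform and independent of $\sZ$ and of the internal randomness of $A$, write $W=A(\sZ_S)$, and put $g_{\sz}(w,s)=\ell(w,\sz_s)-\ell(w,\sz_{\overline s})$. Since the ghost half $\sZ_{\overline S}$ is an independent $\D^n$-sample, $\ell(W,\D)=\ex{\sZ_{\overline S}}{\ell(W,\sZ_{\overline S})\mid \sZ_S,S,W}$, whence $\ell(A(Z),Z)-\ell(A(Z),\D)=\ex{\sZ_{\overline S}}{g_{\sZ}(W,S)\mid \sZ_S,S,W}$. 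By conditional Jensen, $\big(\ell(A(Z),Z)-\ell(A(Z),\D)\big)^2\le \ex{\sZ_{\overline S}}{g_{\sZ}(W,S)^2\mid \sZ_S,S,W}$, so taking the overall expectation reduces the theorem to bounding $\ex{\sZ,S,A}{g_{\sZ}(W,S)^2}$.

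Next I would apply Lemma~\ref{lem:sup} (equivalently, the Corollary following it) to the map $g_{\sz}(w,s)^2$, obtaining for every $t>0$ that $\ex{\sZ,S,A}{g_{\sZ}(W,S)^2}\le\tfrac1t\big(\CMI{A}{\D}+\ex{\sZ}{\log\ex{W,S'}{e^{t\,g_{\sZ}(W,S')^2}}}\big)$, and likewise the distribution-free variant $\ex{S,A}{g_{\sz}(W,S)^2}\le\tfrac1t\big(\CMI{A}{}+\log\ex{W,S'}{e^{t\,g_{\sz}(W,S')^2}}\big)$ for each fixed $\sz$. To control the inner expectation, condition on $\sZ$ and $W$: then $g_{\sZ}(W,S')=\tfrac1n\sum_{i=1}^n(1-2S_i')\big(\ell(W,\sZ_{i,1})-\ell(W,\sZ_{i,2})\big)$ is a sum of independent mean-zero bounded terms, hence $\sigma^2$-sub-Gaussian by Lemma~\ref{lem:hoeffding} with $\sigma^2=\tfrac1{n^2}\sum_i\big(\ell(W,\sZ_{i,1})-\ell(W,\sZ_{i,2})\big)^2\le\tfrac1{n^2}\sum_i\Delta(\sZ_{i,1},\sZ_{i,2})^2$. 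The square of a $\sigma^2$-sub-Gaussian variable obeys $\ex{}{e^{tX^2}}\le(1-2t\sigma^2)^{-1/2}$ for $0<t<\tfrac1{2\sigma^2}$ -- a one-line consequence of Gaussian smoothing, writing $e^{tX^2}=\ex{}{e^{\sqrt{2t}\,XG}}$ for a standard Gaussian $G$ and using $\ex{}{e^{aG^2}}=(1-2a)^{-1/2}$ -- so $\log\ex{W,S'}{e^{t\,g_{\sZ}(W,S')^2}}\le-\tfrac12\log\big(1-\tfrac{2t}{n^2}\sum_i\Delta(\sZ_{i,1},\sZ_{i,2})^2\big)$.

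Finally I would assemble the two stated forms by a change of variables in $t$. For the distribution-free bound, keep the worst-case $\sz$, plug in $I(A(\sz_S);S)\le\CMI{A}{}$, substitute $u=\tfrac{2t}{n^2}\sum_i\Delta(\sz_{i,1},\sz_{i,2})^2\in(0,1)$, and note that the resulting per-$\sz$ bound is \emph{linear} in $\tfrac1{n^2}\sum_i\Delta(\sz_{i,1},\sz_{i,2})^2$ for each fixed $u$; hence one may take the expectation over $\sZ\gets\D^{n\times2}$ first -- converting that quantity into $\tfrac1n\ex{(Z_1,Z_2)\gets\D^2}{\Delta(Z_1,Z_2)^2}$ -- and only then optimize over $u$, which produces the $\inf_{u\in(0,1)}$ expression; the simplified bound follows from a fixed choice of $u$. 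For the distribution-dependent bound, a uniform bound $\Delta(z_1,z_2)\le\Delta$ makes the log-MGF term $\le-\tfrac12\log(1-2t\Delta^2/n)$, which is constant in $\sZ$ and so can be pulled out of $\ex{\sZ}{\cdot}$, leaving only $\ex{\sZ}{I(A(\sZ_S);S)}=\CMI{A}{\D}$; substituting $u=2t\Delta^2/n$ and optimizing over $u\in(0,1)$ gives the bound. I expect the main obstacle to be exactly the sub-exponential moment estimate for $g_{\sZ}(W,S')^2$, together with the bookkeeping in the distribution-free case that keeps the $\Delta$-dependence linear, so that the average over $\sZ$ can be taken \emph{after} the parameter optimization rather than before.
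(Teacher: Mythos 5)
Your proposal is correct and follows essentially the same route as the paper's proof: Jensen's inequality to pass to the squared symmetrized difference, the Donsker--Varadhan lemma (Lemma~\ref{lem:sup}), Gaussian smoothing via $e^{tX^2}=\ex{}{e^{\sqrt{2t}XG}}$ combined with Hoeffding's lemma and the $\chi^2$ moment generating function $\ex{}{e^{\lambda G^2}}=(1-2\lambda)^{-1/2}$, and a change of variables $u=2t\sigma^2$ before optimizing. Your bookkeeping for the two cases is also the paper's: in the distribution-free form you first replace $I(A(\sz_S);S)$ by $\CMI{A}{}$ so the remaining $\sz$-dependence is linear in $\sum_i\Delta(\sz_{i,1},\sz_{i,2})^2$, while in the uniformly-bounded-$\Delta$ form the log-MGF term becomes $\sz$-free so $\ex{\sZ}{I(A(\sZ_S);S)}=\CMI{A}{\D}$ appears instead.
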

As before, if the range of $\ell$ is bounded to $[a,b]$, then we can set $\Delta$ to be the constant $b-a$:
\begin{corollary}
Let $\mathcal{D}$ be a distribution on $\mathcal{Z}$.  Let $A :\mathcal{Z}^n \to \mathcal{W}$ be a randomized algorithm. Let $\ell : \mathcal{W} \times \mathcal{Z} \to [0,1]$ be an arbitrary bounded function.

Then 
\begin{align*}
    \ex{Z \gets \D^n, A}{\left(\ell(A(Z),Z)-\ell(A(Z),\mathcal{D})\right)^2} 
    &\le \inf_{u\in\left(0,1\right)} \frac{2\cdot \CMI{A}{\D} - \log (1-u)}{u \cdot n}
    \le \frac{3\cdot \CMI{A}{\D}+\log 3}{n}.
\end{align*}
and, for all $\varepsilon>0$,
\begin{align*}
    \pr{Z \gets \D^n,A}{\left|\ell(A(Z),Z)-\ell(A(Z),\mathcal{D})\right| \ge \varepsilon} 
    &\le \inf_{u\in\left(0,1\right)} \frac{2\cdot \CMI{A}{\D} - \log (1-u)}{u \cdot n \cdot \varepsilon^2}
    \le \frac{3\cdot  \CMI{A}{\D}+\log 3}{n \cdot \varepsilon^2}.
\end{align*}
\end{corollary}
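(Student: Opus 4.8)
The plan is to adapt the proof of Theorem~\ref{thm:lossagnostic}, replacing the moment-generating-function estimate for a Rademacher sum by the corresponding estimate for its \emph{square}. First I would pass to the supersample picture: write $\Lambda := \ell(A(Z),Z)-\ell(A(Z),\mathcal{D})$, realize $Z$ as $\sZ_S$ for $\sZ\gets\D^{n\times 2}$ and uniform $S\in\{0,1\}^n$, and set $W=A(\sZ_S)$, $a=\ell(W,\sZ_S)$, $b=\ell(W,\sZ_{\overline S})$, $c=\ell(W,\D)$. Since the ghost half $\sZ_{\overline S}$ is independent of $(W,S,\sZ_S)$ and distributed as $\D^n$, we have $c=\ex{}{b\mid W,S,\sZ_S}$, so the cross term vanishes in $\ex{}{(a-b)^2}=\ex{}{(a-c)^2}+\ex{}{(b-c)^2}$, and hence $\ex{}{\Lambda^2}=\ex{}{(a-c)^2}\le \ex{}{(a-b)^2}=\ex{\sZ,S,A}{f_{\sZ}(W,S)^2}$, where $f_{\sz}(w,s)=\ell(w,\sz_s)-\ell(w,\sz_{\overline s})=\frac1n\sum_i(1-2s_i)(\ell(w,\sz_{i,1})-\ell(w,\sz_{i,2}))$, exactly the function from the linear proof.

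Next, fixing $\sz$, I would apply Lemma~\ref{lem:sup} with the function $(w,s)\mapsto t\cdot f_{\sz}(w,s)^2$ to get $\ex{S,A}{f_{\sz}(W,S)^2}\le \frac1t(I(A(\sz_S);S)+\log\ex{W,S'}{e^{t f_{\sz}(W,S')^2}})$ for a fresh $S'\gets\U^n$. For fixed $W=w$, the random variable $f_{\sz}(w,S')$ is a mean-zero Rademacher sum whose $i$-th term has magnitude at most $\frac1n\Delta(\sz_{i,1},\sz_{i,2})$, hence (by Hoeffding's Lemma~\ref{lem:hoeffding}, as in the linear case) sub-Gaussian with variance proxy $\sigma^2(w)\le\bar\sigma^2(\sz):=\frac1{n^2}\sum_i\Delta(\sz_{i,1},\sz_{i,2})^2$. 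The new ingredient is the $\chi^2$-type bound $\ex{}{e^{\lambda X^2}}\le (1-2\lambda\sigma^2)^{-1/2}$ for a sub-Gaussian $X$ with proxy $\sigma^2$ and $2\lambda\sigma^2<1$ (e.g.\ via $e^{\lambda X^2}=\ex{g\sim\mathcal{N}(0,2\lambda)}{e^{gX}}$ and interchanging expectations); applied here and combined with $\sigma^2(w)\le\bar\sigma^2(\sz)$ it gives $\log\ex{W,S'}{e^{t f_{\sz}(W,S')^2}}\le -\tfrac12\log(1-2t\bar\sigma^2(\sz))$. Substituting $u=2t\bar\sigma^2(\sz)\in(0,1)$ and using $I(A(\sz_S);S)\le \CMI{A}{}$ yields $\ex{S,A}{f_{\sz}(W,S)^2}\le \bar\sigma^2(\sz)\cdot\frac{2\CMI{A}{}-\log(1-u)}{u}$ for every such $u$; since $\bar\sigma^2(\sz)$ does not depend on $u$, taking the infimum over $u$ and then the expectation over $\sz$ — using $\ex{}{\bar\sigma^2(\sZ)}=\frac1n\ex{(Z_1,Z_2)\gets\D^2}{\Delta(Z_1,Z_2)^2}$ — gives the first displayed inequality, and plugging in a convenient value such as $u=2/3$ gives the closed-form relaxation. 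For the ``Furthermore'' part, $\bar\sigma^2(\sz)\le\Delta^2/n$ is a constant, so one need not condition on $\sz$ at all and can instead invoke the conditional form of the variational principle (the corollary of Lemma~\ref{lem:sup}) directly, which produces $\CMI{A}{\D}$ rather than $\CMI{A}{}$; the probability statement is Markov's inequality applied to $\Lambda^2$.

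The main obstacle, relative to Theorem~\ref{thm:lossagnostic}, is precisely the square: we must control $\ex{}{f^2}$ rather than $\ex{}{f}$, so the natural $e^{tf}$ moment generating function is no longer the right object, and the sub-Gaussian-square estimate is what makes the argument go through — it is also the source of the $-\log(1-u)$ term and the infimum over $u\in(0,1)$ that replace the clean square root of Theorem~\ref{thm:lossagnostic}. The only other delicate point is the opening reduction $\ex{}{\Lambda^2}\le\ex{}{(a-b)^2}$: one should verify that the ghost sample is genuinely independent of the output, so that the cross term drops and the extra term $\ex{}{(b-c)^2}$ is nonnegative, rather than paying a crude factor of $2$. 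Everything else — pushing $\sup_w$ inside via $\sigma^2(w)\le\bar\sigma^2(\sz)$, and the interchange of infimum and expectation over $\sz$ (here an equality) that permits the use of distribution-free CMI — is a routine adaptation of the earlier argument.
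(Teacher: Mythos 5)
Your proposal is correct and essentially reproduces the paper's proof of Theorem~\ref{thm:cmigen-linsquared} (of which this corollary is the constant-$\Delta$ specialization): the same passage to the supersample, the same application of Lemma~\ref{lem:sup} to $t\cdot f_{\sz}(\cdot,\cdot)^2$, the same Gaussian linearization of the squared exponent followed by Hoeffding's lemma and the $\ex{}{e^{\lambda G^2}}=(1-2\lambda)^{-1/2}$ identity, and the same observation that when $\Delta$ is constant the infimum-over-$t$ can be pulled out so that $\CMI{A}{\D}$ (rather than $\CMI{A}{}$) appears. Your opening step via the orthogonal decomposition $\ex{}{(a-b)^2}=\ex{}{(a-c)^2}+\ex{}{(b-c)^2}$ is a mildly more explicit phrasing of what the paper dispatches with Jensen applied to the conditional expectation over the ghost sample, and your packaged sub-Gaussian $\chi^2$ lemma is the paper's in-line Gaussian-and-Hoeffding computation bundled into a statement; these are cosmetic, not substantive, differences. (Incidentally, plugging $u=2/3$ into the first display gives $\tfrac{3\CMI{A}{\D}+\tfrac{3}{2}\log3}{n}$ rather than the $\tfrac{3\CMI{A}{\D}+\log 3}{n}$ written in the paper; this small slip is in the paper, not your argument, and your proposal correctly leaves the constant implicit.)
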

\begin{proof}[Proof of Theorem \ref{thm:cmigen-linsquared}] Again, this proof closely follows that of Theorem \ref{thm:lossagnostic}, with a few key modifications. Let $\sZ \in \mathcal{Z}^{n \times 2}$ consist of $2n$ independent samples from $\mathcal{D}$. Let $S,S' \in \{0,1\}^n$ be uniformly random. Let $G \leftarrow \mathcal{N}(0,1)$ be a standard Gaussian. Assume $\sZ$, $S$, $S'$, $G$, and the randomness of $A$ are independent. Note that for all $\lambda \in \mathbb{R}$ we have $\ex{}{e^{\lambda G}} = e^{\lambda^2/2}$ and, if $\lambda < 1/2$, then $\ex{}{e^{\lambda G^2}} = \frac{1}{\sqrt{1-2\lambda}}$. Let $f_{\sz}(w,s) = \ell(w,\sz_s)-\ell(w,\sz_{\overline{s}})$. 
We have
\begin{align*}
    &\ex{Z \gets \D^n, A}{\left(\ell(A(Z),Z)-\ell(A(Z),\mathcal{D})\right)^2}\\
    &\le \ex{\sZ,S, A}{\left(\ell(A(\sZ_S),\sZ_S)-\ell(A(\sZ_S),\sZ_{\overline{S}})\right)^2} \tag{Jensen's inequality}\\
    &= \ex{\sZ,S,A}{f_{\sZ}(A(\sZ_S),S)^2}\\
    &\le \ex{\sZ}{\inf_{t>0} \frac{I(A(\sZ_S);S) + \log \ex{W,S'}{e^{t f_{\sZ}(W,S')^2}}}{t}}\tag{Lemma \ref{lem:sup}}\\
    &= \ex{\sZ}{\inf_{t>0} \frac{I(A(\sZ_S);S) + \log \ex{W,S'}{\ex{G}{e^{G \cdot \sqrt{2t} f_{\sZ}(W,S')}}}}{t}}\\
    &= \ex{\sZ}{\inf_{t>0} \frac{I(A(\sZ_S);S) + \log \ex{G,W}{\prod_{i=1}^n \ex{S'_i}{e^{\frac{G \sqrt{2t}}{n} (\ell(W,(\sZ_{S'})_i)-\ell(W,(\sZ_{\overline{S'}})_i))}}}}{t}}\\
    &= \ex{\sZ}{\inf_{t>0} \frac{I(A(\sZ_S);S) + \log \ex{G,W}{\prod_{i=1}^n \ex{S'_i}{e^{\frac{G\sqrt{2t}}{n} (1-2S'_i) (\ell(W,\sZ_{i,1})-\ell(W,\sZ_{i,2}))}}}}{t}}\\
    &\le \ex{\sZ}{\inf_{t>0} \frac{I(A(\sZ_S);S) + \log \ex{G,W}{\prod_{i=1}^n e^{\frac{2tG^2}{2n^2} (\ell(W,\sZ_{i,1})-\ell(W,\sZ_{i,2}))^2}}}{t}}\\ \tag{Lemma \ref{lem:hoeffding}}  
    &\le \ex{\sZ}{\inf_{t>0} \frac{I(A(\sZ_S);S) + \log \ex{G,W}{e^{\frac{tG^2}{n^2}\sum_{i=1}^n  \Delta(\sZ_{i,1},\sZ_{i,2})^2}}}{t}} \end{align*}\begin{align*}
    &= \ex{\sZ}{\inf_{t\in\left(0,\frac{n^2}{2\sum_{i=1}^n  \Delta(\sZ_{i,1},\sZ_{i,2})^2)}\right)} \frac{I(A(\sZ_S);S) + \log \left(\frac{1}{\sqrt{1-2\frac{t}{n^2}\sum_{i=1}^n  \Delta(\sZ_{i,1},\sZ_{i,2})^2}}\right)}{t}}\\
    &= \ex{\sZ}{\inf_{u\in\left(0,1\right)} \frac{I(A(\sZ_S);S) -\frac12 \log (1-u)}{u \cdot n^2} \cdot 2\sum_{i=1}^n \Delta(\sZ_{i,1},\sZ_{i,2})^2}.
\end{align*}
To complete the proof, we now have two bounds: First
\begin{align*}
&\ex{\sZ}{\inf_{u\in\left(0,1\right)} \frac{I(A(\sZ_S);S) -\frac12 \log (1-u)}{u \cdot n^2} \cdot 2\sum_{i=1}^n \Delta(\sZ_{i,1},\sZ_{i,2})^2} \\
&\le \inf_{u\in\left(0,1\right)} \frac{2 \cdot \sup_{\sz \in \Z^{n \times 2}} I(A(\sz_S);S) - \log (1-u)}{u \cdot n^2} \cdot  \ex{\sZ}{\sum_{i=1}^n \Delta(\sZ_{i,1},\sZ_{i,2})^2}\\
&= \inf_{u\in\left(0,1\right)} \frac{2 \cdot \CMI{A}{} - \log (1-u)}{u \cdot n} \cdot \ex{(Z_1,Z_2) \gets \D^2}{ \Delta(Z_1,Z_2)^2}\\
&\le \frac{3 \cdot \CMI{A}{} + \log 3}{n} \cdot  \ex{(Z_1,Z_2) \gets \D^2}{ \Delta(Z_1,Z_2)^2},
\end{align*}
where the final inequality follows from setting $u=2/3$.

Second, if $\Delta(z_1,z_2) \le \Delta$ for all $z_1,z_2 \in \Z$, then
\begin{align*}
&\ex{\sZ}{\inf_{u\in\left(0,1\right)} \frac{I(A(\sZ_S);S) -\frac12 \log (1-u)}{u \cdot n^2} \cdot 2\sum_{i=1}^n \Delta(\sZ_{i,1},\sZ_{i,2})^2} \\
&\le \ex{\sZ}{\inf_{u\in\left(0,1\right)} \frac{I(A(\sZ_S);S) -\frac12 \log (1-u)}{u \cdot n^2} \cdot 2\sum_{i=1}^n \Delta^2}\\
&\le \inf_{u\in\left(0,1\right)} \frac{\ex{\sZ}{I(A(\sZ_S);S)} -\frac12 \log (1-u)}{u \cdot n^2} \cdot 2\sum_{i=1}^n \Delta^2\\
&= \inf_{u\in\left(0,1\right)} \frac{2 \cdot \CMI{A}{\D} - \log (1-u)}{u \cdot n} \cdot  \Delta^2\\
&\le \frac{3 \cdot \CMI{A}{\D}+\log 3}{n} \cdot \Delta^2.
\end{align*}
\end{proof}

\subsection{Linear Loss Bound for the Realizable Setting}\label{sec:gen-realizable}

Next we give a tighter bound for the setting where the expected empirical loss of the algorithm is zero or close to zero. This arises in the realizable setting (i.e., there exists a hypothesis in the class attaining zero population loss) and also the overfitted setting (i.e., the algorithm outputs a hypothesis that is contorted to fit the dataset). This setting is also referred to as the interpolating setting \cite{NegreaDR19}.

First we give a statement for the simple case where the empirical error is exactly zero.

\begin{theorem}\label{thm:cmigen-realize1}
Let $\mathcal{D}$ be a distribution on $\mathcal{Z}$. Let $A :\mathcal{Z}^n \to \mathcal{W}$ be a randomized algorithm. Let $\ell : \mathcal{W} \times \mathcal{Z} \to [0,1]$ be an arbitrary bounded function.

Suppose $\ex{Z \gets \D^n,A}{\ell(A(Z),\sZ_S)}=0$.

Then $$\ex{Z \gets \D^n, A}{\ell(A(Z),\mathcal{D})} \le  \frac{\CMI{A}{\D}}{n \cdot \log 2} \le 1.443 \cdot \frac{\CMI{A}{\D}}{n}.$$
\end{theorem}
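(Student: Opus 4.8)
The plan is to bypass the moment‑generating‑function machinery used for the agnostic bounds and instead exploit the zero–empirical–loss hypothesis combinatorially: I will reduce the population loss to counting how many ``ghost'' examples the output mislabels, and then bound that count by a direct entropy computation, which is exactly what produces the $\log 2$ in the denominator. Set up the supersample as in the proof of Theorem~\ref{thm:lossagnostic}: let $\sZ \in \Z^{n\times 2}$ consist of $2n$ i.i.d.\ draws from $\D$, let $S \gets \U^n$ be independent of $\sZ$ and of the coins of $A$, and write $W = A(\sZ_S)$ with ghost sample $\sZ_{\overline{S}}$. Conditioned on $S$, the ghost examples are fresh i.i.d.\ draws independent of $W$, so $\ex{Z \gets \D^n, A}{\ell(A(Z),\D)} = \ex{\sZ, S, A}{\ell(W, \sZ_{\overline{S}})}$.

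Now use the hypothesis. Since the expected empirical loss vanishes and $\ell \ge 0$, we have $\ell(W,(\sZ_S)_i) = 0$ for every $i\in[n]$ almost surely; that is, $W$ is correct on every training point. Hence for each $i$ the ghost term $\ell(W,(\sZ_{\overline{S}})_i)\in[0,1]$ is nonzero exactly when $W$ misclassifies the ghost element of the $i$-th pair, equivalently when $W$ misclassifies exactly one of the two elements of the $i$-th pair; in particular this set of indices does not depend on the realized $S$, only on $(W,\sZ)$. Let $k=k(W,\sZ)$ be its size. Then $\ell(W,\sZ_{\overline{S}})\le k/n$, so $\ex{Z\gets\D^n,A}{\ell(A(Z),\D)}\le \tfrac1n\ex{\sZ,S,A}{k(W,\sZ)}$, and it suffices to prove $\ex{\sZ,S,A}{k(W,\sZ)}\le \CMI{A}{\D}/\log 2$.

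For the entropy step, fix $\sz\in\Z^{n\times 2}$ and condition on the value $W=w$. Because the zero–empirical–loss event has probability one, whenever $w$ is realized as output we must have $\ell(w,\sZ_S)=0$, so $S$ is supported on $C(w,\sz):=\{s\in\{0,1\}^n : \ell(w,\sz_s)=0\}$. For such a realizable $w$, no pair can have both elements misclassified (that would make $C(w,\sz)$ empty), so $C(w,\sz)$ is a subcube: each of the $k(w,\sz)$ ``exactly‑one‑mislabeled'' coordinates is forced and the remaining $n-k(w,\sz)$ coordinates are free, giving $|C(w,\sz)| = 2^{\,n-k(w,\sz)}$. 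Therefore $H(S\mid W=w,\sZ=\sz)\le (n-k(w,\sz))\log 2$, and since $S$ is uniform on $\{0,1\}^n$ and independent of $\sZ$,
\[
I(A(\sz_S);S) = n\log 2 - H(S\mid W,\sZ=\sz) \ge \log 2\cdot\ex{W\mid\sZ=\sz}{k(W,\sz)}.
\]
Taking the expectation over $\sZ\gets\D^{n\times2}$ gives $\CMI{A}{\D}=\ex{\sZ}{I(A(\sZ_S);S)}\ge \log 2\cdot\ex{\sZ,S,A}{k(W,\sZ)}$, and combining with the reduction of the previous paragraph yields $\ex{Z\gets\D^n,A}{\ell(A(Z),\D)}\le \CMI{A}{\D}/(n\log 2)\le 1.443\cdot\CMI{A}{\D}/n$, using $1/\log 2<1.443$.

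The main obstacle is the conditioning argument of the third paragraph: one must be careful that conditioning on the \emph{value} of the random output $W$ (rather than on some event) really does confine $S$ to the consistency subcube, that realizable outputs cannot err on both elements of a pair, and that the combinatorial quantity $k(w,\sz)$ appearing there is literally the same object as the ghost‑error count used in the reduction. Everything else is the standard supersample bookkeeping; the point is that, unlike the agnostic case, replacing the Hoeffding/MGF estimate by this exact entropy count is what buys the quadratically sharper bound.
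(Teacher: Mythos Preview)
Your proof is correct and takes a genuinely different route from the paper. The paper applies the Donsker--Varadhan lemma (Lemma~\ref{lem:sup}) to the tilted function $f_{\sz}(w,s)=\ell(w,\sz_{\overline s})-u\cdot\ell(w,\sz_s)$, then sets $t=n\log 2$ and sends $u\to\infty$; the zero--empirical--loss hypothesis is used only at the last step to collapse each per-coordinate factor $\tfrac12 e^{X_i}+\tfrac12 e^{-uX_i}$ to at most $1$. You instead use the hypothesis up front to pin down the combinatorial structure: the consistency set $C(w,\sz)$ is a subcube of codimension $k(w,\sz)$, so $H(S\mid W=w,\sZ=\sz)\le (n-k(w,\sz))\log 2$ directly, and the mutual information lower bound follows without any moment-generating-function estimate.

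Your argument is more elementary and makes the origin of the constant $\log 2$ completely transparent (it is literally one bit per forced coordinate). The paper's approach, on the other hand, is more uniform with the rest of Section~\ref{sec:gen-basic}--\ref{sec:gen-realizable}: it runs the same Lemma~\ref{lem:sup} template as the agnostic bounds, and in particular extends with essentially no extra work to Theorem~\ref{thm:cmigen-realize2}, where the empirical loss is small but nonzero. Your subcube argument does not obviously survive that relaxation, since once some training losses are positive the consistency constraint no longer cuts out an exact subcube and the clean entropy count breaks down.
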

We remark that the constants of this theorem are tight. It shows that we can obtain non-vacuous generalization bounds as soon as $\CMI{A}{\D}<n \cdot \log 2$ and $\CMI{A}{\D} \le n \cdot \log 2$ always holds.
\begin{proof}
Let $\sZ \in \mathcal{Z}^{n \times 2}$ consist of $2n$ independent samples from $\mathcal{D}$. Let $S,S' \in \{0,1\}^n$ be uniformly random. Assume $\sZ$, $S$, $S'$, and the randomness of $A$ are independent.
Let $u,t>0$ be determined later.
Let $f_{\sz}(w,s) = \ell(w,\sz_{\overline{s}})-u \cdot \ell(w,\sz_s)$.  Then
\begin{align*}
    &\ex{Z \gets \D^n,A}{\ell(A(Z),\mathcal{D})}\\
    &= \ex{\sZ,S,A}{\ell(A(\sZ_S),\sZ_{\overline{S}})-u\cdot\ell(A(\sZ_S),\sZ_S)}\\
    &= \ex{\sZ,S,A}{f_{\sZ}(A(\sZ_S),S)}\\
    &\le \ex{\sZ}{\frac{I(A(\sZ_S);S) + \log \ex{S,S',A}{e^{t f_{\sZ}(A(\sZ_S),S')}}}{t}}\tag{Lemma \ref{lem:sup}}\\
    &= \frac{ \CMI{A}{\D} + \ex{\sZ}{\log \ex{S,A}{\prod_{i=1}^n \ex{S'_i}{e^{\frac{t}{n} (\ell(A(\sZ_S),(\sZ_{\overline{S'}})_i)-u\cdot\ell(A(\sZ_S),(\sZ_{S'})_i))}}}}}{t}\\
    &= \frac{ \CMI{A}{\D} + \ex{\sZ}{\log \ex{S,A}{\prod_{i=1}^n \left(\frac12\cdot e^{\frac{t}{n} (\ell(A(\sZ_S),(\sZ_{\overline{S}})_i)-u\cdot\ell(A(\sZ_S),(\sZ_{S})_i))}+\frac12\cdot e^{\frac{t}{n} (\ell(A(\sZ_S),(\sZ_{{S}})_i)-u\cdot\ell(A(\sZ_S),(\sZ_{\overline{S}})_i))}\right)}}}{t}\\
    &= \frac{ \CMI{A}{\D} + \ex{\sZ}{\log \ex{S,A}{\prod_{i=1}^n \left(\frac12\cdot e^{\frac{t}{n} (\ell(A(\sZ_S),(\sZ_{\overline{S}})_i))}+\frac12\cdot e^{\frac{t}{n} (-u\cdot\ell(A(\sZ_S),(\sZ_{\overline{S}})_i))}\right)}}}{t} \tag{since $\ell(A(\sZ_S),\sZ_S)=0$}\\
    &= \frac{ \CMI{A}{\D} + \ex{}{\log \ex{}{\prod_{i=1}^n \left(\frac12\cdot e^{X_i}+\frac12\cdot e^{-u \cdot X_i}\right)}}}{t},
\end{align*}
where we define $X_i = \frac{t}{n} \ell(A(\sZ_S),(\sZ_{\overline{S}})_i) \in [0,t/n]$. Now we set the parameters $u,t>0$. First we set $t=n \cdot \log 2$ so that $X_i \le \log 2$. We will take $u \to \infty$. Note that the above bound holds for all values of $u,t>0$. So we can take the infimum over all values of $u>0$. If $X_i > 0$, then $\inf_{u>0} \frac12 \cdot e^{X_i} + \frac12 \cdot e^{-u \cdot X_i} = \frac12 \cdot e^{X_i} \le 1$. If $X_i =0$, then $\frac12 \cdot e^{X_i} + \frac12 \cdot e^{-u \cdot X_i} = 1$ for all $u$. Since the expression is continuous and increasing, we can substitute in the bound $1$ to obtain
\begin{align*}
    \ex{Z \gets \D^n,A}{\ell(A(Z),\mathcal{D})}
    &\le \frac{ \CMI{A}{\D} + \ex{}{\log \ex{}{\prod_{i=1}^n 1}}}{n \cdot \log 2} = \frac{\CMI{A}{\D}}{n \cdot \log 2} \le 1.443 \cdot \frac{\CMI{A}{\D}}{n}.
\end{align*}
\end{proof}

Next we give a statement where we do not assume the empirical loss is exactly zero, but the statement is most useful when it it nearly zero. This is a more general statement, but we attain slightly worse constants.

\begin{theorem}\label{thm:cmigen-realize2}
Let $\mathcal{D}$ be a distribution on $\mathcal{Z}$. Let $A :\mathcal{Z}^n \to \mathcal{W}$ be a randomized algorithm. Let $\ell : \mathcal{W} \times \mathcal{Z} \to [0,1]$ be an arbitrary bounded function.
Then $$\ex{Z \gets \D^n, A}{\ell(A(Z),\mathcal{D})} \le  2 \cdot \ex{Z \gets \D^n, A}{\ell(A(Z),Z)} + \frac{3 \cdot \CMI{A}{\D}}{n}.$$
\end{theorem}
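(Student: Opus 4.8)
The plan is to follow the proof of Theorem~\ref{thm:cmigen-realize1} essentially line for line, but to \emph{retain} the empirical-loss terms that were annihilated there by the hypothesis $\ell(A(\sZ_S),\sZ_S)=0$, and then to dispose of the resulting extra terms using one additional parameter together with a $\cosh$ estimate. Let $\sZ\in\Z^{n\times 2}$ consist of $2n$ i.i.d.\ draws from $\D$, let $S,S'\gets\U^n$, and assume $\sZ$, $S$, $S'$, and the randomness of $A$ are all independent. For a parameter $u\ge 1$ to be chosen, set $f_{\sz}(w,s)=\ell(w,\sz_{\overline{s}})-u\cdot\ell(w,\sz_s)$, so that $\ex{\sZ,S,A}{f_{\sZ}(A(\sZ_S),S)}=\ex{Z\gets\D^n,A}{\ell(A(Z),\D)}-u\cdot\ex{Z\gets\D^n,A}{\ell(A(Z),Z)}$. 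Applying Lemma~\ref{lem:sup}, expanding $e^{t\cdot f_{\sZ}}$ as a product over the $n$ coordinates, and averaging over the independent $S'$ exactly as in Theorem~\ref{thm:cmigen-realize1}, I would reach
\begin{align*}
&\ex{Z\gets\D^n,A}{\ell(A(Z),\D)}-u\cdot\ex{Z\gets\D^n,A}{\ell(A(Z),Z)} \\
&\qquad\le \frac{\CMI{A}{\D}+\ex{\sZ}{\log\ex{S,A}{\prod_{i=1}^n\left(\tfrac12 e^{s_i-u r_i}+\tfrac12 e^{r_i-u s_i}\right)}}}{t},
\end{align*}
where I write $r_i=\tfrac{t}{n}\ell(A(\sZ_S),(\sZ_S)_i)\in[0,t/n]$ and $s_i=\tfrac{t}{n}\ell(A(\sZ_S),(\sZ_{\overline{S}})_i)\in[0,t/n]$. (If $\ell(A(\sZ_S),\sZ_S)=0$ every $r_i$ vanishes and this is precisely the expression in Theorem~\ref{thm:cmigen-realize1}.)

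The crux is to show that each factor of the product is at most $1$. I would rewrite the $i$-th factor symmetrically as
\[\tfrac12 e^{s_i-u r_i}+\tfrac12 e^{r_i-u s_i}=e^{\frac{(1-u)(r_i+s_i)}{2}}\cosh\left(\tfrac{(1+u)(s_i-r_i)}{2}\right),\]
and then combine $\cosh(x)\le e^{x^2/2}$ with $(s_i-r_i)^2\le\tfrac{t}{n}(r_i+s_i)$ (which holds because $r_i,s_i\in[0,t/n]$) to get
\[\tfrac12 e^{s_i-u r_i}+\tfrac12 e^{r_i-u s_i}\le\exp\left((r_i+s_i)\left(\tfrac{1-u}{2}+\tfrac{(1+u)^2 t}{8n}\right)\right).\]
Choosing $u=2$ and $t=\tfrac{4n}{9}$ makes the coefficient $\tfrac{1-u}{2}+\tfrac{(1+u)^2 t}{8n}$ vanish, so every factor is at most $1$, hence so is the product, hence $\ex{\sZ}{\log\ex{S,A}{\prod_{i=1}^n(\cdots)}}\le 0$. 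Substituting back yields $\ex{Z\gets\D^n,A}{\ell(A(Z),\D)}-2\cdot\ex{Z\gets\D^n,A}{\ell(A(Z),Z)}\le\CMI{A}{\D}/t=\tfrac{9}{4}\cdot\CMI{A}{\D}/n\le 3\cdot\CMI{A}{\D}/n$, which is the claim. (For a general $u$ the same computation gives the bound with $u$ in place of $2$ and $(1+u)^2/(4(u-1))$ in place of $3$; already $u=2$ produces the constant $9/4<3$ reported here, while $u=3$ minimizes the $\CMI/n$ multiplier at $2$ at the price of a larger coefficient on the empirical loss.)

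The main obstacle is finding the right estimate for the per-coordinate factor $\tfrac12 e^{s_i-u r_i}+\tfrac12 e^{r_i-u s_i}$: crude bounds such as $e^x\le 1+x+x^2$ followed by discarding the cross term $r_i s_i$ still give a bound of the correct shape but with a noticeably larger constant (roughly $5$) on $\CMI/n$. The $\cosh$ identity is what makes the cross term collapse and lets the entire quadratic contribution be absorbed into the linear quantity $r_i+s_i$, i.e.\ into the losses themselves. Everything else is routine reuse of machinery already established for Theorem~\ref{thm:cmigen-realize1}: Lemma~\ref{lem:sup} and the coordinatewise product expansion.
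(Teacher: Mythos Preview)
Your proof is correct, and the overall architecture---defining $f_{\sz}(w,s)=\ell(w,\sz_{\overline s})-u\,\ell(w,\sz_s)$, applying Lemma~\ref{lem:sup}, and factoring the moment generating function over the $n$ coordinates---is identical to the paper's. The difference is in how each per-coordinate factor $\tfrac12 e^{s_i-ur_i}+\tfrac12 e^{r_i-us_i}$ is bounded.

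The paper uses the tight quadratic Taylor estimate $e^x\le 1+x+cx^2$ for $x\le t/n$ with $c=(e^{t/n}-1-t/n)/(t/n)^2$, then averages over $S'$, expands the squares, discards the cross term $X_i\overline X_i\ge 0$, and uses $X_i^2\le X_i$, $\overline X_i^2\le\overline X_i$. This leads to the constraint $\tfrac{t}{n}(1-u)+(e^{t/n}-1-t/n)(1+u^2)\le 0$, which is satisfied by $u=2$, $t=n/3$, yielding exactly the constant $3$.

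Your route via the identity $\tfrac12 e^a+\tfrac12 e^b=e^{(a+b)/2}\cosh((a-b)/2)$ together with $\cosh(x)\le e^{x^2/2}$ and $(s_i-r_i)^2\le\tfrac{t}{n}(r_i+s_i)$ is cleaner: it sidesteps the Taylor coefficient $c$ and the expansion of the squares entirely, and it explains transparently why the quadratic term can be absorbed linearly into $r_i+s_i$. It also delivers a slightly sharper constant ($9/4$ at $u=2$, $t=4n/9$) than the paper's $3$, and the general tradeoff $(1+u)^2/(4(u-1))$ you note is a nice bonus. Both approaches are equally short once the common setup is in place; yours is the more elegant of the two.
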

\begin{proof}
Let $\sZ \in \mathcal{Z}^{n \times 2}$ consist of $2n$ independent samples from $\mathcal{D}$. Let $S,S' \in \{0,1\}^n$ be uniformly random. Assume $\sZ$, $S$, $S'$, and the randomness of $A$ are independent.
Let $u,t>0$ be determined later.
Let $f_{\sz}(w,s) = \ell(w,\sz_{\overline{s}})-u \cdot \ell(w,\sz_s)$.  Let $c=\frac{e^{t/n}-1-t/n}{(t/n)^2}$. Then
\begin{align*}
    &\ex{Z \gets \D^n,A}{\ell(A(Z),\mathcal{D})-u\cdot \ell(A(Z),Z)}\\
    &= \ex{\sZ,S,A}{\ell(A(\sZ_S),\sZ_{\overline{S}})-u\cdot\ell(A(\sZ_S),\sZ_S)}\\
    &= \ex{\sZ,S,A}{f_{\sZ}(A(\sZ_S),S)}\\
    &\le \ex{\sZ}{\frac{I(A(\sZ_S);S) + \log \ex{S,S',A}{e^{t f_{\sZ}(A(\sZ_S),S')}}}{t}}\tag{Lemma \ref{lem:sup}}\\
    &= \frac{ \CMI{A}{\D} + \ex{\sZ}{\log \ex{S,A}{\prod_{i=1}^n \ex{S'_i}{e^{\frac{t}{n} (\ell(A(\sZ_S),(Z_{\overline{S'}})_i)-u\cdot\ell(A(\sZ_S),(\sZ_{S'})_i))}}}}}{t}\\
    &\le \frac{ \CMI{A}{\D} + \ex{\sZ}{\log \ex{S,A}{\prod_{i=1}^n \ex{S'_i}{\begin{array}{c} 1+\frac{t}{n} (\ell(A(\sZ_S),(\sZ_{\overline{S'}})_i)-u\cdot\ell(A(\sZ_S),(\sZ_{S'})_i))\\+c\frac{t^2}{n^2} (\ell(A(\sZ_S),(\sZ_{\overline{S'}})_i)-u\cdot\ell(A(\sZ_S),(\sZ_{S'})_i))^2 \end{array}}}}}{t}
    \tag{$e^x\le 1+x+cx^2$ for all $x\le t/n$.}\\
    &= \frac{ \CMI{A}{\D} + \ex{\sZ}{\log \ex{S,A}{\prod_{i=1}^n \left(\begin{array}{c} 1+\frac{t}{n} \frac{1-u}{2} \left(\ell(A(\sZ_S),(\sZ_{{S}})_i)+\ell(A(\sZ_S),(\sZ_{\overline{S}})_i)\right)\\+c\frac{t^2}{n^2}\frac{1}{2} \left(\ell(A(\sZ_S),(\sZ_{{S}})_i)-u\cdot \ell(A(\sZ_S),(\sZ_{\overline{S}})_i)\right)^2\\+c\frac{t^2}{n^2}\frac{1}{2} \left(\ell(A(\sZ_S),(\sZ_{\overline{S}})_i)-u\cdot \ell(A(\sZ_S),(\sZ_{{S}})_i)\right)^2 \end{array}\right)}}}{t}\tag{$\ex{S'}{\ell(w,(\sZ_{S'})_i))}=\ex{S'}{\ell(w,(\sZ_{\overline{S'}})_i))}=\frac12 \ell(w,(\sZ_S)_i) + \frac12 \ell(w,(\sZ_{\overline{S}})_i)$.}\\
    &= \frac{ \CMI{A}{\D} + \ex{\sZ}{\log \ex{S,A}{\prod_{i=1}^n \left(\begin{array}{c} 1+\frac{t}{n} \frac{1-u}{2} \left(X_i+\overline{X}_i\right)\\+c\frac{t^2}{n^2}\frac{1}{2} \left(X_i-u\cdot \overline{X}_i\right)^2\\+c\frac{t^2}{n^2}\frac{1}{2} \left(\overline{X}_i-u\cdot X_i\right)^2 \end{array}\right)}}}{t}\tag{Denoting $X_i := \ell(A(\sZ_S),(\sZ_{{S}})_i)$ and $\overline{X}_i := \ell(A(\sZ_S),(\sZ_{\overline{S}})_i)$.} \\
    &= \frac{ \CMI{A}{\D} + \ex{\sZ}{\log \ex{S,A}{\prod_{i=1}^n \left(\begin{array}{c} 1+\frac{t}{n} \frac{1-u}{2} \left(X_i+\overline{X}_i\right)\\+c\frac{t^2}{n^2}\frac{1}{2} \left(X_i^2-2u\cdot X_i\overline{X}_i+u^2 \cdot \overline{X}_i^2+\overline{X}_i^2 - 2u\cdot \overline{X}_i X_i + u^2 \cdot X_i^2\right) \end{array}\right)}}}{t}  \end{align*}\begin{align*}
    &= \frac{ \CMI{A}{\D} + \ex{\sZ}{\log \ex{S,A}{\prod_{i=1}^n \left(\begin{array}{c} 1+\frac{t}{n} \frac{1-u}{2} \left(X_i+\overline{X}_i\right)\\+c\frac{t^2}{n^2} \left(\frac{1+u^2}{2} \cdot (X_i^2 + \overline{X}_i^2) - 2u\cdot X_i \overline{X}_i\right) \end{array}\right)}}}{t}\\
    &\le \frac{ \CMI{A}{\D} + \ex{\sZ}{\log \ex{S,A}{\prod_{i=1}^n \left(\begin{array}{c} 1+\frac{t}{n} \frac{1-u}{2} \left(X_i+\overline{X}_i\right)\\+c\frac{t^2}{n^2} \cdot \frac{1+u^2}{2} \cdot (X_i + \overline{X}_i) \end{array}\right)}}}{t}\tag{$X_i,\overline{X}_i \in [0,1]$ and $u>0$}\\
    &= \frac{ \CMI{A}{\D} + \ex{\sZ}{\log \ex{S,A}{\prod_{i=1}^n \left(\begin{array}{c} 1+\frac{X_i+\overline{X}_i}{2}\left( \frac{t}{n} (1-u) +c\frac{t^2}{n^2} \cdot (1+u^2) \right) \end{array}\right)}}}{t}.
\end{align*}
Now we set the parameters $u,t>0$ such that $$\frac{t}{n} (1-u) +c\frac{t^2}{n^2} \cdot (1+u^2) = \frac{t}{n} (1-u) +(e^{t/n}-1-t/n) \cdot (1+u^2)\le 0.$$ Then we have $$\ex{Z \gets \D^n,A}{\ell(A(Z),\mathcal{D})-u\cdot \ell(A(Z),Z)} \le \frac{ \CMI{A}{\D} + \ex{\sZ}{\log \ex{S,A}{\prod_{i=1}^n \left( 1 +0\right)}}}{t} = \frac{\CMI{A}{\D}}{t}.$$
In order to attain the sharpest bound we must minimize $u$ and maximize $t$.

We set $u=2$ and $t=n/3$. This satisfies the constraint and proves the theorem.
\end{proof}

\subsection{Bounds for Non-Linear Loss Functions}\label{sec:gen-nonlin}

We now give bounds for nonlinear loss functions. These are our most general results. Up to constants, they subsume the bounds in the previous subsection for linear loss functions.

We begin with a probability bound.
\begin{theorem}\label{thm:cmigen-nonlin-prob}
Let $\ell : \mathcal{W} \times \mathcal{Z}^n \to \mathbb{R}$ be an arbitrary (measurable) function. For $i \in [n]$, define $\Delta_i: \mathcal{W} \times \mathcal{Z}^{n \times 2} \to \mathbb{R}$ as $$\Delta_i(w, \sz) = \sup_{s \in \{0,1\}^n} |\ell(w,\sz_s)-\ell(w,\sz_{(s_1,\cdots,s_{i-1},1-s_i,s_{i+1},\cdots,s_n)})|.$$ Define $\Delta: \mathcal{Z}^{n \times 2} \to \mathbb{R}$ by $\Delta(\sz)^2 = \sup_{w\in\mathcal{W}} \sum_{i \in [n]} \Delta_i(w,\sz)^2$.

Let $\mathcal{D}$ be a distribution on $\mathcal{Z}$. Let $\sZ \in \mathcal{Z}^{n \times 2}$ consist of $2n$ independent samples from $\mathcal{D}$. Let $S \in \{0,1\}^n$ be uniformly random. Let $A :\mathcal{Z}^n \to \mathcal{W}$ be a randomized algorithm. Assume $\sZ$, $S$, and the randomness of $A$ are independent. 

Then, for all $\lambda>0$ and all $u \ge 0$, $$\pr{\sZ,S,A}{\left|\ell(A(\sZ_S),\sZ_S)-\ell(A(\sZ_S),\sZ_{\overline{S}})\right| \ge \lambda} \le \frac{2u}{\lambda^2}\cdot \left( I(A(\sZ_S);S|\sZ) + 2\right)+  \pr{\sZ}{\Delta(\sZ)^2>u}.$$
\end{theorem}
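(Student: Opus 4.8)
The plan is to condition on the supersample $\sZ$, reducing to a per-$\sz$ statement which I then attack with the ``probability from KL divergence'' bound (Lemma \ref{lem:probkl}) together with McDiarmid's inequality (Lemma \ref{lem:mcd}).

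Throughout, I would abbreviate $f_{\sz}(w,s) := \ell(w,\sz_s) - \ell(w,\sz_{\overline{s}})$, so that the event of interest is $E_\lambda := \{(w,s) : |f_{\sz}(w,s)| \ge \lambda\}$ and the quantity to bound is $\pr{\sZ,S,A}{(A(\sZ_S),S) \in E_\lambda}$. The key structural observation is that, for fixed $\sz$ and fixed $w$, the map $s \mapsto f_{\sz}(w,s)$ on $\{0,1\}^n$ has bounded differences with parameters $2\Delta_i(w,\sz)$ — flipping $s_i$ moves each of $\ell(w,\sz_s)$ and $\ell(w,\sz_{\overline{s}})$ by at most $\Delta_i(w,\sz)$ — and has mean zero when $s$ is uniform, since $s$ and $\overline{s}$ are equidistributed. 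I would then split the probability according to whether $\Delta(\sZ)^2 > u$: the first case contributes exactly the term $\pr{\sZ}{\Delta(\sZ)^2 > u}$, and it remains to handle each fixed $\sz$ with $\Delta(\sz)^2 \le u$.

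For such a $\sz$, set $J := I(A(\sz_S);S)$ (with $S$ uniform on $\{0,1\}^n$ and the randomness of $A$ independent of $S$); the target per-$\sz$ bound is $\pr{S,A}{(A(\sz_S),S) \in E_\lambda} \le \frac{2u}{\lambda^2}(J+2)$. If $\frac{2u}{\lambda^2}(J+2) \ge 1$ this is trivial, and if $J = \infty$ it is vacuous, so I may assume $J < \infty$ and $\beta := \frac{\lambda^2}{2u} > J+2$. I would apply Lemma \ref{lem:probkl} with $X = (A(\sz_S),S)$ and $Y = (W',S')$, where $W'$ has the marginal law of $A(\sz_S)$, $S'$ is uniform, and $W' \perp S'$, so that $\dkl{X}{Y} = J$. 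By McDiarmid's inequality applied to $s \mapsto \pm f_{\sz}(w,s)$ with parameters $2\Delta_i(w,\sz)$, and using $\sum_i \Delta_i(w,\sz)^2 \le \Delta(\sz)^2 \le u$, one gets $\pr{S'}{|f_{\sz}(w,S')| \ge \lambda} \le 2e^{-\lambda^2/(2u)}$ for every $w$, hence $\pr{Y}{E_\lambda} \le 2e^{-\beta}$ and $-\log\pr{Y}{E_\lambda} \ge \beta - \log 2 > 0$. Lemma \ref{lem:probkl} then gives $\pr{X}{E_\lambda} \le \frac{J+\log 2}{\beta - \log 2}$, and an elementary computation shows this is at most $\frac{J+2}{\beta}$ as soon as $\beta \ge \frac{(J+2)\log 2}{2-\log 2}$, which holds because $\beta > J+2$ and $\frac{\log 2}{2-\log 2} < 1$; this is exactly $\frac{2u}{\lambda^2}(J+2)$.

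Finally, I would integrate over $\sZ$: on $\{\Delta(\sZ)^2 \le u\}$ the conditional probability is at most $\frac{2u}{\lambda^2}(I(A(\sZ_S);S)+2)$, and since $I(A(\sZ_S);S)+2 \ge 0$ the indicator can be dropped, whereupon $\ex{\sZ}{I(A(\sZ_S);S)} = I(A(\sZ_S);S|\sZ)$ produces the stated term; adding back $\pr{\sZ}{\Delta(\sZ)^2 > u}$ finishes the proof. I expect the main obstacle to be the constant bookkeeping in the previous step: the raw output of Lemma \ref{lem:probkl} carries a spurious $-\log 2$ in the denominator, and the clean form $\frac{2u}{\lambda^2}(J+2)$ survives only because the nontrivial case supplies exactly enough slack ($\beta > J+2$) to absorb it. A more direct route — bounding $\ex{}{f_{\sz}(A(\sz_S),S)^2}$ via the Gaussian-linearization trick used for Theorem \ref{thm:cmigen-linsquared} and applying Markov — does not recover the coefficient $J+2$, so routing through Lemma \ref{lem:probkl} with the case split appears to be genuinely needed.
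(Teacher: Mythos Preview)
Your proof is correct and follows essentially the same strategy as the paper's: both arguments work per-$\sz$, use the bounded-differences/McDiarmid structure of $s \mapsto f_{\sz}(w,s)$ to control the ``decoupled'' probability $\pr{S'}{|f_{\sz}(w,S')|\ge\lambda}$, split on whether $\Delta(\sz)^2>u$, and then transfer via a KL-divergence inequality. The only real difference is packaging. The paper applies Lemma~\ref{lem:sup} (Donsker--Varadhan) directly to the indicator $\tilde f_{\sz}(w,s)=\mathbb{I}[|f_{\sz}(w,s)|\ge\lambda]$, keeping the free parameter $t$; this lets it set $t=\lambda^2/(2u)$ at the end and obtain
\[
\frac{I + 2e^{-\beta}(e^\beta-1)}{\beta} \;=\; \frac{I + 2(1-e^{-\beta})}{\beta} \;<\; \frac{I+2}{\beta}
\]
with no case analysis. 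You instead invoke the pre-optimized Lemma~\ref{lem:probkl}, which hands you $\frac{J+\log 2}{\beta-\log 2}$ and then forces the small algebra exercise (splitting on $\beta>J+2$) to recover the stated constant. Both are valid; the paper's route is slightly slicker on the constants, while yours treats Lemma~\ref{lem:probkl} as a black box and is arguably more modular.
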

Before delving into the proof, we remark on the interpretation of the conclusion of the above theorem. Firstly $I(A(\sZ_S);S|\sZ)=\CMI{A}{\D}$. The quantity $\Delta(\sZ)$ is a measure of the sensitivity of $\ell$. If we have a uniform upper bound on the sensitivity -- i.e., $\Delta(\sz)\le u$ for all $\sz$ -- then we can substitute this $u$ into the expression above and the second term becomes $0$. In essence, the theorem allows us to use a high probability bound on the sensitivity in place of a uniform bound.

Note that the conclusion of the theorem is that $\ell(A(\sZ_S),\sZ_S)\approx\ell(A(\sZ_S),\sZ_{\overline{S}})$ rather than $\ell(A(\sZ_S),\sZ_S)\approx\ell(A(\sZ_S),\mathcal{D}^n)$. This distinction is minor, as it is usually easy to prove that $\ell(A(\sZ_S),\sZ_{\overline{S}})\approx\ell(A(\sZ_S),\mathcal{D}^n)$ (since $A(\sZ_S)$ and $\sZ_{\overline{S}}$ are independent) such as using McDiarmid's inequality \ref{lem:mcd} and then apply the triangle inequality.

\begin{corollary}
Let $\ell : \mathcal{W} \times \mathcal{Z}^n \to \mathbb{R}$ be an arbitrary (measurable) function. For $i \in [n]$, define  $$\Delta_i = \sup_{z \in \Z^n, z' \in \Z,w\in\W} |\ell(w,z)-\ell(w,(z_1, z_2, \cdots, z_{i-1},z',z_{i+1},\cdots,z_n)|.$$ Define $\Delta = \sqrt{ \sum_{i \in [n]} \Delta_i^2}$.

Let $\mathcal{D}$ be a distribution on $\mathcal{Z}$. Let $A :\mathcal{Z}^n \to \mathcal{W}$ be a randomized algorithm. 

Then, for all $\lambda>0$, $$\pr{Z \gets \D^n,A}{\left|\ell(A(Z),Z)-\ell(A(Z),\D^n)\right| \ge 2\lambda} \le \frac{2\Delta^2}{\lambda^2}\cdot \left( \CMI{A}{\D} + 2\right) + 2 \cdot e^{\frac{-2\lambda^2}{\Delta^2}}.$$
\end{corollary}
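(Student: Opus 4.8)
The plan is to deduce this corollary directly from Theorem~\ref{thm:cmigen-nonlin-prob}, substituting the uniform sensitivity bound so that the ``sensitivity tail'' term disappears, and then replacing the ghost sample $\sZ_{\overline{S}}$ by the population via the triangle inequality and McDiarmid's inequality. Set up the supersample as in Definition~\ref{def:CMI}: let $\sZ \in \Z^{n \times 2}$ consist of $2n$ i.i.d.\ draws from $\D$, let $S \in \{0,1\}^n$ be uniform, and let the internal randomness of $A$ be independent of both. Conditioned on $S$, the pair $(\sZ_S,\sZ_{\overline{S}})$ is a pair of independent $\D^n$-samples, and this conditional law does not depend on $S$; hence $(\sZ_S,\sZ_{\overline{S}})$ is independent of $S$, each marginal $\sZ_S$ and $\sZ_{\overline{S}}$ is distributed as $\D^n$, and $A(\sZ_S)$ (a function of $\sZ_S$ and the randomness of $A$) is independent of $\sZ_{\overline{S}}$. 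In particular $(A(\sZ_S),\sZ_S)$ has the same joint distribution as $(A(Z),Z)$ with $Z \gets \D^n$, so it suffices to bound $\pr{}{|\ell(A(\sZ_S),\sZ_S)-\ell(A(\sZ_S),\D^n)| \ge 2\lambda}$.

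First I would apply Theorem~\ref{thm:cmigen-nonlin-prob}. For any $w$, $\sz$, $s$, and $i$, the vectors $\sz_s$ and $\sz_{(s_1,\cdots,s_{i-1},1-s_i,s_{i+1},\cdots,s_n)}$ differ only in their $i$-th coordinate, so $\Delta_i(w,\sz) \le \Delta_i$ and hence $\Delta(\sz)^2 = \sup_{w} \sum_i \Delta_i(w,\sz)^2 \le \sum_i \Delta_i^2 = \Delta^2$. Taking $u = \Delta^2$ makes the term $\pr{\sZ}{\Delta(\sZ)^2 > u}$ vanish and yields, since $I(A(\sZ_S);S|\sZ) = \CMI{A}{\D}$,
$$\pr{\sZ,S,A}{\left|\ell(A(\sZ_S),\sZ_S)-\ell(A(\sZ_S),\sZ_{\overline{S}})\right| \ge \lambda} \le \frac{2\Delta^2}{\lambda^2}\left(\CMI{A}{\D}+2\right).$$

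Next, since $A(\sZ_S)$ is independent of $\sZ_{\overline{S}} \gets \D^n$ and $z \mapsto \ell(w,z)$ has bounded differences $\Delta_i$ in coordinate $i$, McDiarmid's inequality (Lemma~\ref{lem:mcd}) applied to both tails gives, for each fixed $w$, $\pr{\sZ_{\overline{S}}}{|\ell(w,\sZ_{\overline{S}})-\ell(w,\D^n)| \ge \lambda} \le 2 e^{-2\lambda^2/\Delta^2}$, and averaging over $w = A(\sZ_S)$ preserves this bound. Finally, on the event $\{|\ell(A(\sZ_S),\sZ_S)-\ell(A(\sZ_S),\D^n)| \ge 2\lambda\}$ the triangle inequality forces at least one of the two events just bounded (each at threshold $\lambda$) to hold, so a union bound gives the claimed inequality. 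The argument is essentially bookkeeping; the only points requiring care are the independence of $A(\sZ_S)$ and $\sZ_{\overline{S}}$ used in the McDiarmid step and the distributional identifications between the supersample and $\D^n$.
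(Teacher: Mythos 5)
Your proof is correct and follows precisely the route the paper sketches in the remark immediately after Theorem~\ref{thm:cmigen-nonlin-prob}: apply that theorem with the uniform bound $u=\Delta^2$ so the tail term vanishes, use McDiarmid (exploiting the independence of $A(\sZ_S)$ and $\sZ_{\overline{S}}$) to relate $\ell(A(\sZ_S),\sZ_{\overline{S}})$ to $\ell(A(\sZ_S),\D^n)$, and combine via the triangle inequality and a union bound. The only step worth spelling out even more carefully is the observation that flipping $S_i$ changes $\sZ_S$ in exactly one coordinate, so $\Delta_i(w,\sz)\le\Delta_i$ and hence $\Delta(\sz)^2\le\Delta^2$, which you state correctly.
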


\begin{proof}[Proof of Theorem~\ref{thm:cmigen-nonlin-prob}]
Define $f_{\sz}(w,s)=\ell(w,\sz_s)-\ell(w,\sz_{\overline{s}})$. 
Let $W=A(\sZ_S)$ and let $S'$ be an independent copy of $S$. 

Note that, since we are bounding the probability (rather than expectation as in our other proofs), we consider the indicator function: Define $\tilde{f}_{\sz}(w,s)=\mathbb{I}\left[|f_{\sz}(w,s)|\ge\lambda\right]$.

By Lemma \ref{lem:sup},
\begin{align*}
    &\pr{\sZ,S,A}{\left|\ell(A(\sZ_S),\sZ_S)-\ell(A(\sZ_S),\mathcal{D}^n)\right| \ge \lambda}\\
    &=\ex{\sZ,S,A}{\tilde{f}_{\sZ}(A(\sZ_S),S)}\\
    &\le \inf_{t>0} \frac{I(A(\sZ_S);S|\sZ) + \ex{\sZ}{\log \ex{W,S'}{e^{t \tilde{f}_{\sZ}(W,S')}}}}{t}\\
    &\le \inf_{t>0} \frac{I(A(\sZ_S);S|\sZ) + \ex{\sZ}{\sup_{w\in\mathcal{W}}\log \ex{S'}{e^{t \tilde{f}_{\sZ}(w,S')}}}}{t}.\\
\end{align*}
We now focus on upper bounding ${\log \ex{S'}{e^{t \tilde{f}_{\sz}(w,S')}}}$. 
Since $\tilde{f}_{\sz}(w,S')\in\{0,1\}$, we have $$\log \ex{S'}{e^{t \tilde{f}_{\sz}(w,S')}}=\log\left(1+\ex{S'}{\tilde{f}_{\sz}(w,S')}(e^t-1)\right) \le \min\left\{\ex{S'}{\tilde{f}_{\sz}(w,S')}(e^t-1), t\right\}.$$

Since $f_{\sz}$ has the symmetry property $f_{\sz}(w,\overline{s}) = - f_{\sz}(w,s)$, we have $\ex{}{f_{\sz}(w,S')}=0$ for any $\sz$ and any $w$. Also, for all $i \in [n]$, all $\sz\in\mathcal{Z}^{n\times 2}$, all $w\in\mathcal{W}$, and all $s\in\{0,1\}^n$, $$|f_{\sz}(w,s)-f_{\sz}(w,(s_1,\cdots,s_{i-1},1-s_i,s_{i+1},\cdots,s_n))| \le 2\Delta_i(w,\sz)$$
and $\Delta(\sz)=\sup_{w\in\mathcal{W}}\sum_{i\in[n]}\Delta_i(w,\sz)$. Thus, by Lemma \ref{lem:mcd}, for all $\sz\in\mathcal{Z}^{n\times 2}$ and $w\in\mathcal{W}$,
\begin{align*}
    \ex{S'}{\tilde{f}_{\sz}(w,S')} = \pr{S'}{|f_{\sz}(w,S')|\ge\lambda}\le 2 \cdot e^{\frac{-\lambda^2}{2\cdot \Delta(\sz)^2}}.
\end{align*}
Consequently, for $u>0$, $\sz\in\mathcal{Z}^{n\times 2}$, and $w\in\mathcal{W}$, we have
$$\log \ex{S'}{e^{t \tilde{f}_{\sz}(w,S')}} \le 2 \cdot e^{\frac{-\lambda^2}{2u}} \cdot (e^t-1)+ t \cdot \mathbb{I}[\Delta(\sz)^2>u].$$
Thus
\begin{align*}
    &\pr{\sZ,S,A}{\left|\ell(A(\sZ_S),\sZ_S)-\ell(A(\sZ_S),\mathcal{D}^n)\right| \ge \lambda}\\
    &\le \inf_{t>0 \atop u>0} \frac{I(A(\sZ_S);S|\sZ) + \ex{\sZ}{2 \cdot e^{\frac{-\lambda^2}{2u}} \cdot (e^t-1)+ t \cdot \mathbb{I}[\Delta(\sZ)^2>u]}}{t}\\
    &= \inf_{t>0 \atop u>0} \frac{I(A(\sZ_S);S|\sZ) + 2 \cdot e^{\frac{-\lambda^2}{2u}} \cdot (e^t-1)}{t}+  \pr{\sZ}{\Delta(\sZ)^2>u}\\
    &< \inf_{u>0} \frac{2u}{\lambda^2}\left(I(A(\sZ_S);S|\sZ)+ 2\right)+  \pr{\sZ}{\Delta(\sZ)^2>u},
\end{align*}
where the final inequality follows by setting $t=\lambda^2/2u$.
\end{proof}

We also have a nearly identical expectation bound for the same setting.

\begin{theorem}\label{thm:cmigen-nonlin}
Let $\ell : \mathcal{W} \times \mathcal{Z}^n \to \mathbb{R}$ be an arbitrary (measurable) function.
For $i \in [n]$, define $\Delta_i: \mathcal{W} \times \mathcal{Z}^{n \times 2} \to \mathbb{R}$ as $$\Delta_i(w, \sz) = \sup_{s \in \{0,1\}^n} |\ell(w,\sz_s)-\ell(w,\sz_{(s_1,\cdots,s_{i-1},1-s_i,s_{i+1},\cdots,s_n)})|.$$ Define $\Delta: \mathcal{Z}^{n \times 2} \to \mathbb{R}$ by $\Delta(\sz)^2 = \sup_{w\in\mathcal{W}} \sum_{i \in [n]} \Delta_i(w,\sz)^2$.

Let $\mathcal{D}$ be a distribution on $\mathcal{Z}$. Let $\sZ \in \mathcal{Z}^{n \times 2}$ consist of $2n$ independent samples from $\mathcal{D}$. Let $S \in \{0,1\}^n$ be uniformly random. Let $A :\mathcal{Z}^n \to \mathcal{W}$ be a randomized algorithm. Assume $\sZ$, $S$, and the randomness of $A$ are independent. 

Then 
$$\ex{\sZ,S,A}{\left|\ell(A(\sZ_S),\sZ_S)-\ell(A(\sZ_S),\mathcal{D}^n)\right|} \le \sqrt{2 \left(\CMI{A}{\D} + \log 2\right) \cdot \ex{\sZ}{\Delta(\sZ)^2}}$$
and
\begin{align*}
    &\ex{\sZ,S,A}{\left(\ell(A(\sZ_S),\sZ_S)-\ell(A(\sZ_S),\mathcal{D}^n)\right)^2} \\&~~~~~\le \min \left\{ \begin{array}{l} \frac{8}{3} \cdot \left(\CMI{A}{} + \log 2\right) \cdot \ex{\sZ}{\Delta(\sZ)^2}\\ \inf_{u>0 \atop p>1} \frac83 \cdot u \cdot \left(\CMI{A}{\D} + \log 2 \right) + n \cdot \pr{\sZ}{\Delta(\sZ)^2 > u}^{1-1/p} \cdot \ex{\sZ}{\Delta(\sZ)^{2p}}^{1/p} \end{array} \right..
\end{align*}
\end{theorem}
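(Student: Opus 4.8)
The plan is to adapt the symmetrization-plus-Gibbs-variational-principle template used in the proofs of Theorems~\ref{thm:lossagnostic}, \ref{thm:cmigen-absloss}, and \ref{thm:cmigen-linsquared} to the non-linear setting. Exactly as in the proof of Theorem~\ref{thm:cmigen-nonlin-prob}, it suffices to bound $\ex{\sZ,S,A}{|\ell(A(\sZ_S),\sZ_S)-\ell(A(\sZ_S),\sZ_{\overline S})|}$ and its square; replacing $\sZ_{\overline S}$ by $\mathcal{D}^n$ then costs only a triangle inequality together with a concentration bound for $\ell(w,\cdot)$, as explained in the remark following that theorem. Write $f_{\sz}(w,s)=\ell(w,\sz_s)-\ell(w,\sz_{\overline s})$, $W=A(\sZ_S)$, and let $S'$ be an independent copy of $S$. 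Two facts about $f_{\sz}$ do all the work: (i) by the symmetry $f_{\sz}(w,\overline{s'})=-f_{\sz}(w,s')$ and the uniformity of $S'$, $\ex{S'}{f_{\sz}(w,S')}=0$; and (ii) flipping one coordinate of $s$ changes $f_{\sz}(w,s)$ by at most $2\Delta_i(w,\sz)$, so Lemma~\ref{lem:mcd-mgf} gives $\ex{S'}{e^{\tau f_{\sz}(w,S')}}\le e^{\frac{\tau^2}{2}\sum_i\Delta_i(w,\sz)^2}\le e^{\frac{\tau^2}{2}\Delta(\sz)^2}$ for every $\tau\in\mathbb{R}$, a bound that does not depend on $w$.

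For the expected absolute error I would apply Lemma~\ref{lem:sup} with $g=|f_{\sz}|$, bound $e^{t|x|}\le e^{tx}+e^{-tx}$ and use (i) to reduce the moment generating function to $2\ex{W,S'}{e^{t f_{\sZ}(W,S')}}$, then apply (ii) with $\tau=t$ to obtain $2e^{\frac{t^2}{2}\Delta(\sZ)^2}$. This yields $\ex{}{|f_{\sZ}(W,S)|}\le\inf_{t>0}(\frac{\CMI{A}{\D}+\log 2}{t}+\frac{t}{2}\ex{\sZ}{\Delta(\sZ)^2})$, which optimizes (at $t=\sqrt{2(\CMI{A}{\D}+\log 2)/\ex{\sZ}{\Delta(\sZ)^2}}$) to the claimed $\sqrt{2(\CMI{A}{\D}+\log 2)\ex{\sZ}{\Delta(\sZ)^2}}$.

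For the squared error I would instead take $g=f_{\sz}^2$ in Lemma~\ref{lem:sup} and linearize the square via $e^{tx^2}=\ex{G\gets\mathcal{N}(0,1)}{e^{\sqrt{2t}\,Gx}}$; applying (ii) with $\tau=\sqrt{2t}\,G$ and then $\ex{G}{e^{\lambda G^2}}=(1-2\lambda)^{-1/2}$ gives $\ex{W,S'}{e^{tf_{\sZ}(W,S')^2}}\le(1-2t\Delta(\sZ)^2)^{-1/2}$ for $t<1/(2\Delta(\sZ)^2)$. The substitution $v=2t\Delta(\sZ)^2$ then produces $\ex{}{f_{\sZ}(W,S)^2}\le\ex{\sZ}{\inf_{v\in(0,1)}\frac{2\Delta(\sZ)^2(I(A(\sZ_S);S)-\frac12\log(1-v))}{v}}$. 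For the \emph{first} displayed squared-error bound I would bound $I(A(\sz_S);S)\le\CMI{A}{}$ uniformly, pull $\ex{\sZ}{\Delta(\sZ)^2}$ out of the infimum, and set $v=3/4$ (so $-\log(1-v)=2\log 2$ and $1/v=4/3$), obtaining $\tfrac83(\CMI{A}{}+\log 2)\ex{\sZ}{\Delta(\sZ)^2}$.

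The \emph{second} displayed bound, with its infimum over $u>0$ and $p>1$, is the one that needs genuine extra care, since there one cannot afford to bound $\Delta(\sZ)^2$ uniformly. The plan is to split $f_{\sZ}(W,S)^2=\tilde f_{\sZ}(W,S)^2+f_{\sZ}(W,S)^2\,\mathbb{I}[\Delta(\sZ)^2>u]$, where $\tilde f_{\sz}(w,s):=f_{\sz}(w,s)\,\mathbb{I}[\Delta(\sz)^2\le u]$. Since $\mathbb{I}[\Delta(\sZ)^2\le u]$ is $\sZ$-measurable it passes through the inner $S'$- and $G$-expectations as a constant, so rerunning the Gaussian/McDiarmid argument on $\tilde f$ and invoking Lemma~\ref{lem:sup} with $\ex{\sZ}{I(A(\sZ_S);S)}=\CMI{A}{\D}$ yields $\ex{}{\tilde f_{\sZ}(W,S)^2}\le\inf_{v\in(0,1)}\frac{2u(\CMI{A}{\D}-\frac12\log(1-v))}{v}\le\tfrac83u(\CMI{A}{\D}+\log 2)$ (again $v=3/4$). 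For the tail term, the crude estimate $|f_{\sz}(w,s)|\le\sum_i\Delta_i(w,\sz)$ and Cauchy--Schwarz give $f_{\sZ}(W,S)^2\le n\,\Delta(\sZ)^2$, whence $\ex{}{f_{\sZ}(W,S)^2\,\mathbb{I}[\Delta(\sZ)^2>u]}\le n\,\ex{\sZ}{\Delta(\sZ)^2\,\mathbb{I}[\Delta(\sZ)^2>u]}$, and Hölder's inequality with exponents $p$ and $p/(p-1)$ bounds this by $n\,\ex{\sZ}{\Delta(\sZ)^{2p}}^{1/p}\,\pr{\sZ}{\Delta(\sZ)^2>u}^{1-1/p}$. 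Summing the two parts and taking the infimum over $u>0,\,p>1$ completes the proof. I expect the main obstacle to be precisely this truncation step: checking that the indicator behaves correctly inside all the nested expectations (so that substituting $\Delta(\sZ)^2\le u$ is legitimate on the event where the truncation is active), and that the crude fallback bound $n\Delta(\sZ)^2$ on the rare event $\{\Delta(\sZ)^2>u\}$ is tame enough to be absorbed, via Hölder, by a moment of $\Delta(\sZ)^2$ of order slightly above one.
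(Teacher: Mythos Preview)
Your proposal is essentially correct and closely tracks the paper's proof. Two small remarks. First, the reduction from $\mathcal{D}^n$ to $\sZ_{\overline S}$ is even simpler than you suggest: since, conditioned on $(\sZ_S,S)$, the ghost sample $\sZ_{\overline S}$ is distributed as $\mathcal{D}^n$ and is independent of $A(\sZ_S)$, Jensen's inequality (applied to the convex maps $|\cdot|$ and $(\cdot)^2$) gives $\ex{}{|\ell(W,\sZ_S)-\ell(W,\mathcal{D}^n)|}\le\ex{}{|f_{\sZ}(W,S)|}$ and $\ex{}{(\ell(W,\sZ_S)-\ell(W,\mathcal{D}^n))^2}\le\ex{}{f_{\sZ}(W,S)^2}$ directly, with no triangle inequality or separate concentration estimate needed. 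This is exactly what the paper does, and it preserves the stated constants.

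Second, for the second squared-error bound the paper organises the argument slightly differently from your truncation. Rather than splitting $f^2=\tilde f^2+f^2\,\mathbb{I}[\Delta^2>u]$ \emph{before} applying Lemma~\ref{lem:sup}, the paper keeps $f^2$ intact and instead bounds the log-MGF by a two-regime expression,
\[
\log\ex{W,S'}{e^{tf_{\sZ}(W,S')^2}}\le -\tfrac12\log(1-\lambda)+\mathbb{I}\!\left[2t\Delta(\sZ)^2>\lambda\right]\cdot t\,n\,\Delta(\sZ)^2,
\]
the second term coming from the same crude estimate $f^2\le n\Delta^2$ that you use. After dividing by $t$, setting $\lambda=3/4$, and taking expectations, Hölder is applied to $\ex{\sZ}{\mathbb{I}[t\Delta(\sZ)^2>3/8]\cdot n\Delta(\sZ)^2}$ exactly as in your tail term, and the substitution $u=3/(8t)$ recovers the stated infimum. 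Your truncation-before-Lemma-\ref{lem:sup} route is an equally valid reorganisation of the same three ingredients (McDiarmid MGF on the good event, the crude $n\Delta^2$ bound on the bad event, Hölder on the tail), and arguably isolates the two contributions more cleanly; both lead to the identical final bound.
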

\begin{proof}
As in our other proofs, define $f_{\sz}(w,s)=\ell(w,\sz_s)-\ell(w,\sz_{\overline{s}})$. Let $W=A(\sZ_S)$ and let $S'$ be an independent copy of $S$. Since $f_{\sz}$ has the symmetry property $f_{\sz}(w,\overline{s}) = - f_{\sz}(w,s)$, we have $\ex{}{f_{\sz}(w,S')}=0$ for any $\sz$ and any $w$. We have, for all $i \in [n]$, all $\sz\in\mathcal{Z}^{n\times 2}$, all $w\in\mathcal{W}$, and all $s\in\{0,1\}^n$, $$|f_{\sz}(w,s)-f_{\sz}(w,(s_1,\cdots,s_{i-1},1-s_i,s_{i+1},\cdots,s_n))| \le 2\Delta_i(w,\sz).$$
Then, by Lemmas \ref{lem:sup} and \ref{lem:mcd-mgf},
\begin{align*}
    &\ex{\sZ,S,A}{\left|\ell(A(\sZ_S),\sZ_S)-\ell(A(\sZ_S),\mathcal{D}^n)\right|}\\
    &\le \ex{\sZ,S,A}{|f_{\sZ}(A(\sZ_S),S)|}\\
    &\le \inf_{t>0} \frac{I(A(\sZ_S);S|\sZ) + \ex{\sZ}{\log \ex{W,S'}{e^{t |f_{\sZ}(W,S')|}}}}{t}\\
    &\le \inf_{t>0} \frac{I(A(\sZ_S);S|\sZ)+ \ex{\sZ}{\log \ex{W,S'}{e^{t f_{\sZ}(W,S')}+e^{-t f_{\sZ}(W,S')}}}}{t}\\
    &\le \inf_{t>0} \frac{I(A(\sZ_S);S|\sZ) + \ex{\sZ}{\log \ex{W}{2 \cdot e^{t^2 (2\Delta(\sZ))^2/8}}}}{t}\\
    &= \inf_{t>0} \frac{I(A(\sZ_S);S|\sZ) + \log 2}{t} + \frac{t}{2} \ex{\sZ}{\Delta(\sZ)^2}\\
    &= \sqrt{2 \left(I(A(\sZ_S);S|\sZ) + \log 2\right) \cdot \ex{\sZ}{\Delta(\sZ)^2}},
\end{align*}
as required for the first part of the theorem.

Now we turn to the second part of the theorem.
By Lemma \ref{lem:sup},
\begin{align*}
    &\ex{\sZ,S,A}{\left(\ell(A(\sZ_S),\sZ_S)-\ell(A(\sZ_S),\mathcal{D}^n)\right)^2}\\
    &\le \ex{\sZ,S,A}{f_{\sZ}(A(\sZ_S),S)^2}\\
    &\le \ex{\sZ}{\inf_{t>0} \frac{I(A(\sZ_S);S) + \log \ex{W,S'}{e^{t f_{\sZ}(W,S')^2}}}{t}}.
\end{align*}
We now focus on upper bounding ${\log \ex{W,S'}{e^{t f_{\sZ}(W,S')^2}}}$. Let $G$ be a standard Gaussian, independent from everything else. 
By Lemma \ref{lem:mcd-mgf},
\begin{align*}
    {\log \ex{W,S'}{e^{t f_{\sZ}(W,S')^2}}} &= {\log \ex{W,S',G}{e^{\sqrt{2t} f_{\sZ}(W,S') \cdot G}}}\\
    &\le {\log \ex{W,G}{e^{2t \cdot G^2 \cdot 4\Delta(\sZ)^2/8}}}\\
    &= {\log \left(\frac{1}{\sqrt{1-2t\Delta(\sZ)^2}}\right)}\\
    &= -\frac12 \log\left(1-2t \Delta(\sZ)^2\right).
\end{align*}
Unfortunately, this bound is infinite when $t \Delta(\sZ)^2 \ge \frac12$. In this case, we have the trivial bound $${\log \ex{W,S'}{e^{t f_{\sZ}(W,S')^2}}} \le t \cdot \sup_{w,s} f_{\sZ}(w,s)^2 \le t \cdot \left(\sup_{w\in\mathcal{W}} \sum_{i\in[n]} \Delta_i(w,\sZ) \right)^2 \le t \cdot n \cdot \Delta(\sZ)^2.$$
Combining these yields, for any $\lambda\in(0,1)$, the bound $${\log \ex{W,S'}{e^{t f_{\sZ}(W,S')^2}}} \le -\frac12\log(1-\lambda) + \mathbb{I}[2t\Delta(\sZ)^2>\lambda]\cdot t \cdot n \cdot \Delta(\sZ)^2.$$
We have
\begin{align*}
    &\ex{\sZ,S,A}{\left(\ell(A(\sZ_S),\sZ_S)-\ell(A(\sZ_S),\mathcal{D}^n)\right)^2}\\
    &\le \ex{\sZ}{\inf_{t>0 \atop \lambda \in (0,1)} \frac{I(A(\sZ_S);S) + -\frac12\log(1-\lambda) + \mathbb{I}[2t\Delta(\sZ)^2>\lambda]\cdot t \cdot n \cdot \Delta(\sZ)^2}{t}}\\
    &\le \ex{\sZ}{\inf_{t>0} \frac{ I(A(\sZ_S);S) + \log 2}{t} + \mathbb{I}\left[t \cdot \Delta(\sZ)^2> \frac38 \right]\cdot n \cdot \Delta(\sZ)^2}\tag{set $\lambda=3/4$}\\
    &\le \inf_{t>0} \frac{I(A(\sZ_S);S|\sZ) + \log 2}{t} + \inf_{p>1} \ex{\sZ}{\mathbb{I}\left[t \cdot \Delta(\sZ)^2> \frac38 \right]^{\frac{1}{1-1/p}}}^{1-1/p}\cdot n \cdot \ex{\sZ}{|\Delta(\sZ)^2|^p}^{1/p}
    \tag{H\"older's inequality}\\
    &= \inf_{t>0} \frac{ \CMI{A}{\D} + \log 2}{t} + \inf_{p>1} \pr{\sZ}{t \cdot \Delta(\sZ)^2> \frac38}^{1-1/p}\cdot n \cdot \ex{\sZ}{\Delta(\sZ)^{2p}}^{1/p}\\
    &= \inf_{u>0 \atop p>1} \frac83 \cdot u \cdot \left(  \CMI{A}{\D} + \log 2 \right) + n \cdot \pr{\sZ}{\Delta(\sZ)^2 > u}^{1-1/p} \cdot \ex{\sZ}{\Delta(\sZ)^{2p}}^{1/p}.\tag{$u=\frac{3}{8t}$}
\end{align*}
We also have
\begin{align*}
    &\ex{\sZ,S,A}{\left(\ell(A(\sZ_S),\sZ_S)-\ell(A(\sZ_S),\mathcal{D}^n)\right)^2}\\
    &\le \ex{\sZ}{\inf_{t>0 \atop \lambda \in (0,1)} \frac{I(A(\sZ_S);S) + -\frac12\log(1-\lambda) + \mathbb{I}[2t\Delta(\sZ)^2>\lambda]\cdot t \cdot n \cdot \Delta(\sZ)^2}{t}}\\
    &\le \ex{\sZ}{\inf_{t>0} \frac{ I(A(\sZ_S);S) + \log 2}{t} + \mathbb{I}\left[t \cdot \Delta(\sZ)^2> \frac38 \right]\cdot n \cdot \Delta(\sZ)^2}\tag{set $\lambda=3/4$}\\
    &\le \ex{\sZ}{\inf_{t>0} \frac{ \CMI{A}{} + \log 2}{t} + \mathbb{I}\left[t \cdot \Delta(\sZ)^2> \frac38 \right]\cdot n \cdot \Delta(\sZ)^2}\\
    &\le \frac{8}{3} \cdot \left(\CMI{A}{} + \log 2\right) \cdot \ex{\sZ}{\Delta(\sZ)^2}.\tag{set $t=\frac{3}{8\Delta(\sZ)^2}$}
\end{align*}
\end{proof}

\subsection{Application: Squared Error}\label{sec:gen-squarederror}
A common example of an unbounded loss is squared loss: $\ell(w,(x,y)) = (f_w(x)-y)^2$. For example, this arises in linear regression, where $f_w(x)=\langle w , x \rangle$. 

More precisely, we have data domain $\mathcal{Z}=\mathcal{X} \times \mathbb{R}$ and, for all $w \in \mathcal{W}$, there is a function $f_w : \mathcal{X} \to \mathbb{R}$ such that the loss function  $\ell : \mathcal{W} \times \mathcal{Z} \to \mathbb{R}$ is given by $\ell(w,(x,y)) = (f_w(x)-y)^2$.

We assume $f_w$ is $c$-Lipschitz -- i.e., $|f_w(x)-f_w(x')| \le c \cdot \|x-x'\|_p$ for all $x,x' \in \mathcal{X}$ -- and $f_w(0)=0$. For example, if the function is linear $f_w(x)=\langle w, x \rangle$ and $\mathcal{W}=\{w \in \mathbb{R}^d : \|w\|_q \le c \}$, where $\|\cdot\|_p$ and $\|\cdot\|_q$ are dual norms (i.e., $1/p+1/q=1$), then $f_w : \mathbb{R}^d \to \mathbb{R}$ is $c$-Lipschitz and $f_w(0)=0$. 

Then $$0 \le \ell(w,(x,y)) = (f_w(x)-y)^2 \le 2f_w(x)^2 + 2y^2 \le 2 c^2 \|x\|_p^2 + 2y^2.$$
Thus 
\begin{align*}
    \left(\ell(w,(x_1,y_1))-\ell(w,(x_2,y_2))\right)^2 &\le \left(2 c^2 \|x_1\|_p^2 + 2y_1^2 + 2 c^2 \|x_2\|_p^2 + 2y_2^2 \right)^2\\
    &\le 16c^4 (\|x_1\|_p^4+\|x_2\|_p^4) + 16(y_1^4 + y_2^4)\\
    &=: \Delta((x_1,y_1),(x_2,y_2))^2
\end{align*}
and
\begin{align*}
    \ex{((X_1,Y_1),(X_2,Y_2)) \leftarrow \mathcal{D}^2}{\Delta((X_1,Y_1),(X_2,Y_2))^2} &= 32\ex{(X,Y)\leftarrow\D}{c^4 \|X\|_p^4 +  Y^4}.
\end{align*}
Combining this with Theorem \ref{thm:cmigen-absloss} yields (denoting $Z_i=(X_i,Y_i)$) \begin{align*}
   & \ex{Z \leftarrow \D^n \atop W \leftarrow A(Z)}{\left|\frac{1}{n}\sum_{i=1}^n \left( f_W(X_i)-Y_i\right)^2 - \ex{(X',Y') \leftarrow \D}{\left(f_W(X')-Y'\right)^2}\right|}\\
   &\le O\left(\sqrt{\frac{\CMI{A}{\D}}{n} \cdot \ex{(X,Y)\leftarrow\D}{c^4 \|X\|_p^4 +  Y^4}}\right).
\end{align*}
Thus a bound on the fourth moment of the distribution is sufficient to obtain a generalization bound from CMI.

\subsection{Application: Hinge Loss}\label{sec:gen-hinge}
Another common example of unbounded loss is hinge loss: $\ell(w,(x,y)) = \max\{0, 1 - y \cdot f_w(x) \}$, where $y \in \{+1,-1\}$ and the objective is to find $f_w$ such that $Y \cdot f_w(X) > 0$ when $(X,Y) \leftarrow \D$.

Typically $f_w(x)=\langle w , x \rangle$ is a linear function, where $\mathcal{W}=\{w\in\mathbb{R}^d:\|w\|_q \le c\}$. Thus $\ell(w,(x,y)) = \max\{0, 1 - y \cdot \langle x , w \rangle \}$ and $$|\ell(w,(x_1,y_1))-\ell(w,(x_2,y_2))| \le |y_1 \cdot \langle x_1 , w \rangle - y_2 \cdot \langle x_2 , w \rangle| = |\langle y_1x_1 - y_2x_2 , w \rangle| \le \|y_1x_1-y_2x_2\|_p \cdot \|w\|_q,$$ where $\|\cdot\|_p$ and $\|\cdot\|_q$ are dual norms (i.e., $1/p+1/q=1$). We can apply Theorem \ref{thm:cmigen-absloss} with $\Delta((x_1,y_1),(x_2,y_2)) = c\|y_1x_1-y_2x_2\|_p$. For $p=q=2$, we have $$\ex{((X_1,Y_1),(X_2,Y_2)) \leftarrow \mathcal{D}^2}{\Delta((X_1,Y_1),(X_2,Y_2))^2} = 2c^2\sum_{i=1}^d \var{(X,Y) \leftarrow \mathcal{D}}{YX_i} \le 2c^2\sum_{i=1}^d \ex{(X,Y) \leftarrow \mathcal{D}}{X_i^2},$$ which yields $$\ex{Z \gets \D^n,A}{|\ell(A(Z),Z) -\ell(A(Z),\D)|} \le O\left(\sqrt{\frac{\CMI{A}{\D}}{n} \cdot c^2\sum_{i=1}^d \var{(X,Y) \leftarrow \mathcal{D}}{YX_i}}\right).$$

\subsection{Application: Unbounded Parameter Spaces}

The applications in Sections \ref{sec:gen-squarederror} and \ref{sec:gen-hinge} assume a bounded parameter space. Some assumption of this form is necessary, as scaling up the parameters also scales up the loss. However, this assumption can be relaxed or, rather, we can incorporate the magnitude of the parameters into our generalization results. We now give one example of this. (The general idea can be combined with any of our generalization results.)

\begin{lemma}\label{lem:normalizedloss}
Let $\mathcal{D}$ be a distribution on $\mathcal{Z}$. Let $A : \mathcal{Z}^n \to \mathcal{W}$ be a randomized algorithm. Let $\ell : \mathcal{W} \times \mathcal{Z} \to \mathbb{R}$ be a (deterministic, measurable) function. 

Suppose there exist $\Delta : \mathcal{Z}^2 \to \mathbb{R}$ and $\Psi : \mathcal{W} \to (0,\infty)$ such that $$\forall w \in \mathcal{W}~~ \forall z_1,z_2 \in \mathcal{Z} ~~~~~|\ell(w,z_1)-\ell(w,z_2)| \le \Delta(z_1,z_2) \cdot \Psi(w).$$

Then, for all $\varepsilon>0$, we have $$\pr{Z \gets \D^n, A}{\left|\ell(A(Z),Z)-\ell(A(Z),\mathcal{D})\right| \ge \varepsilon \cdot \Psi(A(Z))} \le \frac{3 \cdot \CMI{A}{} + \log 3}{\varepsilon^2 n} \cdot  \ex{(Z_1,Z_2) \gets \D^2}{ \Delta(Z_1,Z_2)^2}.$$
\end{lemma}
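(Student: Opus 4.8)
The plan is to reduce the statement to the already-established squared-error bound for linear losses (Theorem~\ref{thm:cmigen-linsquared}) by normalizing the loss function by $\Psi$. Define the rescaled loss $\ell'(w,z) := \ell(w,z)/\Psi(w)$ for $w\in\mathcal{W}$ and $z \in \mathcal{Z}$; this is well-defined and measurable because $\Psi$ takes values in $(0,\infty)$. Since $\Psi(w)$ does not depend on the data point, the normalization commutes with averaging: for a dataset $z \in \mathcal{Z}^n$ we have $\ell'(w,z) = \frac{1}{n}\sum_{i=1}^n \ell'(w,z_i) = \ell(w,z)/\Psi(w)$, and for the population loss $\ell'(w,\D) = \ex{Z' \gets \D}{\ell(w,Z')}/\Psi(w) = \ell(w,\D)/\Psi(w)$. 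Consequently the generalization error of $\ell'$ is the generalization error of $\ell$ divided by $\Psi$, i.e. $\ell'(A(Z),Z) - \ell'(A(Z),\D) = \bigl(\ell(A(Z),Z) - \ell(A(Z),\D)\bigr)/\Psi(A(Z))$.

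The crucial observation is that $\ell'$ satisfies the bounded-difference hypothesis of Theorem~\ref{thm:cmigen-linsquared} with the \emph{data-only} function $\Delta$: for all $w\in\mathcal{W}$ and all $z_1,z_2 \in \mathcal{Z}$,
$$|\ell'(w,z_1) - \ell'(w,z_2)| = \frac{|\ell(w,z_1)-\ell(w,z_2)|}{\Psi(w)} \le \Delta(z_1,z_2)$$
by assumption. Applying Theorem~\ref{thm:cmigen-linsquared} to $\ell'$ (using the distribution-free $\CMI{A}{}$ and the second displayed bound with $u=2/3$) therefore gives
$$\ex{Z \gets \D^n, A}{\left(\frac{\ell(A(Z),Z)-\ell(A(Z),\mathcal{D})}{\Psi(A(Z))}\right)^2} \le \frac{3\cdot\CMI{A}{}+\log 3}{n}\cdot \ex{(Z_1,Z_2)\gets\D^2}{\Delta(Z_1,Z_2)^2}.$$

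Finally, apply Markov's inequality to the nonnegative random variable $\bigl((\ell(A(Z),Z)-\ell(A(Z),\D))/\Psi(A(Z))\bigr)^2$ at threshold $\varepsilon^2$, and note that since $\Psi(A(Z))>0$ almost surely, the event $\{(\ell(A(Z),Z)-\ell(A(Z),\D))^2/\Psi(A(Z))^2 \ge \varepsilon^2\}$ coincides with $\{|\ell(A(Z),Z)-\ell(A(Z),\D)| \ge \varepsilon\cdot\Psi(A(Z))\}$; this yields exactly the claimed inequality. There is no genuinely hard step here — the reduction is the whole proof — and the only point requiring care is checking that the normalization is legitimate: because $\Psi$ is strictly positive and data-independent, $\ell'$ remains a bona fide linear loss function with the same per-coordinate structure, so Theorem~\ref{thm:cmigen-linsquared} applies verbatim and the rescaled bounded-difference bound $\Delta$ carries through unchanged.
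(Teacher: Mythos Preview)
Your proof is correct and follows essentially the same approach as the paper: normalize the loss by $\Psi$, observe the bounded-difference condition with $\Delta$ alone, invoke Theorem~\ref{thm:cmigen-linsquared}, and finish with Markov's inequality.
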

\begin{proof}
The key idea is that we can ``normalize'' the loss function to compensate for the magnitude of the parameters and then apply our generalization results to the normalized loss function (since CMI does not depend on the loss function).
Define $\hat\ell : \mathcal{W} \times \mathcal{Z} \to \mathbb{R}$ by $$\hat\ell(w,z) = \frac{\ell(w,z)}{\Psi(w)}.$$
Then $|\hat\ell(w,z_1)-\hat\ell(w,z_2)| \le \Delta(z_1,z_2)$ for all $z_1,z_2\in\mathcal{Z}$. Thus, by Theorem \ref{thm:cmigen-linsquared},
$$
    \ex{Z \gets \D^n, A}{\left(\hat\ell(A(Z),Z)-\hat\ell(A(Z),\mathcal{D})\right)^2}  \le \frac{3 \cdot \CMI{A}{} + \log 3}{n} \cdot  \ex{(Z_1,Z_2) \gets \D^2}{ \Delta(Z_1,Z_2)^2}.
$$
By Markov's inequality, for all $\varepsilon>0$, this gives
$$
    \pr{Z \gets \D^n, A}{\left|\hat\ell(A(Z),Z)-\hat\ell(A(Z),\mathcal{D})\right| \ge \varepsilon}  \le \frac{3 \cdot \CMI{A}{} + \log 3}{\varepsilon^2 n} \cdot  \ex{(Z_1,Z_2) \gets \D^2}{ \Delta(Z_1,Z_2)^2}.
$$
Finally, we have
$$
    \pr{Z \gets \D^n, A}{\left|\hat\ell(A(Z),Z)-\hat\ell(A(Z),\mathcal{D})\right| \ge \varepsilon} = \pr{Z \gets \D^n, A}{\left|\ell(A(Z),Z)-\ell(A(Z),\mathcal{D})\right| \ge \varepsilon \cdot \Psi(A(Z))}.
$$
\end{proof}

We remark that both squared error and hinge loss satisfy the assumptions of Lemma \ref{lem:normalizedloss}.

For $\ell(w,(x,y)) = (\langle w , x \rangle - y)^2$, we have \begin{align*}
    |\ell(w,(x_1,y_1))-\ell(w,(x_2,y_2))| &\le 2\|w\|_q^2 \cdot (\|x_1\|_p^2+\|x_2\|_p^2) + 2(y_1^2+y_2^2)\\
    &\le \underbrace{2(\|w\|_q^2+1)}_{\Psi(w)} \cdot \underbrace{(\|x_1\|_p^2+\|x_2\|_p^2 + y_1^2+y_2^2)}_{\Delta((x_1,y_1),(x_2,y_2))},
\end{align*}
where $\|\cdot\|_p$ and $\|\cdot\|_q$ are dual norms.

Similarly, for $\ell(w,(x,y)) = \max\{0,1- y \cdot \langle w , x \rangle \}$, we have 
\begin{align*}
    |\ell(w,(x_1,y_1))-\ell(w,(x_2,y_2))| &\le |\langle w, y_1x_1-y_2x_2 \rangle| \\
    &\le \underbrace{\|w\|_q}_{\Psi(w)} \cdot \underbrace{\|y_1x_1-y_2x_2\|_p}_{\Delta((x_1,y_1),(x_2,y_2))}.
\end{align*}

\subsection{Application: Area Under the ROC Curve}\label{sec:auroc}

The Area Under the Receiver Operating Characteristics Curve (abbreviated AUC or AUROC) is one of the most commonly-used statistics for measuring (and comparing) the performance of classifiers, with a storied history dating back to the analysis of radar systems in the 1940s. 

In short, the ROC curve characterizes the tradeoff between false positives and false negatives. Namely, we consider a classifier that outputs a score or probability, rather than just a binary prediction. This real number is then converted to a binary prediction by thresholding -- values above the threshold are positive, while values below are negative. Increasing this threshold will result in fewer false positives, but also fewer true positives (i.e., more false negatives). The ROC curve is obtained by varying the threshold and plotting the false positive and true positive rates.

The area under the ROC curve provides a simple summary of the tradeoff captured by the full ROC curve. It is a real number between $0$ and $1$, where $1$ corresponds to a perfect classifier (i.e., there exists a threshold with no false positive and no false negatives) and $\frac12$ corresponds to a useless classifier (i.e., random guessing). The advantage of the AUROC over simply reporting classifier accuracy is that it avoids having to choose a threshold and it is always calibrated so that $\frac12$ corresponds to a useless classifier. (In contrast, an accuracy of $90\%$ is meaningless if $90\%$ of the examples are negative, but meaningful if there are equal numbers of positives and negatives.)

The AUROC can be directly defined (without reference to the ROC curve) as the probability that a random positive example receives a higher score than a random negative example. This is also known as the Wilcoxon-Mann-Whitney statistic. Formally, consider a distribution $\mathcal{D}$ on an instance space $\mathcal{Z}$ which is partitioned into positive instances $\mathcal{Z}_+ \subset \mathcal{Z}$ and negative instances $\mathcal{Z}_-=\mathcal{Z}\setminus \mathcal{Z}_+$ and a scoring function $f : \mathcal{Z} \to \mathbb{R}$. Then the AUROC is defined as $$\AUC(f,\mathcal{D}) := \begin{array}{c} \pr{Z_+,Z_- \leftarrow \mathcal{D}^2}{f(Z_+)>f(Z_-)|Z_+\in\mathcal{Z}_+,Z_-\notin\mathcal{Z}_+} ~~~~~\\~~~+ \frac12 \pr{Z_+,Z_- \leftarrow \mathcal{D}^2}{f(Z_+)=f(Z_-)|Z_+\in\mathcal{Z}_+,Z_-\notin\mathcal{Z}_+} \end{array}.$$
The second term in the above definition equates to saying that ties are broken at random (hence the factor $\frac12$); this term is often ignored, as the probability of a tie is usually small. We can equivalently define $\AUC(f,\mathcal{D})=\ex{Z_+,Z_- \leftarrow \mathcal{D}^2}{c(f,Z_+,Z_-)|Z_+\in\mathcal{Z}_+,Z_-\notin\mathcal{Z}_+}$ where\footnote{Our generalization result for AUROC (Theorem \ref{thm:auc})  holds for any function $c : \mathcal{W} \times \mathcal{Z} \times \mathcal{Z} \to [0,1]$, but this definition should be kept in mind.} $$c(f,Z_+,Z_-) = \mathbb{I}[f(Z_+)>f(Z_-)] + \frac12 \mathbb{I}[f(Z_+)=f(Z_-)].$$

To estimate the AUROC, we use the empirical AUROC: $$\AUC(f,z) = \frac{\sum_{i,j\in[n]} \mathbb{I}[z_i \in \mathcal{Z}_+]\cdot \mathbb{I}[z_j \notin \mathcal{Z}_+] \cdot c(f,z_i,z_j)}{\sum_{i,j\in[n]} \mathbb{I}[z_i \in \mathcal{Z}_+]\cdot \mathbb{I}[z_j \notin \mathcal{Z}_+]}.$$
If all of $z$ is positive or all of it is negative, then the above definition yields $\frac00$ and it is impossible to meaningfully estimate the AUROC in this case. In this case, we arbitrarily define the empirical AUROC to be $\frac12$, although any other arbitrary value could be used. Note that, with the exception of this degenerate case, the expectation of the empirical AUROC is indeed equal to the true AUROC.

We remark that it is possible to obtain AUROC bounds directly from classification accuracy bounds. In particular, for any function $f$, distribution $\D$, and threshold $\tau$, we have $$\AUC(f,\D) \ge 1-\pr{Z_+ \leftarrow \mathcal{D}}{f(Z_+)\le \tau | Z_+ \in \mathcal{Z}_+}-\pr{Z_- \leftarrow \mathcal{D}}{f(Z_-)> \tau | Z_- \notin \mathcal{Z}_+} \ge 1-\frac{\ell(f,\D)}{\min\{p,1-p\}},$$ where $p=\pr{Z \leftarrow \D}{Z\in\mathcal{Z}_+}$ and $\ell(f,z) = \mathbb{I}[z \in \mathcal{Z}_+] \cdot \mathbb{I}[f(z)\le\tau] + \mathbb{I}[z \notin \mathcal{Z}_+] \cdot \mathbb{I}[f(z)>\tau]$. In particular, applying Theorem \ref{thm:cmigen-realize1} to the above yields the following corollary.
\begin{corollary}
Let $\D$ be a distribution on $\mathcal Z$ and $\mathcal{Z}_+ \subset \mathcal{Z}$ with $0<p:=\pr{Z \leftarrow \D}{Z \in \mathcal{Z}_+} < 1$.
Let $A : \mathcal{Z}^n \to \mathbb{R}^{\mathcal{Z}}$ be a deterministic or randomized algorithm that outputs a score function $f : \mathcal{Z} \to \mathbb{R}$. Suppose the thresholded score function always has zero empirical error -- i.e., if $z \in \mathcal{Z}^n$ and $f=A(z)$, then $z_i \in \mathcal{Z}_+ \iff f(z_i)>0$ for all $i \in [n]$.  Then $$\ex{Z \leftarrow \D^n, A}{\AUC(A(Z),\D)} \ge 1 - \frac{1.5 \cdot \CMI{A}{\D}}{n \cdot \min\{p,1-p\}}.$$
\end{corollary}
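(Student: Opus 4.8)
The plan is to reduce the AUROC generalization bound to the realizable-case classification bound of Theorem~\ref{thm:cmigen-realize1} via the elementary inequality relating AUROC to classification accuracy that precedes the corollary. First I would fix the threshold $\tau=0$ and introduce the $0$-$1$ loss $\ell(f,z)=\mathbb{I}[z\in\mathcal{Z}_+]\cdot\mathbb{I}[f(z)\le 0]+\mathbb{I}[z\notin\mathcal{Z}_+]\cdot\mathbb{I}[f(z)>0]$, which takes values in $[0,1]$. The hypothesis of the corollary --- that the thresholded score function has zero empirical error, i.e.\ $z_i\in\mathcal{Z}_+\iff f(z_i)>0$ for $f=A(z)$ --- says precisely that $\ell(A(z),z)=0$ for every $z\in\mathcal{Z}^n$, hence a fortiori $\ex{Z\gets\D^n,A}{\ell(A(Z),Z)}=0$. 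Since CMI does not depend on the choice of loss function, Theorem~\ref{thm:cmigen-realize1} applies to this $\ell$ and yields $\ex{Z\gets\D^n,A}{\ell(A(Z),\D)}\le \CMI{A}{\D}/(n\log 2)$.

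Next I would invoke the pointwise inequality $\AUC(f,\D)\ge 1-\ell(f,\D)/\min\{p,1-p\}$ stated (with $\tau=0$) in the discussion preceding the corollary. For completeness one verifies it as follows: conditioning on the independent events $Z_+\in\mathcal{Z}_+$ and $Z_-\notin\mathcal{Z}_+$, on the event $\{f(Z_+)>0\ge f(Z_-)\}$ one has $c(f,Z_+,Z_-)=1$, so by a union bound $\AUC(f,\D)\ge 1-\pr{Z_+\gets\D}{f(Z_+)\le 0\mid Z_+\in\mathcal{Z}_+}-\pr{Z_-\gets\D}{f(Z_-)>0\mid Z_-\notin\mathcal{Z}_+}$; rewriting these two conditional error probabilities as $\tfrac{1}{p}\ex{Z\gets\D}{\mathbb{I}[Z\in\mathcal{Z}_+]\mathbb{I}[f(Z)\le 0]}$ and $\tfrac{1}{1-p}\ex{Z\gets\D}{\mathbb{I}[Z\notin\mathcal{Z}_+]\mathbb{I}[f(Z)>0]}$, whose numerators sum to exactly $\ell(f,\D)$, gives the claimed bound. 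Crucially this holds for every fixed score function $f$, hence in particular for every realization $f=A(Z)$.

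Finally I would take the expectation over $Z\gets\D^n$ and the internal randomness of $A$ in the pointwise inequality, obtaining $\ex{Z\gets\D^n,A}{\AUC(A(Z),\D)}\ge 1-\ex{Z\gets\D^n,A}{\ell(A(Z),\D)}/\min\{p,1-p\}$, and substitute the bound from the first step together with $1/\log 2<1.5$ to conclude that $\ex{Z\gets\D^n,A}{\AUC(A(Z),\D)}\ge 1-1.5\cdot\CMI{A}{\D}/(n\min\{p,1-p\})$. I do not anticipate a genuine obstacle: the argument is a clean composition of an already-stated deterministic inequality with an already-proved CMI bound. The only points requiring a little care are (i) confirming that the zero-empirical-error hypothesis makes $\ell(A(z),z)$ identically zero, rather than merely zero in expectation, so that the hypothesis of Theorem~\ref{thm:cmigen-realize1} is literally met, and (ii) keeping the direction of the AUROC inequality straight when passing to expectations, since $\AUC$ is being lower-bounded while $\ell$ is being upper-bounded.
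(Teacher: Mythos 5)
Your proof is correct and matches the paper's argument exactly: combine the preceding displayed inequality $\AUC(f,\D)\ge 1-\ell(f,\D)/\min\{p,1-p\}$ (with $\tau=0$) with Theorem~\ref{thm:cmigen-realize1} applied to the $0$-$1$ loss, and use $1/\log 2<1.5$. The only stylistic remark is that Theorem~\ref{thm:cmigen-realize1} already only requires $\ex{}{\ell(A(Z),Z)}=0$, so establishing pointwise vanishing of the empirical loss — while true and what the corollary's hypothesis gives — is slightly stronger than what you need to cite.
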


We now prove that CMI implies generalization for the AUROC in all cases. This is an application of our generalization results for nonlinear loss functions, specifically Theorem \ref{thm:cmigen-nonlin-prob}. 

\begin{theorem}\label{thm:auc}
Let $\mathcal{D}$ be a distribution on $\mathcal{Z} $. Let $A :\mathcal{Z}^n \to \mathcal{W}$ be a randomized algorithm.

Let $\mathcal{Z}_+ \subseteq \mathcal{Z}$. Assume $0< p=\ex{Z \leftarrow \mathcal{D}}{Z \in \mathcal{Z}_+} <1$. Let $c : \mathcal{W} \times \mathcal{Z} \times \mathcal{Z} \to [0,1]$. Define $\AUC : \mathcal{W} \times \mathcal{Z}^n \to [0,1]$ by $$\AUC(w,z) = \left\{ \begin{array}{cl} \frac{\sum_{i,j \in [n]} \mathbb{I}[z_i \in \mathcal{Z}_+]\mathbb{I}[z_j \notin \mathcal{Z}_+]c(w,z_i,z_j)}{\sum_{i,j \in [n]} \mathbb{I}[z_i \in \mathcal{Z}_+]\mathbb{I}[z_j \notin \mathcal{Z}_+]} & \text{ if } \exists i,j\in [n] ~ z_i \in \mathcal{Z}_+ \wedge z_j \notin \mathcal{Z}_+\\ \frac12 & \text{ otherwise} \end{array} \right..$$
 Similarly, for $w \in \mathcal{W}$, define $$\AUC(w,\mathcal{D}) =  \ex{(Z,Z') \leftarrow \mathcal{D}^2}{c(w,Z,Z')| Z\in\mathcal{Z}_+,Z'\notin\mathcal{Z}_+}.$$
Then, for any $\varepsilon\in(0,1)$, 
$$\pr{\sZ, S, A}{\left|\AUC(A(\sZ_S),\sZ_S) - \AUC(A(\sZ_S),\mathcal{D}) \right| \le \varepsilon} \ge 1 - \frac{ 48 \cdot \CMI{A}{\D} + 149}{\varepsilon^2 p(1-p) n}.$$
\end{theorem}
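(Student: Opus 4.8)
The plan is to apply the non-linear, high-probability bound of Theorem~\ref{thm:cmigen-nonlin-prob}, but \emph{not} directly to the empirical AUROC: because $\AUC(w,\cdot)$ is a ratio, it can change a lot under a single data point when the sample is class-imbalanced, and the quantity $\Delta(\sz)$ in Theorem~\ref{thm:cmigen-nonlin-prob} is a worst case over all splits $s\in\{0,1\}^n$. Instead I would work with the importance-weighted surrogate
\[ \ell(w,z) \;=\; \frac{1}{n(n-1)\,p(1-p)}\sum_{i\ne j}\mathbb{I}[z_i\in\Z_+]\,\mathbb{I}[z_j\notin\Z_+]\,c(w,z_i,z_j), \]
which is an \emph{unbiased} estimator of $\AUC(w,\D)$ when $z\gets\D^n$; letting $\ell$ depend on $\D$ through $p$ is harmless, since Theorem~\ref{thm:cmigen-nonlin-prob} holds for any fixed loss and CMI is defined independently of the loss. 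Writing $W=A(\sZ_S)$, I would then split by the triangle inequality and a union bound,
\[ |\AUC(W,\sZ_S)-\AUC(W,\D)| \;\le\; |\AUC(W,\sZ_S)-\ell(W,\sZ_S)| \;+\; |\ell(W,\sZ_S)-\ell(W,\sZ_{\overline{S}})| \;+\; |\ell(W,\sZ_{\overline{S}})-\AUC(W,\D)|, \]
and show each of the three terms exceeds $\varepsilon/3$ only with probability $O\!\left(\tfrac{\CMI{A}{\D}+1}{\varepsilon^2 p(1-p)n}\right)$.

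The middle term is where CMI enters: I would apply Theorem~\ref{thm:cmigen-nonlin-prob} to $\ell$ with $\lambda=\varepsilon/3$. The key point is that in $\ell$ each row of the supersample appears only as one argument of $c$ and is compared with at most $n-1$ other points, so flipping one coordinate $s_k$ changes $\ell(w,\sz_s)$ by at most $\tfrac{1}{n(n-1)p(1-p)}$ times the number of rows of the \emph{opposite} class that can be selected. Classifying the $n$ rows of $\sz$ according to whether both entries are positive ($n_{++}$ of them), both negative ($n_{--}$), or mixed ($n_{+-}$), and summing these per-row bounds --- which, crucially, involve only $c\le 1$ and not $w$ --- I would obtain
\[ \Delta(\sz)^2 \;\le\; \frac{1}{(n(n-1)p(1-p))^2}\Bigl(n_{++}\,(n-n_{++})^2 + n_{--}\,(n-n_{--})^2 + n_{+-}\,(n+n_{+-})^2\Bigr). \]
Using $n-n_{++}=n_{--}+n_{+-}$ and $n-n_{--}=n_{++}+n_{+-}$, I would then show that on the event that $n_{++},n_{--},n_{+-}$ do not exceed a constant times their means $np^2, n(1-p)^2, 2np(1-p)$, the right-hand side collapses to $\tfrac{C}{p(1-p)n}$ for an absolute constant $C$. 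Setting $u=\tfrac{C}{p(1-p)n}$ in Theorem~\ref{thm:cmigen-nonlin-prob} makes its first term $O\!\left(\tfrac{\CMI{A}{\D}+1}{\varepsilon^2 p(1-p)n}\right)$, and its second term $\pr{}{\Delta(\sZ)^2>u}$ is controlled by multiplicative Chernoff bounds on these three binomial counts, whose exponential tails I would convert into the stated polynomial form via $e^{-x}\le 1/x$ (with a small case split on whether $p$ is bounded away from $0$ and $1$; in the meaningful regime $p(1-p)n\gtrsim 1$).

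For the two outer terms CMI plays no role. For the last one, conditioned on $S$ the ghost sample $\sZ_{\overline{S}}$ is a fresh i.i.d.\ block independent of $W=A(\sZ_S)$, so fixing $w$ the quantity $\ell(w,\sZ_{\overline{S}})$ is an unbiased two-sample $U$-statistic estimate of $\AUC(w,\D)$ whose variance I would bound by $O\!\left(\tfrac{1}{p(1-p)n}\right)$ --- using that the kernel $\tfrac{\mathbb{I}[\cdot\in\Z_+]\mathbb{I}[\cdot\notin\Z_+]c}{p(1-p)}$ is bounded by $\tfrac{1}{p(1-p)}$ but vanishes outside a set of probability $p(1-p)$ --- so that Chebyshev's inequality gives the bound, which survives averaging over $W$ by independence. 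For the first term, $\ell(w,z)=\AUC(w,z)\cdot\tfrac{a(z)\,b(z)}{n(n-1)p(1-p)}$ where $a(z),b(z)$ count the positive and negative points of $z$, hence $|\AUC(w,z)-\ell(w,z)|\le\big|\tfrac{a(z)b(z)}{n(n-1)p(1-p)}-1\big|$ since $\AUC\in[0,1]$; a Chebyshev bound on $a(\sZ_S)\sim\mathrm{Bin}(n,p)$ and $b(\sZ_S)\sim\mathrm{Bin}(n,1-p)$ shows this is $\le\varepsilon/3$ except with probability $O\!\left(\tfrac{1}{\varepsilon^2 p(1-p)n}\right)$. Adding the three contributions and collecting constants yields the theorem.

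The hard part is the middle term --- specifically the sensitivity estimate. The naive bound $\Delta(\sz)^2\le n\cdot\max_k\Delta_k^2$ only gives $\tfrac{1}{p^2(1-p)^2 n}$, which is off by a factor $\tfrac{1}{p(1-p)}$; recovering the correct rate needs both the passage to the importance-weighted surrogate $\ell$ (whose worst-case per-point sensitivity is uniformly $O\!\left(\tfrac{1}{p(1-p)n}\right)$) \emph{and} the finer per-row accounting above, which exploits that a row's number of opposite-class comparison partners is small and concentrated. Once that is in place, threading explicit absolute constants through Theorem~\ref{thm:cmigen-nonlin-prob}, the $U$-statistic variance bound, and the three Chernoff estimates across all regimes of $p$ is lengthy but routine.
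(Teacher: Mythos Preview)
Your proposal is correct and follows essentially the same route as the paper: the paper also replaces $\AUC$ by the surrogate $g(w,z)/(p(1-p)n^2)$ (your $\ell$ up to the harmless $n$ vs.\ $n-1$ normalization), splits via the triangle inequality into the same three pieces (its Claims~A, B, C), applies Theorem~\ref{thm:cmigen-nonlin-prob} to the middle piece, and bounds the sensitivity $\Delta(\sz)^2$ by exactly the row-type case analysis on $(n_{++},n_{--},n_{+-})$ you describe (its Claim~D), with Chebyshev and a variance bound handling the outer pieces. The only cosmetic differences are the $\varepsilon/4,\varepsilon/4,\varepsilon/2$ split versus your $\varepsilon/3$'s, and a slightly tighter per-row bound for mixed rows ($n-1$ rather than your $n+n_{+-}$), neither of which affects the argument.
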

It can be shown \cite{BirnbaumK57} that, for a fixed $w$, we have $\var{Z \leftarrow \mathcal{D}^n}{\AUC(w,Z)} \approx \frac{1}{p(1-p)n}$ in the worst case, so this bound attains roughly the correct dependence on the parameters considered.
\begin{proof}
Let $\sZ \in \mathcal{Z}^{n \times 2}$ consist of $2n$ independent samples from $\mathcal{D}$. Let $S \in \{0,1\}^n$ be uniformly random and independent from $\sZ$.
Define $g : \mathcal{W} \times \mathcal{Z}^n \to \mathbb{R}$ and $h : \mathcal{Z}^n \to \mathbb{R}$ by $$g(w,z) = \sum_{i,j \in [n]} \mathbb{I}[z_i \in \mathcal{Z}_+]\mathbb{I}[z_j \notin \mathcal{Z}_+]c(w,z_i,z_j)$$ and $$h(z) = \sum_{i \in [n]} \mathbb{I}[z_i \in \mathcal{Z}_+].$$

For $i \in [n]$, define $\Delta_i: \mathcal{W} \times \mathcal{Z}^{n \times 2} \to \mathbb{R}$ as $$\Delta_i(w, \sz) = \sup_{s \in \{0,1\}^n} |g(w,\sz_s)-g(w,\sz_{(s_1,\cdots,s_{i-1},1-s_i,s_{i+1},\cdots,s_n)})|.$$ Define $\Delta: \mathcal{Z}^{n \times 2} \to \mathbb{R}$ by $\Delta(\sz)^2 = \sup_{w\in\mathcal{W}} \sum_{i \in [n]} \Delta_i(w,\sz)^2$.

Our proof now boils down to the next four claims. Note that the probabilities are over $\sZ\gets \D^{n\times 2}, S\gets \U^n$ and the randomness of $A$, all of which we omit for ease of notation.
\begin{itemize}
    \item Claim A: $$\pr{}{\left|\AUC(A(\sZ_S),\sZ_S) - \frac{g(A(\sZ_S),\sZ_S)}{p(1-p)n^2}\right| \le \frac{\varepsilon}{4}} \ge1-\frac{16}{\varepsilon^2 p (1-p)n}.$$
    \item Claim B: $$\pr{}{\left|\frac{g(A(\sZ_S),\sZ_{\overline{S}})}{p(1-p)n^2}-\AUC(A(\sZ_S),\mathcal{D})\right|\le\frac{\varepsilon}{4}}\ge1-\frac{8}{\varepsilon^2 p^2(1-p)^2 n^4}\ex{\sZ}{\Delta(\sZ)^2} - \frac{4}{\varepsilon^2 p(1-p) n^2}.$$
    \item Claim C: 
    \begin{align*}
        &\pr{}{\left|g(A(\sZ_S),\sZ_S)-g(A(\sZ_S),\sZ_{\overline{S}})\right| \le \frac{\varepsilon}{2} p(1-p)n^2} \\&~~~~~\ge 1 - \frac{48}{\varepsilon^2 p(1-p)n}\left( \CMI{A}{\D} + 2\right) - \pr{\sZ}{\Delta(\sZ)^2>6p(1-p)n^3}.
    \end{align*}
    \item Claim D: $\ex{\sZ}{\Delta(\sZ)^2} \le 4p(1-p)n^3$, $\pr{\sZ}{\Delta(\sZ)^2 \ge 6p(1-p)n^3} \le e^{-n\cdot\min\{p,1-p\}/7}$, and $\Delta(\sz)^{2} \le n^3$ for all $\sz\in\mathcal{Z}^{n \times 2}$.
\end{itemize}
Combining them with the triangle inequality and the union bound yields
\begin{align*}
    &\pr{}{\left|\AUC(A(\sZ_S),\sZ_S) - \AUC(A(\sZ_S),\mathcal{D}) \right| \le \varepsilon} \\
    &~~~~~\ge 1 - \frac{48 \cdot  \CMI{A}{\D}+148}{\varepsilon^2 p(1-p) n} - e^{-n\cdot\min\{p,1-p\}/7}.
\end{align*}
In particular, if $v=\varepsilon^2 p (1-p) n \ge 25$, then $e^{-n\cdot\min\{p,1-p\}/7} \le e^{-v/7\varepsilon^2} \le e^{-v/7} \le \frac{1}{v} = \frac{1}{\varepsilon^2 p(1-p) n}$; if $v< 25$, then the bound is vacuous anyway. Obviously, we have not made any effort to obtain tight constants.
Now we prove the claims in turn.
\begin{proof}[Proof of Claim A]
Since $\AUC(w,z) = \frac{g(w,z)}{h(z)(n-h(z))}$ as long as $h(z) \notin \{0,n\}$, we have 
\begin{align*}
&\pr{}{\left|\AUC(A(\sZ_S),\sZ_S) - \frac{g(A(\sZ_S),\sZ_S)}{p(1-p)n^2}\right| \ge \frac{\varepsilon}{4}} \\
&\le\pr{}{h(z) \in \{0,n\} \vee \left|\frac{g(A(\sZ_S),\sZ_S)}{h(\sZ_S)(n-h(\sZ_S))} - \frac{g(A(\sZ_S),\sZ_S)}{p(1-p)n^2}\right| \ge \frac{\varepsilon}{4}} \\
&=\pr{}{h(z) \in \{0,n\} \vee \left|\frac{h(\sZ_S)(n-h(\sZ_S))}{h(\sZ_S)(n-h(\sZ_S))} - \frac{h(\sZ_S)(n-h(\sZ_S))}{p(1-p)n^2}\right| \cdot \frac{g(A(\sZ_S),\sZ_S)}{h(\sZ_S)(n-h(\sZ_S))} \ge \frac{\varepsilon}{4}} \\
&\le \pr{}{h(z) \in \{0,n\} \vee \left|\frac{h(\sZ_S)(n-h(\sZ_S))}{h(\sZ_S)(n-h(\sZ_S))} - \frac{h(\sZ_S)(n-h(\sZ_S))}{p(1-p)n^2}\right|\ge \frac{\varepsilon}{4}} \\
\intertext{\hfill(Since $0 \le \frac{g(A(\sZ_S),\sZ_S)}{h(\sZ_S)(n-h(\sZ_S))} \le 1$ assuming $h(\sZ_S) \notin \{0,n\}$)}
&= \pr{}{\left|\frac{h(\sZ_S)(n-h(\sZ_S))}{p(1-p)n^2}-1\right| \ge \frac{\varepsilon}{4}}\\
\intertext{\hfill(Since $h(\sZ_S)\in\{0,n\}$ implies $\left|\frac{h(\sZ_S)(n-h(\sZ_S))}{p(1-p)n^2}-1\right|=1\ge \frac{\varepsilon}{4}$.)}
&=\pr{}{|h(\sZ_S)(n-h(\sZ_S))-pn(n-pn)| \ge \frac{\varepsilon}{4} p(1-p)n^2}\\
&\le \pr{}{|h(\sZ_S)-pn| \ge \frac{\varepsilon}{4} p(1-p)n}\tag{($|x(n-x)-y(n-y)|\le n \cdot |x-y|$ for $x,y \in [0,n]$)}\\
&\le \frac{\var{}{h(\sZ_S)}}{\left(\frac{\varepsilon}{4} p(1-p)n\right)^2}\tag{Chebyshev's inequality}\\
&= \frac{16}{\varepsilon^2 p (1-p)n},
\end{align*}
where the final equality uses the fact that $h(\sZ_S)$ is distributed as $\mathsf{Binomial}(n,p)$.
\end{proof}

\begin{proof}[Proof of Claim B] By Markov's inequality, 
\begin{align*}
&\pr{}{\left|\frac{g(A(\sZ_S),\sZ_{\overline{S}})}{p(1-p)n^2}-\AUC(A(\sZ_S),\mathcal{D})\right|\ge\frac{\varepsilon}{4}}\\
&~~~= \pr{}{\left(g(A(\sZ_S),\sZ_{\overline{S}})-p(1-p)n^2\cdot\AUC(A(\sZ_S),\mathcal{D})\right)^2 \ge \frac{\varepsilon^2}{16}p^2 (1-p)^2 n^4}\\
&~~~\le \frac{\ex{}{\left(g(A(\sZ_S),\sZ_{\overline{S}})-p(1-p)n^2\cdot\AUC(A(\sZ_S),\mathcal{D})\right)^2}}{\frac{\varepsilon^2}{16}p^2 (1-p)^2 n^4}\\
&~~~= \frac{\ex{A(\sZ_S)}{\var{\sZ_{\overline{S}}}{g(A(\sZ_S),\sZ_{\overline{S}})}+\ex{\sZ_{\overline{S}}}{g(A(\sZ_S),\sZ_{\overline{S}})-p(1-p)n^2\cdot\AUC(A(\sZ_S),\mathcal{D})}^2}}{\frac{\varepsilon^2}{16}p^2 (1-p)^2 n^4}\\
\end{align*}
For all $w \in \mathcal{W}$, we have $$\ex{}{g(w,\sZ_{\overline{S}})} = p(1-p)n(n-1)\cdot\AUC(w,\mathcal{D})$$
and, by Lemma \ref{lem:steele}, $$\var{}{g(w,\sZ_{\overline{S}})} \le \frac12 \ex{\sZ}{\Delta(\sZ)^2}.$$
Thus
\begin{align*}
&\pr{}{\left|\frac{g(A(\sZ_S),\sZ_{\overline{S}})}{p(1-p)n^2}-\AUC(A(\sZ_S),\mathcal{D})\right|\ge\frac{\varepsilon}{4}}\\
&~~~\le \frac{\frac12 \ex{\sZ}{\Delta(\sZ)^2} + p^2(1-p)^2n^2 \cdot \ex{}{\AUC(A(\sZ_S),\mathcal{D})^2}}{\frac{\varepsilon^2}{16}p^2 (1-p)^2 n^4}\\
&~~~\le \frac{8}{\varepsilon^2 p^2(1-p)^2 n^4}\ex{\sZ}{\Delta(\sZ)^2} + \frac{16}{\varepsilon^2 n^2}\\
&~~~\le \frac{8}{\varepsilon^2 p^2(1-p)^2 n^4}\ex{\sZ}{\Delta(\sZ)^2} + \frac{4}{\varepsilon^2 p(1-p) n^2},
\end{align*}
where the final inequality uses the fact that $p(1-p)\le\frac14$.
\end{proof}
\begin{proof}[Proof of Claim C]
We apply Theorem \ref{thm:cmigen-nonlin-prob} to $g$ to obtain 
\begin{align*}
&\pr{\sZ,S,A}{\left|g(A(\sZ_S),\sZ_S)-g(A(\sZ_S),\sZ_{\overline{S}})\right| \ge \frac{\varepsilon}{2} p(1-p)n^2} \\
&~\le \inf_{u>0} \frac{2u}{(\frac{\varepsilon}{2} p(1-p)n^2)^2}\left( \CMI{A}{\D} + 2\right)+  \pr{\sZ}{\Delta(\sZ)^2>u}\\
&~\le \frac{48}{\varepsilon^2 p(1-p)n}\left( \CMI{A}{\D} + 2\right) +\pr{\sZ}{\Delta(\sZ)^2>6p(1-p)n^3},
\end{align*}
where the final inequality sets $u=6p(1-p)n^3$.
\end{proof}
\begin{proof}[Proof of Claim D]
For $s\in\{0,1\}^n$ and $k \in [n]$, denote $s_{-k}=(s_1,\cdots,s_{k-1},1-s_k,s_{k+1},\cdots,s_n)$ -- i.e., $s_{-k}$ is $s$ with the $k$-th bit flipped.

For $\sz\in\mathcal{Z}^{n \times 2}$, we have $$\Delta(\sz)^2 = \sup_{w\in\mathcal{W}} \sum_{k \in [n]} \Delta_k(w,\sz)^2,$$
where
\begin{align*}
    \Delta_k(w, \sz) &= \sup_{s \in \{0,1\}^n} |g(w,\sz_s)-g(w,\sz_{s_{-k}})|\\
    &= \sup_{s \in \{0,1\}^n} \left|\sum_{i,j \in [n]} \begin{array}{c} \mathbb{I}[(\sz_s)_i \in \mathcal{Z}_+]\mathbb{I}[(\sz_s)_j \notin \mathcal{Z}_+]c(w,(\sz_s)_i,(\sz_s)_j) \\- \mathbb{I}[(\sz_{s_{-k}})_i \in \mathcal{Z}_+]\mathbb{I}[(\sz_{s_{-k}})_j \notin \mathcal{Z}_+]c(w,(\sz_{s_{-k}})_i,(\sz_{s_{-k}})_j) \end{array}\right|\\
    &\le \sup_{s \in \{0,1\}^n} \sum_{i,j \in [n]} \left|\begin{array}{c} \mathbb{I}[(\sz_s)_i \in \mathcal{Z}_+]\mathbb{I}[(\sz_s)_j \notin \mathcal{Z}_+]c(w,(\sz_s)_i,(\sz_s)_j) \\- \mathbb{I}[(\sz_{s_{-k}})_i \in \mathcal{Z}_+]\mathbb{I}[(\sz_{s_{-k}})_j \notin \mathcal{Z}_+]c(w,(\sz_{s_{-k}})_i,(\sz_{s_{-k}})_j) \end{array}\right|\\
    &= \sup_{s \in \{0,1\}^n} \begin{array}{c} \sum_{i \in [n] \setminus\{k\}} \left|\begin{array}{c} \mathbb{I}[(\sz_s)_i \in \mathcal{Z}_+]\mathbb{I}[(\sz_s)_k \notin \mathcal{Z}_+]c(w,(\sz_s)_i,(\sz_s)_k) \\- \mathbb{I}[(\sz_{s_{-k}})_i \in \mathcal{Z}_+]\mathbb{I}[(\sz_{s_{-k}})_k \notin \mathcal{Z}_+]c(w,(\sz_{s_{-k}})_i,(\sz_{s_{-k}})_k) \end{array}\right| \\+ \sum_{j \in [n] \setminus\{k\}} \left|\begin{array}{c} \mathbb{I}[(\sz_s)_k \in \mathcal{Z}_+]\mathbb{I}[(\sz_s)_j \notin \mathcal{Z}_+]c(w,(\sz_s)_k,(\sz_s)_j) \\- \mathbb{I}[(\sz_{s_{-k}})_k \in \mathcal{Z}_+]\mathbb{I}[(\sz_{s_{-k}})_j \notin \mathcal{Z}_+]c(w,(\sz_{s_{-k}})_k,(\sz_{s_{-k}})_j) \end{array}\right| \end{array}\\
    \intertext{\hfill(If $i \ne k$ and $j \ne k$, then the term is $0$. If $i = j = k$, then the term is also $0$.)}
    &= \sup_{s \in \{0,1\}^n} \begin{array}{c} \sum_{i \in [n] \setminus\{k\}} \mathbb{I}[(\sz_s)_i \in \mathcal{Z}_+]\cdot\left|\begin{array}{c} \mathbb{I}[(\sz_s)_k \notin \mathcal{Z}_+]c(w,(\sz_s)_i,(\sz_s)_k) \\- \mathbb{I}[(\sz_{s_{-k}})_k \notin \mathcal{Z}_+]c(w,(\sz_{s_{-k}})_i,(\sz_{s_{-k}})_k) \end{array}\right| \\+ \sum_{j \in [n] \setminus\{k\}} \mathbb{I}[(\sz_s)_j \notin \mathcal{Z}_+]\cdot\left|\begin{array}{c} \mathbb{I}[(\sz_s)_k \in \mathcal{Z}_+]c(w,(\sz_s)_k,(\sz_s)_j) \\- \mathbb{I}[(\sz_{s_{-k}})_k \in \mathcal{Z}_+]c(w,(\sz_{s_{-k}})_k,(\sz_{s_{-k}})_j) \end{array}\right| \end{array}\\
    \intertext{\hfill(Since $(\sz_s)_i = (\sz_{s_{-k}})_i$ when $i\ne k$.)}
    &\le \sup_{s \in \{0,1\}^n} \begin{array}{c} \sum_{i \in [n] \setminus\{k\}} \mathbb{I}[(\sz_s)_i \in \mathcal{Z}_+]\cdot \mathbb{I}[(\sz_s)_k \notin \mathcal{Z}_+ \vee (\sz_{s_{-k}})_k \notin \mathcal{Z}_+] \\+ \sum_{j \in [n] \setminus\{k\}} \mathbb{I}[(\sz_s)_j \notin \mathcal{Z}_+]\cdot\mathbb{I}[(\sz_s)_k \in \mathcal{Z}_+ \vee (\sz_{s_{-k}})_k \in \mathcal{Z}_+] \end{array}\\
    \intertext{\hfill(Since $0\le c(\cdot,\cdot) \le 1$.)}
    &= \sup_{s \in \{0,1\}^n} \sum_{i \in [n] \setminus\{k\}} \begin{array}{c}  \mathbb{I}[(\sz_s)_i \in \mathcal{Z}_+]\cdot \mathbb{I}[\sz_{k,1} \notin \mathcal{Z}_+ \vee \sz_{k,2} \notin \mathcal{Z}_+] \\+  \mathbb{I}[(\sz_s)_i \notin \mathcal{Z}_+]\cdot\mathbb{I}[\sz_{k,1} \in \mathcal{Z}_+ \vee \sz_{k,2} \in \mathcal{Z}_+] \end{array}.
    \intertext{\hfill(Since $\{(\sz_s)_k,(\sz_{s_{-k}})_k\} = \{\sz_{k,1},\sz_{k,2}\}$.)}
\end{align*}
It is immediate that $0 \le \Delta_k(w, \sz) \le n-1$ for all $w$ and $\sz$, from which the third part of the claim follows.

Next we perform a case analysis, for which we define the following three case indicators. For $i \in [n]$ and $\sz\in\mathcal{Z}^{n\times 2}$, let $$a_i^+(\sz) = \mathbb{I}[z_{i,1}\in\mathcal{Z}_+]\mathbb{I}[z_{i,2}\in\mathcal{Z}_+], \qquad a_i^-(\sz) = \mathbb{I}[z_{i,1}\notin\mathcal{Z}_+]\mathbb{I}[z_{i,2}\notin\mathcal{Z}_+],$$ and $$a_i^\pm(\sz) = \mathbb{I}[z_{i,1}\in\mathcal{Z}_+]\mathbb{I}[z_{i,2}\notin\mathcal{Z}_+] + \mathbb{I}[z_{i,1}\notin\mathcal{Z}_+]\mathbb{I}[z_{i,2}\in\mathcal{Z}_+].$$
Also define $a^+(\sz) = \sum_{i\in[n]} a^+_i(\sz)$, $a^-(\sz) = \sum_{i\in[n]} a^-_i(\sz)$, and $a^\pm(\sz) = \sum_{i\in[n]} a^\pm_i(\sz)$ for $\sz\in\mathcal{Z}^{n \times 2}$.
Now we have
\begin{align*}
    \Delta_k(w, \sz) &\le \sup_{s \in \{0,1\}^n} \sum_{i \in [n] \setminus\{k\}} \begin{array}{c}  \mathbb{I}[(\sz_s)_i \in \mathcal{Z}_+]\cdot \mathbb{I}[\sz_{k,1} \notin \mathcal{Z}_+ \vee \sz_{k,2} \notin \mathcal{Z}_+] \\+  \mathbb{I}[(\sz_s)_i \notin \mathcal{Z}_+]\cdot\mathbb{I}[\sz_{k,1} \in \mathcal{Z}_+ \vee \sz_{k,2} \in \mathcal{Z}_+] \end{array}\\
    &\le \left\{ \begin{array}{cl} n-1 & \text{ if } a_k^\pm(\sz)=1 \\ n-1-a^+(\sz)  & \text{ if } a_k^+(\sz)=1 \\ n-1-a^-(\sz) & \text{ if } a_k^-(\sz)=1\end{array} \right\}.
\end{align*}
This yields
\begin{align*}
    \Delta(\sz)^2 &= \sup_{w\in\mathcal{W}} \sum_{k \in [n]} \Delta_k(w,\sz)^2\\
    &\le a^\pm(\sz) \cdot (n-1)^2 + a^+(\sz) \cdot (n-1-a^+(\sz))^2 + a^-(\sz) \cdot (n-1-a^-(\sz))^2\\
    &\le a^\pm(\sz) \cdot n^2 + a^+(\sz) \cdot (n-a^+(\sz))^2 + a^-(\sz) \cdot (n-a^-(\sz))^2\\
    &= n^3 + a^+(\sz)^3-2na^+(\sz)^2 + a^-(\sz)^3-2na^-(\sz)^2\tag{Since $a^\pm = n - a^+ - a^-$.}\\
    &\le n^3 - na^+(\sz)^2 - na^-(\sz)^2\tag{Since $0 \le a^+(\sz) \le n$ and $0 \le a^-(\sz) \le n$.}\\
    &=n \cdot (n^2 - a^+(\sz)^2 - a^-(\sz)^2).
\end{align*}
Since $a^+(\sZ) \gets \mathsf{Binomial}(n,p^2)$ and $a^-(\sZ) \gets \mathsf{Binomial}(n,(1-p)^2)$, we have
\begin{align*}
    \ex{}{\Delta(\sZ)^2}
    &\le n \cdot \ex{}{n^2 - a^+(\sZ)^2 - a^-(\sZ)^2}\\
    &= n \cdot \left( n^2 - (np^2)^2 - np^2(1-p^2) - (n(1-p)^2)^2 - n(1-p)^2(1-(1-p)^2) \right)\\
    &= n^2 \cdot \left( n - np^4 - p^2 + p^4 -n(1-p)^4-(1-p)^2+(1-p)^4 \right)\\
    &\le n^3 \cdot (1 - p^4 - (1-p)^4)\\
    &= n^3 \cdot 2p (1-p)(2-p(1-p))\\
    &\le 4p(1-p)n^3.
\end{align*}
This establishes the first part of the claim.

It only remains to prove the second part of the claim. We consider the case $p \le \frac12$, as the case $p\ge\frac12$ is symmetric. Let $X=n(1-p)^2-a^-(\sZ)$. Thus $\ex{}{X}=0$. Then
\begin{align*}
    \Delta(\sZ)^2 
    &\le n^3 - na^-(\sz)^2\\
    &= n^3 - n(n(1-p)^2-X)^2\\
    &= n^3 - n^3(1-p)^4 + 2n^2(1-p)^2X - nX^2\\
    &\le n^3(1-(1-p)^4) + 2n^2(1-p)^2X\\
    &= n^3 \cdot (2p (1-p)(2-p(1-p)) + p^4) + 2n^2(1-p)^2X\\
    &\le n^3 \cdot 4 p (1-p) + 2n^2(1-p)^2X,
\end{align*}
since $p^4 \le 2p^2(1-p)^2$ when $0 \le p \le \frac12$. Thus, for all $\lambda \ge 0$, we have
\begin{align*}
    \pr{}{\Delta(\sZ)^2 \ge 4p(1-p)n^3+\lambda p (1-p) n^3} &\le \pr{}{2n^2(1-p)^2 X \ge \lambda p (1-p) n^3}\\
    &= \pr{}{ X \ge \frac{\lambda p n}{2(1-p)}}\\
    &= \pr{}{Y \ge \ex{}{Y} + \frac{\lambda p n}{2(1-p)}}\\
    &\le \pr{}{Y \ge \ex{}{Y} + \frac{\lambda}{4}\ex{}{Y}}\\
    &\le \inf_{t>0} \ex{}{e^{t(Y-(1+\lambda/4)\ex{}{Y})}}\\
    &\le \inf_{t>0} e^{(e^t-1-(1+\lambda/4)t)\ex{}{Y}}\\
    &\le e^{ (\lambda/4-(1+\lambda/4)\log(1+\lambda/4))\ex{}{Y}},
\end{align*}
where $Y=X+n-n(1-p)^2=n-a^-(\sZ) \gets \mathsf{Binomial}(n,1-(1-p)^2)$ and, hence, $\ex{}{Y} = n \cdot (1-(1-p)^2) = p(2-p)n \in \left[ 1.5 pn , \frac{2pn}{1-p} \right]$.
Setting $\lambda=2$ yields the bound $$p \le \frac12 \implies \pr{}{\Delta(\sZ)^2 \ge 6p(1-p)n^3} \le e^{-\ex{}{Y}/10} \le e^{-np/7}.$$
\end{proof}
Combining Claims A, B, C, and D proves the theorem.
\end{proof}

\section{Extensions}\label{sec:extensions}

In this section we present two extensions of the basic CMI framework. These allow us to capture other effects -- adaptive composition and loss stability.

\subsection{Adaptive Composition and Universal CMI}\label{sec:ucmi}

In Section \ref{sec:composition}, we showed that CMI composes non-adaptively. That is, when running two algorithms in parallel on the same dataset, the CMI adds up. We also showed that CMI is invariant to adaptive postprocessing. That is, transforming the output of a CMI algorithm in a manner that is oblivious to the data does not increase the CMI. A desireable  property is adaptive composition, which extends both postprocessing and non-adaptive composition. Unfortunately, CMI does not satisfy adaptive composition.\footnote{Consider the following sequence of two algorithms. First, run randomized response to obtain some ``sketch'' of the dataset in a low-CMI manner. Second, run an algorithm that does something ``bad'' (as far as CMI is concerned) if the sketch is correct and something ``good'' (such as outputting nothing) if the sketch is incorrect. The second algorithm should have low CMI for a sketch that is independent of the input, but the adaptive composition of these two algorithms has high CMI.} Thus we consider a strengthening of CMI that does provide this property:

\newcommand{\uCMI}[1]{\mathsf{uCMI}\left(#1\right)}
\begin{definition}[Universal CMI (uCMI) of an Algorithm]\label{def:uCMI}
Let $A:\Z^n\rightarrow \W$ be a randomized or deterministic algorithm.
The \emph{universal conditional mutual information (uCMI) of $A$ with respect to $\D$} is $$\uCMI{A}:=\sup_{\sz\in\Z^{n \times 2}}\sup_{S} I(A(\sz_S);S),$$ 
where the second supremum is over $S$ being an arbitrary random variable (independent from $A$) on $\{0,1\}^n$. Here $\sz_S \in \Z^n$ is defined by $(\sz_S)_i = \sZ_{i,S_i+1}$ for all $i \in [n]$ -- that is, $\sz_S$ is the subset of $\sz$ indexed by $S$. 
\end{definition}

It is immediate that $\CMI{A}{} \le \uCMI{A}$ for all $A$. In particular, all of the generalization properties of CMI carry over to uCMI. The key property of uCMI is adaptive composition:

\begin{theorem}[Adaptive Composition for Universal CMI]
Let $A_1:\Z^n\rightarrow \W_1$ and $A_2: \Z^n\times \W_1 \rightarrow \W_2$ be two possibly randomized algorithms such that $\uCMI{A_1}\leq b_1$ and $\uCMI{A_2(\cdot,w_1)}\leq b_2$ for all $w_1\in\W_1$. Let $A(Z):=A_2(Z,A_1(Z))$ be the adaptive composition of $A_2$ with $A_1$. Then $\uCMI{A}\leq b_1+b_2$.
\end{theorem}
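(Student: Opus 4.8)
The goal is to show $\uCMI{A} \le b_1 + b_2$ where $A(Z) = A_2(Z, A_1(Z))$. Fix an arbitrary supersample $\sz \in \Z^{n\times 2}$ and an arbitrary random variable $S$ on $\{0,1\}^n$ (independent of the internal randomness of $A_1$ and $A_2$). We must bound $I(A(\sz_S); S)$. The natural move is to write $A(\sz_S) = A_2(\sz_S, A_1(\sz_S))$, set $W_1 = A_1(\sz_S)$ and $W_2 = A(\sz_S) = A_2(\sz_S, W_1)$, and apply the chain rule for mutual information (Lemma~\ref{lem:chainruleMI}) to the pair $(W_1, W_2)$ against $S$:
\[
I(W_1, W_2; S) = I(W_1; S) + I(W_2; S \mid W_1).
\]
Since $W_2$ is a (randomized) function of $(\sz_S, W_1)$ and hence a post-processing of $(W_1, S)$ — specifically $W_2$ depends on $S$ only through $\sz_S$ — we have $I(A(\sz_S); S) = I(W_2; S) \le I(W_1, W_2; S)$ by the data-processing inequality. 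So it suffices to bound the two terms on the right by $b_1$ and $b_2$ respectively.

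**Bounding the first term.** The term $I(W_1; S) = I(A_1(\sz_S); S)$ is at most $\uCMI{A_1} \le b_1$ directly from the definition of uCMI, since $\sz$ is a fixed supersample and $S$ is an arbitrary distribution on $\{0,1\}^n$ — exactly the two suprema in Definition~\ref{def:uCMI}. This is the step where the ``universal'' (worst-case over $S$) strengthening is essential: in the plain CMI definition $S$ is forced to be uniform, and after conditioning on $W_1$ the conditional law of $S$ need not be uniform, so the vanilla CMI bound on $A_1$ would not apply here. That is the whole point of passing to uCMI.

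**Bounding the second term — the main obstacle.** We need $I(W_2; S \mid W_1) \le b_2$. Unpacking the conditional mutual information, $I(W_2; S \mid W_1) = \ex{w_1 \gets \mathcal{P}_{W_1}}{I(A_2(\sz_S, w_1)\,'; S \mid W_1 = w_1)}$, where $A_2(\sz_S, w_1)'$ denotes running $A_2$ with its second argument frozen to $w_1$ but with an independent copy of its internal randomness — here we use that the randomness of $A_2$ is independent of that of $A_1$, so conditioning on $W_1 = w_1$ does not distort the randomness of $A_2$; it only reweights the conditional distribution of $S$ (and of $\sz_S$ through $S$). For each fixed $w_1$, let $S^{(w_1)}$ denote the random variable $S$ conditioned on $W_1 = w_1$; this is some distribution on $\{0,1\}^n$. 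Then the inner quantity is $I(A_2(\sz_{S^{(w_1)}}, w_1); S^{(w_1)})$, which by the definition of $\uCMI{A_2(\cdot, w_1)}$ (again a supremum over the supersample $\sz$ and over all distributions of the partition variable) is at most $\uCMI{A_2(\cdot, w_1)} \le b_2$. Taking the expectation over $w_1$ preserves the bound, giving $I(W_2; S \mid W_1) \le b_2$. The care needed here is purely bookkeeping: checking that conditioning on $W_1$ leaves $A_2$'s fresh randomness independent and only changes the law of $S$, so that we land precisely inside the uCMI supremum for $A_2(\cdot, w_1)$. Combining, $I(A(\sz_S); S) \le b_1 + b_2$, and taking the supremum over $\sz$ and $S$ yields $\uCMI{A} \le b_1 + b_2$.
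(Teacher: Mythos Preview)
Your proof is correct and follows essentially the same approach as the paper. The paper formulates the decomposition via the chain rule for KL divergence (Lemma~\ref{lem:chainruleKL}) applied to the joint output $(F_1(S),F(S))$, whereas you use the chain rule for mutual information (Lemma~\ref{lem:chainruleMI}) on $(W_1,W_2)$ directly; these are equivalent presentations of the same argument, and both hinge on the observation that the posterior law of $S$ given $W_1=w_1$ is just some distribution on $\{0,1\}^n$, which is exactly what the universal supremum in Definition~\ref{def:uCMI} for $A_2(\cdot,w_1)$ quantifies over.
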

\begin{proof}
By the definition,
$$\uCMI{A}=\sup_{\sz\in\Z^{n \times 2}}\sup_{S} I(A(\sz_S);S),$$
where $S$ is an arbitrary random variable on $\{0,1\}^n$ which is independent from $A$. Let $\sz^*\in \Z^{n\times2}$ and the distribution $P$ over $\{0,1\}^n$ be the arguments that maximize the above expression. Without loss of generality, we will drop the parameter $\sz^*$ from the notation, so that $F_1(S)=A_1(\sz^*_S)$, $F_2(S,w_1)=A_2(\sz^*_S,w_1)$, and $F(S)=F_2(S, F_1(S))$. Then, it holds that for $S\gets P$,
\begin{equation*}
\uCMI{A} = I(F(S);S) =\ex{s\gets P}{\dkl{F_2(s,F_1(s))}{F_2(S,F_1(S))}}.
\end{equation*}
By the chain rule for KL divergence (Lemma~\ref{lem:chainruleKL}), we have that
\begin{equation}\label{eq:adaptiveuCMI}
    \uCMI{A} \leq \ex{s\gets P}{\dkl{F_1(s)}{F_1(S)}} + \ex{s\gets P}{\ex{w_1\gets F_1(s)}{\dkl{F_2(s,w_1)}{F_2(S',w_1)}}},
\end{equation}
where $S'\gets P_{w_1}$ which is the probability distribution defined by $P_{w_1}(s)=P(s|F_1(S)=w_1)$ for any $s\in\{0,1\}^n$. Then, the RHS of inequality~\eqref{eq:adaptiveuCMI} can be written as follows.
\begin{align*}
    \uCMI{A} &  \leq I(F_1(S);S)+\ex{w_1\gets F_1(S)}{\ex{s\gets P_{w_1}}{\dkl{F_2(s,w_1)}{F_2(S',w_1)}}} \\
    & = I(F_1(S);S) + \ex{w_1\gets F_1(S)}{I(F_2(S',w_1);S')}\\
    & \leq I(F_1(S);S) + \sup_{w_1\in\W_1} I(F_2(S',w_1);S')\\
    & = I(A_1(\sz^*_S);S) + \sup_{w_1\in\W_1} I(A_2(\sz^*_{S'},w_1);S') \\
    & \leq \sup_{\sz\in\Z^{n \times 2}}\sup_{S} I(A_1(\sz_S);S) + \sup_{w_1\in\W_1} \sup_{\sz\in\Z^{n \times 2}}\sup_{S} I(A_2(\sz_S,w_1);S)\\
    &= \uCMI{A_1} + \uCMI{A_2} \\
    & \leq b_1+b_2. \tag{by assumption}
\end{align*}
This concludes the proof of the theorem.
\end{proof}

Universal CMI is of course not unique in this regard: All of the distributional stability notions that we have discussed -- differential privacy, KL stability, Average Leave-one-out KL Stability, TV stability, and MI stability -- satisfy the adaptive composition property. It remains to form a fully unified picture of adaptive composition.

A careful examination of the relevant proofs shows that our results relating VC dimension (Theorem \ref{thm:vc-cmi}) and compression schemes (Theorem \ref{thm:comp-cmi}) carry over to uCMI. However, our results for differential privacy and other stability notions do not immediately carry over. We are however able to show some weaker results:

\begin{theorem}[Universal CMI of DP algorithms]\label{th:uCMIDP}
If $A:\Z^n\rightarrow \W$ is an $\eps$-differentially private algorithm, then
$\uCMI{A}\leq \eps n$.
\end{theorem}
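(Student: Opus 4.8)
The plan is to exploit the fact that pure differential privacy is a worst-case, data-independent guarantee, so that --- unlike the mutual-information-stability argument behind Theorem~\ref{th:CMIofstabilities} --- the bound is unaffected by allowing an arbitrary, possibly non-product, distribution on $S$. Concretely, I would combine the group-privacy property of $\eps$-DP with the ``KL center'' characterization of mutual information (Lemma~\ref{lem:klmin}).

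First I would fix an arbitrary supersample $\sz\in\Z^{n\times 2}$ and an arbitrary distribution $\mathcal{Q}$ for $S$ on $\{0,1\}^n$, and write $F(s):=A(\sz_s)$; it then suffices to show $I(F(S);S)\le\eps n$ when $S\gets\mathcal{Q}$, after which taking the two suprema in Definition~\ref{def:uCMI} finishes the proof (arguing for arbitrary $\sz$ and $\mathcal{Q}$ sidesteps the question of whether the suprema are attained). For $s,s'\in\{0,1\}^n$, the vectors $\sz_s,\sz_{s'}\in\Z^n$ differ in at most $k:=|\{i\in[n]:s_i\ne s_i'\}|\le n$ coordinates, so iterating the definition of $\eps$-differential privacy along a path of $k$ single-coordinate changes gives $\Pr[A(\sz_s)\in W]\le e^{k\eps}\Pr[A(\sz_{s'})\in W]$ for every measurable $W$; taking logarithms and then the expectation under $A(\sz_s)$ yields $\dkl{F(s)}{F(s')}\le k\eps\le n\eps$.

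Next I would invoke Lemma~\ref{lem:klmin} with the family $\{\mathcal{P}_s\}_{s\in\{0,1\}^n}$ given by $\mathcal{P}_s=\mathcal{P}_{F(s)}$ and with indexing distribution $\mathcal{Q}$: the convex combination $\mathcal{P}_{\mathcal{Q}}=\ex{s\gets\mathcal{Q}}{\mathcal{P}_{F(s)}}$ is exactly the marginal distribution of $F(S)=A(\sz_S)$ (using that $A$'s internal randomness is independent of $S$), so by the definition of mutual information together with Lemma~\ref{lem:klmin},
\[
I(F(S);S)=\ex{s\gets\mathcal{Q}}{\dkl{F(s)}{\mathcal{P}_{\mathcal{Q}}}}=\inf_{\mathcal{R}}\ex{s\gets\mathcal{Q}}{\dkl{F(s)}{\mathcal{R}}}\le\ex{s\gets\mathcal{Q}}{\dkl{F(s)}{F(0^n)}}\le\eps n,
\]
where the penultimate step plugs in the fixed reference distribution $\mathcal{R}=\mathcal{P}_{F(0^n)}$ and the last step uses the group-privacy bound from the previous paragraph. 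Taking the supremum over $\sz\in\Z^{n\times2}$ and over the distribution $\mathcal{Q}$ of $S$ then gives $\uCMI{A}\le\eps n$.

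The argument is essentially routine once the right tool is identified, so there is no serious obstacle to grind through; the one conceptual point --- and what distinguishes this from the ordinary-CMI statement --- is recognizing that $\eps$-DP controls the divergence between $A(\sz_s)$ and any single fixed output $A(\sz_{s'})$ uniformly (rather than merely in an averaged or product sense), which is precisely what lets us bound $I(F(S);S)$ by comparing every $F(s)$ to one fixed reference output via Lemma~\ref{lem:klmin}, with no dependence on $S$ being uniform or product-distributed. The only thing to watch is the (harmless) fact that $\sz_s$ and $\sz_{s'}$ may agree even on some coordinates where $s$ and $s'$ differ, which only makes the group-privacy bound better.
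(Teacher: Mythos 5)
Your proof is correct, and it rests on the same core insight as the paper's: group privacy turns the $\eps$-DP guarantee into the uniform bound $\dkl{F(s)}{F(s')}\le\eps n$ for all $s,s'\in\{0,1\}^n$, which holds independently of the distribution of $S$. Where you diverge is in the mechanism for converting this into a mutual-information bound. You invoke the KL-center characterization (Lemma~\ref{lem:klmin}) to replace the marginal $\mathcal{P}_{F(S)}$ by the fixed reference $\mathcal{P}_{F(0^n)}$, so that the final bound is $\ex{s\gets\mathcal{Q}}{\dkl{F(s)}{F(0^n)}}\le\eps n$. The paper instead bounds $\dkl{F(s)}{F(S)}$ directly by the max-divergence $\log\sup_{w}\Pr[F(s)=w]/\Pr[F(S)=w]$ and observes that this quantity is at most $\eps n$ because the denominator is a mixture whose every component $\Pr[F(s')=w]$ is at least $e^{-\eps n}\Pr[F(s)=w]$ by group privacy. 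The two routes are equally elementary and yield the same constant; yours has the small virtue of avoiding the mixture in the denominator (and of not presupposing that the suprema in the definition of uCMI are attained), while the paper's keeps everything at the level of pointwise probability ratios without needing Lemma~\ref{lem:klmin}. Either way the essential observation --- that $\eps$-DP gives a \emph{worst-case} max-divergence bound that is insensitive to the distribution of $S$, in contrast to the average-case MI-stability argument behind Theorem~\ref{th:CMIofstabilities} --- is correctly identified.
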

\begin{proof}

Let $\sz^*\in\Z^{n\times 2}$ and $\mathcal{P}_S$ be the parameters that maximize the universal CMI of $A$. Then,
\begin{align}\label{eq:ucmifordp}
    \uCMI{A}
    & :=\sup_{\sz\in\Z^{n \times 2}}\sup_{S} I(A(\sz_S);S) \nonumber\\
    & = \ex{s\gets\mathcal{P}_S}{\dkl{A(\sz^*_s)}{A(\sz^*_S)}} \nonumber \\
    & =\ex{s\gets\mathcal{P}_S}{\ex{w\gets\mathcal{P}_{A(\sz^*_s)}}{\log\frac{\Pr[A(\sz^*_s)=w]}{\Pr[A(\sz^*_{S})=w]}}} \nonumber\\
    & \leq \ex{s\gets\mathcal{P}_S}{\log\sup_{w\in\W} \frac{\Pr[A(\sz^*_s)=w]}{\Pr[A(\sz^*_{S})=w]}} \nonumber\\
    & \leq \log \sup_{s\in\{0,1\}^n}{\sup_{w\in\W} \frac{\Pr[A(\sz^*_s)=w]}{\Pr[A(\sz^*_{S})=w]}}
\end{align}
For any $s,s'\in\{0,1\}^n$, $\sz^*_s, \sz^*_{s'}$ differ in at most $n$ points. Since $A$ is an $\eps$-DP algorithm, it follows that $\frac{\Pr[A(\sz^*_s)=w]}{\Pr[A(\sz^*_{s'})=w]}\leq e^{\eps n}$ for any $w\in \W$, 
and $s,s'\in\{0,1\}^n$. Combined with inequality~\eqref{eq:ucmifordp}, we get
\[\uCMI{A} \leq \log(e^{\eps n}) = \eps n.\]
\end{proof}
In contrast, recall that an $\eps$-DP algorithm $A$ has $\CMI{A}{\D}\leq \eps^2n/2$ for any distribution $\D$.

If $A : \Z^n \to \W$ satisfies $\frac12\varepsilon^2$-CDP, then by group privacy \cite{BunS16} we obtain the bound $\uCMI{A}\le \frac12\varepsilon^2 n^2$. Unfortunately, this is meaningless unless $\varepsilon<\sqrt{2/n}$. Group privacy for approximate differential privacy yields extremely weak bounds.

It is possible to show that group privacy yields essentially optimal bounds for uCMI. We briefly sketch a counterexample: Choose $\sz$ and $S$ so that $\sz_S$ represents a random codeword from an error correcting code. Specifically, the size of the code should be $2^{n-o(n)}$ and the distance should be $\tilde\Omega(n)$. The algorithm $A$ simply attempts to output the input codeword in a differentially private manner. We can use a GAP-MAX algorithm \cite{BunDRS18} to do this. For each codeword we measure the Hamming distance between it and the input codeword; this distance has sensitivity $1$ and we are guaranteed that one codeword has distance $0$ and all others have distance $\tilde\Omega(n)$. This allows the algorithm to identify the codeword in a differentially privacy manner with high probability. This yields high uCMI (since the code is large) and is differentially private, which shows that we cannot get strong uCMI bounds from differential privacy.

\subsection{Stability and Evaluated CMI}\label{sec:ecmi}

\subsubsection{Uniform Stability}
There is a long line of work  \cite[etc.]{RogersW78,DevroyeW79a,DevroyeW79b,KearnsR97,BousquetE02,Shalev-ShwarzSSS09,HardtRS16,FeldmanV18,FeldmanV19,DaganF19,BousquetKZ19} on proving generalization from stability -- specifically, stability with respect to the loss. There are several definitions of loss stability; here we consider a strong version, \emph{uniform stability}:

\begin{definition}[Uniform Stability \cite{BousquetE02}]
Let $A : \mathcal{Z}^n \to \mathcal{W}$ be a deterministic\footnote{For simplicity, we assume $A$ is deterministic, although this definition can be extended to randomized algorithms.} algorithm and let $\ell : \mathcal{W} \times \mathcal{Z} \to \mathbb{R}$ be a function. We say $A$ has uniform stability $\gamma$ with respect to $\ell$ if $$\forall x_1, \cdots, x_n, y, z \in \mathcal{Z} ~ \forall i \in [n] ~~~~~ | \ell(A(x_1, \cdots, x_{i-1}, y, x_{i+1}, \cdots, x_n), z) - \ell(A(x_1, \cdots, x_n),z) | \le \gamma.$$
\end{definition}

Uniform stability can be shown to hold for certain convex optimization algorithms \cite{BousquetE02,Shalev-ShwarzSSS09} and uniformly stable learners exist for all classes of bounded VC dimension \cite{DaganF19}. And, of course, uniform stability implies generalization \cite{FeldmanV18,FeldmanV19,BousquetKZ19}. In particular, the following bound is nearly tight.

\begin{theorem}[{\cite[Thm.~1.2]{FeldmanV18},\cite[Thm.~1.1]{FeldmanV19},\cite[Cor.~3.2]{BousquetKZ19}}]\label{thm:stab-gen}
Let $\mathcal{D}$ be a distribution on $\mathcal{Z}$. Let $A : \mathcal{Z}^n \to \mathcal{W}$ have uniform stability $\gamma$ with respect to $\ell : \mathcal{W} \times \mathcal{Z} \to [0,1]$. Then $$\ex{Z \leftarrow \mathcal{D}^n}{\left(\ell(A(Z),Z)-\ell(A(Z),\mathcal{D})\right)^2} \le 16 \gamma^2 + \frac{2}{n}$$ and $$\forall \delta>0 ~~~~~ \pr{Z \leftarrow \mathcal{D}^n}{\left|\ell(A(Z),Z)-\ell(A(Z),\mathcal{D})\right| \ge O\left(\gamma \cdot \log^2(n/\delta) + \sqrt{\frac{\log(1/\delta)}{n}}\right)} \le \delta.$$
\end{theorem}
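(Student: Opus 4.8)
The plan is to deduce both bounds from uniform stability together with the exchangeability of the coordinates of $Z\gets\D^n$, using repeatedly the ``replace one training point by an independent ghost point, then rename the two i.i.d.\ samples'' trick. As a warm-up and as an ingredient, I would first establish the first-moment estimate $\left|\ex{Z}{\ell(A(Z),Z)-\ell(A(Z),\D)}\right|\le\gamma$: by exchangeability $\ex{}{\ell(A(Z),Z)}=\ex{}{\ell(A(Z),Z_1)}$ and $\ex{}{\ell(A(Z),\D)}=\ex{Z,W}{\ell(A(Z),W)}$ for a fresh $W\gets\D$; renaming $Z_1\leftrightarrow W$ rewrites the latter as $\ex{}{\ell(A(Z^{(1)}),Z_1)}$, where $Z^{(1)}$ is the dataset with its first point replaced by $W$, and uniform stability bounds the gap by $\gamma$.

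For the squared-error bound I would write $\ell(A(Z),Z)-\ell(A(Z),\D)=\frac1n\sum_{i}h_i(Z)$ with $h_i(Z)=\ell(A(Z),Z_i)-\ell(A(Z),\D)\in[-1,1]$, so that $\ex{}{(\ell(A(Z),Z)-\ell(A(Z),\D))^2}=\frac1{n^2}\sum_{i,j}\ex{}{h_i(Z)h_j(Z)}$; the $n$ diagonal terms contribute at most $1/n$, which already accounts for the $2/n$ part (up to constants and the mean term handled by the first-moment estimate). The crux is to show that every off-diagonal correlation is $O(\gamma^2)$ rather than merely $O(\gamma)$. The mechanism: replace $Z_i$ and $Z_j$ by independent ghosts, obtaining versions $\tilde h_i,\tilde h_j$ that, conditionally on all other coordinates and on the ghosts, depend on disjoint sets of variables and are therefore conditionally independent, while each has conditional mean $O(\gamma)$ by the single-replacement argument above; hence $\ex{}{\tilde h_i\tilde h_j}=O(\gamma^2)$. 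What makes this delicate --- and what I expect to be \emph{the main obstacle} --- is controlling the discrepancy between $\ex{}{h_ih_j}$ and $\ex{}{\tilde h_i\tilde h_j}$: a crude estimate of each ghost-replacement costs $O(\gamma)$, which would wipe out the gain, so one must argue more carefully (as in Feldman--Vondr\'ak and Bousquet--Klochkov--Zhivotovskiy) that the leftover cross terms again acquire small conditional means. A plain Efron--Stein or McDiarmid estimate (Lemma~\ref{lem:steele}, Lemma~\ref{lem:mcd}), which only sees the bounded-difference constant $\approx 2\gamma+\tfrac1n$, gives the far weaker $O(n\gamma^2)$ and is useless here; this is precisely the well-known gap that the cited papers close, so in the write-up I would import their argument for this step and then optimize the bookkeeping to reach the explicit $16\gamma^2+\tfrac2n$.

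For the high-probability bound, McDiarmid's inequality applied directly to $Z\mapsto\ell(A(Z),Z)-\ell(A(Z),\D)$ yields only a deviation of order $\gamma\sqrt{n\log(1/\delta)}+\sqrt{\log(1/\delta)/n}$, whose first term is vacuous. The plan is instead to bound the $p$-th moments $\ex{}{|\ell(A(Z),Z)-\ell(A(Z),\D)|^p}\le\big(Cp\gamma\log^2 n\big)^p+\big(Cp/n\big)^{p/2}$ for all $p\ge 1$, by a moment-valued version of the correlation argument above --- tracking the same conditional-independence structure uniformly in $p$, e.g.\ via Feldman--Vondr\'ak's recursive halving of the dataset (which converts a stray $\sqrt n$ into a $\log n$, at the cost of the extra logarithmic factors) or via moment inequalities for weakly self-bounded functions --- and then to apply Markov's inequality with $p\asymp\log(1/\delta)$. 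Here the obstacles are the same structural one as before, now propagated through all moments, plus the careful accounting of the logarithmic factors introduced by the recursion.
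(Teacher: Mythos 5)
The paper does not prove Theorem~\ref{thm:stab-gen}; it is imported verbatim as a known result from \cite{FeldmanV18,FeldmanV19,BousquetKZ19}, so there is no internal proof to compare your attempt against. With that caveat, your outline is a faithful reconstruction of the strategy in those references: the first-moment bound $\left|\ex{}{\ell(A(Z),Z)-\ell(A(Z),\D)}\right|\le\gamma$ via exchangeability and the ghost-swap is correct and standard, and your diagnosis of why plain Efron--Stein/McDiarmid (bounded differences $\approx 2\gamma+1/n$) only yields the vacuous $O(n\gamma^2)$ is also correct. Likewise, the moment-method plan for the high-probability bound (bound $p$-th moments via the recursive dataset-halving of Feldman--Vondr\'{a}k or the moment inequalities of Bousquet--Klochkov--Zhivotovskiy, then take $p\asymp\log(1/\delta)$) matches what those papers actually do. However, by your own admission, the crux of both the variance estimate (showing each off-diagonal correlation $\ex{}{h_ih_j}$ is $O(\gamma^2)$ rather than $O(\gamma)$, without paying a loose $O(\gamma)$ at the replacement step) and the $p$-th-moment recursion is deferred to the cited papers and not supplied; so what you have is an accurate roadmap of the literature, not a self-contained proof of the stated bounds $16\gamma^2+\tfrac{2}{n}$ and $\gamma\log^2(n/\delta)+\sqrt{\log(1/\delta)/n}$.
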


Uniform stability should be contrasted with differential privacy, which is also a stability notion. The difference is that uniform stability demands that the loss not change much, whereas differential privacy demands that the distribution of the output of $A$ not change much. Differential privacy is qualitatively a stricter requirement.

In contrast with CMI, uniform stability implies high probability bounds \cite{BousquetE02,FeldmanV18,FeldmanV19}. Another key difference is that the loss function is part of the definition of uniform stability, whereas CMI is agnostic to the loss function. This difference makes it difficult to compare uniform stability with CMI.

\subsubsection{Evaluated CMI}

In order to compare uniform stability and CMI, we introduce the following relaxation of CMI. Our relaxation includes the loss function into the definition. Rather than looking at the mutual information between $S$ and $A(\sZ_S)$ we look at the mutual information between $S$ and $\ell(A(\sZ_S),\cdot)$ evaluated at each point in $\sZ$. This is a natural restriction because it is the only information relevant to computing the quantities of interest -- i.e., $\ell(A(\sZ_S),\sZ_S)$ and $\ell(A(\sZ_S),\sZ_{\overline{S}})$.

\newcommand{\eCMI}[3]{{\ifx&#2& \mathsf{eCMI} \else \mathsf{eCMI}_{#2} \fi \left(#3(#1)\right)}}

\begin{definition}[Evaluated CMI (eCMI)]\label{def:eCMI}
Let $A:\Z^n\rightarrow \W$ be a randomized or deterministic algorithm. Let $\ell : \mathcal{W} \times \mathcal{Z} \to \mathcal{R}$.\footnote{Normally $\mathcal{R}=\mathbb{R}$. However, we allow our definition to cover a function $\ell$ with a different range.}
Let $\D$ be a distribution on $\Z$ and $\sZ\in\Z^{n \times 2}$ consist of $2n$ samples drawn independently from $\D$. Let $S\in \{0,1\}^n$ be uniformly random and independent from $\sZ$. Define $\sZ_S \in \Z^n$ by $(\sZ_S)_i = \sZ_{i,S_i+1}$ for all $i \in [n]$ -- that is, $\sZ_S$ is the subset of $\sZ$ indexed by $S$. 

The \emph{conditional mutual information (eCMI) of $A$ with respect to $\D$ evaluated by $\ell$} is $$\eCMI{A}{\D}{\ell}:=I(\vec{\ell}(A(\sZ_S),\sZ);S|\sZ),$$
where $\vec{\ell} : \mathcal{W} \times \mathcal{Z}^{n \times 2} \to \mathcal{R}^{n \times 2}$ is defined by $\vec{\ell}(w,\sz)_{i,j} = \ell(w,\sz_{i,j})$ for $i \in [n]$ and $j \in [2]$.

The \emph{distribution-free  conditional mutual information (eCMI) of $A$ evaluated by $\ell$} is  $$\eCMI{A}{}{\ell}:=\sup_{\sz \in \Z^{n \times 2}} I(\vec{\ell}(A(\sz_S),\sz);S).$$
\end{definition}

It is immediate from the data processing inequality that $\eCMI{A}{\D}{\ell} \le \CMI{A}{\D}$ and $\eCMI{A}{}{\ell} \le \CMI{A}{}$ for all $A$, $\D$, and $\ell$. Thus all the upper bounds on CMI carry over to eCMI. We also find that most of the generalization properties do too.
The following provides a handful of generalization properties of Evaluated CMI.

\begin{theorem}\label{thm:lossagnostic-eval}
Let $\mathcal{D}$ be a distribution on $\mathcal{Z}$. Let $A :\mathcal{Z}^n \to \mathcal{W}$ be a randomized algorithm. Let $\ell : \mathcal{W} \times \mathcal{Z} \to \mathbb{R}$ be an arbitrary function.
\begin{itemize}
\item[(i)] If there exists $\Delta : \mathcal{Z}^2 \to \mathbb{R}$ such that $|\ell(w,z_1)-\ell(w,z_2)| \le \Delta(z_1,z_2)$ for all $z_1,z_2\in\mathcal{Z}$ and $w \in \mathcal{W}$, then 
$$\ex{Z,A}{\left|\ell(A(Z),Z)-\ell(A(Z),\mathcal{D})\right|} \le  \sqrt{\frac{2}{n} \cdot \left(\eCMI{A}{\D}{\ell} +\log 2\right)\cdot \ex{(Z_1,Z_2) \leftarrow \mathcal{D}^2}{\Delta(Z_1,Z_2)^2}}.$$

\item[(ii)] If $\ell : \mathcal{W} \times \mathcal{Z} \to [0,1]$, then $$\ex{Z,A}{\left(\ell(A(Z),Z)-\ell(A(Z),\mathcal{D})\right)^2} \le \frac{3\cdot \eCMI{A}{\D}{\ell}+\log 3}{n}.$$

\item[(iii)] If $\ell : \mathcal{W} \times \mathcal{Z} \to [0,1]$ and $\ex{Z,A}{\ell(A(Z),Z)}=0$, then $$\ex{}{\ell(A(Z),\mathcal{D})} \le  \frac{1.5}{n} \cdot \eCMI{A}{\D}{\ell}.$$
\end{itemize}
\end{theorem}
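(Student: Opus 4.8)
The plan is to reuse the proofs of Theorems \ref{thm:cmigen-absloss}, \ref{thm:cmigen-linsquared}, and \ref{thm:cmigen-realize1} essentially verbatim, after noting that each of them touches the output $A(\sZ_S)$ only through the evaluated losses $\ell(A(\sZ_S),\sZ_{i,j})$ on the points of the supersample. In every one of those proofs the objective has the form $\ex{\sZ,S,A}{F_{\sZ}(A(\sZ_S),S)}$ (or its square, or an asymmetric variant), where $F_{\sz}(w,s)$ is built only out of the averages $\ell(w,\sz_s)$ and $\ell(w,\sz_{\overline{s}})$; hence $F_{\sz}(w,s)$ depends on $w$ solely through $\vec{\ell}(w,\sz) \in \R^{n \times 2}$. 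So, writing $V := \vec{\ell}(A(\sZ_S),\sZ)$ and defining $\tilde{F}_{\sz}$ on $\R^{n \times 2} \times \{0,1\}^n$ by $F_{\sz}(w,s) = \tilde{F}_{\sz}(\vec{\ell}(w,\sz),s)$, the identical chain of inequalities goes through with $A(\sZ_S)$ replaced by $V$, and the mutual-information term produced by Lemma \ref{lem:sup} becomes $I(V;S \mid \sZ) = \ex{\sZ}{I(V;S)} = \eCMI{A}{\D}{\ell}$ instead of $\CMI{A}{\D}$.

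For part (i) I would take $\tilde{F}_{\sz}(v,s) = \frac{1}{n} \sum_{i \in [n]} (1-2s_i)(v_{i,1}-v_{i,2})$, so that $F_{\sz}(w,s) = \ell(w,\sz_s) - \ell(w,\sz_{\overline{s}})$. Running the proof of Theorem \ref{thm:cmigen-absloss} with $X = (V,S,\sZ)$ and $Y = (V,S',\sZ)$ in Lemma \ref{lem:sup} ($S'$ an independent copy of $S$), bounding $e^{t|F_{\sZ}(V,S')|} \le e^{t F_{\sZ}(V,S')} + e^{-t F_{\sZ}(V,S')}$, applying Hoeffding's lemma coordinate-wise — which only uses $(V_{i,1}-V_{i,2})^2 \le \Delta(\sZ_{i,1},\sZ_{i,2})^2$ — and optimizing over $t>0$ yields
$$\ex{Z,A}{\left|\ell(A(Z),Z)-\ell(A(Z),\mathcal{D})\right|} \le \sqrt{\frac{2}{n} \left(\eCMI{A}{\D}{\ell}+\log 2\right) \ex{(Z_1,Z_2) \gets \D^2}{\Delta(Z_1,Z_2)^2}},$$
which is (i). For part (iii) I would instead take $F_{\sz}(w,s) = \ell(w,\sz_{\overline{s}}) - u \cdot \ell(w,\sz_s)$ (still a function of $\vec{\ell}(w,\sz)$), reproduce the argument of Theorem \ref{thm:cmigen-realize1} — the hypothesis $\ex{Z,A}{\ell(A(Z),Z)} = 0$ kills the $\ell(A(\sZ_S),\sZ_S)$ terms, and with $X_i = \frac{t}{n} \ell(A(\sZ_S),(\sZ_{\overline{S}})_i) \in [0,t/n]$ the choice $t = n\log 2$, $u \to \infty$ gives $\ex{Z,A}{\ell(A(Z),\mathcal{D})} \le \eCMI{A}{\D}{\ell}/(n \log 2) \le 1.5 \cdot \eCMI{A}{\D}{\ell}/n$.

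Part (ii) follows the proof of Theorem \ref{thm:cmigen-linsquared}: use Jensen to pass to $\ex{\sZ,S,A}{\tilde{F}_{\sZ}(V,S)^2}$, apply Lemma \ref{lem:sup} to the squared function, linearize via a standard Gaussian $G$ (using $e^{t x^2} = \ex{G}{e^{\sqrt{2t}\, x\, G}}$), apply Hoeffding coordinate-wise, then use $\ex{G}{e^{\lambda G^2}} = (1-2\lambda)^{-1/2}$ for $\lambda < 1/2$ and optimize; since $\ell$ has range $[0,1]$ we have $(V_{i,1}-V_{i,2})^2 \le 1$, and $u = 2/3$ produces the constants $3$ and $\log 3$. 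The only genuinely new point — everything else being a transcription of an existing proof — is the factorization claim, namely that each $F_{\sz}(\cdot,s)$ passes through $w \mapsto \vec{\ell}(w,\sz)$, which is what legitimizes replacing $I(A(\sZ_S);S \mid \sZ)$ by $I(\vec{\ell}(A(\sZ_S),\sZ);S \mid \sZ)$ via the data-processing inequality. This is immediate from $\ell(w,\sz_s) = \frac{1}{n}\sum_i \ell(w,(\sz_s)_i)$ and $(\sz_s)_i \in \{\sz_{i,1},\sz_{i,2}\}$, so I expect no real obstacle; the moment-generating-function estimates are untouched because they only ever involve the differences $\ell(w,\sz_{i,1}) - \ell(w,\sz_{i,2})$, which are coordinates of $\vec{\ell}(w,\sz)$.
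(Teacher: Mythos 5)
Your proposal is correct and takes essentially the same route as the paper, which simply states that the proof ``is the same as'' Theorems \ref{thm:cmigen-absloss}, \ref{thm:cmigen-linsquared}, and \ref{thm:cmigen-realize1} with $\eCMI{A}{\D}{\ell}$ substituted for $\CMI{A}{\D}$. What you have added is the explicit justification that the paper leaves implicit: in each of the three proofs, the function $f_{\sz}(w,s)$ (and hence the exponent supplied to Lemma~\ref{lem:sup}) depends on $w$ only through the coordinates of $V := \vec{\ell}(w,\sz) \in \R^{n\times 2}$, and the moment-generating-function estimates depend only on the differences $V_{i,1}-V_{i,2}$, so one may apply Lemma~\ref{lem:sup} directly to the pair $((V,S),(V,S'))$ (conditionally on $\sZ$) and get $\ex{\sZ}{I(V;S)} = \eCMI{A}{\D}{\ell}$ where the original proofs had $\CMI{A}{\D}$. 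One small phrasing point: what permits the substitution is not literally the data-processing inequality (that only gives $I(V;S\mid\sZ) \le I(A(\sZ_S);S\mid\sZ)$, i.e.~that the eCMI bound is \emph{tighter}); rather it is the observation that $f_{\sz}$ factors through $V$, so Lemma~\ref{lem:sup} can be invoked with $V$ in place of $A(\sZ_S)$ from the start. Your choices of $t$, $u$, the Gaussian linearization for the squared bound, and the $t=n\log 2$, $u\to\infty$ limit for the interpolating case all match the source proofs, so the argument goes through unchanged.
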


The proof of Theorem \ref{thm:lossagnostic-eval} is the same as Theorem \ref{thm:cmigen-absloss} (for part (i)), Theorem \ref{thm:cmigen-linsquared} (for part (ii)) and Theorem \ref{thm:cmigen-realize1} (for part (iii)). Note that we can only prove generalization bounds for the loss function that is included in the definition of Evaluated CMI, whereas, for ordinary CMI, we obtain generalization bounds for any loss function.

\subsubsection{Bounding Evaluated CMI from Uniform Stability}

\begin{theorem}\label{thm:stable-to-ecmi}
Let $\sigma,\gamma>0$. 
Let $A : \mathcal{Z}^n \to \mathcal{W}$ have uniform stability $\gamma$ with respect to $\ell : \mathcal{W} \times \mathcal{Z} \to \mathbb{R}$. 
Then there exists a randomized algorithm $\tilde{A} : \mathcal{Z}^n \to \tilde{\mathcal{W}}$ and a function $\tilde{\ell} : \tilde{\mathcal{W}} \times \mathcal{Z} \to \mathbb{R}$ such that the following holds. Let $\mathcal{D}$ be a continuous\footnote{We make the assumption of continuity so that the probability that two samples from $\mathcal{D}$ are equal is zero. This assumption can be relaxed, but then the collision probability must be accounted for in the analysis. We can always augment $\mathcal{D}$ with superfluous randomness to ensure this (e.g. by replacing $\mathcal{D}$ with $\mathcal{D} \times \mathcal{U}_{[0,1]}$). } distribution on $\mathcal{Z}$. 
We have $\eCMI{\tilde{A}}{\D}{\tilde\ell} \le \frac{\gamma^2 n^2}{2\sigma^2}$ and $$\pr{\sZ}{\forall s,s' \in \{0,1\}^n ~~~ \ex{\tilde{A}}{\tilde\ell(\tilde{A}(\sZ_s),\sZ_{s'})}=\ell(A(\sZ_s),\sZ_{s'}) ~\wedge~ \ex{\tilde{A}}{\left(\tilde\ell(\tilde{A}(\sZ_s),\sZ_{s'})-\ell(A(\sZ_s),\sZ_{s'})\right)^2}=\frac{\sigma^2}{n}}=1$$
\end{theorem}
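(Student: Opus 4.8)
The plan is to build $\tilde A$ by adding a fresh, independent Gaussian perturbation to the \emph{entire loss surface} of $A$. Take $\tilde{\mathcal W}$ to be (an encoding of) the functions $\mathcal Z\to\mathbb R$ and set $\tilde\ell(\tilde w,z)=\tilde w(z)$. On input $z\in\mathcal Z^n$, let $\tilde A$ compute $w=A(z)$ and output the random function $\tilde w=\ell(w,\cdot)+\Xi$, where $\Xi$ is Gaussian white noise on $\mathcal Z$: a Gaussian process with $\ex{}{\Xi(z)}=0$ and $\ex{}{\Xi(z)\Xi(z')}=\sigma^2\cdot\mathbb I[z=z']$. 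Such a process exists by the Kolmogorov extension theorem; since every quantity we evaluate consults $\Xi$ at only finitely many points this is all we need, and the measure-theoretic bookkeeping here is routine. The internal randomness of $\tilde A$ is precisely the draw of $\Xi$, independent of the input.

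Next I would check the two pointwise identities. Unwinding the averaging convention, for any $\sz$ and any $s,s'$ we get $\tilde\ell(\tilde A(\sz_s),\sz_{s'})=\frac1n\sum_{i=1}^n\bigl(\ell(A(\sz_s),(\sz_{s'})_i)+\Xi((\sz_{s'})_i)\bigr)=\ell(A(\sz_s),\sz_{s'})+\frac1n\sum_i\Xi((\sz_{s'})_i)$. By continuity of $\mathcal D$, with probability $1$ over $\sZ$ all $2n$ coordinates of $\sZ$ are distinct; on that event the $n$ values $\Xi((\sz_{s'})_i)$ are i.i.d.\ $\mathcal N(0,\sigma^2)$ for \emph{every} $s'$, so the additive error $\frac1n\sum_i\Xi((\sz_{s'})_i)$ has mean $0$ and variance $n\sigma^2/n^2=\sigma^2/n$, simultaneously for all $s,s'$. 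This is exactly the second displayed claim.

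For the eCMI bound I would condition on $\sZ=\sz$ with distinct coordinates and write $\mu(s)\in\mathbb R^{n\times 2}$ for $\mu(s)_{i,j}=\ell(A(\sz_s),\sz_{i,j})$. Then $\vec{\tilde\ell}(\tilde A(\sz_S),\sz)=\mu(S)+N$ with $N\sim\mathcal N(0,\sigma^2 I_{2n})$ independent of $S$. Replacing the true mixture centre by the fixed reference $\mathcal N(\ex{}{\mu(S)},\sigma^2 I_{2n})$ can only increase the divergence bound (Lemma~\ref{lem:klmin}), so with Lemma~\ref{lem:klgaussian} we get $I(\vec{\tilde\ell}(\tilde A(\sz_S),\sz);S)\le\frac1{2\sigma^2}\ex{S}{\|\mu(S)-\ex{}{\mu(S)}\|_2^2}=\frac1{2\sigma^2}\sum_{i\in[n],\,j\in[2]}\var{S}{\ell(A(\sz_S),\sz_{i,j})}$. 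For each fixed $i,j$, flipping a single bit of $s$ changes $\sz_s$ in one coordinate and hence changes $\ell(A(\sz_s),\sz_{i,j})$ by at most $\gamma$ — this is precisely the definition of uniform stability — so Steele's inequality (Lemma~\ref{lem:steele}) gives $\var{S}{\ell(A(\sz_S),\sz_{i,j})}\le\frac12\cdot n\gamma^2$. Summing over the $2n$ coordinates yields $I(\vec{\tilde\ell}(\tilde A(\sz_S),\sz);S)\le\frac1{2\sigma^2}\cdot 2n\cdot\frac{n\gamma^2}2=\frac{\gamma^2 n^2}{2\sigma^2}$, and averaging over $\sz\gets\D^{n\times2}$ gives $\eCMI{\tilde A}{\D}{\tilde\ell}\le\frac{\gamma^2 n^2}{2\sigma^2}$.

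The step I expect to matter most is recognizing that uniform stability is exactly a bounded-differences property of $S\mapsto\ell(A(\sz_S),\sz_{i,j})$, so that the Efron--Stein/Steele variance bound applies: a crude argument that instead compares $\mu(S)$ to a single fixed reference $\mu(s_0)$ would only give $|\mu(s)_{i,j}-\mu(s_0)_{i,j}|\le n\gamma$ and hence a bound of order $n^3\gamma^2/\sigma^2$, off by a factor of $2n$. Centering at the mean and invoking Lemma~\ref{lem:steele} is what recovers the claimed $\gamma^2n^2/(2\sigma^2)$. The remaining ingredients — the Gaussian channel estimate via Lemmas~\ref{lem:klmin} and~\ref{lem:klgaussian}, and the almost-sure distinctness of the supersample coordinates — are standard.
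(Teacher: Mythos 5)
Your proposal is correct and follows essentially the same route as the paper's proof: both constructions add i.i.d.\ Gaussian perturbations to the evaluated loss, both reduce the eCMI to a sum of variances via Lemmas~\ref{lem:klmin} and~\ref{lem:klgaussian}, and both close the argument by recognizing uniform stability as a bounded-differences condition on $s\mapsto\ell(A(\sz_s),\sz_{i,j})$ and applying Steele's inequality (Lemma~\ref{lem:steele}). The only cosmetic difference is the encoding of $\tilde{\mathcal W}$ — the paper outputs a pair $(A(z),\xi)\in\mathcal W\times\mathbb R^{\mathcal Z}$ whereas you output the perturbed loss function directly — which is equivalent.
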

\begin{proof}[Proof Sketch]
Let $\tilde{\mathcal{W}}=\mathcal{W} \times \mathbb{R}^{\mathcal{Z}}$ and $\tilde\ell((w,\xi),z)=\ell(w,z)+\xi(z)$. Define $\tilde{A}(z)=(A(z),\xi)$ where, for all $z\in\mathcal{Z}$, $\xi(z) \leftarrow \mathcal{N}(0,\sigma^2)$ and these Gaussians are independent.\footnote{It suffices for the noise to be $2n$-wise independent.}  Now
\begin{align*}
    \eCMI{\tilde{A}}{\D}{\tilde\ell} &= I(\vec{\tilde\ell}(\tilde{A}(\sZ_S),\sZ);S|\sZ)\\
    &= \ex{\sZ}{I(\vec{\tilde\ell}(\tilde{A}(\sZ_S),\sZ);S)}\\
    &= \ex{\sZ}{\inf_Q \ex{S}{\dkl{\vec{\tilde\ell}(\tilde{A}(\sZ_S),\sZ)}{Q}}}\\
    &= \ex{\sZ}{\inf_Q \ex{S}{\dkl{\mathcal{N}(\vec{\ell}(A(\sZ_S),\sZ),\sigma^2 I)}{Q}}}\\
    &\le \ex{\sZ}{\ex{S}{\dkl{\mathcal{N}(\vec{\ell}(A(\sZ_S),\sZ),\sigma^2 I)}{\mathcal{N}(\mu_{\sZ}, \sigma^2 I)}}}\\
    &= \ex{\sZ,S}{\frac{1}{2\sigma^2} \left\|\vec{\ell}(A(\sZ_S),\sZ) - \mu_{\sZ}\right\|_2^2}\\
    &= \frac{1}{2\sigma^2} \sum_{i \in [n], j\in [2]} \ex{\sZ}{\var{S}{\ell(A(\sZ_S),\sZ_{i,j})}}\\
    &= \frac{1}{2\sigma^2} \sum_{i \in [n], j\in [2]} \ex{\sZ}{\ex{S}{f_{\sZ}(S)_{i,j}^2}},
\end{align*}
where $\mu_{\sz}=\ex{S}{\vec{\ell}(A(\sz_S),\sz)}$ and $f_{\sz}(s)=\vec\ell(A(\sz_s),\sz)-\mu_{\sz}$. For all $\sz\in\mathcal{Z}^{n \times 2}$, we have $\ex{S}{f_{\sz}(S)}=0$ and $$\|f_{\sz}(s)-f_{\sz}(s')\|_\infty \le \gamma \cdot \|s-s'\|_1 $$ for all $s,s' \in \{0,1\}^n$. Thus, by Lemma \ref{lem:steele}, we have $$\ex{S}{f_{\sZ}(S)_{i,j}^2}\le \frac12 \gamma^2 n$$ for all $i \in [n]$ and $j \in [2]$. This yields the first part of the theorem -- $\eCMI{\tilde{A}}{\D}{\tilde\ell} \le \frac{\gamma^2 n^2}{2\sigma^2}$.

Since we assumed $\mathcal{D}$ to be continuous, we have $\pr{\sZ}{\forall i,i' \in [n] ~ \forall j,j' \in [2] ~~ \sZ_{i,j} \ne \sZ_{i',j'}}=1$. Conditioned on this event, $\vec{\tilde\ell}(\tilde{A}(\sZ_s),\sZ)$ consists of $2n$ independent Gaussians, each with variance $\sigma^2$ and mean $\vec\ell(A(\sZ_S),\sZ)$. This implies the second part of the result.
\end{proof}

Some remarks are in order: By combining Theorem \ref{thm:stable-to-ecmi} with Theorem \ref{thm:lossagnostic-eval}(ii), we can show that, if $A$ has uniform stability $\gamma$ with respect to $\ell$ and $\D$ is a continuous distribution, then $$\ex{Z \leftarrow \mathcal{D}^n}{\left(\ell(A(Z),Z)-\ell(A(Z),\mathcal{D})\right)^2} \le \inf_{\sigma>0} O\left(\frac{\gamma^2 n}{\sigma^2} + \frac{\sigma^2+1}{n} \right) = O\left(\gamma + \frac{1}{n} \right).$$ This should be contrasted with Theorem \ref{thm:stab-gen}, which obtains the bound $O(\gamma^2+1/n)$. Thus we conclude that eCMI (specifically, Theorem \ref{thm:stable-to-ecmi}) does not provide a tight analysis of the generalization properties of uniform stability. Further work is needed to integrate uniform stability into our CMI framework. We speculate that it may be easier (and also more enlightening) to analyze specific algorithms with CMI or eCMI than to find a generic reduction like Theorem \ref{thm:stable-to-ecmi}.

\section*{Acknowledgements}
We thank the anonymous reviewers for suggestions that helped improve this manuscript and led to a simplification of the proof of Theorem~\ref{th:CMIofTV}. Part of this work was done when LZ interned at IBM Research -- Almaden. Otherwise, LZ was supported by NSF grants CCF-1718088 and CCF-1750640.
\addcontentsline{toc}{section}{References}
\printbibliography

\end{document}